\documentclass[11pt]{article}
\usepackage{latexsym}
\usepackage{amsmath}
\usepackage{amssymb}
\usepackage{amsthm}
\usepackage{natbib}
\bibliographystyle{abbrvnat}
\setcitestyle{authoryear,open={(},close={)}}
\usepackage{hhline}
\usepackage{epsfig}
\usepackage{xcolor}
\usepackage{graphicx}
\usepackage{bm}
\usepackage{enumitem}
\usepackage{mathtools}
\usepackage{graphicx}
\usepackage{tikz}
\usetikzlibrary{matrix,backgrounds}
\usepackage{ mathrsfs }
\usepackage[toc,page]{appendix}
\usepackage{graphicx}
\usepackage{bbm}
\usepackage{array}
\usepackage{amsmath}
\usepackage{algorithm}
\usepackage{algpseudocode}
\usepackage{multirow} 
\usepackage{titletoc}
\usepackage{pifont} 
\definecolor{darkgreen}{RGB}{0,128,0}
\newcommand{\cmark}{\textcolor{darkgreen}{\ding{51}}} 
\newcommand{\xmark}{\textcolor{red}{\ding{55}}}
\algrenewcommand{\alglinenumber}[1]{\textbf{#1}}
\algrenewcommand\algorithmicrequire{\textbf{Input:}}
\algrenewcommand\algorithmicensure{\textbf{Output:}}
\usepackage{lipsum} 
\usepackage{mdframed}

\newcommand{\removed}[1]{}

\usepackage[top=1in, bottom=1in, left=1in, right=1in]{geometry}
\usepackage{hyperref}
\hypersetup{
    colorlinks,
    linkcolor={blue!80!black},
    citecolor={green!50!black},
}
\usepackage{cleveref}
\usepackage[T1]{fontenc} 
\usepackage{lmodern}

\colorlet{linkequation}{blue}
\definecolor{fg}{rgb}{0.0, 0.5, 0.0}
\usepackage{tcolorbox}
\renewcommand{\P}{\mathbb{P}}
\newcommand{\E}{\mathbb{E}}

\newcommand{\Ndim}{\textnormal{Ndim}}

\usepackage[nice]{nicefrac}
\newcommand{\Z}{\mathbb{Z}}
\newcommand{\R}{\mathbb{R}}

\newcommand{\eps}{\varepsilon}


\newcommand{\<}{\langle}
\renewcommand{\>}{\rangle}
\newcommand{\sign}{\text{sign}}

\newcommand{\circuit}{TC}

\def\bzero{{\boldsymbol 0}}

\def\zo{0\textnormal{-}1}
\def\prefix{\textnormal{pfx}}

\def\move{{\mathsf{move}}}

\def\state{{\mathsf{state}}}
\def\symbol{{\mathsf{symb}}}

\def\pos{{\mathsf{pos}}}
\def\npos{{\mathsf{npos}}}
\def\nread{{\mathsf{read}}}

\DeclareMathOperator*{\argmax}{arg\,max}


\newcommand{\norm}[1]{\lVert #1 \rVert}
\DeclareMathOperator{\ldim}{Ldim}

\DeclareMathOperator{\sfat}{sfat}

\newtheorem{theorem}{Theorem}[section]
\newtheorem{hassumption}{Hardness Assumption}
\Crefname{hassumption}{Assumption}{Assumptions}
\newtheorem*{theorem*}{Theorem}
\newtheorem{lemma}[theorem]{Lemma}

\newtheorem{definition}{Definition}
\newtheorem{proposition}{Proposition}

\newtheorem{claim}[theorem]{Claim}

\newtheorem{corollary}[theorem]{Corollary}

\theoremstyle{definition}

\newtheoremstyle{myremark} 
    {\topsep}                    
    {\topsep}                    
    {\rm}                        
    {}                           
    {\bf}                        
    {.}                          
    {.5em}                       
    {}  

\theoremstyle{myremark}
\newtheorem{remark}{Remark}[section]

\DeclareSymbolFont{rsfs}{U}{rsfs}{m}{n}
\DeclareSymbolFontAlphabet{\mathscrsfs}{rsfs}


\def\bbx{{\mathbf x}}
\def\bby{{\mathbf y}}

\def\bomega{{\boldsymbol \omega}}
\def\bb{{\boldsymbol b}}

\def\bu{{\boldsymbol u}}
\def\bv{{\boldsymbol v}}
\def\bp{{\boldsymbol p}}
\def\bw{{\boldsymbol w}}
\def\bx{{\boldsymbol x}}
\def\by{{\boldsymbol y}}
\def\bz{{\boldsymbol z}}
\def\VCdim{\mathrm{VCdim}}
\def\Pdim{\mathrm{Pdim}}

\def\btheta{{\boldsymbol \theta}}

\def\bTheta{{\boldsymbol \Theta}}



\def\post{{\rm post}}
\def\pre{{\rm pre}}

\def\lin{{\rm lin}}

\def\lin{{\rm lin}}

\def\post{{\rm post}}
\def\pre{{\rm pre}}


\def\cL{{\mathcal L}}
\def\cF{{\mathcal F}}

\def\cI{{\mathcal I}}

\def\cH{{\mathcal H}}
\def\cA{{\mathcal A}}

\newcommand{\TMf}{g_{\langle \S,\Time,\tau\rangle}}


\def\S{\mathtt{S}}
\def\M{\mathtt{M}}
\def\Time{\mathtt{T}}
\def\TIME{\mathsf{TIME}}

\def\naturals{{\mathbb N}}

\def\poly{{\mathcal{P}\hspace{-1pt}oly}}
\def\Poly{\poly}

\def\naturals{{\mathbb N}}

\def\th{\tilde{h}}

\def\cD{{\mathcal D}}
\def\cX{{\mathcal X}}
\def\cF{{\mathcal F}}

\def\cI{{\mathcal I}}

\def\lin{{\rm lin}}

\def\cX{{\mathcal X}}

\def\btheta{{\boldsymbol \theta}}
\def\bTheta{{\boldsymbol \Theta}}

\def\cM{{\mathcal M}}

\def\ind{\mathbbm{1}}

\def\cY{\mathcal{Y}}
\def\cZ{\mathcal{Z}}

\def\val{{\rm val}}

\def\bs{{\boldsymbol s}}


\def\vbb{{{\Vec{\bb}}}}
\def\th{\textnormal{thr}}

\newcommand{\zhiyuan}[1]{{\color{red}[ZL:#1]}}

\def\of{\overline{f}}
\def\sCoT{{\sf CoT}}
\def\sete{{\sf e2e}}
\def\prog{{\sf PROG}}
\def\TM{{\sf TM}}
\def\TMAR{\cF_{\scriptscriptstyle{\mathsf{TM},\S}}}

\newcommand{\ete}[2]{#1^{\sete\textnormal{-}{#2}}}
\newcommand{\mete}[1]{m^{\sete\textnormal{-}{#1}}}
\newcommand{\CoT}[2]{#1^{\sCoT\textnormal{-}{#2}}}
\newcommand{\NTICoT}[2]{#1^{\textnormal{TD-}\sCoT\textnormal{-}{#2}}}
\newcommand{\NTIete}[2]{#1^{\textnormal{TD-}\sete\textnormal{-}{#2}}}
\newcommand{\mcot}[1]{m^{\sCoT\textnormal{-}{#1}}}
\newcommand{\Cons}{\textsc{Cons}}

\newcommand{\cotCons}{\Cons_{\CoT{\cF}{T}}}
\newcommand{\Conscot}{\Cons_{\sCoT}}
\newcommand{\Consete}{\Cons_{\sete}}
\newcommand{\nil}{     
     \vcenter{\hbox{\scalebox{0.45}{$\;\mathbin{ \square }\;$}}}    
}

\def\zo{\textnormal{0-1}}
\def\iid{\sim_{iid}}

\def\cFlin{\mathcal{F}_{\scriptscriptstyle d,\lin}}
\def\cFlinsp{\mathcal{F}_{\scriptscriptstyle d,k,\lin}}

\newcommand{\abs}[1]{\left\lvert{#1}\right\rvert}
\newcommand{\runtime}{\mathsf{time}}

\def\preprocess{{\mathsf{Pre}}}
\def\postprocess{{\mathsf{Post}}}
\newcommand{\isfirst}{{\mathsf{is}\textsf{-}\mathsf{first}}}

\newcommand{\invpos}{{\mathsf{idx}\textsf{-}\mathsf{inv}}}
\newcommand{\aha}{\mathsf{AHA}}

\newcommand{\zeros}{{\mathbf{0}}}
\newcommand{\indct}{{\mathbf{1}}}
\newcommand{\sfq}{{\mathsf{q}}}
\newcommand{\sfk}{{\mathsf{k}}}
\newcommand{\sfv}{{\mathsf{v}}}
\newcommand{\readtape}{\textnormal{read-tape}}
\newcommand{\scaledpos}{\mathsf{scaled}\textsf{-}\mathsf{npos}}
\newcommand{\att}{\mathsf{Att}}
\newcommand{\score}{\mathsf{score}}
\def\ERM{{\textnormal{ERM}}}

\usepackage{authblk}

\title{A Theory of Learning with Autoregressive Chain of Thought}
\author[1]{Nirmit Joshi\footnote{Corresponding authors' emails: \texttt{\{nirmit,nsrebro\}@ttic.edu}}}
\author[2]{ \quad Gal Vardi}
 \author[3]{\quad Adam Block}
\author[4]{\quad Surbhi Goel}
\author[1]{\\ Zhiyuan Li}
\author[5]{\quad Theodor Misiakiewicz}
\author[1]{\quad Nathan Srebro$^*$}

\affil[1]{\small 
Toyota Technological Institute at Chicago}
\affil[2]{\small 
Weizmann Institute of Science}
\affil[3]{\small Microsoft Research, NYC}
\affil[4]{\small University of Pennsylvania}
\affil[5]{\small Yale University}


\date{}



\newcounter{SubAlgLine}

\begin{document}\maketitle
\vspace{-1cm}
\begin{abstract}
  For a given base class of sequence-to-next-token generators, we consider learning prompt-to-answer mappings obtained by iterating a fixed, time-invariant generator for multiple steps, thus generating a chain-of-thought, and then taking the final token as the answer. We formalize the learning problems both when the chain-of-thought is observed and when training only on prompt-answer pairs, with the chain-of-thought latent. We analyze the sample and computational complexity both in terms of general properties of the base class (e.g. its VC dimension) and for specific base classes such as linear thresholds. We present a simple base class that allows for universal representability and computationally tractable chain-of-thought learning. Central to our development is that time invariance allows for sample complexity that is independent of the length of the chain-of-thought. Attention arises naturally in our construction.
    \end{abstract}
\section{Introduction}

Autoregressive generation and learning, particularly with attention-based models such as transformers, is driving remarkable advances in Artificial Intelligence and increasingly becoming synonymous with AI itself.  To solve complex tasks, especially those requiring multi-step or compositional reasoning and computation, autoregressive generation produces a Chain-of-Thought, consisting of multiple intermediate tokens, that ultimately leads to the desired answer. In this paper, we propose a formal framework for studying this emerging paradigm:  learning complex functions through autoregressive Chain-of-Thought generation using a simple next-token generator.  We analyze the statistical and computational benefits and pitfalls of this approach, and see how attention naturally arises as a key ingredient for ``universal'' learning with autoregressive Chain-of-Thought generation.

In our view, a central component of autoregressive generation is that it is \textbf{time-invariant}, i.e., the same next-token-generator, with the identical parameters (e.g.~a transformer with the same weights) is used at each step of generation, to generate each token as a function of the prefix thus far. Throughout the paper we emphasize how such time-invariance is crucial for allowing learning with sample complexity independent (perhaps up to a log-factor) of the generation length, and later on, of the compute time (number of steps) of a learned process. In this crucial regard, we deviate significantly from another recent (and inspiring) attempt to formalize autoregressive learning by \citet{malach2023auto}, who considered time-\emph{dependent} autoregressive learning. We directly contrast with \citeauthor{malach2023auto}'s approach as we progress through our presentation.

We begin by presenting a formal setting for time-invariant autoregressive Chain-of-Thought generation and learning using an abstract base class of next-token-generators (\Cref{sec:setting}), studying the basics of learning in this framework for general and abstract base classes (\Cref{sec:samples-computational-complexity-general-F}), and, as an example, for what is perhaps the simplest possible base class, namely linear thresholds (\Cref{sec:linear}). We consider training both with and without explicit Chain-of-Thought supervision. Already here we can start seeing the computational benefits of Chain-of-Thought learning and the statistical benefits of time-invariant autoregressive generation.  We then turn to the more ambitious goal of ``universal'' learning of computable functions and see how far the benefits of time-invariance and autoregressive Chain-of-Thought take us, and how attention arises naturally (\Cref{sec:universal,sec:TM}). 

\removed{
\subsection{Outline}
\begin{itemize}
    \item The goal of this paper is to provide a basic definitions, results and open questions on autoregressive models. The key aspect of these models is to understand what CoT brings onto the table.
    \item Basic start from autoregressive model classes (i.e. base class) and its iterative compositions. The end-to-end learning goal, and CoT supervision. (Informal discussion) This calls for theoretical framework of understanding such models and the role of CoT. (Eran's work summary and mild criticism).
    \item Discussion on the power of Architecture vs the power of CoT. Both as a collective unit has known to be very powerful paradigm of learning.
    \item  Motivation behind the universality statement. Different type of universality statements.
    (1) We care about learning in O(poly(S)) samples and poly(S,T) runtime. (2) The another notion is poly(T) samples and runtime. If you ignore the computation, then both goals are possible in terms of sample complexity, for e.g 1 can be achieved with ERM on short programs and if go for regular rules then 2 can be achieved with ERM on poly(T) size neural networks. However, computational universality is what we need, and \cite{malach2023auto} showed universality of type 2 with CoT with even linear predictors, though non time invariant. However, the number of samples grows with runtime. Autoregressive feed-forward neural network or linear predictors even with CoT, we cannot improve this dependence of runtime of the program in sample size.
    \item How attention with CoT is in some sense the minimal component of the iterative base class
    \item Essentially, connecting it back to transformers. Acknowledging the previous works on expressive power of transformers with CoT. However, the focus is more on the architectural details, precision, which are important issues that they handle. And the sample complexity in the case of finite precision transformers. All this while completely ignoring the computation. And if we do, it is true that just ERM on the class of short programs is also enough to achieve the universality in this sense. 
    \item Therefore, our construction of the base class which achieves universality in the computational sense, and offer a new point of view as to how the combination of the Chain-of-Thoughts and the transformers are effective, through the construction of the base class.
    \item Discussion on learning with CoT vs supervised end-to-end learning goals, and where the power is coming from. Noting that arbitrary CoT can be too powerful, and one can cheat. However, Chain-of-Thought data is available, and thus, our focus will be on CoTs that are ``natural". One way to characterize the natural is that it is demonstration of the internal computational process of the function on examples, rather than the description of the functions itself. 
\end{itemize}
}

\section{Time-Invariant Autoregressive Chain-of-Thought Learning}\label{sec:setting}
For a finite set of tokens $\Sigma$, a {\em next token generator} is a mapping $f : \Sigma^* \to \Sigma$. On input $\bx\in\Sigma^*$, autoregressive Chain-of-Thought (CoT) generation will start with a string containing only the input and will repeatedly apply $f$ to the string and append the newly generated token to it (see \Cref{fig:tape}). Formally, for a next-token generator $f$, we define the apply-and-append mapping $\of(\bx): \bx \mapsto \textnormal{append}(\bx, f(\bx)) \in (\Sigma^{\abs{\bx}+1}$),\footnote{For any $\bx \in \Sigma^*$, we use $|\bx|$ to denote its length. We use negative indices to refer to elements of a vector or string from the end, i.e.~$\bv[-1]$ is the last element.  We use inclusive slice indexing, so that $\bv[i:j]$ are the elements from position $i$ to $j$ inclusive.  Dropping $i$ or $j$ means that the slice extends from the beginning or to the end, respectively.}  and apply $\of$ iteratively for $T$ steps to obtain a mapping from $\bx$ to its $T$-step Chain-of-Thought:
\removed{\begin{equation*}
    \CoT{f}{T} (\bx) = \underbrace{\of \circ \of \circ \ldots \circ \of}_{T \text{ times}} (\bx)\in \Sigma^{\abs{\bx}+T}.
\end{equation*}}
\begin{equation}\label{eq:fCoT}
    \CoT{f}{T} (\bx) = \underbrace{\of \circ \of \circ \ldots \circ \of}_{\textrm{\small $T$ times}} \, (\bx)\in \Sigma^{\abs{\bx}+T}.
\end{equation}
We think of all but the last of these $T$ tokens as intermediate ``thinking'', and only the final token after $T$ steps as the ``answer'' and so consider the following end-to-end mapping between an input and the final token in $\CoT{f}{T} (\bx)$:
\begin{equation}
\ete{f}{T}(\bx) = \CoT{f}{T} (\bx)[-1]\,.
\end{equation}
\begin{figure}[t]
    \centering
    \includegraphics[scale=0.8]{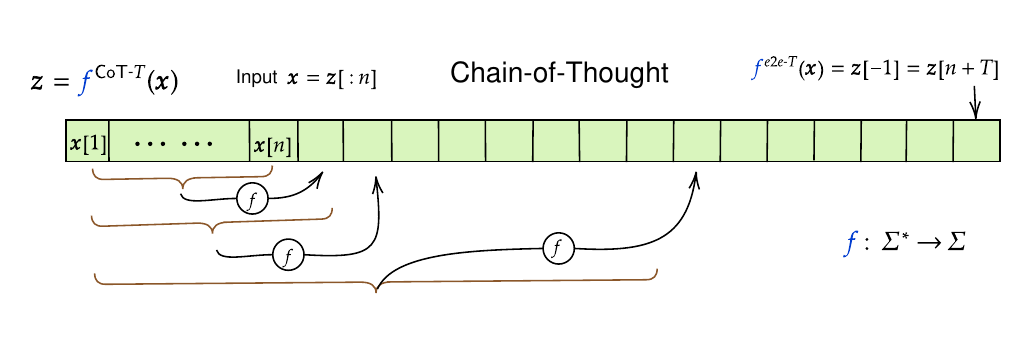}
    \vspace{-1cm}
    \small \caption{\small Illustration of Chain-of-Thought generation and the functions $\CoT{f}{T}$ and $\ete{f}{T}$.}
    \vspace{-0.2cm}
    \label{fig:tape}
\end{figure}
For any base class $\cF = \{ f:\Sigma^*\rightarrow\Sigma \} \subseteq \Sigma^{\Sigma^*}$ of generators, and a generation length $T$, we define the corresponding end-to-end and CoT classes:
\begin{equation}
\ete{\cF}{T}=\{\ete{f}{T}:\Sigma^* \to \Sigma \mid f \in \cF\} \;\;\text{ and }\;\;  \CoT{\cF}{T}=\{\CoT{f}{T}: \Sigma^* \to \Sigma^* \mid f \in \cF \}~.
\end{equation}
For simplicity, throughout the paper we consider a fixed thought generation length (i.e.~number of iterations) $T$.  This number of iterations $T$ will be a crucial parameter for us and we will study how the learning complexity and expressivity depend on $T$.\footnote{ 
 One could also define a variant with variable thought lengths with a special token indicating when to stop.  But if we would anyway be interested in the maximal thought length, we could just as well extend each thought with a padding token to make them all of the same length.} 
\removed{
The domain $\cX$ can be any subset of $\Sigma^*$. We would have either $\cX=\Sigma^*$ or $\cX =\Sigma^{ \leq n}$, the input sequences of length at most $n \in \naturals_{+}$. This will be implicitly clear from the context, and we may also write $\ete{\cF}{T}_n$ and $\CoT{\cF}{T}_n$ explicitly, indicating that the domain is $\cX=\Sigma^{\leq n}$.}
\removed{\paragraph{Notation.}\natinote{This should be moved elsewhere.  Also, I think much of this can be removed or made much shorter, or perhaps moved to footnotes when first used.  Also, I think we are not consistent with 0-indexing and 1-indexing.  In particular, the -1 trick works better for 0-indexing.} Consider any vector $\bbv \in \R^k$ or $\Sigma^k$ and integers $1 \leq i \leq j \leq k$. We denote $\bbv[\,i\,]$ and $\bbv[-i]:=\bbv[k-i+1]$, respectively, as the $i^\mathrm{th}$ coordinates from the start and the end. We also let $\bbv[i:j]$ be the truncated vector from $i^\mathrm{th}$ to $j^\mathrm{th}$ coordinate. We will further write $\bbv[:i]:=\bbv[1:i]$ and $\bbv[-i:]=\bbv[(k-i+1):k]$ as the vectors with the first and last $i$ coordinates respectively. Sometimes it will be easier to think of $\bbv$ as a string of length $k$ where $\bbv[\,1\,]$ is the left-most symbol and $\bbv[\,k\,]$ is the right-most one. For an expression $u(\psi_1,\dots,\psi_k)$ in scalar quantities $(\psi_1,\dots,\psi_k)\in \R^k$, we say that $u$ is in $\poly(\psi_1,\dots,\psi_k)$ if $u$ is uniformly bounded by a polynomial for all $(\psi_1,\dots,\psi_k)\in \R^k$, i.e. there exists a polynomial $p:\R^k \to \R$ such that for every $(\psi_1,\dots, \psi_k)\in \R^k$, we have $u(\psi_1,\dots, \psi_k)\leq p(\psi_1,\dots, \psi_k)$. Typically, $(\psi_1,\dots,\psi_k)$ further depend on some parameters of interest. For any domain $\cX$, a label space $\cY$, and a hypothesis class $\cH$, we say that $S=((\bbx_1,\bby_1),\dots(\bbx_m,\bby_m))\in (\cX \times \cY)^m$ is realizable by $\cH$ if there exists $h\in \cH $ such that $h(\bbx_i)=\bby_i$ for all $i \in [m]$. Finally, we expect the reader to be familiar with the classic complexity measures such as $\VCdim(\cH)$ and the Littlestone dimension $\ldim(\cH)$ (see \Cref{app:preliminary} for formal definitions).}

\paragraph{Learning.} Our goal is to learn the hypothesis class $\ete{\cF}{T}$.  Throughout we consider a realizable setting, i.e.~we assume there exists some ground truth $f_* \in \cF$, and for an input distribution $\cD$ over $\cX \subseteq \Sigma^*$ (if not explicitly specified, we always take the domain as $\cX=\Sigma^*$), we would like to find a predictor $h:\cX\rightarrow\Sigma$ that minimizes the population error:
\begin{equation*}
 L_{\cD,f_*}^\zo(h) : \removed{\E_{\bx \sim \cD} \left[ \ind\{h(\bx) \neq \ete{f}{T}_*(\bx)\}\right]} = \P_{\bx \sim \cD}\left(h(\bx) \neq \ete{f}{T}_*(\bx) \right).
\end{equation*}
We will consider two learning settings that differ by what data are available during training.  First, we will consider learning based only on observing the final answer $\ete{f}{T}_*(\bx)$:
\begin{definition}[Realizable End-to-End Learnability]\label{def:learnable-wo-CoT} For a base class $\cF \subseteq \Sigma^{\Sigma^*}$, and generation length $T\in \naturals_{+}$, we say that $\ete{\cF}{T}$ is \underline{$\sete$-learnable} over a domain $\cX\subseteq\Sigma^*$, with sample complexity $m(\eps,\delta)$ if there exists a learning rule $A: (\cX \times \Sigma)^* \to \Sigma^{\cX}$ such that for every 
distribution $\cD$ over $\cX$ and $f_* \in \cF$, for every $0<\eps, \delta<1$, with probability at least $1-\delta$ over $\bx_1, \dots, \bx_m \iid \cD$, with $S_{\sete}=((\bx_1,y_1),\dots,(\bx_m,y_m))$, $y_i=\ete{f_*}{T}(\bx_i)$
and  $m=m(\eps,\delta)$, we have
$L_{\cD,f_*}^{\zo}(A(S_{\sete})) \leq \eps$.
\end{definition}
\Cref{def:learnable-wo-CoT} amounts precisely to the standard definition of PAC-learning the hypothesis class $\ete{\cF}{T}$, but we give it explicitly in order to contrast it with learning in situations where the entire Chain-of-Thought is available during training. In contemporary LLM training, such CoT supervision can be found in data sources such scrapped over internet consisting of step-by-step answers to questions in discussion forums and in textbooks \citep{lewkowycz2022solving,li2023textbooks,gao2020pile,achiam2023gpt}, as well as in training data specifically curated \citep{https://doi.org/10.48550/arxiv.2301.11596,chung2024scaling,hendrycksmath2021}.

\begin{definition}[Realizable Chain-of-Thought Learnability]\label{def:learnable-with-CoT} For a base class $\cF \subseteq \Sigma^{\Sigma^*}$ and generation length $T \in \naturals_{+}$, we say that\footnote{We emphasize that $\sCoT$-learnability is a property of the base class $\cF$ and length $T$, not only of the set $\ete{\cF}{T}$.} $\ete{\cF}{T}$ is \underline{$\sCoT$-learnable} over a domain $\cX\subseteq\Sigma^*$, with sample complexity $m(\eps,\delta)$, if there exists a learning rule $A: (\cX \times \Sigma^T)^* \to \Sigma^{\cX}$ such that for every 
distribution $\cD$ over $\cX$ and $f_* \in \cF$, for every $0<\eps, \delta<1$, with probability at least $1-\delta$ over $\bx_1, \dots, \bx_m \iid \cD$, with $S_{\sCoT}=(\bz_1,\dots,\bz_m)$ where $\bz_i=\CoT{f_*}{T}(\bx_i)$,
and $m=m(\eps,\delta)$, we have
$$L_{\cD,f_*}^{\zo}(A(S_{\sCoT})) = \P_{\bx \sim \cD}\left(A(S_{\sCoT})(\bx) \neq \ete{f}{T}_*(\bx) \right) \leq \eps.$$
\end{definition}
Even though the entire Chain-of-Thought is available during training, the learned predictor is still evaluated only on predicting the final answer. We do not explicitly require learning to be {\em proper}, but all our positive results provide a proper learning rule outputting $A(S)\in\ete{\cF}{T}$, and all our negative results preclude also improper learning, i.e.~outputting some $A(S):\cX\rightarrow\Sigma$, even if $A(S)\not\in\ete{\cF}{T}$. We do not observe any gaps between proper and improper learning.

\paragraph{Time-Invariance vs. Time-Dependence.} In our setup we emphasize {\em time invariance}, i.e.~that the same base function $f\in\cF$ is used in every step of autoregressive generation (e.g., a transformer with the same parameters, in a practical setting of interest).  We then need to learn just a single function $f\in\cF$, and our goal is for the learning complexity to be independent of (or at least nearly independent of) $T$.
Time-invariance here can also be viewed as  ``parameter sharing'' \citep{rajaraman2021value} across steps of generation: for a parametric class $\cF=\left\{f_\btheta : \btheta\in\bTheta\right\}$, the parameter $\btheta$ is shared among all time steps and we only need to learn a single $\btheta$.  
This invariance is in contradistinction to a {\em Time-Dependent} notion of autoregressive learning suggested by \citet{malach2023auto}, where a different base predictor $f_t \in \cF$ is used in every step $t$, yielding the Time-Dependent Chain-of-Thought class
\begin{equation}\label{eq:NTI-cot}
    \NTICoT{\cF}{T}= \left\{\, x\mapsto\of_T \circ \of_{T-1} \circ \dots \circ \of_2 \circ \of_1(\bx) \,\middle|\, f_1,f_2,\ldots,f_T \in \cF \, \right\}~.\removed{ \\
 \NTIete{\cF}{T}=\left\{\, \bx\mapsto g(\bx)[-1] \,\middle|\, g \in \NTICoT{\cF}{T}\,\right\}.}
\end{equation}
\citeauthor{malach2023auto} considered learning the class $\NTICoT{\cF}{T}$, i.e.~with the CoT available during training (as in our \Cref{def:learnable-with-CoT})\footnote{In \citet{malach2023auto}, the learning goal is to be able to generate the entire CoT, not just the final answer.  This is different from our \Cref{def:learnable-with-CoT}, but this is not a substantial difference.}.  One can also consider end-to-end learning (as in our \Cref{def:learnable-wo-CoT}) in such a time-dependent setting, which amounts to PAC-Learning the hypothesis class:
\begin{equation}\label{eq:NTI-e2e}
 \NTIete{\cF}{T}=\left\{\, \bx\mapsto g(\bx)[-1] \,\middle|\, g \in \NTICoT{\cF}{T}\,\right\}.
\end{equation}
Sample complexities obtained by \citeauthor{malach2023auto} all scale linearly with $T$, and this is unavoidable without time-invariance---see also discussion in \Cref{sec:samples-computational-complexity-general-F}. Our goal is to avoid such dependence.  


\paragraph{Computational Complexity.}  Although all our learning guarantees are for learning over the entire $\cX=\Sigma^*$, and do not have any dependence on the length of the input, when discussing computational complexity, the length of the input will be important. Likewise, instead of considering a fixed base class $\cF$, we must consider a family $\cF_d$ parametrized by (one or more) size parameters $d$.  We will say that $\ete{\cF_d}{T}$ is $\sete$ or $\sCoT$ learnable in $\runtime(n,d,T,\eps,\delta)$ using some learning algorithm\footnote{Formally, to allow uniform algorithms, we can think of $A$ being implicitly passed $T$ and $d$.} $A$, if over the domain $\cX=\Sigma^{\leq n}$ (i.e.~restricted to prompts of length at most $n$), $A(S)$ runs in time at most $\runtime(n,T,d,\eps,\delta)$ almost surely. For an expression $\kappa(\psi_1,\dots,\psi_k)$ in scalar quantities $(\psi_1,\dots,\psi_k)\in \R^k$, we say that $\kappa$ is in $\poly(\psi_1,\dots,\psi_k)$ if $\kappa$ is uniformly bounded by a polynomial for all $(\psi_1,\dots,\psi_k)\in \R^k$, i.e. there exists a polynomial $p:\R^k \to \R$ such that for every $(\psi_1,\dots, \psi_k)\in \R^k$, we have $\kappa(\psi_1,\dots, \psi_k)\leq p(\psi_1,\dots, \psi_k)$.

\removed{
In addition to the sample complexity of learning, we are also concerned with its computational complexity.
In order to discuss the runtime of a learning rule, and how it scales with the complexity of the base class $\cF$, we will need to discuss families $\left(\cF_d\right)_{d\in\naturals_{+}}$ of base classes, and how the runtime complexity scales with the parameter $d$. We say that $\ete{\cF_d}{T}$ is poly-time $\sete$ (respectively $\sCoT$) learnable if there exists a learning rule $A$ such that for all $d,n,T\in \naturals_{+}$, we have $\ete{\cF_d}{T}$ is $\sete$ (respectively $\sCoT$) learnable by $A$ \zl{missing quantifiers on $\eps$ and $\delta$} as in \Cref{def:learnable-wo-CoT} (respectively \Cref{def:learnable-with-CoT}) over $\cX =\Sigma^{\leq n}$ with the learning rule\footnote{To allow for a uniform rule for all $d,T$, we pass these explicitly.} 
$A(\cdot;d,T)$, and $A(S;d,T)$ runs in time $\Poly(n,T,d,1/\eps,\log(1/\delta))$ (this also implicitly imposes a bound on the sample complexity) and returns a predictor with the same runtime requirement. \zhiyuan{add $A(S;d,T)(x)$ runs in polytime for any $x$ for completeness.}

\nirmit{This should change or go away.}\gal{Note that we use a $\Cons_{\cF_d}$ oracle later, so its definition cannot go away} We will also refer to the standard notion of poly-time PAC learnability of a family $(\cF_d)_{d \in \naturals_{+}}$ of hypothesis classes \cite[e.g.][]{}, and recall that $\cF_d$ is poly-time PAC learnable iff $\VCdim(\cF_d)=\poly(d)$ and the consistency problem $\Cons_{\cF_d}(S)=\mathbf{1}\left[\exists_{f\in\cF_d}\forall_{(\bx,y)\in S}f(\bx)=y \right]$ is solvable in polynomial time (i.e. polynomial in the length of the input $S$)  }

\removed{\paragraph{Poly-time learnability.} We will need the following notions of tractability for a base class $\cF \subseteq \Sigma^{\Sigma^*}$. 
\begin{itemize}
    \item We say that $\ete{\cF}{T}$ is \emph{poly-time $\sete$-learnable} if there exists a learning rule $A$ such that for every $n,T \in \naturals_{+}$, the class $\ete{\cF}{T}$ over domain $\cX = \Sigma^{\leq n}$ is $\sete$-learnable by $A$ with sample complexity $m_{A}(\eps,\delta) =\poly(n,T,1/\eps, \log(1/\delta), \log|\Sigma|)\,$, and it can be implemented in time $\poly(n,T,|S_{\sete}|,\log |\Sigma|)$. The implementation of $A$ takes a training set $S_{\sete}$ as the input and outputs the estimated predictor, which itself must be computable on a new instance in time $\poly(n,T, |S_{\sete}|,\log|\Sigma|)$. 
    \item Similarly, we say that $\ete{\cF}{T}$ is \emph{poly-time $\sCoT$-learnable} if there exists a learning rule $A$ such that for every $n,T \in \naturals_{+}$, the class $\ete{\cF}{T}$ over domain $\cX = \Sigma^{\leq n}$ is $\sCoT$-learnable by $A$ with sample complexity $m_A(\eps,\delta)$ in $\poly(n,T,1/\eps, \log(1/\delta), \log|\Sigma|)\,$, and it can be implemented in time $\Poly(n,T,|S_{\sCoT}|,\log|\Sigma|)$.
    \item Finally, the base class $\cF$ has a poly-time implementable $\Cons$ rule, if there is an algorithm such that, for every $n \in \naturals_{+}$ over the domain $\cX=\Sigma^{\leq n}$, given any $S \in (\cX \times \Sigma)^* $ that is realizable by $\cF$, the algorithms outputs a predictor from $ \cF$ that is consistent with $S$, and the algorithm runs in time $\poly(|S|,n,\log |\Sigma|)$. 
\end{itemize}}

\section{Statistical and Computational Complexity for a General \texorpdfstring{$\cF$}{}}\label{sec:samples-computational-complexity-general-F}
In this section, we discuss the learnability of $\ete{\cF}{T}$ for a hypothesis class $\cF$ in terms of general properties of $\cF$; see \Cref{tab:sample-complexity}. Complete  proofs can be found in \Cref{app:for Section3}.
\begin{table}[t]
  \centering
\resizebox{\textwidth}{!}{  \begin{tabular}{|c|c|c|c|}
    \hline
\multicolumn{2}{|c|}{\rule{0pt}{2ex}} &  $\sete$ (Latent CoT) & $\sCoT$ (Available CoT) \\ 
    \hhline{|= =|=|=|}
  \multicolumn{2}{|c|}{\rule{0pt}{2ex} Learning Rule} & \ref{eq:ete-consistent} & \ref{eq:cot-consistent} \\
    \hhline{|= =|=|=|}
    \rule{0pt}{3ex}
   \multirow{3}{*}{\rotatebox{90}{\small Sample Complexity}} & $|\cF|$ bounded & $\log \abs{\cF}$ & $ {\log \abs{\cF}} $ \\
   & & (\Cref{thm:cardinality-based}) & \\
    \cline{2-4}
   \rule{0pt}{3ex}
    & $\VCdim(\cF)$ bounded  & $ \boldsymbol{T}\cdot \VCdim(\cF)$ &  $\VCdim(\cF) \log T$\\
    &   & (\Cref{thm:ete-VC,thm:lb-Omega(T VC)}) & (\Cref{thm:cot-VC})\\
    \cline{2-4}
    \rule{0pt}{3ex}
   & $\ldim(\cF) $ bounded & $\ldim(\cF) \log T $ &  $\textcolor{gray}{\ldim(\cF) \log T}$\\ 
    &  & (\Cref{thm:ete-ldim})& \small (dominated  by \textcolor{black}{$\VCdim(\cF) \log T$})  \\ 
    \hhline{|=|=|=|=|}
    \multicolumn{2}{|c|}{\rule{0pt}{2ex}Computational complexity when} & Not necessarily $\poly(n,T,d)$ & $\poly(n,T,d)$\\
  \multicolumn{2}{|c|}{$\Cons$ for $(\cF_d)$ is tractable} &  (\Cref{sec:linear}) &  (\Cref{cor:tract-Cons-oracle-tract-CoT-lrn}) \\
   \hline
  \end{tabular} }
 \caption{\small A summary of the results in \Cref{sec:samples-computational-complexity-general-F} for time-invariant autoregressive learning for a general base class $\cF$. Each sample complexity row indicates the best possible guarantee, up to $O(\log (1/(\eps\delta))/\eps)$, on the sample complexity of $\sete$ or $\sCoT$ learning under the corresponding assumption on $\cF$.  All indicated sample complexities are tight except for $\log T$ factors, i.e.~for each cell there exists classes $\cF$ matching the indicated upper bound except for the $\log T$ factor (see \Cref{app:sample-complexity-side}).} 
  \label{tab:sample-complexity}
\end{table} 
\subsection{Learnability and Sample Complexity}
Our first observation is simple, yet as we will see, also powerful: if the cardinality $\abs{\cF}$ of the base class is bounded, then so is the cardinality of the End-to-End and Chain-of-Thought classes:
\begin{equation}\label{eq:cardinality-of-ete-cot}
\abs{\ete{\cF}{T}}, \abs{\CoT{\cF}{T}}\leq\abs{\cF}\, .
\end{equation}
This is already sufficient for obtaining learning guarantees with sample complexity $\log\abs{\cF}$, in particular, using a learning rule requiring ``consistency'' with the final answer:
\begin{tcolorbox}
Given $S_{\sete}=((\bx_1,y_1),\dots,(\bx_m,y_m))$,
    \begin{equation}\label{eq:ete-consistent}
    \text{Return $\ete{\hat{f}}{T}$, for some $\hat{f}\in \cF $ such that }\,  \ete{\hat{f}}{T}(\bx_i) = y_i, \forall (\bx_i,y_i) \in S_{\sete}. \tag{$\Consete$} 
\end{equation}
\end{tcolorbox}
\begin{theorem}\label{thm:cardinality-based}
    For any $\Sigma$, base class $\cF$, and generation length $T$, we have that $\ete{\cF}{T}$ is $\sete$-learnable, and so also $\sCoT$-learnable, with \ref{eq:ete-consistent} and sample complexity
    $$ \mete{T},\mcot{T} \leq \frac{2}{\eps}\left(\log|\cF|+\log \left( 1/\delta \right)\right)\,.$$
\end{theorem}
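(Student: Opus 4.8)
The plan is to reduce this to the standard realizable PAC learning bound for finite hypothesis classes, using the cardinality bound in \eqref{eq:cardinality-of-ete-cot} as the key input. The entire statement follows once we observe that the learning rule \ref{eq:ete-consistent} is simply empirical risk minimization (in fact, any consistent-hypothesis selection) over the finite class $\ete{\cF}{T}$, and that $\sete$-learnability immediately implies $\sCoT$-learnability because the $\sCoT$ learner sees strictly more information (it can discard all intermediate tokens of each $\bz_i=\CoT{f_*}{T}(\bx_i)$ and keep only $y_i=\bz_i[-1]=\ete{f_*}{T}(\bx_i)$, reducing to the $\sete$ setting).

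First I would record that, by \eqref{eq:cardinality-of-ete-cot}, $\abs{\ete{\cF}{T}}\le\abs{\cF}$. Next, in the realizable setting there is $f_*\in\cF$ with $\ete{f_*}{T}(\bx_i)=y_i$ for all $i$, so the set of hypotheses in $\ete{\cF}{T}$ consistent with $S_{\sete}$ is nonempty, and \ref{eq:ete-consistent} is well defined; denote its output $h=\ete{\hat f}{T}$. Then I would invoke the classical realizable PAC bound: for a finite class $\cH$, with probability at least $1-\delta$ over $m$ i.i.d.\ samples, every $h\in\cH$ that is consistent with the sample has $L^{\zo}_{\cD,f_*}(h)\le\eps$ provided $m\ge\frac{1}{\eps}\bigl(\log\abs{\cH}+\log(1/\delta)\bigr)$. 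For completeness I would sketch its one-line proof: for any fixed ``bad'' hypothesis $h$ with $L^{\zo}_{\cD,f_*}(h)>\eps$, the probability it is consistent with all $m$ samples is at most $(1-\eps)^m\le e^{-\eps m}$; a union bound over the at most $\abs{\ete{\cF}{T}}\le\abs{\cF}$ hypotheses in the class gives failure probability at most $\abs{\cF}e^{-\eps m}$, which is at most $\delta$ once $m\ge\frac{1}{\eps}(\log\abs{\cF}+\log(1/\delta))$. Applying this with $\cH=\ete{\cF}{T}$ and the consistent output of \ref{eq:ete-consistent} yields $\mete{T}\le\frac{1}{\eps}(\log\abs{\cF}+\log(1/\delta))$, which is within the claimed $\frac{2}{\eps}(\log\abs{\cF}+\log(1/\delta))$ bound.

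For the $\sCoT$ claim, I would note that given $S_{\sCoT}=(\bz_1,\dots,\bz_m)$ the learner can form $y_i:=\bz_i[-1]=\ete{f_*}{T}(\bx_i)$ (where $\bx_i=\bz_i[:\abs{\bz_i}-T]$ is the prompt prefix), feed $S_{\sete}=((\bx_1,y_1),\dots,(\bx_m,y_m))$ into \ref{eq:ete-consistent}, and inherit the same guarantee; hence $\mcot{T}\le\mete{T}$ and the same bound holds. Since the $2/\eps$ leading constant already leaves slack, there is no tightness concern here.

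There is essentially no hard step: the only thing to be careful about is bookkeeping — confirming that realizability of $S_{\sete}$ by $\ete{\cF}{T}$ follows from realizability by $\cF$ (immediate from the definition of $\ete{f}{T}$), that the union bound is over $\ete{\cF}{T}$ and not the possibly larger indexing set $\cF$ (it does not matter since both are bounded by $\abs{\cF}$, but the cleaner statement uses $\abs{\ete{\cF}{T}}$), and that \ref{eq:ete-consistent} indeed returns a hypothesis \emph{in} the class $\ete{\cF}{T}$ so that the union bound applies to it. If anything counts as the ``main obstacle'' it is purely expository: making sure the reduction from $\sCoT$ to $\sete$ is stated cleanly given that the two definitions package the training data differently.
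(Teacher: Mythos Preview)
Your proposal is correct and follows essentially the same approach as the paper: the paper's proof simply notes that $\abs{\ete{\cF}{T}}\le\abs{\cF}$, invokes the standard finite-class realizable PAC bound (stated there as a proposition), and observes $\mcot{T}\le\mete{T}$. Your write-up is a slightly more detailed version of the same argument, including the one-line union-bound derivation of that proposition.
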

While this is a promising start, it is natural to wonder if analogous results hold for more general function classes satisfying less stringent complexity bounds, such as requiring only bounded VC dimension. Indeed, for classes over $\Sigma=\{0,1\}$ (we will return to general alphabets in \Cref{rem:|Sigma|>2} at the end of the section) we can bound $\VCdim(\ete{\cF}{T})=O(T \cdot \VCdim(\cF))$, which leads to the following sample complexity guarantee for $\sete$-learning. 
\begin{theorem}\label{thm:ete-VC} For any base class $\cF\subseteq \{0,1\}^{\{0,1\}^*}$ and generation length $T\in \naturals_{+}$, the class $\ete{\cF}{T}$ is $\sete$-learnable using \ref{eq:ete-consistent} with $$\mete{T} = O\left(\eps^{\scriptscriptstyle{-1}}\left(T \cdot \VCdim(\cF) \log\left(\eps^{\scriptscriptstyle{-1}}\right) + \log\left(\delta^{\scriptscriptstyle{-1}} \right) \right)\right).$$
\end{theorem}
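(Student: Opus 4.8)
The plan is to reduce the claim to two standard ingredients: (i) a bound on the growth function — equivalently, the VC dimension — of the composed class $\ete{\cF}{T}$ in terms of $d:=\VCdim(\cF)$ and $T$; and (ii) the textbook realizable-case uniform-convergence bound for a hypothesis consistent with the training set. Ingredient (ii) applies immediately: on realizable data $S_{\sete}$ the rule \ref{eq:ete-consistent} always returns some $\ete{\hat f}{T}\in\ete{\cF}{T}$ agreeing with $S_{\sete}$ (the ground truth $f_*$ witnesses feasibility), so \ref{eq:ete-consistent} is a genuine empirical risk minimizer over $\ete{\cF}{T}$, and nothing further that is specific to this class is needed once (i) is established.

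The heart of the proof is the growth-function bound
\[
  \Pi_{\ete{\cF}{T}}(m)\;\le\;\Pi_{\cF}(m)^{T}\;\le\;\Bigl(\tfrac{em}{d}\Bigr)^{dT}\qquad (m\ge d\ge 1),
\]
where $\Pi_{\mathcal H}(m)$ denotes the $m$-th shattering coefficient of a class $\mathcal H$. I would prove the first inequality by an iterated (``$T$-round'') Sauer--Shelah argument. Fix inputs $\bx_1,\dots,\bx_m$ and grow a depth-$T$ tree whose root carries the tuple $(\bx_1,\dots,\bx_m)$; a node at depth $k-1$ carrying a tuple $(\bs_1,\dots,\bs_m)$ of strings gets one child for each labeling $\tau\in\{0,1\}^m$ realized as $(f(\bs_1),\dots,f(\bs_m))$ by some $f\in\cF$, and that child carries $(\bs_1 f(\bs_1),\dots,\bs_m f(\bs_m))$. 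By Sauer--Shelah each node has at most $\Pi_\cF(m)$ children, so there are at most $\Pi_\cF(m)^{T}$ depth-$T$ leaves. Any $f\in\cF$ traces out exactly one root-to-leaf path — at step $k$ of generation it branches according to $f$ evaluated on the current prefix $\CoT{f}{k-1}(\bx_i)$ it has produced so far — and the depth-$T$ node it reaches carries $(\CoT{f}{T}(\bx_1),\dots,\CoT{f}{T}(\bx_m))$, whose last tokens are exactly $(\ete{f}{T}(\bx_1),\dots,\ete{f}{T}(\bx_m))$. Hence the tuple of answers is determined by the leaf, so the number of labelings $\ete{\cF}{T}$ induces on $\bx_1,\dots,\bx_m$ is at most the number of depth-$T$ leaves; the second inequality is Sauer--Shelah applied to $\cF$.

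The one genuine obstacle — and the reason the naive ``apply Sauer--Shelah once to the (at most $mT$) strings ever queried'' shortcut does not work — is that the set of strings on which $f$ is evaluated during generation is itself $f$-dependent, via the tokens produced in earlier steps, so there is no fixed finite domain to which the lemma could be applied directly; the branching-tree bookkeeping is precisely what handles this, paying one factor $\Pi_\cF(m)$ per generation step, i.e.\ per layer of the composition. With the displayed bound in hand the theorem follows from the standard double-sampling/symmetrization estimate: the probability that \ref{eq:ete-consistent} outputs a predictor of error exceeding $\eps$ is at most $2\,\Pi_{\ete{\cF}{T}}(2m)\,2^{-\eps m/2}\le 2\,(2em/d)^{dT}2^{-\eps m/2}$, and forcing this below $\delta$ and solving for $m$ in the usual way gives $m(\eps,\delta)=O\!\bigl(\eps^{-1}\bigl(dT\log(\eps^{-1})+\log(\delta^{-1})\bigr)\bigr)$ (equivalently, inverting $2^{N}\le(eN/d)^{dT}$ bounds $\VCdim(\ete{\cF}{T})$ by a near-linear-in-$dT$ quantity). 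Binary $\Sigma$ enters only so that VC-dimension tools apply verbatim; larger alphabets require the appropriate multi-class analogue and are treated separately.
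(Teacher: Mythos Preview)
Your layered-tree argument is correct and yields $\Pi_{\ete{\cF}{T}}(m)\le\Pi_{\cF}(m)^T$, but inverting this gives only $\VCdim(\ete{\cF}{T})=O(dT\log T)$, not $O(dT)$: with $N=dT$ and $M=1/d$ in the paper's \Cref{lem:technical-corollary} you get $m\le 3dT\log_2(2T/\ln 2)$. Consequently your stated sample complexity carries an extra $\log T$ factor and does not match the theorem as written. In fact your bound $\Pi_{\cF}(m)^T$ is exactly what the paper proves for the \emph{time-dependent} class $\NTIete{\cF}{T}$ (\Cref{thm:NIT-VC-bound}), where the $\log T$ is explicitly present.

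More interestingly, the ``naive'' approach you dismiss is precisely what the paper uses for the time-invariant class, and it gives the sharper bound. You are right that the $mT$ prefixes visited by a particular $f$ are $f$-dependent and cannot serve as a fixed domain for Sauer's lemma. The paper sidesteps this by taking the \emph{fixed} set $S'$ of all $m\cdot 2^T$ possible extensions of the $m$ inputs by up to $T-1$ bits; if $f_1,f_2\in\cF$ agree on all of $S'$ then $\ete{f_1}{T}$ and $\ete{f_2}{T}$ agree on the $m$ inputs, so $\Pi_{\ete{\cF}{T}}(m)\le\Pi_{\cF}(m\cdot 2^T)\le(em\,2^T/d)^d$. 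Here the $T$ sits \emph{inside} the argument of $\Pi_{\cF}$ rather than in the outer exponent, so after taking logs it contributes $dT$ additively and inverts to $\VCdim(\ete{\cF}{T})\le 6dT$ with no $\log T$. In short: the paper trades your $(em/d)^{dT}$ for $(em\,2^T/d)^d$, and moving $T$ from the exponent outside Sauer's bound to inside the Sauer argument is exactly what kills the extra logarithm.
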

The linear dependence on $T$ is extremely disappointing, as our goal with time-invariance is to avoid such generation-length dependence.  In fact, we can learn the \emph{time-dependent} $\NTIete{\cF}{T}$, with $O(T \cdot \VCdim(\cF)\log T)$ samples, i.e.~as in \Cref{thm:ete-VC} up to a $\log T$ factor (\Cref{thm:NIT-VC-bound} in \Cref{app:NTI-VCdim}). Unfortunately, even with time-invariance, the linear dependence on $T$ is unavoidable:  
\begin{theorem}\label{thm:lb-Omega(T VC)} For every $D,T\in \naturals_{+}$, there exists a base class $\cF\subseteq \{0,1\}^{\{0,1\}^*}$ with $\VCdim(\cF)=D$, and a distribution $\cD$ over $ \{0,1\}^n$ for $n=\lceil \log (DT)\rceil+1$ such that for any learning rule $A$ there exists $f_* \in \cF$, s.t. with probability at least $0.8$ over $S_{\sete}$ of size $m < \frac{DT}{2}$ (sampled as in \Cref{def:learnable-wo-CoT}), we have $L_{\cD,f_*}^\zo(A(S_{\sete})) \geq \frac{1}{4}$.
\end{theorem}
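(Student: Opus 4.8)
I would derive the theorem from two ingredients: (i) a construction of a base class $\cF \subseteq \{0,1\}^{\{0,1\}^*}$ with $\VCdim(\cF) = D$ whose end‑to‑end class $\ete{\cF}{T}$ \emph{shatters} a set $V$ of $DT$ inputs, each of length $n = \lceil\log(DT)\rceil + 1$; and (ii) the standard realizable ``no‑free‑lunch'' argument applied with $\cD := \Unif(V)$. Given (i), the fact that every labeling $\chi\colon V\to\{0,1\}$ is realized by some $f_*=f_\chi\in\cF$ means the $DT$ ``label bits'' are information‑theoretically independent, so a sample of size $m<DT/2$ leaves more than half of $V$ unqueried, the learner's guesses on those points are wrong with probability $\tfrac12$ each, and a binomial estimate turns this into an error $\ge\tfrac14$ with probability $\ge 0.8$ for a worst‑case $f_*$. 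The heart of the proof is (i).

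\textbf{The construction.} Identify $[D]\times[T]$ with $[DT]$, fix an injection $[DT]\hookrightarrow\{0,1\}^{n-1}$, and let $\bx_{i,t}\in\{0,1\}^n$ be the codeword of $(i,t)$ together with a ``flag'' bit marking it as a query input; all length‑$n$ strings that are not codewords, and all strings of other lengths not arising below, are treated as ``junk''. Parametrize $\cF=\{f_\theta:\theta\in\{0,\dots,2^T-1\}^D\}$, viewing each $\theta_i$ as a $T$‑bit number with bits $\theta_i[1],\dots,\theta_i[T]$. The class is designed so that on a query input $\bx_{i,t}$, autoregressive generation performs the $T$ steps of a \emph{binary search for $\theta_i$ in $\{0,\dots,2^T-1\}$}: at step $s$ the current search interval $I_{s-1}$ is determined by $\bx_{i,t}$ and the generated tokens $b_1,\dots,b_{s-1}$, and the next token is $b_s=\ind[\theta_i\ge m_{s-1}]=\theta_i[s]$ with $m_{s-1}$ the midpoint of $I_{s-1}$ — except that once $s>t$ the midpoint is replaced by an endpoint of $I_t$ chosen \emph{as a function of $b_1,\dots,b_t$ and $t$ only} so that the comparison re‑outputs $b_t=\theta_i[t]$ and the interval stops shrinking. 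Thus the chain of thought on $\bx_{i,t}$ is $\bx_{i,t}\cdot\theta_i[1]\cdots\theta_i[t]\cdot\theta_i[t]^{T-t}$ and $\ete{f_\theta}{T}(\bx_{i,t})=\theta_i[t]$, while on every input $\bw$ of ``index'' $i$ one has $f_\theta(\bw)=\ind[\theta_i\ge g(\bw)]$ for a fixed integer‑valued function $g$ (with $g\equiv+\infty$, i.e.\ output $0$, on junk). Since $\theta_i\mapsto(\theta_i[1],\dots,\theta_i[T])$ is a bijection $\{0,\dots,2^T-1\}\to\{0,1\}^T$ and the $\theta_i$'s are independent across $i$, the map $\theta\mapsto\ete{f_\theta}{T}|_V$ is onto $\{0,1\}^V$, so $\ete{\cF}{T}$ shatters $V$ and $\VCdim(\ete{\cF}{T})\ge|V|=DT$. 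Conversely, on any set of inputs of a single index $i$ the family $\{f_\theta\}$ consists of thresholds in the one parameter $\theta_i$ and so realizes at most $k+1$ of the $2^k$ dichotomies of $k$ points, hence has VC dimension $1$; as inputs of different indices carry independent parameters, $\cF$ behaves as a ``product'' of $D$ VC‑dimension‑$1$ classes, giving $\VCdim(\cF)=D$.

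\textbf{No‑free‑lunch step.} Put $\cD:=\Unif(V)$ and draw $\theta$ uniformly; then $f_*:=f_\theta$ induces the uniformly random labeling $\chi:=\ete{f_\theta}{T}|_V$ of $V$. For an i.i.d.\ sample $S_{\sete}$ of size $m<DT/2$, the queried set $Q\subseteq V$ has $|Q|\le m$, and conditionally on $Q$ and $\chi|_Q$ (which is all $S_{\sete}$ reveals), $\chi|_{V\setminus Q}$ is uniform and independent of $A(S_{\sete})$; hence $A(S_{\sete})$ errs on each of the $|V\setminus Q|\ge DT-m>DT/2$ unqueried points with probability exactly $\tfrac12$, so $DT\cdot L_{\cD,f_*}^{\zo}(A(S_{\sete}))$ stochastically dominates $\mathrm{Bin}(|V\setminus Q|,\tfrac12)$. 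A direct binomial tail computation gives $\P\big(L^{\zo}\ge\tfrac14\big)\ge 0.8$ over the joint randomness of $(\theta,S_{\sete})$ (comfortably, since $|V\setminus Q|$ is in fact typically $\approx(1-e^{-1/2})DT$ when $m\approx DT/2$; the handful of very small $DT$, where $m$ is a tiny constant, are handled by choosing $f_*$ so that $\ete{f_*}{T}$ disagrees with $A(\emptyset)$ everywhere on $V$ — possible because $\ete{\cF}{T}$ shatters $V$). Averaging over $\theta$, some fixed $\theta^\star$, i.e.\ some $f_*=f_{\theta^\star}\in\cF$, satisfies $\P_{S_{\sete}}\big(L_{\cD,f_*}^{\zo}(A(S_{\sete}))\ge\tfrac14\big)\ge 0.8$, which is the claim.

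\textbf{Main obstacle.} The crux is the construction: forcing $\ete{\cF}{T}$ to shatter $\Omega(DT)$ points while keeping $\VCdim(\cF)$ pinned to $D$. The tension is that any ``one‑shot'' mechanism by which $f_\theta$ emits $\Omega(DT)$ bits would also shatter $\Omega(DT)$ points of $\cF$ itself; the fix is to spread the information over the $T$ autoregressive steps of a search whose \emph{trajectory} depends on $\theta$, so that within each index the class collapses to a single threshold family (VC dimension $1$) even though iterating it recovers all $T$ bits of $\theta_i$. A secondary, routine obstacle is calibrating the no‑free‑lunch constants so the failure probability reaches $0.8$ uniformly in $DT$; this is why the statement keeps the extra ``$+1$'' in $n$ and uses these particular numerical constants.
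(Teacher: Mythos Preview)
Your overall plan is correct and matches the paper's two–step reduction: (i) build a base class $\cF$ with $\VCdim(\cF)=D$ whose end‑to‑end class shatters $DT$ points of length $n=\lceil\log(DT)\rceil+1$; (ii) invoke the realizable no‑free‑lunch lower bound. The paper carries out (ii) by simply citing the negative direction of the fundamental theorem (their Proposition~A.2) applied to $\ete{\cF}{T}$, exactly as you do in spirit.

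Where your proof genuinely differs is in the construction for (i). The paper parametrizes $\cF=\{f_{\vbb}:\vbb\in\{0,1\}^{DT}\}$ and, on the $i$th input, has the chain of thought emit the bits $b_k,b_{D+k},\dots,b_{(T-2)D+k}$ (with $k\equiv i\bmod D$) before finally outputting $b_i$; showing $\VCdim(\cF)\le D$ then requires a somewhat delicate case analysis on pairs of strings sharing the same residue class. Your binary‑search construction is cleaner on this point: because $f_\theta(\bw)=\ind[\theta_{i(\bw)}\ge g(\bw)]$ is a threshold in a single coordinate, the restriction to any fixed index has VC dimension $1$ immediately, and the product/disjoint‑domain argument gives $\VCdim(\cF)=D$ with no casework. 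Both constructions share the same high‑level idea (partition the $DT$ labels into $D$ groups of $T$ and reveal each group across $T$ autoregressive steps), but yours trades the paper's hand‑crafted consistency check for an algebraic threshold structure.

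One caveat: your hands‑on NFL sketch is a bit loose on the constant $0.8$. In the worst case $|V\setminus Q|$ is only barely above $DT/2$, so the binomial tail bound you describe gives a probability closer to $1/2$ than to $0.8$; the paper sidesteps this by citing the constants as a black box. This is not a conceptual gap in your argument—your construction delivers $\VCdim(\ete{\cF}{T})\ge DT$, which is all that step (ii) needs—but if you want to match the stated constants you should invoke the standard lower bound rather than re‑derive it.
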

Now, we see the first benefit of $\sCoT$ training.  With the Chain-of-Thought available, we can define a stronger learning rule: 
\begin{tcolorbox}
    Given $S_{\sCoT}=(\bz_1,\dots,\bz_m)$, and letting $\bx_i=\bz_i[:-(T+1)]$,
\begin{equation}\label{eq:cot-consistent}
\text{Return $\ete{\hat{f}}{T}$, for some $\hat{f}\in \cF$ such that } \bz_i=\CoT{\hat{f}}{T}(\bx_i), \forall \bz_i \in S_{\sCoT} .\tag{$\Conscot$}
\end{equation}
\end{tcolorbox}

As \ref{eq:cot-consistent} is a special case of\footnote{As with many standard learning rules, the rule is not a specific mapping, but a specification that can be implemented by many different mappings. Since any $\sCoT$ consistent $f$ is also $\sete$ consistent, any valid response of \ref{eq:cot-consistent} is also valid for \ref{eq:ete-consistent}, but not vice versa.} \ref{eq:ete-consistent}, we have that \ref{eq:cot-consistent} enjoys the same guarantees as \ref{eq:ete-consistent}, including the cardinality based \Cref{thm:cardinality-based}.  But for VC base classes, \ref{eq:cot-consistent} enjoys a much stronger (nearly independent of $T$) guarantee than in \Cref{thm:ete-VC}:
\begin{theorem}\label{thm:cot-VC}
    For any base class $\cF\subseteq \{0,1\}^{\{0,1\}^*}$
    and generation length $T\in \naturals_{+}$, the class $\ete{\cF}{T}$ is $\sCoT$-learnable using \ref{eq:cot-consistent} with $$\mcot{T}  = O\left( \eps^{\scriptscriptstyle{-1}} \left(\VCdim(\cF) \log T \log\left(\eps^{\scriptscriptstyle{-1}}\right) + \log\left(\delta^{\scriptscriptstyle{-1}} \right) \right)\right).$$
\end{theorem}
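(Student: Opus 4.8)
The plan is to reduce $\sCoT$-learning to a uniform-convergence argument over the class of \emph{single-step} generators $\cF$, exploiting the fact that a $\sCoT$-consistent $\hat f$ must agree with $f_*$ on \emph{every} prefix appearing in the chain-of-thought, not just on the final answer. Concretely, given a sample $S_{\sCoT}=(\bz_1,\dots,\bz_m)$ with $\bz_i=\CoT{f_*}{T}(\bx_i)$, each $\bz_i$ contributes $T$ prefix/next-token constraints of the form $f(\bz_i[:j]) = \bz_i[j+1]$ for $j = |\bx_i|,\dots,|\bx_i|+T-1$. So from $m$ CoT examples we obtain $mT$ one-step examples for $f_*$, and \ref{eq:cot-consistent} returns any $\hat f\in\cF$ consistent with all of them. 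The key point is that if $\hat f$ agrees with $f_*$ on all of these one-step prefixes, then $\CoT{\hat f}{T}(\bx_i)=\CoT{f_*}{T}(\bx_i)$ by a step-by-step induction, hence in particular $\ete{\hat f}{T}$ and $\ete{f_*}{T}$ agree on $\bx_i$.

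The main step is to control the generalization error of this procedure. Define, for each input $\bx$, the "CoT event" $E_{\hat f}(\bx) = \ind\{\CoT{\hat f}{T}(\bx)\neq \CoT{f_*}{T}(\bx)\}$; since $\ete{}{}$ is a function of $\CoT{}{}$, we have $L_{\cD,f_*}^{\zo}(\ete{\hat f}{T}) \le \P_{\bx\sim\cD}(E_{\hat f}(\bx)=1)$, so it suffices to bound the latter by $\eps$. Now I would decompose $E_{\hat f}(\bx)$ over the first step at which the two chains diverge: $E_{\hat f}(\bx)=1$ iff there exists $0\le t\le T-1$ such that $\hat f$ and $f_*$ agree on the first $t$ generated tokens but disagree on the $(t{+}1)$-st, i.e.~$\hat f(\bw)\neq f_*(\bw)$ where $\bw=\CoT{f_*}{t}(\bx)$ is the length-$(|\bx|+t)$ prefix (which equals the prefix produced by $\hat f$ up to that point). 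For each fixed $t$, the map $\bx\mapsto \CoT{f_*}{t}(\bx)$ pushes $\cD$ forward to a distribution $\cD_t$ on $\Sigma^*$, and the probability of disagreement at step $t{+}1$ is exactly $\P_{\bw\sim\cD_t}(\hat f(\bw)\neq f_*(\bw))$. Meanwhile, the one-step constraints extracted from $S_{\sCoT}$ at "depth $t$" are precisely $m$ i.i.d.\ draws $\bw\sim\cD_t$ on which $\hat f$ is forced to agree with $f_*$. Thus, for each $t$, $\hat f$ restricted to the binary question "does it equal $f_*$" is a hypothesis in a class of VC dimension $\le\VCdim(\cF)$ that is empirically correct on $m$ i.i.d.\ samples from $\cD_t$; the standard realizable-case uniform convergence bound gives $\P_{\bw\sim\cD_t}(\hat f(\bw)\neq f_*(\bw)) = O(\eps^{-1}m^{-1}(\VCdim(\cF)\log(\eps^{-1})+\log(T/\delta)))$ with probability $1-\delta/T$. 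Union-bounding over the $T$ values of $t$ and summing the per-step error via $\P(E_{\hat f}(\bx)=1)\le\sum_{t=0}^{T-1}\P_{\bw\sim\cD_t}(\hat f(\bw)\neq f_*(\bw))$ yields total error $O\!\big(\tfrac{T}{m}(\VCdim(\cF)\log(\eps^{-1})+\log(T/\delta))\big)$; setting this $\le\eps$ and absorbing the stray $T$ against the $\log T$ in the union bound (this is where the $\log T$ in the statement comes from — we actually get $\VCdim(\cF)\log T$ after rebalancing $\eps\mapsto\eps/T$ in the per-step target, or more carefully by noting the per-step errors need only sum to $\eps$ so each should be $\eps/T$, contributing $\log T$ but not $T$) gives $\mcot{T}=O(\eps^{-1}(\VCdim(\cF)\log T\log(\eps^{-1})+\log(\delta^{-1})))$.

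I expect the delicate point to be the $\log T$ versus $T$ bookkeeping: naively the per-step errors sum, so demanding each be at most $\eps/T$ costs a factor $T$ in the sample size, which would ruin the bound. The fix is that we do not need each $\cD_t$-error to be $\le\eps/T$ uniformly in a worst-case sense; rather, we should think of the $mT$ pooled one-step examples as coming from the single mixture distribution $\bar\cD=\tfrac1T\sum_{t=0}^{T-1}\cD_t$, run a \emph{single} uniform-convergence argument for $\cF$ over $\bar\cD$ at accuracy $\eps/T$ using all $mT$ samples, and observe that $\P_{\bw\sim\bar\cD}(\hat f\neq f_*)\le \eps/T$ implies $\sum_t\P_{\bw\sim\cD_t}(\hat f\neq f_*)\le\eps$. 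With $mT$ samples at accuracy $\eps/T$, the VC bound needs $mT = O((T/\eps)(\VCdim(\cF)\log(T/\eps)+\log(1/\delta)))$, i.e.~$m=O(\eps^{-1}(\VCdim(\cF)\log(T/\eps)+\log(1/\delta)))$, which matches the claimed rate. The remaining subtlety — that the one-step examples pooled across depths are a function of the \emph{same} underlying $\bx_i$ and hence not fully independent — is handled by noting that conditioning on $\bx_i$ the $T$ depth-constraints are deterministic, so the pooled sample is a (non-i.i.d.\ but exchangeable-in-blocks) sample whose empirical-risk-zero property still certifies, via a block version of the standard argument or simply by applying uniform convergence to the $m$ i.i.d.\ "trajectory" examples directly with the trajectory-level loss class, a hypothesis whose trajectory-error under $\cD$ is small; I would phrase the final argument at the trajectory level to sidestep the independence concern entirely, using that the trajectory disagreement class $\{\bx\mapsto E_f(\bx):f\in\cF\}$ has VC dimension $O(\VCdim(\cF)\log T)$ (by a Sauer-lemma / composition argument analogous to the $\VCdim(\ete{\cF}{T})=O(T\VCdim(\cF))$ bound used for \Cref{thm:ete-VC}, but now the "$\log T$" arises because the trajectory is an exact copy of $f_*$'s trajectory rather than an arbitrary composition).
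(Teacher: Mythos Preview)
Your proposal eventually arrives at the right argument—the trajectory-level one in your final paragraph—and that is precisely the paper's proof. But the route there is circuitous, and the two intermediate attempts have genuine gaps.

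The per-step union bound over $t=0,\dots,T-1$ costs a factor of $T$ in the sample size, as you yourself note. Your mixture-distribution repair is not sound as written: the $mT$ pooled one-step examples are \emph{not} i.i.d.\ draws from $\bar\cD=\tfrac1T\sum_t\cD_t$; they come in $m$ deterministic blocks of size $T$ (all $T$ prefixes in a block are determined by a single $\bx_i$), so the realizable VC bound for $mT$ i.i.d.\ samples at accuracy $\eps/T$ simply does not apply. You flag this concern, but the only clean way out is the one you defer to in your last sentence.

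The paper executes that final idea directly, with no detours. Define the trajectory loss class
\[
\CoT{\cL}{T} \;=\; \bigl\{\, \ell_f : \bz \mapsto \ind\{\bz \neq \CoT{f}{T}(\bx)\}\ \bigm|\ f\in\cF \,\bigr\},\qquad \bx=\bz[:-(T{+}1)].
\]
The behaviors of $\CoT{\cL}{T}$ on any $m$ chains $(\bz_1,\dots,\bz_m)$ are determined by the behaviors of $\cF$ on the $mT$ \emph{fixed} prefixes $\{\bz_i[:-(t{+}1)]:i\in[m],\,t\in[T]\}$; hence $\Gamma_{\CoT{\cL}{T}}(m)\le \Gamma_{\cF}(mT)$, and Sauer's lemma gives $\VCdim(\CoT{\cL}{T})=O(\VCdim(\cF)\log T)$. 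A single application of the standard realizable uniform-convergence bound to the $m$ i.i.d.\ trajectory examples then yields the claimed rate. Your intuition for why $\log T$ rather than $T$ appears—the prefixes are pinned to $f_*$'s trajectory instead of ranging over all of $\{0,1\}^{<T}$—is exactly the point, but that one growth-function inequality is the whole argument; the per-step and mixture machinery is unnecessary.
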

That is, while the VC dimension of the base class $\cF$ is {\em not} sufficient for ensuring $T$ independent sample complexity for end-to-end learning, having the chain-of-thought available during training {\em does} allow for a (nearly) generation-length independent sample complexity based only on the VC dimension of the base class---see \Cref{tab:sample-complexity}.

Returning to $\sete$ learnability, we ask whether an assumption stronger than bounded VC dimension, but not as strong as finite cardinality, can allow us to obtain $T$-independent (or nearly independent) end-to-end sample complexity.  The answer is yes---we can do so in terms of the familiar Littlestone dimension \citep{littlestone1988learning}, which characterizes online learning \citep{ben2009agnostic}, and has also found applications in other domains \citep[e.g.][]{alon2022private,bun2020equivalence}.
\begin{theorem}\label{thm:ete-ldim} For any base class $\cF\subseteq \{0,1\}^{\{0,1\}^*}$ with Littlestone dimension $\ldim(\cF)$, and any generation length $T\in \naturals_{+}$, the class $\ete{\cF}{T}$ is $\sete$-learnable using \ref{eq:ete-consistent} with $$\mete{T} = O\left(\eps^{\scriptscriptstyle{-1}} \left(\ldim(\cF) \log T \log\left(\eps^{\scriptscriptstyle{-1}}\right) + \log\left(\delta^{\scriptscriptstyle{-1}} \right) \right)\right).$$
\end{theorem}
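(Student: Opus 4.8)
The plan is to reduce the statement to a bound on the \emph{growth function} of $\ete{\cF}{T}$ and then invoke the classical realizable PAC bound for empirical risk minimization, observing that \ref{eq:ete-consistent} is precisely ERM over $\ete{\cF}{T}$ (in the realizable case such a consistent $\hat f$ exists, namely $f_*$). Concretely, I will show that for every $m$ and all inputs $\bx_1,\dots,\bx_m\in\{0,1\}^*$,
\[
\Pi_{\ete{\cF}{T}}(m):=\bigl|\{(\ete{f}{T}(\bx_1),\dots,\ete{f}{T}(\bx_m)):f\in\cF\}\bigr|\;\le\;\Bigl(\tfrac{e\,mT}{\ell}\Bigr)^{\ell},\qquad \ell:=\ldim(\cF)
\]
(the case $\ell=0$ forces $|\cF|=1$ and is trivial, so assume $\ell\ge 1$). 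Given this, the standard double-sampling argument for realizable learning shows that any $\hat f$ returned by \ref{eq:ete-consistent} satisfies $L_{\cD,f_*}^{\zo}(\ete{\hat f}{T})\le\eps$ with probability at least $1-\delta$ as soon as $m\gtrsim\eps^{-1}\bigl(\log\Pi_{\ete{\cF}{T}}(2m)+\log(1/\delta)\bigr)$, and resolving this implicit inequality using $\log\Pi_{\ete{\cF}{T}}(2m)\le\ell\log(2emT/\ell)$ yields the claimed $m=O\bigl(\eps^{-1}(\ell\log T\log(1/\eps)+\log(1/\delta))\bigr)$.

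The growth-function estimate is where time-invariance and the Littlestone dimension enter, and it rests on the \emph{online (tree) Sauer--Shelah--Littlestone lemma}: if $\cG\subseteq\{0,1\}^{Z}$ has $\ldim(\cG)=\ell$, then for any rooted binary decision tree of depth $n$ whose internal nodes are labelled by elements of $Z$ (an ``adaptive query tree''), the number of root-to-leaf branches realized by some $g\in\cG$ is at most $\sum_{i=0}^{\ell}\binom{n}{i}\le(en/\ell)^{\ell}$. I would prove this by induction on $n$: at the root, labelled by $z$, split $\cG$ into $\cG_0,\cG_1$ according to $g(z)$; since $\cG$ admits no mistake tree of depth $\ell+1$, at least one of $\ldim(\cG_0),\ldim(\cG_1)$ is $\le\ell-1$, and Pascal's identity closes the induction. (This is essentially folklore; if no clean citation is available I would include the short proof in the appendix.)

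To apply the lemma, fix $\bx_1,\dots,\bx_m$ and build an adaptive query tree over $Z=\{0,1\}^*$ that, given a generator $f$ as an oracle, reconstructs $(\ete{f}{T}(\bx_i))_{i\le m}$: process the inputs one at a time, and for $\bx_i$ query $f$ successively at $\of^{0}(\bx_i),\of^{1}(\bx_i),\dots,\of^{T-1}(\bx_i)$ --- this is legitimate because $\of^{j}(\bx_i)=\of^{\,j-1}(\bx_i)\cdot f(\of^{\,j-1}(\bx_i))$ is determined by the answers already received --- reusing the stored answer whenever a queried string was seen before. Each input contributes at most $T$ queries, so the tree has depth at most $mT$, and each branch determines $(\ete{f}{T}(\bx_i))_{i\le m}$ (the $T$-th answer for $\bx_i$). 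By the tree-Sauer lemma with $\cG=\cF$, the number of $\cF$-realizable branches, and hence $\Pi_{\ete{\cF}{T}}(m)$, is at most $(emT/\ell)^{\ell}$.

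The conceptual crux --- and the step I expect to be the main obstacle --- is recognizing that the adaptivity of Chain-of-Thought generation (the $j$-th intermediate token depends on the answers to the previous $T-1$ queries) is exactly the structure the Littlestone dimension controls via mistake/query trees, and that passing through the \emph{growth function} rather than through $\ldim(\ete{\cF}{T})$ or $\VCdim(\ete{\cF}{T})$ directly is what keeps $T$ inside a logarithm: a direct mistake-tree argument would only give $\ldim(\ete{\cF}{T})\le T\cdot\ldim(\cF)$, reproducing the unwanted linear-in-$T$ rate of \Cref{thm:ete-VC}, whereas the depth-$mT$ query tree has merely $(mT)^{\ell}$ realizable leaves. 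The remaining pieces --- the induction for the tree-Sauer lemma, the bookkeeping for repeated query strings, and solving the implicit sample-complexity inequality --- are routine.
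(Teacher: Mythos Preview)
Your proposal is correct and follows essentially the same route as the paper: both build a depth-$O(mT)$ binary tree whose nodes are labelled by the successive chain-of-thought prefixes, apply the sequential (tree) Sauer--Shelah lemma to bound the number of $\cF$-realizable behaviours by $(emT/\ell)^{\ell}$, and convert this growth-function bound into the stated sample complexity. The only cosmetic difference is that the paper extracts the intermediate statement $\VCdim(\ete{\cF}{T})=O(\ldim(\cF)\log T)$ and then invokes the fundamental theorem, whereas you go straight from the growth function to the PAC bound; in particular, your remark that ``passing through $\VCdim(\ete{\cF}{T})$ directly'' would lose the $\log T$ is not quite right---the paper does pass through it and still gets $\log T$.
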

Our \Cref{thm:ete-VC,thm:cot-VC,thm:ete-ldim} follow from simple but elegant covering number arguments (sketched out in \Cref{subsec:proof-sk}), followed by the standard uniform convergence of the empirical and population losses.
\begin{remark}[General finite $\Sigma$]\label{rem:|Sigma|>2} In \Cref{app:for Section3}, \Cref{cor:ete-VC-non-binary,cor:ete-ldim-non-binary,cor:cot-VC-non-binary}, we provide extensions of our \Cref{thm:ete-VC,thm:ete-ldim,thm:cot-VC} for any general finite alphabet $\Sigma$.  The bounds are similar, with the VC dimension replaced with the Natarajan dimension and the Littlestone dimension replaced with the sequential shattering dimension, but there is also an additional factor of $\log \abs{\Sigma}$ in the sample complexities. We also show in \Cref{thm:real} that with the infinite alphabet $\Sigma=\R$, even if the base class $\cF$ has bounded VC-subgraph dimension (a.k.a. Pollard, or pseudo dimension), the end-to-end class $\ete{\cF}{T}$ could have infinite VC-subgraph dimension.
\end{remark}

\subsection{Proof Sketches of Theorems \ref{thm:ete-VC}, \ref{thm:cot-VC} and \ref{thm:ete-ldim}}\label{subsec:proof-sk}
Our learning guarantees are based on bounding the growth function $\Gamma_\cH(m)$ (number of possible behaviors of a function class $\cH$ on a set of $m$ points) and then applying standard concentration bounds. \looseness-1

\paragraph{CoT Learnability in terms of VC Dimension (\Cref{thm:cot-VC}).}
We define the loss class:
\begin{equation}\label{eq:loss-class-main}
    \CoT{\cL}{T}:=\{ \ell_f:\bz \mapsto \ind\{\bz \neq \CoT{f}{T}(\bx)\} \mid f \in \cF\}, \text{ where } \bx=\bz[:-(T+1)] \,.
\end{equation}
The rule \ref{eq:cot-consistent} can be expressed as requiring $\ell_f \in \CoT{\cL}{T}$ whose empirical average is zero.  We want to ensure the population mean of this $\ell_f$ is also small, as this is a direct bound on $L_{\cD,f_*}^{\zo}(f)$.  To ensure this via uniform concentration, we bound the growth function of the loss class $\CoT{\cL}{T}$.  The important observation is that the behaviors of $\CoT{\cL}{T}$ on a set $S_\sCoT=(\bz_1,\dots, \bz_m) \in (\cX \times \Sigma^T)^m$ of size $m$, are determined by the behaviors of the base class $\cF$ on the set of $mT$ prefixes:
\begin{equation}\label{eq:prefixes}
    \prefix(S_{\sCoT})=\left\{\,\bz_i[:-(t+1)]\,, \,\text{ where }  \bz_i \in S_{\sCoT}, t=1,\dots,T\, \right\}.
\end{equation}
We can therefore bound:
\begin{equation}\notag
\Gamma_{\CoT{\cL}{T}}(m) \leq \Gamma_{\cF}(mT) \leq \left( \frac{e\,m\, T}{\scriptstyle{\VCdim(\cF)}}\right)^{\VCdim(\cF)} \hspace{-2mm} \Rightarrow \VCdim(\CoT{\cL}{T}) = O( \VCdim(\cF) \log T)
\end{equation}
where the second  inequality follows from Sauer's Lemma.

\paragraph{e2e Learnability in terms of VC Dimension (\Cref{thm:ete-VC}).} With only the input given, the behavior of $\ete{\cF}{T}$ on $(\bx_1,\ldots,\bx_m)$ depends on the behavior of $\cF$ not only on these inputs, but for each $\bx_i$ also on the possible string after $t<T$ steps of generation.  A crude way of bounding the number of such possible generations is by the total number of possible ways of extending $\bx_i$ with $t<T$ additional tokens from $\{0,1\}$.  There are $\sum_{t<T}2^t \leq 2^T$ such possible extensions, so the behaviors of $\ete{\cF}{T}$ on $m$ points depend on the behaviors of $\cF$ on at most $m 2^T$ points, and we have:
$$\Gamma_{\ete{\cF}{T}}(m) \leq \Gamma_{\cF}( m \cdot 2^T) \leq \left(\frac{e\,m\, 2^T}{\scriptstyle{\VCdim(\cF)}} \right)^{\VCdim(\cF)}\hspace{-3mm}\Rightarrow\VCdim(\ete{\cF}{T})=O(T\, \VCdim(\cF)).$$

\paragraph{e2e Learnability in terms of the Littlestone Dimension (\Cref{thm:ete-ldim}).} Instead of the crude count on the number of behaviors of $\ete{\cF}{T}$ on $(\bx_1,\ldots,\bx_m)$, we consider the ``computation path'' of any $f\in \cF$ on some $\bx_i$.  This can be thought of as a tree, where the possible out-edges at each node correspond to the possible behaviors of $\cF$ on the prefix at the node.  We can construct a complete binary tree of depth $m(T+1)$ such that the covering number of $\cF$ on this tree (in the sense of \cite{rakhlin2015sequential}) upper bounds $\Gamma_{\ete{\cF}{T}}(m)$.  We then use the sequential analogue of Sauer's Lemma \citep[Lemma 5]{rakhlin2015sequential}.  Thus, we compute for large $m$:
$$\Gamma_{\ete{\cF}{T}}(m) \leq \left(\# \text{ behaviors on a binary tree of depth $m(T+1)$} \right) \leq \left(\frac{2 e m (T+1)}{\ldim(\cF)} \right)^{\ldim(\cF)},$$
which allows us to obtain that $\VCdim(\ete{\cF}{T})=O(\ldim(\cF)\log T)$.

\removed{\begin{remark}[Importance of Time-Invariance]\label{rem:inportance-of-time-invariance} We emphasize that all our upper bounds in \Cref{tab:sample-complexity} except the bound of $\mete{T} = O(T \cdot \VCdim(\cF)\log T)$ (i.e \Cref{thm:cardinality-based,thm:cot-VC,thm:ete-ldim}) have either no dependence or only logarithmic dependence in $T$. For all of these bounds, we critically used the fact that $\ete{\cF}{T}$ is the class of time-invariant iterative compositions. Without the time-invariance, a linear dependence in $T$ would be unavoidable. In contrast, the guarantee in \Cref{thm:ete-VC} can also be derived for the end-to-end class with non time-invariant compositions. Moreover, this linear dependence in $T$ is also necessary in the worst-case for instances of time-invariant classes (\Cref{thm:lb-Omega(T VC)}).
\end{remark}\natinote{We already say this enough}}

\subsection{Computational Complexity}\label{subsec:computational-complexity}
We now investigate the computational complexity of implementing the rules \ref{eq:ete-consistent} and \ref{eq:cot-consistent}. As discussed in \Cref{sec:setting}, while all our sample complexity guarantees are for learning over the entire $\cX=\Sigma^*$, when discussing computational complexity, the length of the input will be important. We consider a hierarchy of base classes $\cF_d$ with some size parameter $d$. If the base class $\cF_d$ itself is computationally hard to learn, we cannot generally hope for the iterated classes to be easier. But what can we say if $\cF_d$ itself is computationally easy to learn?  
Can we then implement \ref{eq:ete-consistent} and \ref{eq:cot-consistent} efficiently?

If we do have the entire Chain-of-Thought data $S_{\sCoT}$, then implementing \ref{eq:cot-consistent} amounts to finding $\hat{f}\in\cF$ that is consistent with the generation of each of the last $T$ tokens in each $\bz_i$:
\begin{equation}\label{eq:consTm}
    \text{Return } \ete{\hat{f}}{T} \text{ for some } {\hat{f} \in\cF}\; \text{such that } \; 
 \forall_{\bz_i \in S_\sCoT} \forall_{t=1,\ldots, T},\; \hat{f}(\bz_i[:-(t+1)])=\bz_i[-t]
\end{equation}
But \eqref{eq:consTm} essentially amounts to solving a consistency problem \ref{eq:cons-f}.
\begin{tcolorbox}
\begin{equation}\label{eq:cons-f}
  \textrm{Given $(\bu_i,v_i)_{i=1,\ldots,\Tilde{m}}$, return some $\hat{f}\in\cF$ such that $ \hat{f}(\bu_i)=v_i\,,$  $ \forall i \in [\Tilde{m}]$} \tag{$\Cons_\cF$}.
\end{equation}
\end{tcolorbox}
\begin{theorem}\label{thm:ConsF-gives-CoT-Cons-implementation}
   Consider any hypothesis class $\cF$ over an alphabet set $\Sigma$. The rule \ref{eq:cot-consistent} on $m$ examples of input length at most $n$ can be implemented by a single call to \ref{eq:cons-f}, with $\Tilde{m}\leq m\cdot T$ samples of input length at most $n+T$, and $O(\Tilde{m})$ additional runtime.
\end{theorem}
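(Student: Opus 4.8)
The plan is to exhibit the reduction explicitly and then check it is faithful. Given the training set $S_\sCoT=(\bz_1,\dots,\bz_m)$, I would put $\bx_i:=\bz_i[:-(T+1)]$ (as in \ref{eq:cot-consistent}) and form the set of single-step examples
\[
  \widetilde S \;:=\; \bigl\{\,\bigl(\bz_i[:-(t+1)],\,\bz_i[-t]\bigr)\;:\; i\in[m],\ t\in[T]\,\bigr\},
\]
issue a single call to \ref{eq:cons-f} on $\widetilde S$, and on receiving some $\hat f\in\cF$ return the predictor $\ete{\hat f}{T}$. The bookkeeping is immediate: $\widetilde S$ has $\Tilde{m}\le mT$ pairs; each of its inputs is a prefix of some $\bz_i$, hence has length at most $\abs{\bz_i}=\abs{\bx_i}+T\le n+T$; and reading off these prefixes and the single trailing symbols costs $O(\Tilde{m})$ beyond reading the input. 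So the whole content is the claim that a given $\hat f\in\cF$ is consistent with $\widetilde S$ if and only if $\CoT{\hat f}{T}(\bx_i)=\bz_i$ for every $i$, i.e.\ iff $\ete{\hat f}{T}$ is a valid output of \ref{eq:cot-consistent}; this in particular transfers realizability (witnessed by $f_*$ as in \Cref{def:learnable-with-CoT}), so the oracle call is well-posed and its answer is a legal output of \ref{eq:cot-consistent}. Note also that \eqref{eq:consTm} is, read literally, exactly a $\Cons_\cF$ query on $\widetilde S$, so once the equivalence above is proved the reduction is complete.

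The key step is that equivalence, which I would prove one example at a time by induction on the number of generation steps. Fix $i$, set $L_i:=\abs{\bz_i}$, and observe $\abs{\bx_i}=L_i-T$, so $\bx_i$ is the length-$\abs{\bx_i}$ prefix of $\bz_i$. Claim: for $k=0,1,\dots,T$, if $\hat f$ satisfies the constraints of $\widetilde S$ from example $i$ for all $t\in\{T-k+1,\dots,T\}$, then $\CoT{\hat f}{k}(\bx_i)$ is the length-$(\abs{\bx_i}+k)$ prefix of $\bz_i$ (with the convention $\CoT{\hat f}{0}(\bx_i)=\bx_i$, this is the base case). For the inductive step, $\CoT{\hat f}{k}(\bx_i)=\textnormal{append}\bigl(\CoT{\hat f}{k-1}(\bx_i),\hat f(\CoT{\hat f}{k-1}(\bx_i))\bigr)$; by the hypothesis $\CoT{\hat f}{k-1}(\bx_i)$ is the prefix of $\bz_i$ of length $\abs{\bx_i}+k-1=L_i-T+k-1$, which under the negative-index convention is precisely $\bz_i[:-(T-k+2)]$, and the constraint indexed by $t=T-k+1$ gives $\hat f(\bz_i[:-(T-k+2)])=\bz_i[-(T-k+1)]$, i.e.\ the appended symbol is the $(\abs{\bx_i}+k)$-th symbol of $\bz_i$; hence $\CoT{\hat f}{k}(\bx_i)$ is the length-$(\abs{\bx_i}+k)$ prefix of $\bz_i$. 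Taking $k=T$ yields $\CoT{\hat f}{T}(\bx_i)=\bz_i$. Conversely, if $\CoT{\hat f}{T}(\bx_i)=\bz_i$ then each $\CoT{\hat f}{k-1}(\bx_i)$ is a prefix of $\bz_i$ of length $\abs{\bx_i}+k-1$, so $\hat f(\CoT{\hat f}{k-1}(\bx_i))=\CoT{\hat f}{k}(\bx_i)[-1]$ is the $(\abs{\bx_i}+k)$-th symbol of $\bz_i$, which is exactly the constraint with $t=T-k+1$; ranging $k$ over $1,\dots,T$ recovers all $T$ constraints from example $i$. Collecting over $i$ gives the equivalence.

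I do not anticipate a genuine obstacle: the statement is really the observation that one chain-of-thought example of length $T$ packs exactly $T$ single-step next-token examples, together with the trivial fact that \eqref{eq:consTm} is a $\Cons_\cF$ query on those packed examples. The one place to be careful is the off-by-one in the negative-slice notation — matching generation step $k$ with the constraint indexed by $t=T-k+1$, and checking that $\bz_i[:-(t+1)]$ is indeed the prefix immediately preceding $\bz_i[-t]$ — but once the index alignment in the displayed induction is pinned down, the parameter counts ($\le mT$ examples, inputs of length $\le n+T$, $O(\Tilde{m})$ overhead, a single oracle call) read off directly from the construction.
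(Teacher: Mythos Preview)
Your proposal is correct and follows essentially the same approach as the paper: form the set $\widetilde S$ of all $mT$ prefix/next-token pairs, make a single call to \ref{eq:cons-f}, and return $\ete{\hat f}{T}$. The paper's proof merely asserts the equivalence between consistency on $\widetilde S$ and chain-of-thought consistency as ``easy to observe,'' whereas you spell it out with an explicit induction on the generation step; otherwise the arguments are identical.
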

As a corollary, we have the following tractability of $\sCoT$-learnability, if the base class has a tractable consistent oracle. 
\begin{corollary}\label{cor:tract-Cons-oracle-tract-CoT-lrn}
For a family $\cF_d$, if $\Cons_{\cF_d}$ can be implemented in time polynomial in its input and the size parameter $d$, and $\VCdim(\cF_d)\leq\poly(d)$, then $\ete{\cF_d}{T}$ is $\sCoT$ learnable in time  $\Poly(n,d,T,\eps^{\scriptscriptstyle{-1}},\log \delta^{\scriptscriptstyle{-1}})$.
\end{corollary}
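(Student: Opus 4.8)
The plan is to combine the statistical guarantee of \Cref{thm:cot-VC} with the computational reduction of \Cref{thm:ConsF-gives-CoT-Cons-implementation}. First, fix $\eps,\delta\in(0,1)$ and set the sample size to $m = m_{\sCoT}(\eps,\delta) = O\!\left(\eps^{-1}\left(\VCdim(\cF_d)\log T\,\log(\eps^{-1}) + \log(\delta^{-1})\right)\right)$ as prescribed by \Cref{thm:cot-VC}. Since $\VCdim(\cF_d)\leq\poly(d)$ by hypothesis, this $m$ is $\Poly(d,\log T,\eps^{-1},\log\delta^{-1})$, and in particular polynomial in the relevant parameters. We draw $\bx_1,\dots,\bx_m\iid\cD$ over $\cX=\Sigma^{\leq n}$ and receive $S_{\sCoT}=(\bz_1,\dots,\bz_m)$ with $\bz_i=\CoT{f_*}{T}(\bx_i)$, so each $\bz_i$ has length at most $n+T$. \Cref{thm:cot-VC} guarantees that \emph{any} output of the rule \ref{eq:cot-consistent} on this sample achieves population error at most $\eps$ with probability at least $1-\delta$; so it suffices to show that \ref{eq:cot-consistent} can be implemented in the stated time.

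Next, I would invoke \Cref{thm:ConsF-gives-CoT-Cons-implementation}: the rule \ref{eq:cot-consistent} on $m$ examples of input length at most $n$ is implemented by a single call to \ref{eq:cons-f} with $\tilde m \leq mT$ examples, each of input length at most $n+T$, plus $O(\tilde m)$ extra bookkeeping. By the hypothesis that $\Cons_{\cF_d}$ runs in time polynomial in its input length and in $d$, this call costs $\Poly(\tilde m\cdot(n+T),d) = \Poly(m,T,n,d)$ time, and unfolding $m = \Poly(d,\log T,\eps^{-1},\log\delta^{-1})$ this is $\Poly(n,d,T,\eps^{-1},\log\delta^{-1})$ as required. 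The learning rule then returns the predictor $\ete{\hat f}{T}$ for the $\hat f\in\cF_d$ produced by the oracle; since $\hat f$ is $\sCoT$-consistent it is in particular $\sete$-consistent, so this is a valid (proper) response of \ref{eq:cot-consistent}, and \Cref{thm:cot-VC} applies verbatim.

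Two loose ends to tidy up. First, one should note that evaluating the returned predictor $\ete{\hat f}{T}$ on a fresh input of length at most $n$ requires $T$ applications of $\hat f$, each to a string of length at most $n+T$; since the oracle returns $\hat f$ in a representation evaluable in time polynomial in $d$ and its input length, the total evaluation time is also $\Poly(n,T,d)$, so the learned hypothesis meets the same runtime bound. Second, the data-generation step itself (producing $S_{\sCoT}$) is part of the learning protocol's input in \Cref{def:learnable-with-CoT}, so it need not be counted; but even if one wished to, generating each $\bz_i$ is just $T$ oracle-free applications of the ground truth and does not affect the asymptotics. I do not anticipate a genuine obstacle here — the corollary is essentially the conjunction of the preceding theorem and the preceding reduction — the only mild care needed is in checking that the blow-up factors ($mT$ samples, length $n+T$) stay polynomial after substituting the sample-complexity expression, which they do because the $T$-dependence of $m$ is only logarithmic.
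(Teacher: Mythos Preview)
Your proposal is correct and follows essentially the same route as the paper: combine the sample-complexity bound of \Cref{thm:cot-VC} with the computational reduction of \Cref{thm:ConsF-gives-CoT-Cons-implementation}, then verify that the resulting $\tilde m \leq mT$ examples of length at most $n+T$ keep the oracle call polynomial. Your extra remarks on evaluation time of the returned predictor are a nice touch the paper leaves implicit.
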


In the next section, we will see that the situation is much grimmer for $\sete$-learnability.  Even for tractably learnable base classes, where \ref{eq:cons-f} is computationally easy, implementing \ref{eq:ete-consistent}, or in fact $\sete$-learning using any other possible rule, could be computationally hard.  This computational gap is perhaps the biggest advantage of Chain-of-Thought training.  In \Cref{sec:universal,sec:TM}, we will further see how to leverage the computational tractability of $\sCoT$ learning exposed by \Cref{cor:tract-Cons-oracle-tract-CoT-lrn}.

\removed{

\paragraph{Tractable Consistent oracle for $\cF$ implies Tractable $\sCoT$-Learnability of $\ete{\cF}{T}$.}
Recall the notions of tractability for $\sCoT$-learnability and the $\Cons$ problem for any class $\cF$, as defined in \Cref{sec:setting}.
\begin{theorem}\label{thm:tract-Cons-oracle-tract-CoT-lrn}
    For any base class $\cF \subseteq \Sigma^{\Sigma^*}$ that has a tractable $\Cons$ oracle, if $\mcot{T}$ over the domain $\cX =\Sigma^{\leq n}$ is in $\poly(n,T,1/\eps,\log(1/\delta), \log |\Sigma|)$, then $\ete{\cF}{T}$ is tractably $\sCoT$-learnable.
\end{theorem}
The proof follows from a simple observation, that in order to implement $\cotCons$ for a dataset $S_{\sCoT}=(\bz_1,\dots,\bz_m)\in (\cX\times \Sigma^T)^m$, it suffices to create a training set, denoted by $S_{\prefix}$, of $T$ ``prefixes" and their respective next token pairs for each $\bz_i$, and calling a $\Cons$ oracle for $\cF$ on $S_{\prefix}$. Since $\cX=\Sigma^{\leq n}$, then the input prefixes are in $\Sigma^{\leq (n+T)}$, and $|S_{\prefix}|=O(T \cdot |S_{\sCoT}|)$. Therefore, overall this implementation of $\cotCons$ runs in a tractable runtime, as long as $\mcot{T}$ is bounded by a polynomial in the relevant parameters.
}

\section{Autoregressive Linear Thresholds}\label{sec:linear}
In this section, we study the base class, over the binary alphabet $\Sigma=\{0,1\}$, where generators are $d$-dimensional linear thresholds applied to the last $d$ bits in their input:\removed{ Formally, if $|\bx| < d$ let $\tilde{\bx} \in \{0,1\}^d$ such that $\tilde{\bx} =: (0,\ldots,0, \bx)$, i.e., the vector obtained by padding zeros before $\bx$, and if $|\bx| \geq d$ let $\tilde{\bx} := \bx$. Then we define:}
\begin{equation*}
   \cFlin :=\left\{ f_{\bw,b}=: \ind\left[\sum\nolimits_{i=1}^{d \wedge \abs{\bx}} \bw[\,-i\,] \bx[-i] +b \geq 0 \right]  \,\middle|\, \bw \in \R^d , \; b \in \R\right\}~.
\end{equation*}
\removed{Recall that the base class has to be defined over $\{0,1\}^*$, i.e.~over bit-strings of any length, since it will be used to generate the next bit given any prefix.  This is different from standard linear predictors, where the dimensionality of the predictor matches that of the input.}  

From \Cref{thm:ete-VC,thm:cot-VC}, we have that $\ete{\cFlin}{T}$ is $\sete$ and $\sCoT$ learnable with sample complexities $m^\sete \propto O(Td)$ and $m^\sCoT \propto O( d \log T)$ respectively, with the disappointing $O(T)$ scaling for $\sete$ learning.  But in this case, even though $\cFlin$ is specified in terms of real-valued parameters, the discreteness of the input domain actually allows us to bound its cardinality:
\begin{lemma}\label{lem:cardinality-of-halspaces}Over the domain $\{0,1\}^d$, we have\footnote{The lemma follows by applying Sauer's lemma on the entire domain $\{0,1\}^d$ of size $2^d$:  $\abs{\cFlin}=\Gamma_{\cFlin}(2^d)\leq (2e 2^d)^{\VCdim(\cFlin)}=2^{O(d^2)}$. } $|\cFlin|\leq (2 e \cdot 2^d)^{(d+1)}=2^{O(d^2)}\,.$  
\end{lemma}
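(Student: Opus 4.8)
The plan is to follow the route sketched in the footnote to the lemma: first bound the cardinality of $\cFlin$ over $\{0,1\}^d$ by the growth function of $\cFlin$ evaluated at the number of points in this domain, then combine the Sauer--Shelah lemma with the standard VC bound for affine halfspaces.

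To set this up, I would first note that on the domain $\{0,1\}^d$ a generator $f_{\bw,b}\in\cFlin$ reduces to the affine halfspace $\bx\mapsto\ind[\langle\bw,\bx\rangle+b\ge 0]$ on $\R^d$, restricted to the Boolean cube. Thus, viewed as a class of $\{0,1\}$-valued functions on $\{0,1\}^d$, $\cFlin$ is a subclass of the class of affine halfspaces in $\R^d$, whose VC dimension is $d+1$; hence $\VCdim(\cFlin)\le d+1$ (only this upper bound is needed).

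Now the key point is that the domain $\{0,1\}^d$ is finite, with exactly $2^d$ elements, so the number of distinct members of $\cFlin$ over this domain is at most the number of dichotomies that $\cFlin$ realizes on the full set of $2^d$ points, i.e. $|\cFlin|\le\Gamma_{\cFlin}(2^d)$. By the Sauer--Shelah lemma, for $m\ge\VCdim(\cFlin)$ we have $\Gamma_{\cFlin}(m)\le\sum_{i=0}^{d+1}\binom{m}{i}\le\bigl(e m/(d+1)\bigr)^{d+1}$, using $\VCdim(\cFlin)\le d+1$ and monotonicity of $\sum_{i\le k}\binom{m}{i}$ in $k$. Since $2^d\ge d+1$ for every $d\ge 0$, we may take $m=2^d$, giving
\[|\cFlin|\le\Gamma_{\cFlin}(2^d)\le\Bigl(\frac{e\,2^d}{d+1}\Bigr)^{d+1}\le\bigl(2e\cdot 2^d\bigr)^{d+1}=2^{(d+1)(d+\log_2(2e))}=2^{O(d^2)},\]
which is exactly the claimed bound.

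There is no genuine obstacle here; the only things to be careful about are applying Sauer--Shelah in its valid range $m\ge\VCdim(\cFlin)$ (which holds because $2^d\ge d+1$) and the monotonicity remark above. One further point worth including is that, since the behavior of $f_{\bw,b}$ on any string of length $\ge d$ depends only on its last $d$ bits, and on a string of length $<d$ it coincides with that of its zero-padding to length $d$, the count over $\{0,1\}^d$ in fact bounds $|\cFlin|$ as a class over all of $\{0,1\}^*$ (equivalently over any $\{0,1\}^{\le n}$) — which is what makes \Cref{thm:cardinality-based} applicable to $\cFlin$ and explains the role of the lemma in the sequel.
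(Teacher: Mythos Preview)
Your proposal is correct and follows essentially the same approach as the paper: bound $\VCdim(\cFlin)\le d+1$, then apply Sauer's lemma to the full finite domain $\{0,1\}^d$ of size $2^d$ to get $|\cFlin|\le\Gamma_{\cFlin}(2^d)\le(e\,2^d)^{d+1}=2^{O(d^2)}$. Your additional remarks on the validity range of Sauer--Shelah and on why the count over $\{0,1\}^d$ already bounds $|\cFlin|$ as a class over all of $\{0,1\}^*$ are correct and make the argument slightly more explicit than the paper's version.
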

Plugging in \Cref{lem:cardinality-of-halspaces} into \Cref{thm:cardinality-based} we see that $\sete$ learning is also possible with $T$-independent sample complexity.  Combining the VC-dimension and cardinality-based sample complexities:
\begin{corollary}\label{cor:linear-th-VCbound}
$\ete{\cFlin}{T}$ is $\sete$ and $\sCoT$ learnable using \ref{eq:ete-consistent} and \ref{eq:cot-consistent} with sample complexities
\begin{equation*}
    m^{\sete} = O\left(\frac{ \left(d^2 \wedge d\cdot T \log(\eps^{\scriptscriptstyle{-1}})\right) + \log(\delta^{\scriptscriptstyle{-1}})}{\eps} \right) \quad \text{and} \quad  m^{\sCoT} =O\left(\frac{ \left( d^2 \wedge d \log T \log(\eps^{\scriptscriptstyle{-1}})\right) + \log(\delta^{\scriptscriptstyle{-1}})}{\eps} \right)\,.
\end{equation*}
\end{corollary}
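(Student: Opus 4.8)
The plan is to obtain \Cref{cor:linear-th-VCbound} by a direct combination of four results already in hand: the VC-based sample complexity bounds \Cref{thm:ete-VC} (for $\sete$) and \Cref{thm:cot-VC} (for $\sCoT$), the cardinality-based bound \Cref{thm:cardinality-based}, and the cardinality estimate \Cref{lem:cardinality-of-halspaces}. The only genuine content is to record the two relevant complexity measures of $\cFlin$ — its VC dimension and its cardinality — \emph{as a class of functions over the full domain $\{0,1\}^*$}, not just over $\{0,1\}^d$; after that the corollary is obtained by substitution and taking minima.

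First I would note that every generator $f_{\bw,b}\in\cFlin$ depends on its input $\bx$ only through the fixed feature map $\phi:\{0,1\}^*\to\{0,1\}^d$ that takes the last $\min(d,|\bx|)$ bits and left-pads with zeros to length $d$; that is, $f_{\bw,b}(\bx)=\ind[\langle\bw,\phi(\bx)\rangle+b\ge 0]$. Two consequences. (i) $\cFlin$ is the composition of $\phi$ with the class of affine thresholds on $\R^d$, which has VC dimension $d+1$; since precomposing a class with a fixed map cannot increase the VC dimension, $\VCdim(\cFlin)\le d+1 = O(d)$ over $\{0,1\}^*$. (ii) If $(\bw,b)$ and $(\bw',b')$ induce functions that agree on all of $\{0,1\}^d$, then (because $\phi$ sends $\{0,1\}^{<d}$ into $\{0,1\}^d$, and because for $|\bx|\ge d$ the value depends only on $\bx[-d:]$) they agree on all of $\{0,1\}^*$. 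Hence restriction to $\{0,1\}^d$ is injective on $\cFlin$, so $|\cFlin|$ over $\{0,1\}^*$ equals the number of distinct restrictions of $\cFlin$ to $\{0,1\}^d$, which \Cref{lem:cardinality-of-halspaces} bounds by $(2e\cdot 2^d)^{d+1}=2^{O(d^2)}$; thus $\log|\cFlin| = O(d^2)$.

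Second, I would substitute these into the earlier theorems. \Cref{thm:ete-VC} with $\VCdim(\cFlin)=O(d)$ gives that \ref{eq:ete-consistent} $\sete$-learns $\ete{\cFlin}{T}$ with $m^{\sete}=O(\eps^{-1}(Td\log(\eps^{-1})+\log(\delta^{-1})))$; \Cref{thm:cot-VC} gives that \ref{eq:cot-consistent} $\sCoT$-learns it with $m^{\sCoT}=O(\eps^{-1}(d\log T\log(\eps^{-1})+\log(\delta^{-1})))$; and \Cref{thm:cardinality-based} with $\log|\cFlin|=O(d^2)$ gives that \ref{eq:ete-consistent} — hence also \ref{eq:cot-consistent}, being a special case of it — $\sete$- and $\sCoT$-learns it with $m\le \tfrac{2}{\eps}(O(d^2)+\log(\delta^{-1}))$. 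In each setting one and the same rule (\ref{eq:ete-consistent} for end-to-end, \ref{eq:cot-consistent} for chain-of-thought) therefore simultaneously satisfies two of these guarantees, so its sample complexity is bounded by their minimum; collecting the common $\eps^{-1}$ factor and $\log(\delta^{-1})$ term yields exactly $m^{\sete}=O(\eps^{-1}((d^2\wedge Td\log(\eps^{-1}))+\log(\delta^{-1})))$ and $m^{\sCoT}=O(\eps^{-1}((d^2\wedge d\log T\log(\eps^{-1}))+\log(\delta^{-1})))$, as claimed. I do not expect any real obstacle: the only place to be attentive is the bridge in step (i)–(ii) above, ensuring the measures stated for the finite domain $\{0,1\}^d$ transfer to $\cFlin$ over all of $\{0,1\}^*$, which is immediate once one observes that every generator factors through the fixed map $\phi$; the rest is a routine substitution and a minimum.
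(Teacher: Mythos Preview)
Your proposal is correct and follows essentially the same approach as the paper's proof, which is a one-line substitution of $\VCdim(\cFlin)\le d+1$ and $\log|\cFlin|=O(d^2)$ into \Cref{thm:cardinality-based,thm:ete-VC,thm:cot-VC}. You are simply more explicit than the paper about why the VC and cardinality bounds stated over $\{0,1\}^d$ transfer to $\cFlin$ as a class over $\{0,1\}^*$ via the fixed ``last-$d$-bits'' feature map, a point the paper leaves implicit.
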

While we do not know if the $d^2$ vs.~$d \log T$ gap in the sample complexity (when $\log T \ll d$) between $\sete$ and $\sCoT$ learning is real, it is clear that there remains a statistical advantage over learning the time-dependent $\NTIete{\cFlin}{T}$  \citep[as in][]{malach2023auto}, which requires $\Omega(d \cdot T)$ samples.
The main advantage, however, of $\sCoT$ learning over $\sete$ learning here is computational, rather than statistical. Since the consistency problem $\Cons_{\cFlin}$ amounts to Linear Programming feasibility and is thus easily solvable in polynomial time \citep{dantzig2016linear}, \Cref{cor:tract-Cons-oracle-tract-CoT-lrn} ensures that we can $\sCoT$ learn $\ete{\cFlin}{T}$ with $O( d \log T \wedge d^2)$ samples and implement the rule on inputs of length $n$ in time $\Poly(n,d,T,\eps^{\scriptscriptstyle{-1}},\log(\delta^{\scriptscriptstyle{-1}}))$: 
\begin{corollary}
\label{thm:easiness-with-CoT}
    The class $\ete{\cFlin}{T}$ is $\sCoT$-learnable in time $\poly(d,T,n,1/\eps,\log(1/\delta))$.
\end{corollary}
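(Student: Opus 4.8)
The plan is to derive this as an immediate instantiation of \Cref{cor:tract-Cons-oracle-tract-CoT-lrn} applied to the family $(\cFlin)_{d}$, so the only real work is to check its two hypotheses for $\cFlin$: a polynomial VC bound and a polynomial-time consistency oracle.

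For the first, I would observe that each $f_{\bw,b}\in\cFlin$ depends on its input only through the zero-padding of its last $d$ bits, so as a set of Boolean functions $\cFlin$ embeds into the restriction of affine halfspaces on $\R^d$ to $\{0,1\}^d$; hence $\VCdim(\cFlin)\le d+1$. Alternatively, \Cref{lem:cardinality-of-halspaces} already gives $\VCdim(\cFlin)\le\log_2|\cFlin|=O(d^2)$. Either bound is $\poly(d)$, which is all that is needed.

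For the consistency oracle, given a realizable sample $(\bu_i,v_i)_{i=1}^{\tilde m}$ with $\bu_i\in\{0,1\}^{\le n}$ and $v_i\in\{0,1\}$, I would first compute in $O(n)$ time per example the vector $\tilde\bu_i\in\{0,1\}^d$ consisting of the last $d$ bits of $\bu_i$, padded with leading zeros if $|\bu_i|<d$. Finding a consistent $\hat f=f_{\bw,b}$ is then exactly finding $(\bw,b)\in\R^d\times\R$ with $\langle\bw,\tilde\bu_i\rangle+b\ge 0$ when $v_i=1$ and $\langle\bw,\tilde\bu_i\rangle+b<0$ when $v_i=0$. To obtain a genuine LP I would replace each $v_i=0$ constraint by $\langle\bw,\tilde\bu_i\rangle+b\le -1$: realizability supplies a separator $(\bw^*,b^*)$, and since there are finitely many points, rescaling it by $1/\min_{i:v_i=0}(-(\langle\bw^*,\tilde\bu_i\rangle+b^*))>0$ yields a solution of the rescaled system, while conversely any solution of the rescaled system solves the original. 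This is LP feasibility in $d+1$ variables with $\tilde m$ constraints and $0,\pm 1$ coefficients, hence of bit-size $\poly(\tilde m,d)$, so it is solvable in $\poly(\tilde m,d)$ time with a rational solution of polynomial bit-length by a polynomial-time LP algorithm \citep{dantzig2016linear}.

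With both hypotheses verified, \Cref{cor:tract-Cons-oracle-tract-CoT-lrn} — which routes \ref{eq:cot-consistent} on $m$ length-$\le n$ examples through a single call to $\Cons_{\cFlin}$ on $\tilde m\le mT$ length-$\le(n+T)$ examples via \Cref{thm:ConsF-gives-CoT-Cons-implementation}, together with the $\poly(d,T,n,1/\eps,\log(1/\delta))$ sample complexity from \Cref{cor:linear-th-VCbound} — gives $\sCoT$-learnability of $\ete{\cFlin}{T}$ in time $\poly(d,T,n,1/\eps,\log(1/\delta))$; and the returned predictor $\ete{\hat f}{T}$ is evaluated on a length-$\le n$ input by $T$ successive linear-threshold applications, i.e.\ in $\poly(n,d,T)$ time. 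The only mildly delicate point is the strict-versus-non-strict inequality reduction above (and reading the LP solver's runtime against the input bit-size rather than assuming exact real arithmetic); the rest is a direct use of the general machinery of \Cref{subsec:computational-complexity}.
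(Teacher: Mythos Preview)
Your proposal is correct and follows essentially the same route as the paper: invoke \Cref{cor:tract-Cons-oracle-tract-CoT-lrn} after checking that $\VCdim(\cFlin)\le d+1$ and that $\Cons_{\cFlin}$ reduces to LP feasibility. You are in fact slightly more careful than the paper, which writes the strict inequality $\langle\bw,\bp\rangle<0$ directly into the LP without addressing the strict-versus-non-strict reduction you spell out.
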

\begin{table}[!ht]
  \centering
  \begin{tabular}{|c|c|c|}
    \hline
     \rule{0pt}{3ex}
      & $\sete$-learning & $\sCoT$-learning \\ 
    \hline
   \rule{0pt}{3ex}
   Sample Complexity  &  $ d^2 \wedge  d \cdot T \, $ &  $d \log T \wedge d^2$\\
    \hline
    \rule{0pt}{3ex}
    Computational Complexity & Not learnable in $\poly(d,T,n)$, &  Learnable in $\poly(d,T,n)$,\\ 
      & assuming Hardness Assumption \ref{hassum:c-depth-circuit}. &  using LP feasibility.\\ 
    \hline
  \end{tabular}
  \caption{A summary of the results in \Cref{sec:linear} about learnability of time-invariant iterated linear thresholds $\ete{\cFlin}{T}$. Sample complexities follow \Cref{cor:linear-th-VCbound}. We do not know whether these bounds are tight, and it remains open whether the gap between $m^{\sete}$ and $m^{\sCoT}$ is real. In \Cref{thm:easiness-with-CoT} and \Cref{thm:hardness-of-ete-linear}, we show the tractability of $\sCoT$-learning and hardness of $\sete$-learning of $\ete{\cFlin}{T}$ respectively.}
  \label{tab:linear-thresholds}
\end{table}

On the other hand, we show that $\ete{\cFlin}{T}$ is not tractably $\sete$-learnable, i.e.~when Chain-of-Thought is not provided.  Not only is \ref{eq:ete-consistent} hard to implement, but we prove that no learning rule can $\sete$-learn $\ete{\cFlin}{T}$ in polynomial time (even improperly).  To show this, we rely on the hardness of learning constant depth threshold circuits---a well-established hardness assumption in learning theory. See \Cref{app:ete-of-LT-hard} for a formal definition of the class of linear threshold circuits.

\removed{The main result of this section is to establish the computational advantage of learning with CoT. For every $n,T \in \naturals_{+}$, we are concerned with learning $\ete{\cFlin}{T}$ over domain $\cX=\{0,1\}^{\leq n}$.\footnote{We note that for the class $\cFlin$, it is also possible to consider $n \leq d$ and drop $n$ completely, as the output is completely determined by the last $d$ bits. However, even for $n >d$, the class is well-defined, and hence we stick with this terminology.} \abcomment{I think the tables should probably go at the top of the page.  Also I dont think that the $\gtrsim ?????$ is very clear and we should consider eliminating that part}} 
\removed{
\begin{table}[t]
  \centering
  \begin{tabular}{|c|c|c|}
    \hline
     \rule{0pt}{3ex}
      & $\sete$ (Latent CoT) & $\sCoT$ (Available CoT) \\ 
    \hline
   \rule{0pt}{3ex}
   Sample Complexity  &  $O( \min\{d \cdot T, d^2\}) \, $ &  $  O(\min\{d \log T, d^2\})$\\
    \hline
    \rule{0pt}{3ex}
    Computational Complexity & Not learnable &  $\poly(d,T,n)$,\\ 
      & in $\poly(d,T,n)$ &  using LP feasibility.\\ 
    \hline
  \end{tabular}
  \vspace{-0.3cm}
  \caption{$\sete$ and $\sCoT$ learnability of $\ete{\cFlin}{T}$}
  \label{tab:linear-thresholds}
\end{table}
}
\removed{
\paragraph{$\ete{\cFlin}{T}$ is tractably $\sCoT$-learnable.} We already discussed in \Cref{cor:linear-th-VCbound} that \ref{eq:cot-consistent} learns $\ete{\cF}{\lin}$ with the sample complexity of  $\mcot{T}$, which is in $\poly(d,T,1/\eps,\log(1/\delta))$. Also, for the base class $\cFlin$, we have a tractable $\Cons$ oracle by just solving a feasibility linear program (LP). As a corollary of \Cref{thm:tract-Cons-oracle-tract-CoT-lrn}, we then establish the following theorem.
\begin{theorem}\label{thm:easiness-with-CoT}
    The class $\ete{\cFlin}{T}$ is tractably $\sCoT$-learnable, i.e. in time $\poly(d,T,n,1/\eps,\log(1/\delta))$.
\end{theorem}
The proof of this theorem and a more direct description of the LP feasibility problem for implementing \ref{eq:cot-consistent} for the class can be found \Cref{app:ete-of-LT-hard}. 

\paragraph{Hardness of $\sete$-learnability of $\ete{\cF}{T}$.} 
}

\begin{hassumption}[Hardness of Learning $\circuit^0$]\label{hassum:c-depth-circuit} There exists a constant $L>0$ and a polynomial $p(n)$ such that threshold circuits of depth $L$ and size $p(n)$ over $n$ binary inputs are not weakly PAC-learnable in time $\poly(n)$, i.e. to constant accuracy and confidence parameters $\eps<0.5,\delta<1$.  
\end{hassumption}
\Cref{hassum:c-depth-circuit} is implied by core cryptographic assumptions such as hardness of the problems of inverting the RSA encryption function, recognizing quadratic residues, and factoring Blum integers \cite[see][Theorem 6]{kearns1994cryptographic}.  The Decisional Diffie-Hellman (DDH) assumption implies hardness already of learning depth-4 circuits, and so also implies \Cref{hassum:c-depth-circuit} \citep{naor1997number,krause2001pseudorandom}.
An intersection of halfspaces is simply a depth-two linear threshold circuit, and hence hardness of learning the intersection of polynomially many halfspaces also implies \Cref{hassum:c-depth-circuit}. This is in turn implied by hardness of the unique shortest vector problem \citep{klivans2009cryptographic}, hardness of refuting random $K$-SAT formulas \citep{daniely2016complexity}, and the existence of local pseudo-random generators  \citep{daniely2021local}.

\begin{theorem}\label{thm:hardness-of-ete-linear} Under \Cref{hassum:c-depth-circuit}, there is no algorithm that $\sete$-learns $\ete{\cF_{\scriptscriptstyle{d,\lin}}}{T}$ over $\{0,1\}^n$ in time $\poly(d,T,n)$ (even to within constants $\eps<0.5,\delta<1$).
\end{theorem}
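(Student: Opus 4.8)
The plan is to establish \Cref{thm:hardness-of-ete-linear} by a polynomial-time reduction from (improper, weak) PAC learning of depth-$L$, size-$p(n)$ threshold circuits --- which \Cref{hassum:c-depth-circuit} rules out --- to $\sete$-learning $\ete{\cFlin}{T}$, by showing that the latter class already contains, after a fixed re-encoding of the inputs, all such circuits. The heart of the argument is a \emph{simulation lemma}: there is an efficiently computable, \emph{circuit-independent} padding map $\phi_n:\{0,1\}^n\to\{0,1\}^{n'}$ with $n'=\poly(n)$, a width $d=\poly(n)$ and a length $T=\poly(n)$ such that for every depth-$L$, size-$p(n)$ threshold circuit $C$ over $\{0,1\}^n$ there is a single linear threshold $f_C\in\cFlin$ with $\ete{f_C}{T}(\phi_n(\bx))=C(\bx)$ for all $\bx$. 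It is essential here that only $\phi_n$ --- not $f_C$ --- must be independent of $C$, since in the reduction the sample transformation has to be applied without knowing the target.

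To build the simulation I would first put $C$ into a layered normal form of $O(1)$ layers of threshold gates with polynomially bounded integer weights (standard for $\circuit^0$), so each gate reduces to ``count a weighted multiset of previously computed bits and compare to a threshold.'' The autoregressive tape is then run as a \emph{conveyor belt}: it is organized into fixed-length blocks, each carrying a canonical copy of all values computed so far plus one freshly produced value, so that every bit a gate needs always lies within the last $d$ positions. The time-invariant $f_C$ must realize, at the appropriate steps, three primitive operations: (i) copying a value from a \emph{constant} offset back, which is the trivial halfspace $\ind[\bx[-\delta]\ge 1/2]$; (ii) emitting a fixed, circuit-specific bit pattern (used to write down the circuit's wiring/weight data), which $f_C$ can do because when the ``data'' part of its window is inert its output equals $\ind[\beta(t)\ge 0]$ for a bias $\beta(t)$ that an auxiliary unary step-counter inside the window --- whose inner product with a fixed weight vector ranges over \emph{all} functions of $t$ --- makes arbitrary; and (iii) a count-and-compare over the values currently gathered in a fixed in-window slot, where the fixed weights of $f_C$ do the counting and the step-counter again supplies the correct (variable) threshold. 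The circuit-specific routing of each gate's inputs into the fixed slot, with the right multiplicities, is carried out by a bounded number of copy/rotation substeps per gate, keeping $d$ and $T$ polynomial.

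Given the simulation lemma, the reduction is immediate. Suppose $A$ were an algorithm $\sete$-learning $\ete{\cFlin}{T}$ over $\{0,1\}^{n'}$ in time $\poly(d,T,n')=\poly(n)$. Given i.i.d.\ samples $(\bx_i,C(\bx_i))$ with $\bx_i\sim\cD$, form $(\phi_n(\bx_i),C(\bx_i))$; by the lemma these are exactly $\sete$-samples (in the sense of \Cref{def:learnable-wo-CoT}) for the target $f_C\in\cFlin$ under the distribution $\cD'$ of $\phi_n(\bx)$, $\bx\sim\cD$. Run $A$ with accuracy $\eps=0.4$ and confidence $\delta$ to obtain $h':\{0,1\}^{n'}\to\{0,1\}$ with $\P_{\by\sim\cD'}[h'(\by)\ne\ete{f_C}{T}(\by)]\le 0.4$, and output $h=h'\circ\phi_n$. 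Then $\P_{\bx\sim\cD}[h(\bx)\ne C(\bx)]\le 0.4<1/2$, $h$ is evaluable in $\poly(n)$ time, and the whole procedure runs in $\poly(n)$ time --- a weak learner for depth-$L$, size-$p(n)$ threshold circuits, contradicting \Cref{hassum:c-depth-circuit}. The argument also precludes improper learning, since $h$ need not lie in $\ete{\cFlin}{T}$.

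The step I expect to be the real obstacle is the simulation lemma itself: forcing a \emph{single, time-invariant} halfspace to enact a computation whose gates differ both in which bits they read and in their thresholds. Time-invariance forbids a different predictor per step, so all per-step and per-gate structure must be written into the tape and decoded by one fixed halfspace; the conveyor-belt layout (keeping working data inside the $d$-window), the constant-offset copy halfspaces (for routing), and the unary step-counter (turning the step index into an arbitrary additive bias, hence both a behavior selector and a variable threshold) are the three ingredients, and the delicate part is verifying that they compose --- in particular that each gate's inputs can be routed into position with the fixed halfspace in $\poly(n)$ substeps, so that $d$ and $T$ stay polynomial. A secondary point that must be handled carefully is keeping $f_C$, rather than $\phi_n$, as the only circuit-dependent object, so that the sample transformation in the reduction is legitimate.
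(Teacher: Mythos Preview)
Your high-level reduction matches the paper's exactly: establish a simulation lemma embedding depth-$L$, size-$p(n)$ threshold circuits into $\ete{\cFlin}{T}$ via a fixed, circuit-independent feature map, then run the contrapositive. The paper packages the simulation as \Cref{lem:LT-expressivity}, with explicit bounds $n',d,T\le O\big((s+2)^L(n+1)\big)$; note the exponential dependence on $L$, which is why the constant-depth hypothesis in \Cref{hassum:c-depth-circuit} is essential and which your parameter accounting does not track.

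The paper's construction of the simulation is, however, quite different from and simpler than your conveyor-belt sketch. There is no copying, no routing, no blocks, and no step-counter to maintain on the tape. The feature map is simply $\phi(\bx)=(1,0^{T-1},\bx)$; the single leading $1$ is the positional encoder --- at autoregressive step $t$ it aligns with coordinate $\bv[-t]$ of the predictor, contributing an arbitrary step-dependent additive term (this is your ``step-counter'', realized for free by the window sliding over a fixed $1$, with nothing to increment). The CoT then writes the gate values in a fixed order, each preceded by a run of zeros whose length grows geometrically with the layer index; the predictor $\bw=[\bv,\bv_L,\ldots,\bv_1,\bzero]$ concatenates all the gates' weight vectors, each dilated with zeros so that when gate $(l,i)$ is due its weights align exactly with the previously written gate values while every other layer's weights sit on zeros. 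The bias $\bv[-t]$ is a large negative constant at non-gate steps (forcing output $0$) and $0$ at gate steps. The geometric spacing is what produces the $(s+2)^L$ blowup.

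The concrete worry with your plan is the composition of your three primitives in a single fixed halfspace. Varying an additive bias cannot change \emph{which} positions the halfspace reads, so ``copy from offset $\delta$'' for varying $\delta$ is unavailable unless every other weight-carrying position is guaranteed to hold a known value whose contribution the bias can cancel. That is exactly what the paper's zero-padding arranges directly, with no routing phase at all; your conveyor-belt-and-copy mechanism is where the proposal is underspecified and, as described, would run into this interference.
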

\Cref{thm:hardness-of-ete-linear} follows from a reduction from circuits to time-invariant iterated linear thresholds\removed{, at the heart which is a result on the expressive power of iterated linear thresholds}: we show that any linear threshold circuit with depth $L$ and size $s$ can be emulated using a time-invariant autoregressive linear threshold of dimension $d=O(s^L)$, up to a fixed poly-time feature map: 
\begin{lemma}\label{lem:LT-expressivity}
     For any input size $n$, circuit size $s$ and depth $L$, there exists a poly-time computable mapping $\phi:\{0,1\}^n \to \{0,1\}^{n'}$, with $n' \leq (s+2)^L (n+1)$, such that for any threshold circuit of size $s$ and depth $L$ computing $C:\{0,1\}^n \to \{0,1\}$, there exists $\bw \in \R^d$, with $d \leq 2(s+2)^L (n+1)$, and $T \leq (s+2)^L(n+1)$, such that $\ete{f_{\bw}}{T}(\phi(\bx)) = C(\bx) \text{  for all  } \bx \in \{0,1\}^n$.
    \end{lemma}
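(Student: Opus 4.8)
The plan is to simulate a depth-$L$ threshold circuit by a time-invariant autoregressive linear threshold that sweeps the circuit layer by layer, gate by gate, appending one output bit per step. The key tension we must resolve is that $f_\bw$ only sees the \emph{last $d$ bits} of the tape, while different gates in the circuit may depend on arbitrary earlier bits. We resolve this by duplicating data: the feature map $\phi$ pads the input with its own copies so that, whenever we are about to compute a gate, the bits it needs have been (re)written into the last $d$ positions. Concretely, I would lay out the tape in $L$ ``layers'', where layer $\ell$ records the values of all gates at depth $\ell$; crucially, at the start of layer $\ell$ we rewrite \emph{a full copy} of layer $\ell-1$'s values, so that each of the (at most $s$) gates of layer $\ell$ can be computed by a linear threshold reading a window of the last $\leq d$ bits containing that entire copied layer. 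The dimension $d$ must be large enough to hold one such copied layer plus bookkeeping, giving $d = O((s{+}2)^L(n{+}1))$ as claimed; the total number of steps $T$ is the total tape length we write, also $O((s{+}2)^L(n{+}1))$.

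\textbf{Step 1: the layered tape format and the feature map $\phi$.} Fix a circuit $C$ of size $s$, depth $L$. Topologically order it into layers $V_0, V_1, \dots, V_L$ with $V_0$ the $n$ inputs and $V_L = \{$output$\}$, each $|V_\ell| \leq s$. Define a ``block'' $B_\ell$ to be a fixed-width string of length $w := (s{+}2)(n{+}1)$ or so, encoding, in a canonical order, the values of all gates in $V_0 \cup \dots \cup V_\ell$ (pad unused slots with a fixed filler token). The feature map sets $\phi(\bx)$ to be $B_0$, i.e.\ the $n$ input bits in their slots, zero-padded to width $n'$; this is clearly poly-time and $n' \leq (s{+}2)^L(n{+}1)$. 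The target tape after generation is $B_0 B_1 \cdots B_L$, and the answer is the single bit in the output slot of $B_L$.

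\textbf{Step 2: one generator $f_\bw$ that advances the tape.} We need a \emph{single} weight vector $\bw$ (and bias) so that, reading the last $d$ bits of the current tape, $f_\bw$ outputs the correct next bit, whatever that next bit is (some gate value in some layer, or a filler token). Here is the trick that makes time-invariance possible: make the computation of \emph{every} gate look syntactically identical. When it is time to write the $k$-th slot of block $B_\ell$, arrange (via the block format) that the last $d$ bits always contain (i) a full verbatim copy of block $B_{\ell-1}$ in a fixed position, and (ii) a fixed-position ``program counter'' gadget (written as part of $\phi$ / carried along as filler bits that encode which gate is next). Then $f_\bw$ must implement: ``given a gate description encoded in the counter bits, and given all depth-$\leq(\ell{-}1)$ values, output that gate's value.'' Since a single linear threshold cannot by itself be a universal gate-evaluator, we instead hard-wire the \emph{circuit's} wiring into $\phi$: rather than a generic counter, $\phi$ writes, for each gate position to be filled, the gate's own threshold weights in designated tape slots (this blows the block width by the $O(s{+}2)$ factor and is why $d, n', T$ carry the $(s{+}2)^L$). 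Now $f_\bw$ only needs to compute one fixed bilinear-looking form: ``dot the gate-weight slots against the value slots, threshold.'' Because the gate-weight slots are $\{0,1\}$ (weights of a threshold \emph{gate} over Boolean inputs can be taken polynomially bounded — or we restrict to AND/OR/NOT/MAJ and encode accordingly), ``dot product of two bit-vectors'' is a linear threshold in the concatenation only after a quadratic feature expansion; so in fact $\phi$ should precompute these products too — equivalently, we avoid the issue by choosing a gate basis (AND, OR, NOT) for which evaluation given the inputs-in-fixed-slots is literally a single threshold (e.g.\ AND of the two designated slots $= \ind[x_i + x_j - 2 \geq 0]$), and let $\phi$ additionally encode \emph{which two slots} by physically copying those two bits into a canonical adjacent pair. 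That copying-into-canonical-position is the role of the $(s{+}2)$-blowup and is done layer-by-layer by $f_\bw$ itself using the same fixed threshold-rule for ``copy bit at relative offset encoded by filler.''

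\textbf{Step 3: correctness and parameter bookkeeping.} By induction on the generation step: assuming the tape currently equals a correct prefix $B_0 \cdots B_{\ell-1} \cdot (\text{partial } B_\ell)$, the last $d$ bits are in the promised canonical format, so $f_\bw$ outputs exactly the next intended bit; hence after $T := |B_0 B_1 \cdots B_L| - n' \leq (s{+}2)^L(n{+}1)$ steps the tape equals $B_0 \cdots B_L$ and $\ete{f_\bw}{T}(\phi(\bx)) = C(\bx)$. The dimension bound $d \leq 2(s{+}2)^L(n{+}1)$ comes from needing one full copied block plus its weight-encoding slots plus the partial current block in the window; $n' \leq (s{+}2)^L(n{+}1)$ from the initial block; all of $\phi$'s work is evidently $\poly(n, s)$. \textbf{The main obstacle} is Step 2: arranging a \emph{single} fixed linear threshold to correctly advance through \emph{every} position of \emph{every} layer despite the heterogeneity of what must be written. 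The resolution — encoding the circuit's structure into $\phi$'s padding and reducing all per-step work to one uniform ``threshold on a fixed window'' operation (copy a selected bit, or AND/OR/NOT two canonically-placed bits) — is exactly what forces the $(s{+}2)^L$ factor, since each layer must re-materialize, in fixed tape positions, all the wiring information the next layer consumes. Once the format is pinned down, correctness is a routine induction and the parameter counts fall out by construction.
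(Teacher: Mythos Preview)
Your proposal correctly identifies the main obstacle---making a \emph{single} linear threshold $f_\bw$ produce the right bit at every position---but the resolution you sketch does not actually work, for two reasons.

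\textbf{(1) Quantifier order.} The lemma fixes $\phi$ depending only on $(n,s,L)$, and then for \emph{each} circuit $C$ finds some $\bw$. Your Step~2 has ``$\phi$ writes, for each gate position to be filled, the gate's own threshold weights in designated tape slots'' and ``let $\phi$ additionally encode which two slots.'' Both make $\phi$ circuit-dependent, which is not allowed: the whole point (and what is needed for the hardness reduction in Theorem~4.4) is that the \emph{same} feature map serves all circuits of given size and depth, with the circuit encoded only in $\bw$.

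\textbf{(2) Linear thresholds cannot do indexed lookup.} Your fallback is that $f_\bw$ itself performs ``copy bit at relative offset encoded by filler.'' But $f_\bw(\bz)=\ind[\langle \bw,\bz[-d{:}]\rangle\ge 0]$ is a fixed affine function of the last $d$ bits; it cannot read an offset from some bits and then dereference the bit at that offset. Even in your block-copy picture with copying at a fixed offset, the subsequent gate computations would all see the copied block at the \emph{same} relative positions, so the same coordinates of $\bw$ would have to implement gate~1's threshold and gate~2's threshold simultaneously---impossible when those gates have different wiring.

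The paper's construction resolves this differently and is worth internalizing. The feature map is just $\phi(\bx)=(1,0^{T-1},\bx)$: a single leading $1$ as a ``clock bit,'' then a long run of zeros, then $\bx$. All circuit information lives in $\bw$, which is a concatenation $[\bv,\bv_L,\dots,\bv_1,\bv_0]$ where each $\bv_\ell$ is itself a concatenation of (zero-padded versions of) the weight vectors of all layer-$\ell$ gates. The chain-of-thought writes the gate outputs $y_{1,1},y_{1,2},\dots,y_{L,s}$ separated by many zeros. The zero-padding is chosen so that at the timestep where $y_{\ell,i}$ is to be produced, the portion of $\bw$ that aligns with the nonzero tape entries is \emph{exactly} $\tilde\bw_{\ell,i}$, and the weights of all other gates align with zeros and contribute nothing. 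At all other timesteps, the leading $1$ in $\phi(\bx)$ aligns with a coordinate of $\bv$ set to a large negative number $-B$ (larger than the total $\ell_1$-norm of all circuit weights), forcing output $0$. Thus the ``switching'' between gates is achieved purely by spatial separation inside $\bw$ plus the clock mechanism---no data-dependent indexing and no circuit-dependent $\phi$. The $(s{+}2)^L$ blowup arises because the zero-padding needed to isolate layer-$\ell$ weights grows geometrically with $\ell$.
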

In \Cref{lem:LT-expressivity}, the insistence on time-invariance, i.e.~using the same linear threshold function in every step of generation, makes the construction much trickier.  If instead we allowed a different linear predictor at each step of generation, as in $\NTIete{\cFlin}{T}$ defined in Eqs.~\eqref{eq:NTI-cot}--\eqref{eq:NTI-e2e}, we could directly encode each node in the circuit as a step of generation, with $T=s$ generation steps and dimensionality $d=s+n$.  This was the approach taken by \citet{malach2023auto} when studying time-dependent autoregressive generation.  We leave it as an open question whether it is possible to improve the time-invariant construction of \Cref{lem:LT-expressivity}, and reduce the required dimensionality, and especially the dependence on depth.  In any case, \Cref{lem:LT-expressivity} is sufficient for obtaining the hardness result of \Cref{thm:hardness-of-ete-linear}.\footnote{Improving the construction will only enable basing hardness on slightly weaker assumptions, such as hardness of learning log-depth circuits, or even existence of one-way functions.}

\removed{We leave it as an open question to improve on this expressive power result to express logarithmic, or any depth circuits of polynomial size. This will allow us to establish hardness of $\sete$-learnability under hardness assumptions weaker than Assumption \Cref{hassum:c-depth-circuit}\gal{Like what? Hardness of learning TC$_0$ already follows from the most standard cryptographic assumptions. Understanding the expressive power of iterated linear thresholds further would be nice, but I don't feel that it is important for understanding computational hardness.}. Note that \cite{malach2023auto} already expressed any polynomial size circuits using non time-stationary linear predictors (one for each threshold gate in the circuit generating a Chain-of-Thought). The main challenge in our setup is that we apply a fixed linear predictor ly---it remains open how expressive such time-invariant iterated thresholds are, beyond the circuits of constant depth and polynomial size. }


\section{Expressivity and Universality}\label{sec:universal}

One might possibly view \Cref{lem:LT-expressivity} as a positive result about the expressive power of autoregressive time-invariant linear thresholds, suggesting they are ``universal'' and can express any computable function.   In this sense \Cref{lem:LT-expressivity} is \removed{rather }disappointing, yielding a sample complexity of $\text{size}^{\Omega(\text{depth})}$ for learning circuits. Not only is this exponential in depth, but even for very shallow circuits, e.g.~$L=2$, the sample complexity is much larger than the size of the circuit and of the sample complexity of learning the circuit directly.

Should this motivate us to attempt and improve the construction in \Cref{lem:LT-expressivity}?  If our goal is to express circuits, the time-dependent class $\NTIete{\cFlin}{T}$ is more appropriate, and we will not be able to improve over the sample complexity it provides for learning circuits \citep{malach2023auto,bartlett2003vapnik}.  Circuits are inherently not ``time-invariant'', and their size and number of parameters are always (at least) linear in the input length, and more importantly, in the time of the computation involved. 

{\em What form of ``universal'' expressive power do we want then?  What would we want the sample complexity to depend on?}
One type of desirable universality is to learn all computable functions with sample complexity proportional to their runtime. More precisely, letting $\TIME(\Time)$ denote the class of functions computable in time $\Time$, we wish to learn $\TIME(\Time)$ with $\poly(\Time)$ samples. All such functions can be expressed as circuits of size $\tilde{O}(\Time)$ \citep{arora2009computational}, and hence feed-forward neural nets with $E=\poly(\Time)$ edges. 
 Since, from a sample complexity perspective, neural nets (e.g.~with threshold activations) can be learned from $\tilde{O}(E)$ 
samples \citep{anthony2009neural}, neural nets already provide for ``universal'' learning in this sense: minimizing the loss on a neural net of size $\poly(\Time)$ would allow us to learn $\TIME(\Time)$ using $\poly(\Time)$ samples. The important caveat here, of course, is that learning a neural network is not computationally tractable, and we cannot actually minimize the loss on a neural net in polynomial time.

Time-{\em Dependent} Autoregressive Chain-of-Thought learning with linear thresholds, i.e.~using $\NTIete{\cFlin}{T}$ is also ``universal'' in the same way as feed-forward networks: it allows expressing the class $\TIME(\Time)$ with dimensionality and generation length $d,T=O(\Time)$, and thus learning with $\poly(\Time)$ samples. 
The advantage here, as pointed out by \citet{malach2023auto}, is that we can consider Chain-of-Thought training as essentially providing us a peek into the inner working of the computation, and thus allowing {\em computationally tractable} learning.

While the above results on time-dependent autoregressive Chain-of-Thought learning are promising, we can hope for much stronger universality than just learning with number of samples proportional to computation runtime. Consider the much broader class $\prog(\S)$ of programs of length at most $\S$.  
Many very-long-runtime functions have a short program; runtime usually depends also on the input length while program length does not.
This is also a ``time-invariant'' class, since the program is independent of the input and computation length. An optimistic perspective from statistical learning theory would suggest that, due to the fact that statistical complexity scales only logarithmically in the cardinality of a hypothesis class, one could, in principle, learn $\prog(\S)$ with only $\log{\abs{\prog(\S)}}=O(\S)$ samples; as in the case of neural networks above, however, the caveat in the previous observation is the computational intractability. We really have no hope of learning $\prog(\S)$ tractably--not only is finding a short program with small error uncomputable, but also functions in $\prog(\S)$ might require unbounded amount of time to even evaluate, let alone learn.  Such functions are anyway useless as predictors, and it would be sensible to exclude them from consideration in the first place. 

This leads us to consider the class $\prog(\S,\Time)$ of programs of length at most $\S$ that run in time at most $\Time$ (on inputs we are considering). Typically the program length $\S$ will be much lower than the runtime $\Time$. Can we learn this class with sample complexity $O(\S)$ (or perhaps $\poly(\S)$, or with a mild dependence on $\Time$ such as $\poly(\S\log \Time)$), and  training time $\poly(\Time)$?
Can we do this using time-invariant autoregressive learning? Can we have a (natural and simple?) base class $\cF$ for every $\S$ and $\Time$ such that:
\begin{enumerate}
    \item We have $\prog(\S,\Time) \subseteq \ete{\cF}{\poly(\Time)}$;
    \item $\ete{\cF}{\poly(\Time)}$ is learnable with $\poly(\S)$ or perhaps $\poly(\S \log \Time)$ samples;
    \item And $\ete{\cF}{\poly(\Time)}$ is $\sCoT$-learnable in time  $\poly(\Time)$?
\end{enumerate} 

\paragraph{High Context and Low Complexity Classes.} 
One reason time-invariant iterated linear thresholds, $\ete{\cFlin}{T}$ are not sufficient for universal expressibility (as defined above) is that they attend to only a limited {\em context window}.  That is, the output $f_{\bw}(\bx)$ of a linear threshold generator $f_{\bw} \in \cFlin$ depends only on the last $d$ tokens $\bx[-d:]$ of its input $\bx$---we refer to this as having a ``context length'' of $d$.  When applying $f_{\bw}$ autoregressively for $T \gg d$ steps to obtain $\bz=\CoT{f_{\bw}}{T}$, we have that $\bz[t:]$ only depends on $\bz[t-d:t-1]$, i.e.~on a finite state of $d$ bits, and we cannot expect to capture computations requiring more than $d$ bits of memory.  Since the sample complexity of learning is also linear in $d$, we see that with iterated linear thresholds the sample complexity of learning a target $h$ must be at least linear in the space complexity of computing $h$, and cannot depend only (or primarily) on the program length. To achieve our desiderata on universal expressibility (as defined above) with sample complexity (nearly) independent of program runtime (and space), we therefore need generator base classes with a long context window (i.e.~where $f(\bx)$ does not depend only on a short suffix of $\bx$) yet small complexity.\footnote{One simple candidate to consider is the class of sparse linear thresholds (studied in \Cref{subsec:sparse-linear-predictors}), which has a large context window of $d$ but its sample complexity may be only logarithmic in the context size. We see that this base class indeed satisfies two of the three desiderata, however, its expressive power remains unclear.}


\removed{\paragraph{Long Context, Low Complexity Classes.} \abcomment{We have not yet defined $\M$ I dont think?}\gal{and also the term "context" not defined?} Intuitively speaking, in order to learn $\TM(\S,\Time,\M)$ in the sample complexity only $\poly(\S \log \M)$, we must have an iterative base class whose end-to-end sample complexity is only logarithmic in the size of its context widow, which in this case is $\Time$ or $\M$. In the case of linear predictors $\cF_{d,\lin}$ this is not the case as their end-to-end sample complexity of $\Omega(d)$ is at least linear in the context size $d$, and hence $\cF_{d,\lin}$ is doomed to fail. This raises a question
\begin{center}
    \textit{Are there interesting iterative base classes whose end-to-end sample complexity only grows logarithmically in the context size? Or, in other words, a class that can attend to tokens that are exponentially far than the sample size required for $\sCoT$-learnability.}
\end{center}
 Indeed, sparse linear predictors have this nice property.
\begin{definition}[Sparse Linear Predictors]\label{def:sparse-lin-predictors}
   For any positive natural numbers $k \leq d$, we consider the class of $k$-sparse linear predictors of length $d$,
   $$ \cF_{\scriptscriptstyle{d,k,\lin}}:=\{ f_{\bw,b}: \bw \in \R^d, \norm{w}_0 \leq k, \; b \in \R\} \subseteq \cFlin,$$
   where the output of each linear threshold $f_{\bw,b}$ is defined in \Cref{def:lin-predictors}. 
\end{definition}
We have the following lemma. \abcomment{Theorem or lemma?}
\begin{theorem}\label{thm:sparse-linear-th-VCbound}
    There exists $c>0$ such that, for the class $\cFlinsp \subseteq \{0,1\}^{\{0,1\}^*}$, for any $T \in \naturals_{+}$, 
    $$\mcot{T}\leq \mete{T} \leq \frac{c}{\eps}\left( (k^2+k \log d) \log\left(\frac{1}{\eps}\right)+\log\left(\frac{1}{\delta}\right)\right)$$
    but $\cFlinsp$ has the context of size $\Omega(d)$.
\end{theorem}
\gal{Is there a proof in the appendix?}

\abcomment{we should provide some exposition and discussion of this result}
}

\section{A Universal Autoregressive Base Class and Natural Emergence of Attention}\label{sec:TM}
In this section, we answer the optimistic question asked in the previous section, providing an explicit generator class satisfying all three desiderata.  To do so, we first have to decide on a model of computation and formalize the notions of program length and runtime. We will work with a Turing machine as our model of computation, which we formalize in \Cref{subsec:TMdef}. In \Cref{subsec:TMcot}, we present our universal base class $\TMAR$ and prove that it satisfies our desiderata from the end of the previous section (with simple input and output transformations). Finally, in \Cref{subsec:attention}, we discuss how attention arises naturally for simulating this class.
\removed{
In this section we give the formal definition of \emph{Runtime-Bounded Turing Machines}~(\Cref{defi:TM}) and the corresponding function class computable by it~(\Cref{def:TM_func}). In \Cref{subsec:universal_class}, we present a universal autoregressive base class capable of simulating runtime-bounded Turing machines with minimal pre- and post-processing. 
}
\vspace{-0.4cm}
\subsection{Turing Machine Computation} \label{subsec:TMdef}
We formalize time-bounded computation using Turing machines, where the number of states of the Turing machine corresponds to program length and the number of computation steps to runtime.
\begin{definition}[Runtime-bounded Turing Machines]\label{defi:TM}
    A runtime-bounded Turing machine operates on an infinitely long ``Tape'' on both of its ends with cells indexed by integers $\Z$. It uses a tape alphabet set $\cA = \{0,1,\nil\}$, of which $\{0,1\}$ are input and working alphabets and $\nil$ is a blank symbol. The Turing machine is then specified by a tuple $\cM=\<\S,\Time,\tau\>$, where $\S \in \naturals_{+}$ denotes the number of internal states of the Turing machine, and $\Time \in \naturals_{+}$ is the number of steps the Turing machine runs for. The transition rule is a mapping of the form
\begin{gather}\label{eq:transition-functions}
    \tau: [\S] \times  \cA \to [\S] \times \{0,1\} \times \{-1,0,+1\}\\ \tau:\textnormal{state},\textnormal{read}\mapsto\textnormal{nextstate},\textnormal{write},\textnormal{move}\,. \notag
    \end{gather}
\end{definition}

\paragraph{Computation of Turing Machines.} At the beginning of the computation, the tape is initialized with an input $\bomega \in \{0,1\}^*$ on the cells indexed from $1$ through $|\bomega|$ and the other cells contain $\nil$. The head of the machine is at the position $p_0=|\bomega|+1$ and its state is initialized to $s_0=1 \in [\S]$. During each time step $1 \leq t \leq \Time$, the machine reads the symbol $r_t=\text{Tape}[p_{t-1}]$ at the current head position $p_{t-1}$, and according to the transition rule $(s_{t},a_t,b_t)=\tau(s_{t-1},r_t)$ updates its internal state to $s_{t}$, writes $\text{Tape}[p_{t-1}]\leftarrow a_t$ on the tape, and updates the head location to $p_{t}=p_{t-1}+b_t$\,. Finally, after the completion of $\Time$ many time steps, the machine stops\footnote{
    In a standard textbook definition of a turning machine \citep[e.g.][]{sipser1996introduction}, 
    there is a halting state, which we do not require. This is because we have a bounded runtime $\Time$. W.l.o.g. one can always consider $\tau$ in a way that the tape does not change once the machine reaches either of the halting states, and the computation has effectively stopped.} and outputs the symbol $a_\Time$ (the symbol it wrote in the final step before moving the head).  We denote the function computed by the machine as $\TMf(\bomega)=a_\Time$.

\begin{definition}[Runtime-Bounded Turning Computable Functions]\label{def:TM_func} For any positive integers $\S \leq \Time$, the class $\TM(\S,\Time)=\left\{ \TMf: \{0,1\}^* \to \{0,1\} \middle| \tau: [\S] \times  \cA \to [\S] \times \{0,1\} \times \{-1,0,+1\}\right\}$ is the set of functions computable by some runtime-bounded Turing Machine $\langle \S,\Time,\tau\rangle$. 
\end{definition}
The number of states of the Turing machine corresponds to program length---roughly speaking, we can think of each state corresponding to a statement in the program flow chart.  More formally, the description length of a Turing machine $\langle \S,\Time,\tau\rangle$ is $O(\S \log \S)$, and by the Complexity-Theoretic Church-Turing (aka Invariance) Thesis, any reasonable model of computation can be simulated by a Turing machine up to polynomial blowup in runtime and additive overhead in description length.  In particular, $\prog(\S,\Time)$ in the RAM model of computation satisfies $\prog(\S,\Time)\subseteq \TM(\Tilde{O}
(\S),\poly(\Time))$. Thus it suffices to address the universality question for the class $\TM(\S,\Time)$.



\subsection{Universal Autoregressive Base Class}\label{subsec:TMcot}

In this subsection, we present a base class of generators $\TMAR$ capable of simulating runtime-bounded Turing machines (with minimal pre- and post-processing maps, $\preprocess$ and $\postprocess$ defined later).  To simulate $\S$-state computation, we will use autoregressive generation over the alphabet
\begin{equation}
\Sigma:= [\S]  \times \cA    \times \{-1,0,+1\}
\end{equation}
where $\abs{\Sigma}=3 \cdot \S \cdot |\cA|=9 \S$.  Here, both the token set $\Sigma$ and our base class $\TMAR\subseteq \Sigma^{\Sigma^*}$ depend on the allowed state space size $\S$ (corresponding to program length), but importantly not the runtime $\Time$. The generator class dependence on $\S$ is unavoidable, since its complexity must grow with $\S$.  Taking the token set $\Sigma$ to depend on $\S$ is merely a convenience---we could equally well use a binary $\Sigma=\{0,1\}$ and define the generator as operating on blocks of $\log(9\S)$ bits, resulting in an additional $O(\log\S)$ factor on input size and generation length.
    
Each token $x=(s,a,b)\in\Sigma$ can be used to encode a step of Turing machine computation: a new state $s\in[\S]$, and a symbol $a\in\cA$ written on the tape, and a move $b\in\{-1,0,+1\}$.  Indeed, for $x=(s,a,b)$ we will denote $x.\state =s,\,x.\symbol=a$, and $x.\move=b$. Our universal base class $\TMAR$ will have next-token generators $f_\tau$ which correspond to each Turing machine in $\TM(\S,\Time)$, i.e.~each possible transition rule $\tau$ as in \eqref{eq:transition-functions}:  
\begin{equation*}
    \TMAR=\left\{ f_{\tau}:\Sigma^* \to \Sigma \,\middle|\, 
    \tau:[\S]\times \cA \to [\S] \times \{0,1\} \times \{-1,0,+1\}\right\}
\end{equation*}
where each $f_\tau$ maps a token sequence $\bz=(\bz[\,i\,].\state,\bz[\,i\,].\symbol,\bz[\,i\,].\move)_{i\in |\bz|}$ encoding the input and the history of computation up to time $t=\abs{\bz}-(\textnormal{input length})-1$, to the next step $z=(s_t,a_t,b_t)=f_\tau(\bz)$ the Turing machine specified by $\tau$ would make.  This is specified explicitly in \Cref{alg:f_tau}.

The input to the Turing machine is specified in $\bz$ through initial tokens that mimic writing the input on the Turing machine tape, and the output can be easily extracted from the final token corresponding to the final step of computation taken by the machine.  More specifically, we describe the following input and output transformations that map a computation input in $\bomega \in \{0,1\}^*$ to a generation input (i.e.~``prompt'') $\bx\in\Sigma^*$, and the last token of the generation output, $\bz[-1]=\ete{f_\tau}{T}(\bx)$, to a computation output.

\begin{algorithm}[t]\label{alg:universal-class}
\caption{The Predictor $f_\tau:\Sigma^* \to \Sigma$ for a transition table $\tau$}\label{alg:f_tau}
\begin{algorithmic}[1]
\Require{An input string $\bz \in \Sigma^*$}
\Ensure{The next token $z \in \Sigma$}
\State{ \Return $\tau(\state,\nread)$ where $(\state,\nread)=\readtape(\bz)$ given below.}
\end{algorithmic} 
\rule{\linewidth}{0.4pt}
\textbf{Subroutine: read-tape} \Comment{(independent of $\tau$)}
\begin{algorithmic}[1]
\setcounter{SubAlgLine}{1}
\Require{An input string $\bz \in \Sigma^*$ encoding the history of a Turing machine computation.}
\Ensure{ The current state $(\state)$ and the symbol at the current head-position $(\nread)$.}
\State { For each $i \in [ N], \text{ where } N=|\bz|\,:$\\
\quad\quad \textcolor{blue}{$\pos[\,i\,] = \sum_{j<i}\bz[\,j\,].\move$} \label{step:line2}} \Comment{Head position before a move, where $\bz[\,i\,].\symbol$ got written.} 
\vspace{2pt}
\State $\npos [\,N\,] = \pos[N]+\bz[N].\move$ \Comment{Final head position.}
\State \textcolor{blue}{$\nread[N] = \begin{cases}
    \bz[j^*].\symbol &\text{ if there exists } j^* = \max_{j \leq N} \, j \, \;\text{s.t.}\, \pos[\,j\,] = \npos[N]); \\
     \nil &\text{ if $j^*$ does not exist.}
\end{cases}$} 
\label{step:line5}
\State \Return $(\bz[N].\state,\nread[N])$
\end{algorithmic}
\end{algorithm}

\paragraph{Pre and Post Processing.} Given any input $\bomega \in \{0,1\}^*$ with $|\bomega|=n$, the \emph{pre-processing map} $\preprocess$ is defined as $\preprocess(\bomega)\in \Sigma^{n+1}$, where 
$$\preprocess(\bomega)[\,i\,] = \begin{cases} ( 1, \nil,+1) & \text{for } i = 1 \\ (1, \bomega[\,i-1\,],+1) & \text{for } 2 \leq i \leq n+1\,. \end{cases}$$
We need a special token $( 1, \nil,+1)$ to indicate the beginning of the sequence, similar to the [\textsf{BOS}] token (Beginning Of Sentence) in standard practice. \removed{Such construction is also used in previous theoretical works on simulating Turing machines using transformers~\citep{merrill2023expresssive}. }The \emph{post-processing} just amounts to returning the symbol part of the final token, so $\postprocess(x) = x.\symbol$.

\paragraph{Universality of $\TMAR$.} We now argue that this universal autoregressive base class satisfies all three desiderata from \Cref{sec:universal}.  The definition of $f_{\tau}$ as simulating computation by a Turing machine with transition table $\tau$ directly ensures that, for any transition rule $\tau$ and input $\bomega$
\begin{equation}
    \postprocess(\ete{f_\tau}{T}(\preprocess(\bomega))) = g_{\<\S,\Time,\tau\>}(\bomega)\,,
\end{equation}
and thus
$\TM(\S,\Time) \subseteq \postprocess \circ \ete{\TMAR}{\Time} \circ \preprocess$. Also, as there are only $(6\S)^{3\S}$ possible transition tables, and so $\log \abs{\TMAR}=O(\S\log \S)$. Applying the cardinality-based bound from \Cref{thm:cardinality-based}, we have 
$$\mcot{T} = O\left(\eps^{-1} ( \S \log \S + \log(1/\delta)\right)\,.$$
Moreover, the consistency problem $\Cons_{\TMAR}$ (as in \ref{eq:cons-f}) on $\tilde{m}$ examples of length $\tilde{n}$ can easily be solved in time $O(\tilde{m}\tilde{n})$: for each example $(\bu_i,v_i)$ we can use the subroutine $\readtape(\bu_i)$ from  \Cref{alg:f_tau} (which is fixed and does not depend on $\tau$) to compute the last symbol $\nread_i$ read from the tape, as well as the last state $\state_i$ of the computation specified by $\bu_i$.  The constraint $f_{\tau}(\bu_i)=v_i$ now amounts to a constraint $\tau(\state_i,\nread_i)=v_i$ on the transition function, and so all that is left is to memorize $\tau$---see explicit implementation of $\Cons_{\TMAR}$ below.
\begin{tcolorbox}
Given examples $(\bu_i,v_i)_{i=1,\ldots,\tilde{m}}$ where $\bu_i\in\Sigma^*$ and $v_i\in\Sigma$\,:
\begin{equation}\label{eq:universal-cot-consistent}
    \text{Return $f_{\hat{\tau}}$, for some $\hat{\tau}$ with $\S$ states such that } \, f_{\hat\tau}(\bu_i)=v_i,\,\; \forall {i\in [\tilde{m}]}\,. \tag{$\Cons_{\TMAR}$}
\end{equation}
\textbf{Implementation:}
\begin{algorithmic}[1]
\State For {each $i\in [\Tilde{m}]\,:$} fill $\hat{\tau}(\state_i,\nread_i)=v_i$ where $(\state_i,\nread_i)=\readtape(\bu_i)\,.$ 
 \State If there are conflicts, then there is no consistent $f_{\hat\tau}$.
\State \Return $f_{\hat{\tau}}$ after completing the other entries of $\hat{\tau}(\cdot,\cdot)$ arbitrarily.
\end{algorithmic}
\end{tcolorbox}
\begin{theorem}[Universality]\label{thm:universal-class} For every $\S \in \naturals_{+}$ and $\Time \geq \S$, we have that $\TM(\S,\Time)$ is contained in $ \ete{\TMAR}{\Time}$ (up to pre and post processing steps), and  $\ete{\TMAR}{\Time}$ is  $\sCoT$-learnable by \ref{eq:cot-consistent} with runtime $\poly(n,\Time,\eps^{\scriptscriptstyle{-1}},\log \delta^{\scriptscriptstyle{-1}})$ and sample complexity $\mcot{\Time} = O\left( \eps^{\scriptscriptstyle{-1}}\left( \S\log \S+\log \delta^{\scriptscriptstyle{-1}}\right)  \right)\,$. 
\end{theorem}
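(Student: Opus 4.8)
The plan is to prove the three claims of the theorem in turn: the containment $\TM(\S,\Time)\subseteq\postprocess\circ\ete{\TMAR}{\Time}\circ\preprocess$, the sample-complexity bound, and the runtime bound. The latter two follow quickly from results already in the paper, so essentially all of the work is in the containment, i.e.\ in verifying that the autoregressive rollout of a generator $f_\tau\in\TMAR$ faithfully reproduces the computation of the Turing machine with transition table $\tau$; the delicate part there is the correctness of the $\tau$-independent subroutine $\readtape$ of \Cref{alg:f_tau}.

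For the containment, fix $\tau$ and an input $\bomega\in\{0,1\}^n$, and write $(s_t,p_t,r_t,a_t,b_t)$ for the trajectory of $\langle\S,\Time,\tau\rangle$ on $\bomega$ (state, head position before step $t$, symbol read, symbol written, move), with $s_0=1$ and $p_0=n+1$. I would prove, by induction on $t\in\{0,\dots,\Time\}$, the invariant: $\bz^{(t)}:=\CoT{f_\tau}{t}(\preprocess(\bomega))$ has length $n+1+t$; its first $n+1$ tokens are the preprocessing tokens; and for each $1\le s\le t$ its $(n+1+s)$-th token is $(s_s,a_s,b_s)$. The inductive step reduces to showing $\readtape(\bz^{(t)})=(s_t,r_{t+1})$, since then $f_\tau$ appends $\tau(s_t,r_{t+1})=(s_{t+1},a_{t+1},b_{t+1})$. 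The state component is immediate from $\bz^{(t)}[-1].\state=s_t$ (with $\bz^{(0)}[-1].\state=1=s_0$ in the base case). For the symbol component, two facts need checking. (a) The quantity $\npos$ computed by the subroutine equals $p_t$: it is the total sum of all $\move$ fields, and the preprocessing tokens contribute $+1$ each (total $n+1=p_0$) while the $(n+1+s)$-th token contributes $b_s$, so $\npos=p_0+\sum_{s\le t}b_s=p_t$. (b) The recovered symbol $\nread$ equals the tape content at cell $p_t$ after $t$ steps, which is $r_{t+1}$: the preprocessing token at index $j$ has $\pos[j]=j-1$, so it records the initial content of cell $j-1$ (namely $\bomega[j-1]$, or $\nil$ for $j=1$), whereas the $(n+1+s)$-th token has $\pos=p_{s-1}$, recording that the machine wrote $a_s$ to cell $p_{s-1}$ at step $s$; hence the largest index $j^\ast$ with $\pos[j^\ast]=p_t$ picks out exactly the most recent write to cell $p_t$ (a computation write if one occurred, else the initial content), which is what the tape holds there after $t$ steps, and this also covers the head revisiting a cell, sitting on a never-written cell, or moving to cell $0$ or negative cells (then $j^\ast$ is a preprocessing token or does not exist and $\nread=\nil$). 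Taking $t=\Time$ and applying $\postprocess$ gives $\postprocess(\ete{f_\tau}{\Time}(\preprocess(\bomega)))=a_\Time=\TMf(\bomega)$.

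For the sample complexity: a transition table is a function from the size-$3\S$ set $[\S]\times\cA$ into the size-$6\S$ set $[\S]\times\{0,1\}\times\{-1,0,+1\}$, so $\abs{\TMAR}\le(6\S)^{3\S}$ and $\log\abs{\TMAR}=O(\S\log\S)$. Since \ref{eq:cot-consistent} is a special case of \ref{eq:ete-consistent}, \Cref{thm:cardinality-based} applies directly and yields $\mcot{\Time}=O(\eps^{\scriptscriptstyle{-1}}(\S\log\S+\log\delta^{\scriptscriptstyle{-1}}))$. For the runtime: $\readtape$ does not depend on $\tau$ and runs in time polynomial in its input length (one pass to accumulate the partial sums $\pos[\cdot]$, then a scan for $j^\ast$), so one application of $f_{\hat\tau}$, and hence an evaluation of $\ete{f_{\hat\tau}}{\Time}$ on an input of length $\le n$, costs $\poly(n,\Time)$. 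By \Cref{thm:ConsF-gives-CoT-Cons-implementation}, implementing \ref{eq:cot-consistent} reduces to a single call to $\Cons_{\TMAR}$ on $\tilde m\le m\Time$ prefixes of length $\le n+\Time$, and $\Cons_{\TMAR}$ is solved by the displayed procedure \ref{eq:universal-cot-consistent}: run $\readtape$ on each prefix $\bu_i$ to get $(\state_i,\nread_i)$, record the constraint $\hat\tau(\state_i,\nread_i)=v_i$, report infeasibility if two prefixes force conflicting values (which cannot occur under realizability, since a true $f_{\tau_*}$ generates the data), and fill the remaining entries of $\hat\tau$ arbitrarily; this runs in time $O(\tilde m(n+\Time))+O(\S)$. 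Plugging in $m=\mcot{\Time}$ (and $\S\le\Time$) gives total learning time $\poly(n,\Time,\eps^{\scriptscriptstyle{-1}},\log\delta^{\scriptscriptstyle{-1}})$; equivalently one may cite \Cref{cor:tract-Cons-oracle-tract-CoT-lrn} with size parameter $d=\S$, using $\VCdim(\TMAR)\le\log\abs{\TMAR}=\poly(\S)$ and the poly-time $\Cons_{\TMAR}$ above.

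I expect the main obstacle to be the induction in the second paragraph: turning the informal correspondence between ``the token history'' and ``the Turing machine tape'' into a precise invariant requires carefully tracking the off-by-one conventions introduced by the $(1,\nil,+1)$ BOS token and the $n+1$ preprocessing tokens, and checking that the ``last token written at a given cell'' reconstruction performed by $\readtape$ agrees with the true tape content in all corner cases. Once this invariant is in place, the remaining two assertions are short applications of \Cref{thm:cardinality-based}, \Cref{thm:ConsF-gives-CoT-Cons-implementation}, and \Cref{cor:tract-Cons-oracle-tract-CoT-lrn}.
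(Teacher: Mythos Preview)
Your proposal is correct and follows essentially the same approach as the paper's proof: induction on $t$ to verify that $\readtape$ correctly recovers $(s_t,r_{t+1})$ (hence $\ete{f_\tau}{t}(\preprocess(\bomega))=(s_t,a_t,b_t)$), the cardinality bound $\log\abs{\TMAR}=O(\S\log\S)$ plugged into \Cref{thm:cardinality-based} for the sample complexity, and the reduction of \ref{eq:cot-consistent} to the poly-time $\Cons_{\TMAR}$ via \Cref{thm:ConsF-gives-CoT-Cons-implementation} for the runtime. Your treatment of the corner cases in the $\readtape$ correctness argument (the BOS token, cells $\le 0$, and cells never written) is in fact a bit more explicit than the paper's, but the argument is the same.
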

This is a corollary of the observations in the preceding paragraphs and \Cref{thm:ConsF-gives-CoT-Cons-implementation}.
\subsection{Emergence of Attention}\label{subsec:attention}
We now take a closer look at the generator class $\TMAR$, and its generators $f_\tau\in\TMAR$ specified in \Cref{alg:f_tau}, and how these generators can be implemented using ``generic'' function classes.  We intentionally wrote $f_\tau$ as operating on arrays of length $N$. The two operations involving array elements across multiple locations are the position calculation in \Cref{step:line2} of $\readtape$, which involves summing moves over multiple array locations, and the position lookup in \Cref{step:line5} on $\readtape$.  The lookup operation is explicitly an attention-type operation: we look for a location where the write position (the ``key'') matches the read position (the ``query'') and copy its tape symbol (the ``value'').  Meanwhile, the summation operation in \Cref{step:line2} can be implemented as averaging many array locations, as in uniform attention.

To be more concrete, we will show how \Cref{step:line2,step:line5} can be naturally implemented by  Average Hard Attention \citep[e.g.][]{merrill2022saturated,barcelo2023logical,strobl2024formal}:
\begin{definition}[Causal Average Hard Attention]\label{defi:attention}
For any positive integers $N$, $p$, $l$ and any sequences of vectors $\sfq[\,1\,],\ldots,\sfq[N] \in \mathbb{R}^l$ (queries), $\sfk[\,1\,],\ldots,\sfk[N] \in \mathbb{R}^l$ (keys), and $\sfv[\,1\,],\ldots,\sfv[N] \in \mathbb{R}^p$ (values), and a score function $\score: \mathbb{R}^i \to \mathbb{R}^i$ for each $i \in [N]$, the \emph{causal attention function} $\att:  (\mathbb{R}^l)^N \times (\mathbb{R}^l)^N\times (\mathbb{R}^p)^N \mapsto (\mathbb{R}^p)^N $ is defined as:
\begin{equation}\label{eq:attention-equation}
\left[ \att(\{\sfq[\,j\,]\}_{j=1}^N, \{\sfk[\,j\,]\}_{j=1}^N, \{\sfv[\,j\,]\}_{j=1}^N)\right]_i = \sum_{\ell=1}^{i} \score(\{\langle \sfq[\,i\,], \sfk[\,j\,] \rangle\}_{j=1}^i)_\ell  \, \sfv[\,\ell\,]\,\,.
\end{equation}
In particular, the average hard attention $\aha(\sfq,\sfk,\sfv)$ uses the following score function:
\begin{equation}\label{eq:scoreaha}
\score_\aha(\{s_j\}_{j=1}^i)_\ell = \begin{cases}
\frac{1}{|\mathcal{M}|} & \text{if } \ell \in \mathcal{M} \\
0 & \text{otherwise}
\end{cases}, \quad \text{where } \mathcal{M} = \argmax_{j \leq i}  s_j\,.
\end{equation}
That is, the output $\aha(\sfq,\sfk,\sfv)[\,i\,]\in\mathbb{R}^p$ at position $i$ is an average of the ``values'' $\sfv[\,\ell\,]$ at positions $\ell \in \arg\max_{j\leq i} \left\langle \sfq[\,i\,],\sfk[\,j\,]\right\rangle $ whose ``keys'' $\sfk[\,j\,]$ have the maximal inner product with the ``query'' $\sfq[\,i\,]$ at the output position $i$ of interest. 
\end{definition}

\paragraph{\textcolor{blue}{Implementing \Cref{step:line2}:}} This is a summation over all previous positions, which is the same as an average over all previous positions multiplied by the number of previous position.  Such an average over all previous positions can be easily implemented by ``uniform`` attention, i.e.~where all keys and queries are zero and values from all positions are averaged. 

There are two minor complications here in using Causal Average Hard Attention as we defined it: as written in \Cref{step:line2}, we need to average over all previous positions, but not the output position, as in {\em strict} causal attention (a variant of \Cref{defi:attention} where the arg-max in \eqref{eq:scoreaha} is over $j<i$ instead of $j \leq i$). We can also implement this using non-strict causal attention (\Cref{defi:attention}) with one additional local step as follows. Define $\npos[\,i\,]=\sum_{j=1}^{i}\bz[\,i\,].\symbol$ instead; i.e. the summation including the index $i$. Once we have computed $\npos$,  a local operation $\pos[\,i\,]:=\npos[\,i\,]-\bz[\,i\,].\move$ suffices to compute $\pos$.   The second minor complication is going between the average and the sum, for which we also need to compute the current index.  This can be done thanks to the special ``beginning of sequence'' token.  Specifically, to compute $\npos$ using a non-strict causal attention, first define $\isfirst[\,i\,]:=\indct[\bz[\,i\,].\symbol = \nil]$, which indicates whether the token is the first. We can use attention to get $\invpos := \aha(\zeros,\zeros,\isfirst)$ using uniform attention, where all keys and query are the same. It is easy to check that $\invpos[\,i\,]= \frac{1}{i}$. Similarly, consider $\scaledpos := \aha(\zeros,\zeros,\bz.\move)$. Using the definition of $\aha$ (\Cref{defi:attention}), it is straightforward to verify that $\scaledpos[\,i\,] = \frac{\sum_{j=1}^{i}\bz[\,j\,].\move}{i} = \frac{\npos[\,i\,]}{i}$. Therefore, $\npos[\,i\,]=i \times \scaledpos[\,i\,]=\scaledpos[\,i\,]/\invpos[\,i\,]$.

\paragraph{\textcolor{blue}{Implementing \Cref{step:line5}:}} This lookup is a much more direct application of attention, and indeed is reminiscent of the original motivation for attention as a basic component required for computation \citep{graves2014neural}.  Here we directly want to set the query to the current read position and the keys to the write positions and look for an exact match.  The complication here is that we want {\em the most recent} exact match.  This can be achieved by augmenting the keys with the step counter and thus seeking the highest step-count computation step among those with the exact match.

Specifically, we will get $\nread := \aha(\sfq,\sfk,\sfv)$ with hard-average attention by setting $\sfv[\,i\,] = \bz[\,i\,].\symbol$,
    \begin{align*}
\sfq[\,i\,] = (-\npos[\,i\,]^{2}, \npos[\,i\,], -1, -1), \quad 
\sfk[\,j\,] = \begin{cases}
(0,0,0,\invpos[\,j\,]) & \text{if } j=1;\\
(2, 4\pos[\,j\,], 2\pos[\,j\,]^2,\invpos[\,j\,]) & \text{otherwise}\,.
\end{cases}
\end{align*}
Thus the inner product between queries and products is  
$$\langle \sfq[\,i\,], \sfk[\,j\,]\rangle = \begin{cases}
-1 & \text{if } j=1\,; \\
-2(\npos[\,i\,]-\pos[\,j\,])^2 -\frac{1}{j}& \text{otherwise}.
\end{cases}$$
Therefore, if we look at the last index $\nread[N]$, then whenever $j^* = \max_{j \leq N} \, j\, \;\text{s.t.}\, \npos[N] = \pos[\,j\,]$ exists, indeed we observe that the inner product $\{\<\sfq[N],\sfk[\,j\,]\>\}_{j=1}^{N}$ is maximized uniquely at the index $j^*$, i.e. $j^*=\arg\max_{j \leq i}\<\sfq[N],\sfk[\,j\,]\>$. As a result, the average-hard attention always focuses $j^*$ and we have that $\nread[N]=\sfv[j^*]=\bz[j^*].\symbol$ as desired. When $j^*$ does not exist, it is easy to observe that the inner product $\{\<\sfq[\,N\,],\sfk[\,j\,]\>\}_{j=1}^{N}$ is maximized uniquely at the index $j=1$. Therefore, the attention focuses on the first index and returns $\sfv[\,1\,]=\bz_{\textrm{in}}[\,1\,].\symbol=\nil$, which is precisely what we need, concluding the implementation.

\paragraph{From Attention to Transformers.}  In the preceding paragraphs, we discussed how to implement the ``non-local'' operations on \Cref{step:line2,step:line5} using attention.  All other operations in the implementation of $f_\tau$ are ``local'' in the sense that they operate independently at each location of the arrays\footnote{This includes the operations inside $\readtape$ which operate explicitly at array locations, and the $\tau$-lookup at the end, which can be thought of as operating at location $N$, or in fact at each location independently and in parallel outputting the computation-step-token}.  This directly suggests a decoder-only-transformer architecture \citep{vaswani2017attention} where attention operations are interleaved with multi-layer perceptrons (MLPs) operating independently and in parallel at each location, as a generic way of implementing the local operations.  Most notably, the $\tau$-lookup operation is directly implementable by an MLP with $\tau$ encoded in its weights.  Using Hard Average Attention as in \Cref{defi:attention} results in a discontinuous and non-differentiable model, but a natural alternative is to relax the hard max in the score definition \eqref{eq:scoreaha} to a softmax, resulting in the familiar softmax attention widely used in practice.  Implementing $f_\tau$ with a practical transformer architecture further requires care to numerical precision, and to the specifics of implementing or approximating the local operations with MLPs and possible layer-normalizations, especially the division and multiplication operations involved in calculating the step-counters and positions.  In fact, the  Turing Machine expressivity results of \citet{perez2021attention,wei2022statistically,merrill2023expresssive} essentially do this, following a construction similar to ours, and showing how to implement it with specific transformer architectures.  

The difference here is mostly that of perspective: while recent work on Turing universality of transformers started with the transformer as suggested by \citet{vaswani2017attention} and showed how to coerce it to simulate a Turing machine, we take the reverse perspective.  Our starting point is the desiderata of \Cref{sec:universal}, which directly motivates the generator class $\TMAR$, and we then argue how implementing $f_\tau\in\TMAR$ naturally involves attention and yields a transformer-type architecture.
\paragraph{Avoiding Uniform Attention.}  While the tape lookup operation in \Cref{step:line5} very naturally motivates attention, and also seems unavoidable and fundamental to universal computation, here we discuss how the use of uniform attention to calculate the current Turing Machine head position in \Cref{step:line2} can be avoided.  To do so, instead of writing out the relative move $b_t$ of each step of the Turing computation (as in \Cref{defi:TM}), we can write out the absolute tape position $p_t$. We cannot do so with a single token, since $p_t$ could potentially be as large as the runtime $\Time$, necessitating an alphabet of size $\Sigma = \Omega(\Time)$, something we would like to avoid. But we could easily write down (i.e.~generate) $p_t$ over multiple tokens, even over a binary alphabet $\Sigma=\{0,1\}$, using some prefix unambiguous encoding. The generator $f_\tau$ would then need to collect the encoding of $p_{t-1}$ (as well as the previous state and symbol written), do the arithmetic to calculate the next absolute position $p_t=p_{t-1}+b_t$ (as well other operations in \Cref{alg:f_tau}), figure out where in the middle of generating the description of the next compute state we are at, and generate the next bit in the description.  Implementing this with a transformer-like architecture involves multiple attention layers, mostly with short-range attention to the last $O(\log \Time)$ tokens as well as a single long-term sparse attention to implement the lookup on \Cref{step:line5}, and a transformer of size (and depth) that scales with $\log \Time$.  But it avoids the need for uniform attention.  A similar approach can also simulate RAM-computation, where the non-local operations are reads from memory addresses describable with $\log \Time$ bits. 


\section{Discussion and Future Directions}\label{sec:discussion}
We investigate training based solely on information about the final answer, as well as training with supervision about the intermediate reasoning steps used to reach that answer.  Training based on explicit Chain-of-Thought examples is studied by \cite{nye2021show,zelikman2022star,chung2024scaling,lewkowycz2022solving}, and is widely used in contemporary training of frontier LLMs, particularly in the fine-tuning step \citep{openai2023gpt4,google2024gemini3,meta2023llama2}.

More recently, there have been attempts to train models using only feedback on the final answer, attempting to learn the reasoning process using reinforcement learning techniques \citep{deepseek2025cot, ouyang2022training}. Our results help highlight and formalize the intractability of such approaches in general, without additional assumptions. So, why have approaches like DeepSeek and others seen empirical success? One possible explanation is that these Reinforce-type methods assume access to a sufficiently good (possibly randomized) next-token predictor $f \in \mathcal{F}$ that already produces correct (or at least useful) CoTs with non-negligible probability with some level of coverage over valid reasoning paths. Such a predictor is typically learned during pretraining on large-scale text data, which, while not always matching the target task’s CoTs exactly, are often similar enough to enable transfer. This type of training is not exactly CoT supervision, but is arguably closer to it than pure end-to-end supervision. Understanding how transfer from such related-but-different next-token prediction tasks enables tractable learning—using only final-answer supervision, and little or no direct CoT data for the task of interest—is an extremely interesting direction for future work.

In summary, we presented a framework of time-invariant autoregressive Chain-of-Thought learning. Using our framework, we discussed how:

\begin{itemize}[noitemsep]
\item Time-invariance can (nearly) avoid the dependence on the number of steps of generation and computation in sample complexity, and allows for universal learning with sample complexity dependent only on program length (and not runtime!).  This is independent of CoT training and holds also for $\sete$ learning.
\item CoT training allows for overcoming computational difficulties and for tractable training even of universal classes. 
 This is independent of time invariance and is true also for time-dependent models.
\item Attention arises naturally in time-invariant base classes allowing such universal expressivity.
\end{itemize}
\begin{table}[!ht]
    \centering
    \begin{tabular}{c|c|c}
       Learning Goals & Nearly $\sCoT$-length  & Computational  \\
          & Independent Samples & Tractability  \\
         \hline
        Time Invariant $\sete$ & \cmark & \xmark\\
        \hline 
        Time Dependent $\sCoT$ \citep{malach2023auto} & \xmark & \cmark \\
        \hline
        Time Invariant $\sCoT$ & \cmark & \cmark
    \end{tabular}
    \label{tab:takeway}
\end{table}
We were directly inspired by the work of \cite{malach2023auto}, and address what we view as its major deficiency, namely time dependence.  

\paragraph{Relationship to Behavioral Cloning.} Recent work has connected autoregressive generation to \emph{Behavior Cloning} (BC) \citep{pomerleau1988alvinn,bain1995framework,ross2010efficient}, and indeed BC has been suggested as an approach for understanding modern language models \citep{chang2023learning,block2023provable}. Those works view autoregressive generation as a Markov Decision Process (MDP) with a ``state'' at step $t$ being the entire sequence generated thus far, i.e.~$\CoT{f}{t}(\bx)$, actions corresponding to tokens $a\in\Sigma$, and transitions being deterministic mapping a state $\bz$ and an action $a\in\Sigma$ to the next state $\textrm{append}(\bz,a)$.  In that framework, our ``generators'' $f$ become deterministic ``policies'' outputting the action to take (i.e.~token to append) at each state (i.e.~string, or ``context''). Importantly, the reward is only on the final state $\bz=\CoT{f}{t}(\bx)$ and depends only on the correspondence of the last token in $\bz$ to the prefix containing the initial state. BC considers learning a policy maximizing some reward based only on observations from an expert, in our case the ground truth $f_*$, and {\em without observing the ground-truth rewards}.  Most relevant to our paper is the recent work of \citet{foster2024behavior}, which provides general, horizon-free (i.e., independent of $T$) guarantees on the performance of BC.  As in our work, \citet{foster2024behavior} considers the realizable setting, where the expert $f_*$ is in a given policy class, and distinguishes between the time-invariant (called parameter-sharing in that work) and time-varying cases, noting that the former is essential for the aforementioned horizon-free guarantees.  \citet{foster2024behavior} considers only the CoT framework, where `actions' at each time step are available to the learner, and indeed, the implication of \Cref{thm:cardinality-based} for CoT learning is a special case of their result.  That work is focused on the statistical task itself and is not interested in either the end-to-end setting or the computational benefits and drawbacks thereof.

Another innovation in \cite{foster2024behavior} is the consideration of stochasticity in both the environment and the experts. The lack of reward observability (combined with a stochastic expert) necessitates the use of randomized policies—unlike in supervised learning, where deterministic policies are always optimal. In our setting, when mapped to Behavioral Cloning, observing the expert demonstration already reveals the reward, so this consideration is somewhat moot.



\paragraph{Further Open Questions.} Our work leaves several technical questions open such as: What is the true sample complexity of $\sete$ and $\sCoT$ learning of iterated linear thresholds, i.e. lower bounds for \Cref{cor:linear-th-VCbound}?  Can the circuit embedding construction of \Cref{lem:LT-expressivity} be improved?  Can the dependencies on $\log T$ in \Cref{tab:sample-complexity} be resolved, and more significantly the dependence on $\log\abs{\Sigma}$ mentioned in \Cref{rem:|Sigma|>2}?  Can $\sete$ learnability be ensured using a notion more relaxed than the Littlestone dimension?  

Additionally, it is interesting to study other base classes under our framework, including sparse linear thresholds, gated generators, simple attention mechanisms and transformers.  What are the sample and computational complexities of $\sete$ and $\sCoT$ learning?  What is the expressiveness of $\ete{\cF}{T}$? Another major challenge is generalizing our framework to the agnostic setting.  In particular, what is the correct way to model misspecification for the $\sCoT$ data?

Our framework also allows us to rigorously study the computational complexity of settings on the spectrum between $\sete$ and $\sCoT$ learning.  What can be done tractably if we have a small amount of full CoT data, and a large amount of end-to-end input-output pairs?  What is the minimum amount of CoT data required for tractable learning if we have access to an unlimited amount of end-to-end data?  How helpful is CoT data from a mixture of different generators $f_i$ all computing the same end-to-end function $\ete{f_i}{T}=\ete{f_j}{T}$ but $\CoT{f_i}{T}\neq\CoT{f_j}{T}$ ?

In this paper, we studied only in-distribution generalization, with a fixed generation length. But insisting on time invariance should allow us to study the much more complex question of length generalization and other forms of out-of-distribution generalization.  

Finally, much of the actual use of large language models stems from their ability to learn not only from solutions to the task of interest (as we do here), but also transfer from modeling unrelated and unlabeled data. One can consider extending our framework to allow studying of such transfer learning, with access to data not solving the desired task but generated by the same generator.
\vspace{-2mm}
\paragraph{Acknowledgments.} NJ and NS would like to thank Gene Li for helpful discussions. AB would like to thank Akshay Krishnamurthy for helpful discussions. Part of this work was done when SG, ZL, and NS were participating in the Special Year on Large Language Models and Transformers, and the Program on Modern Paradigms in Generalization at the Simons Institute for the Theory of Computing at UC Berkeley. NJ, GV, TM, and NS were supported in part by the NSF-Simons Sponsored Collaboration on the Theoretical Foundations of Deep Learning and the NSF TRIPOD Institute for Data, Econometrics, Algorithms, and Learning (IDEAL). GV was supported in part by the Zuckerman STEM Leadership Program, and by research grants from the Center for New Scientists at the Weizmann Institute of Science, and the Shimon and Golde Picker -- Weizmann Annual Grant. SG was supported in part by the OpenAI SuperAlignment grant.

\bibliography{bibliography}

\begin{thebibliography}{49}
\providecommand{\natexlab}[1]{#1}
\providecommand{\url}[1]{\texttt{#1}}
\expandafter\ifx\csname urlstyle\endcsname\relax
  \providecommand{\doi}[1]{doi: #1}\else
  \providecommand{\doi}{doi: \begingroup \urlstyle{rm}\Url}\fi

\bibitem[Achiam et~al.(2023)Achiam, Adler, Agarwal, Ahmad, Akkaya, Aleman, Almeida, Altenschmidt, Altman, Anadkat, et~al.]{achiam2023gpt}
J.~Achiam, S.~Adler, S.~Agarwal, L.~Ahmad, I.~Akkaya, F.~L. Aleman, D.~Almeida, J.~Altenschmidt, S.~Altman, S.~Anadkat, et~al.
\newblock Gpt-4 technical report.
\newblock \emph{arXiv preprint arXiv:2303.08774}, 2023.

\bibitem[Alon et~al.(2022)Alon, Bun, Livni, Malliaris, and Moran]{alon2022private}
N.~Alon, M.~Bun, R.~Livni, M.~Malliaris, and S.~Moran.
\newblock Private and online learnability are equivalent.
\newblock \emph{ACM Journal of the ACM (JACM)}, 69\penalty0 (4):\penalty0 1--34, 2022.

\bibitem[Anthony and Bartlett(2009)]{anthony2009neural}
M.~Anthony and P.~L. Bartlett.
\newblock \emph{Neural network learning: Theoretical foundations}.
\newblock cambridge university press, 2009.

\bibitem[Arora and Barak(2009)]{arora2009computational}
S.~Arora and B.~Barak.
\newblock \emph{Computational complexity: a modern approach}.
\newblock Cambridge University Press, 2009.

\bibitem[Bain and Sammut(1995)]{bain1995framework}
M.~Bain and C.~Sammut.
\newblock A framework for behavioural cloning.
\newblock In \emph{Machine intelligence 15}, pages 103--129, 1995.

\bibitem[Barcel{\'o} et~al.(2023)Barcel{\'o}, Kozachinskiy, Lin, and Podolskii]{barcelo2023logical}
P.~Barcel{\'o}, A.~Kozachinskiy, A.~W. Lin, and V.~Podolskii.
\newblock Logical languages accepted by transformer encoders with hard attention.
\newblock \emph{arXiv preprint arXiv:2310.03817}, 2023.

\bibitem[Bartlett and Maass(2003)]{bartlett2003vapnik}
P.~L. Bartlett and W.~Maass.
\newblock Vapnik-chervonenkis dimension of neural nets.
\newblock \emph{The handbook of brain theory and neural networks}, pages 1188--1192, 2003.

\bibitem[Ben-David et~al.(2009)Ben-David, P{\'a}l, and Shalev-Shwartz]{ben2009agnostic}
S.~Ben-David, D.~P{\'a}l, and S.~Shalev-Shwartz.
\newblock Agnostic online learning.
\newblock In \emph{COLT}, volume~3, page~1, 2009.

\bibitem[Block et~al.(2023)Block, Jadbabaie, Pfrommer, Simchowitz, and Tedrake]{block2023provable}
A.~Block, A.~Jadbabaie, D.~Pfrommer, M.~Simchowitz, and R.~Tedrake.
\newblock Provable guarantees for generative behavior cloning: Bridging low-level stability and high-level behavior.
\newblock \emph{Advances in Neural Information Processing Systems}, 36:\penalty0 48534--48547, 2023.

\bibitem[Bun et~al.(2020)Bun, Livni, and Moran]{bun2020equivalence}
M.~Bun, R.~Livni, and S.~Moran.
\newblock An equivalence between private classification and online prediction.
\newblock In \emph{2020 IEEE 61st Annual Symposium on Foundations of Computer Science (FOCS)}, pages 389--402. IEEE, 2020.

\bibitem[Chang et~al.(2023)Chang, Brantley, Ramamurthy, Misra, and Sun]{chang2023learning}
J.~D. Chang, K.~Brantley, R.~Ramamurthy, D.~Misra, and W.~Sun.
\newblock Learning to generate better than your llm.
\newblock \emph{arXiv preprint arXiv:2306.11816}, 2023.

\bibitem[Chung et~al.(2024)Chung, Hou, Longpre, Zoph, Tay, Fedus, Li, Wang, Dehghani, Brahma, et~al.]{chung2024scaling}
H.~W. Chung, L.~Hou, S.~Longpre, B.~Zoph, Y.~Tay, W.~Fedus, Y.~Li, X.~Wang, M.~Dehghani, S.~Brahma, et~al.
\newblock Scaling instruction-finetuned language models.
\newblock \emph{Journal of Machine Learning Research}, 25\penalty0 (70):\penalty0 1--53, 2024.

\bibitem[Daniely and Shalev-Shwartz(2016)]{daniely2016complexity}
A.~Daniely and S.~Shalev-Shwartz.
\newblock Complexity theoretic limitations on learning dnf’s.
\newblock In \emph{Conference on Learning Theory}, pages 815--830. PMLR, 2016.

\bibitem[Daniely and Vardi(2021)]{daniely2021local}
A.~Daniely and G.~Vardi.
\newblock From local pseudorandom generators to hardness of learning.
\newblock In \emph{Conference on Learning Theory}, pages 1358--1394. PMLR, 2021.

\bibitem[Dantzig(2016)]{dantzig2016linear}
G.~B. Dantzig.
\newblock Linear programming and extensions.
\newblock In \emph{Linear programming and extensions}. Princeton university press, 2016.

\bibitem[DeepMind(2024)]{google2024gemini3}
G.~DeepMind.
\newblock Gemini 1.5 technical report, 2024.
\newblock \url{https://deepmind.google/technologies/gemini/}.

\bibitem[Foster et~al.(2024)Foster, Block, and Misra]{foster2024behavior}
D.~J. Foster, A.~Block, and D.~Misra.
\newblock Is behavior cloning all you need? understanding horizon in imitation learning.
\newblock \emph{arXiv preprint arXiv:2407.15007}, 2024.

\bibitem[Gao et~al.(2020)Gao, Biderman, Black, Golding, Hoppe, Foster, Phang, He, Thite, Nabeshima, et~al.]{gao2020pile}
L.~Gao, S.~Biderman, S.~Black, L.~Golding, T.~Hoppe, C.~Foster, J.~Phang, H.~He, A.~Thite, N.~Nabeshima, et~al.
\newblock The pile: An 800gb dataset of diverse text for language modeling.
\newblock \emph{arXiv preprint arXiv:2101.00027}, 2020.

\bibitem[Graves(2014)]{graves2014neural}
A.~Graves.
\newblock Neural turing machines.
\newblock \emph{arXiv preprint arXiv:1410.5401}, 2014.

\bibitem[Guo et~al.(2025)Guo, Yang, Zhang, Song, et~al.]{deepseek2025cot}
D.~Guo, D.~Yang, H.~Zhang, J.~Song, et~al.
\newblock Deepseek‑r1: Incentivizing reasoning capability in llms via reinforcement learning.
\newblock arXiv preprint, 2025.
\newblock Uses sparse final-answer rewards to learn CoT via policy gradient (Group Relative Policy Optimization).

\bibitem[Hendrycks et~al.(2021)Hendrycks, Burns, Kadavath, Arora, Basart, Tang, Song, and Steinhardt]{hendrycksmath2021}
D.~Hendrycks, C.~Burns, S.~Kadavath, A.~Arora, S.~Basart, E.~Tang, D.~Song, and J.~Steinhardt.
\newblock Measuring mathematical problem solving with the math dataset.
\newblock \emph{NeurIPS}, 2021.

\bibitem[Kearns and Valiant(1994)]{kearns1994cryptographic}
M.~Kearns and L.~Valiant.
\newblock Cryptographic limitations on learning boolean formulae and finite automata.
\newblock \emph{Journal of the ACM (JACM)}, 41\penalty0 (1):\penalty0 67--95, 1994.

\bibitem[Klivans and Sherstov(2009)]{klivans2009cryptographic}
A.~R. Klivans and A.~A. Sherstov.
\newblock Cryptographic hardness for learning intersections of halfspaces.
\newblock \emph{Journal of Computer and System Sciences}, 75\penalty0 (1):\penalty0 2--12, 2009.

\bibitem[Krause and Lucks(2001)]{krause2001pseudorandom}
M.~Krause and S.~Lucks.
\newblock Pseudorandom functions in and cryptographic limitations to proving lower bounds.
\newblock \emph{computational complexity}, 10\penalty0 (4):\penalty0 297--313, 2001.

\bibitem[Lewkowycz et~al.(2022)Lewkowycz, Andreassen, Dohan, Dyer, Michalewski, Ramasesh, Slone, Anil, Schlag, Gutman-Solo, et~al.]{lewkowycz2022solving}
A.~Lewkowycz, A.~Andreassen, D.~Dohan, E.~Dyer, H.~Michalewski, V.~Ramasesh, A.~Slone, C.~Anil, I.~Schlag, T.~Gutman-Solo, et~al.
\newblock Solving quantitative reasoning problems with language models.
\newblock \emph{Advances in Neural Information Processing Systems}, 35:\penalty0 3843--3857, 2022.

\bibitem[Li et~al.(2023)Li, Bubeck, Eldan, Del~Giorno, Gunasekar, and Lee]{li2023textbooks}
Y.~Li, S.~Bubeck, R.~Eldan, A.~Del~Giorno, S.~Gunasekar, and Y.~T. Lee.
\newblock Textbooks are all you need ii: phi-1.5 technical report.
\newblock \emph{arXiv preprint arXiv:2309.05463}, 2023.

\bibitem[Littlestone(1988)]{littlestone1988learning}
N.~Littlestone.
\newblock Learning quickly when irrelevant attributes abound: A new linear-threshold algorithm.
\newblock \emph{Machine learning}, 2:\penalty0 285--318, 1988.

\bibitem[Malach(2023)]{malach2023auto}
E.~Malach.
\newblock Auto-regressive next-token predictors are universal learners.
\newblock \emph{arXiv preprint arXiv:2309.06979}, pages 01--15, 2023.

\bibitem[Merrill and Sabharwal(2023)]{merrill2023expresssive}
W.~Merrill and A.~Sabharwal.
\newblock The expresssive power of transformers with chain of thought.
\newblock \emph{arXiv preprint arXiv:2310.07923}, 2023.

\bibitem[Merrill et~al.(2022)Merrill, Sabharwal, and Smith]{merrill2022saturated}
W.~Merrill, A.~Sabharwal, and N.~A. Smith.
\newblock Saturated transformers are constant-depth threshold circuits.
\newblock \emph{Transactions of the Association for Computational Linguistics}, 10:\penalty0 843--856, 2022.

\bibitem[Naor and Reingold(1997)]{naor1997number}
M.~Naor and O.~Reingold.
\newblock Number-theoretic constructions of efficient pseudo-random functions.
\newblock In \emph{Proceedings 38th Annual Symposium on Foundations of Computer Science}, pages 458--458. IEEE Computer Society, 1997.

\bibitem[Natarajan(1989)]{natarajan1989learning}
B.~K. Natarajan.
\newblock On learning sets and functions.
\newblock \emph{Machine Learning}, 4:\penalty0 67--97, 1989.

\bibitem[Nye et~al.(2021)Nye, Andreassen, Gur-Ari, Michalewski, Austin, Bieber, Dohan, Lewkowycz, Bosma, Luan, et~al.]{nye2021show}
M.~Nye, A.~J. Andreassen, G.~Gur-Ari, H.~Michalewski, J.~Austin, D.~Bieber, D.~Dohan, A.~Lewkowycz, M.~Bosma, D.~Luan, et~al.
\newblock Show your work: Scratchpads for intermediate computation with language models.
\newblock \emph{arXiv preprint arXiv:2112.00114}, 2021.

\bibitem[OpenAI(2023)]{openai2023gpt4}
OpenAI.
\newblock Gpt-4 technical report.
\newblock \emph{arXiv preprint arXiv:2303.08774}, 2023.

\bibitem[Ott et~al.(2023)Ott, Hebenstreit, Liévin, Hother, Moradi, Mayrhauser, Praas, Winther, and Samwald]{https://doi.org/10.48550/arxiv.2301.11596}
S.~Ott, K.~Hebenstreit, V.~Liévin, C.~E. Hother, M.~Moradi, M.~Mayrhauser, R.~Praas, O.~Winther, and M.~Samwald.
\newblock Thoughtsource: A central hub for large language model reasoning data, 2023.
\newblock URL \url{https://arxiv.org/abs/2301.11596}.

\bibitem[Ouyang et~al.(2022)Ouyang, Wu, Jiang, Almeida, Wainwright, Mishkin, Zhang, Agarwal, Slama, Ray, Schulman, Hilton, Kelton, Miller, Simens, Askell, Welinder, Christiano, Leike, and Lowe]{ouyang2022training}
L.~Ouyang, J.~Wu, X.~Jiang, D.~Almeida, C.~Wainwright, P.~Mishkin, C.~Zhang, S.~Agarwal, K.~Slama, A.~Ray, J.~Schulman, J.~Hilton, F.~Kelton, L.~Miller, M.~Simens, A.~Askell, P.~Welinder, P.~Christiano, J.~Leike, and R.~Lowe.
\newblock Training language models to follow instructions with human feedback.
\newblock arXiv preprint arXiv:2203.02155, 2022.
\newblock Introduces RLHF (InstructGPT).

\bibitem[P{\'e}rez et~al.(2021)P{\'e}rez, Barcel{\'o}, and Marinkovic]{perez2021attention}
J.~P{\'e}rez, P.~Barcel{\'o}, and J.~Marinkovic.
\newblock Attention is turing-complete.
\newblock \emph{Journal of Machine Learning Research}, 22\penalty0 (75):\penalty0 1--35, 2021.

\bibitem[Pollard(1989)]{pollard1989asymptotics}
D.~Pollard.
\newblock Asymptotics via empirical processes.
\newblock \emph{Statistical science}, pages 341--354, 1989.

\bibitem[Pomerleau(1988)]{pomerleau1988alvinn}
D.~A. Pomerleau.
\newblock Alvinn: An autonomous land vehicle in a neural network.
\newblock \emph{Advances in neural information processing systems}, 1, 1988.

\bibitem[Rajaraman et~al.(2021)Rajaraman, Han, Yang, Liu, Jiao, and Ramchandran]{rajaraman2021value}
N.~Rajaraman, Y.~Han, L.~Yang, J.~Liu, J.~Jiao, and K.~Ramchandran.
\newblock On the value of interaction and function approximation in imitation learning.
\newblock \emph{Advances in Neural Information Processing Systems}, 34:\penalty0 1325--1336, 2021.

\bibitem[Rakhlin et~al.(2015)Rakhlin, Sridharan, and Tewari]{rakhlin2015sequential}
A.~Rakhlin, K.~Sridharan, and A.~Tewari.
\newblock Sequential complexities and uniform martingale laws of large numbers.
\newblock \emph{Probability theory and related fields}, 161:\penalty0 111--153, 2015.

\bibitem[Ross and Bagnell(2010)]{ross2010efficient}
S.~Ross and D.~Bagnell.
\newblock Efficient reductions for imitation learning.
\newblock In \emph{Proceedings of the thirteenth international conference on artificial intelligence and statistics}, pages 661--668. JMLR Workshop and Conference Proceedings, 2010.

\bibitem[Shalev-Shwartz and Ben-David(2014)]{shalev2014understanding}
S.~Shalev-Shwartz and S.~Ben-David.
\newblock \emph{Understanding machine learning: From theory to algorithms}.
\newblock Cambridge university press, 2014.

\bibitem[Sipser(1996)]{sipser1996introduction}
M.~Sipser.
\newblock Introduction to the theory of computation.
\newblock \emph{ACM Sigact News}, 27\penalty0 (1):\penalty0 27--29, 1996.

\bibitem[Strobl et~al.(2024)Strobl, Merrill, Weiss, Chiang, and Angluin]{strobl2024formal}
L.~Strobl, W.~Merrill, G.~Weiss, D.~Chiang, and D.~Angluin.
\newblock What formal languages can transformers express? a survey.
\newblock \emph{Transactions of the Association for Computational Linguistics}, 12:\penalty0 543--561, 2024.

\bibitem[Touvron et~al.(2023)Touvron, Martin, Stone, et~al.]{meta2023llama2}
H.~Touvron, L.~Martin, K.~Stone, et~al.
\newblock Llama 2: Open foundation and fine-tuned chat models, 2023.
\newblock \url{https://ai.meta.com/llama/}.

\bibitem[Vaswani(2017)]{vaswani2017attention}
A.~Vaswani.
\newblock Attention is all you need.
\newblock \emph{Advances in Neural Information Processing Systems}, 2017.

\bibitem[Wei et~al.(2022)Wei, Chen, and Ma]{wei2022statistically}
C.~Wei, Y.~Chen, and T.~Ma.
\newblock Statistically meaningful approximation: a case study on approximating turing machines with transformers.
\newblock \emph{Advances in Neural Information Processing Systems}, 35:\penalty0 12071--12083, 2022.

\bibitem[Zelikman et~al.(2022)Zelikman, Wu, Mu, and Goodman]{zelikman2022star}
E.~Zelikman, Y.~Wu, J.~Mu, and N.~Goodman.
\newblock Star: Bootstrapping reasoning with reasoning.
\newblock \emph{Advances in Neural Information Processing Systems}, 35:\penalty0 15476--15488, 2022.

\end{thebibliography}
\newpage
\appendix
\section*{Appendix Table of Contents}
\startcontents[sections]
\printcontents[sections]{l}{1}{\setcounter{tocdepth}{2}}

\section{Technical Preliminary}\label{app:preliminary}

\paragraph{Convention for Runtime of Learning.}  Instead of considering a fixed base class $\cF$, we must consider a family $\cF_d$ parametrized by (one or more) size parameters $d$.  We will say that $\ete{\cF_d}{T}$ is $\sete$ or $\sCoT$ learnable in $\runtime(n,d,T,\eps,\delta)$ using some learning algorithm\footnote{Formally, to allow uniform algorithms, we can think of $A$ being implicitly passed $T$ and $d$.} $A$, if over the domain $\cX=\Sigma^{\leq n}$ (i.e.~restricted to prompts of length at most $n$), $A(S)$ runs in time at most $\runtime(n,T,d,\eps,\delta)$ almost surely. For an expression $\kappa(\psi_1,\dots,\psi_k)$ in scalar quantities $(\psi_1,\dots,\psi_k)\in \R^k$, we say that $\kappa$ is in $\poly(\psi_1,\dots,\psi_k)$ if $\kappa$ is uniformly bounded by a polynomial for all $(\psi_1,\dots,\psi_k)\in \R^k$, i.e. there exists a polynomial $p:\R^k \to \R$ such that for every $(\psi_1,\dots, \psi_k)\in \R^k$, we have $\kappa(\psi_1,\dots, \psi_k)\leq p(\psi_1,\dots, \psi_k)$.

We now provide some preliminary background: (i) the definitions of the complexity measures which we use throughout the paper, (ii) generalization bounds in terms of appropriate complexity measures.

\paragraph{Supervised Learning.} Consider the supervised learning setup under the 0-1 loss. Consider an input domain $\cX$, a finite label set $\cY$, and a hypothesis class $\cH \subseteq \cY^\cX$. For any $h: \cX  \to \cY$, define 
$$\cL_{\cD}(h)=\E_{(\bbx,\bby)\sim \cD } \left[ \ind\{h(\bbx) \neq \bby \} \right],$$
where $\cD$ is an unknown distribution over $\cX \times \cY$. We will say that $\cD$ is realizable by a hypothesis class $\cH\subseteq \cY^{\cX}$ if $\cD_{\bby \mid \bbx}=h_*(\bbx)$ for some $h_* \in \cH$. The distribution $\cD$ is unknown and our goal is to learn from samples $S=((\bbx_1,\bby_1), \dots, (\bbx_m,\bby_m)) \iid \cD^m$. In all our learnability results (in the positive direction) will be via analyzing the performance of Empirical Risk Minimization (ERM) rule; it takes $S \in (\cX \times \cY)^*$ as the input and outputs a predictor from $\cH$ that has the lowest 0-1 error on the training set $S$.
\begin{equation}\label{eq:ERM}
    \mathcal{L}_S(h)=\frac{1}{|S|} \sum_{(\bbx_i,\bby_i) \in S} \ind\{h(\bbx_i) \neq \bby_i\} \quad \text{ and } \quad \ERM_{\cH}(S)=\hat{h}=\arg \min_{h \in \cH} \, \,\mathcal{L}_S(h)\,. \tag{ERM}
\end{equation}
In the realizable case, this reduces to the following rule. \text{Given $S=((\bbx_1,\bby_1),\dots,(\bbx_m,\bby_m))$}
\begin{equation}\label{eq:general-consistent}
  \Cons_{\cH}(S):  \text{Return some } \hat{h} \in \cH \text{ such that } \hat{h}(\bbx_i)=\bby_i, \, \forall (\bbx_i, \bby_i) \in S\,.
\end{equation}

The sample complexity of the learning rule is characterized by some combinatorial dimensions, which we discuss now.  
\subsection{Growth Function, Complexity Measures}
We start by defining an important definition of the growth function.
\begin{definition}[Growth Function]\label{def:growth-function} For any $h:\cX \to \cY$, and $S=(\bbx_1,\dots, \bbx_m) \in \cX^m$, we define $h(S):=(h(\bbx_1),\dots, h(\bbx_m))$ and $\cH(S):=\{h(S): h \in \cH \}$. The growth function $\Gamma_{\cH}: \naturals_{+} \to \naturals_{+}$
$$ \Gamma_{\cH}(m)= \max_{(\bbx_1, \dots, \bbx_m) \in \cX^m} \left| \{(h(\bbx_1), \dots, h(\bbx_m)) : h \in \cH\} \right|= \max_{S \in \cX^m} |\cH(S)|.$$
We will also abuse the notation and may denote $\Gamma_{\cH}(S)=|\cH(S)|$ as the number of behaviors possible on $S$ using hypothesis class $\cH$.
\end{definition}
When $\cY=\{0,1\}$ is binary, we can define the VC dimension of the class, and its relationship with the growth function given by Sauer's lemma.
\begin{definition}[VC Dimension]\label{def:VCdim} When $\cY\in \{0,1\}$, for any $\cH \subseteq \{0,1\}^{\cX}$, we say that $\cH$ shatters a set of points $S \in \cX^m$ iff $\left| \cH(S)\right| = 2^{|S|}.$ The Vapnik-Chervonenkis dimension of the class, denoted by $\VCdim(\cH)$, is the largest integer $D \in \naturals_{+}$ such that $\Gamma_{\cH}(D)=2^D.$ If no such $D$ exists, then we say that $\VCdim(\cH)=\infty$\,.
\end{definition}
\begin{lemma}[Sauer's Lemma]\label{lem:sauer'slemma} For a hypothesis class $\cH \subseteq \{0,1\}^{\cX}$, for every $m \in \naturals_{+}$, we have $ \Gamma_{\cH}(m) \leq (em)^{\VCdim(\cH)}$. Additionally, for $m \geq \VCdim(\cH) \geq 1$:
$$ \Gamma_{\cH}(m) \leq \left( \frac{em}{\VCdim(\cH)
} \right)^{\VCdim(\cH)}.$$  
\end{lemma}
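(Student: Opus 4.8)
The statement is the classical Sauer--Shelah lemma, so the plan is to follow the standard proof. If $\VCdim(\cH)=\infty$ the bounds are vacuous, so assume $D:=\VCdim(\cH)<\infty$. Fix any tuple $S=(\bbx_1,\dots,\bbx_m)\in\cX^m$; identifying each $\{0,1\}$-vector $(h(\bbx_1),\dots,h(\bbx_m))$ with the subset of $[m]$ whose indicator it is, the set $\cH(S)$ becomes a set family $\cC\subseteq 2^{[m]}$, and any $A\subseteq[m]$ shattered by $\cC$ corresponds to a (necessarily repetition-free) subtuple of $S$ shattered by $\cH$, so $\VCdim(\cC)\le D$. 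It therefore suffices to prove the purely combinatorial claim: any $\cC\subseteq 2^{[m]}$ with $\VCdim(\cC)\le D$ satisfies $|\cC|\le\sum_{i=0}^{D}\binom{m}{i}=:\phi_D(m)$; taking the max over $S$ then gives $\Gamma_\cH(m)\le\phi_D(m)$.

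The heart of the argument is to establish this claim by induction on $m$ (allowing $D$ to vary), with the trivial base case $m=0$. For the inductive step, single out the coordinate $m$ and set $\cC':=\{\,c\setminus\{m\}:c\in\cC\,\}\subseteq 2^{[m-1]}$ and $\cC'':=\{\,c\subseteq[m-1]:c\in\cC\text{ and }c\cup\{m\}\in\cC\,\}$. The key counting observation is $|\cC|=|\cC'|+|\cC''|$: the fiber of the map $c\mapsto c\setminus\{m\}$ over $d\in\cC'$ has size $2$ precisely when $d\in\cC''$ and size $1$ otherwise. Clearly $\VCdim(\cC')\le D$, while $\VCdim(\cC'')\le D-1$ because if $\cC''$ shattered some $A\subseteq[m-1]$ with $|A|=D$, then for every $B\subseteq A$ both $B$ and $B\cup\{m\}$ would lie in $\cC$, so $\cC$ would shatter $A\cup\{m\}$, contradicting $\VCdim(\cC)\le D$. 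By induction $|\cC'|\le\phi_D(m-1)$ and $|\cC''|\le\phi_{D-1}(m-1)$, and Pascal's identity yields $\phi_D(m-1)+\phi_{D-1}(m-1)=\phi_D(m)$, closing the induction. (A equally clean alternative is the down-shifting proof: repeatedly apply coordinate shift operations, which preserve $|\cC|$ and do not increase $\VCdim$, until $\cC$ is downward closed, and then note that a downward closed family of VC dimension $\le D$ shatters each of its members and hence contains only sets of size $\le D$.)

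It then remains to translate $\phi_D(m)$ into the two stated forms. For $m\ge D\ge 1$, multiply and divide by $(D/m)^i$: $\phi_D(m)\le (m/D)^D\sum_{i=0}^{D}\binom{m}{i}(D/m)^i\le (m/D)^D(1+D/m)^m\le (m/D)^D e^D=(em/D)^D$, which is the second inequality. For the first inequality, when $m\ge D$ we have $(em/D)^D\le (em)^D$ since $D\ge 1$ (and the case $D=0$ is trivial), and when $m<D$ we simply bound $\Gamma_\cH(m)\le 2^m\le 2^D\le(em)^D$ using $em\ge e>2$.

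I do not expect a genuine obstacle; the only spot that demands care is the inductive step, namely checking the decomposition $|\cC|=|\cC'|+|\cC''|$ and the VC-dimension drop $\VCdim(\cC'')\le D-1$. Everything else is routine manipulation of binomial coefficients.
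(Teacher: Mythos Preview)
Your proof is correct; it is the standard inductive (Sauer--Shelah) argument together with the usual binomial-sum estimate. Note, however, that the paper does not prove this lemma: it is stated in the technical preliminaries as a classical result and invoked without proof, so there is no ``paper's own proof'' to compare against.
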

A classic generalization of the VC dimension to non-binary finite outputs is the Natarajan dimension. 
\begin{definition}[Natarajan Dimension]\label{def:natarajan-dimension} Consider any finite $\cY$. We say that a set $S \in \cX^m$ is shattered by $\cH$ if there exist two functions $h_0, h_1 :\cX \to \cY$ such that 
\begin{itemize}
    \item For every $\bbx \in S $, we have $h_0(\bbx) \neq h_1(\bbx)$
    \item For every $U \subseteq S$, there exists $h \in \cH$ such that 
    $$ \forall\, \bbx \in U,\, h(\bbx)=h_0(\bbx) \quad \text{ and }\quad \forall \, \bbx \in S \setminus U,\,  h(\bbx)=h_1(\bbx)\,.$$
\end{itemize}
The Natarajan dimension of $\cH$, denoted by $\Ndim(\cH)$, is the cardinality of the largest $S$ that is shattered by $\cH$. If there is no largest size, then $\Ndim(\cH)=\infty$.
\end{definition}
There is a classic generalization of the Sauer's lemma (\Cref{lem:sauer'slemma}) also for multiclass labels due to Natarajan \cite{natarajan1989learning}. Below is a variant \cite[Lemma 29.4]{shalev2014understanding}.
\begin{lemma}[Natarajan's Lemma]\label{lem:Natarajan's lemma} Recall \Cref{def:growth-function} of the growth function. For any $m \in \naturals_{+}$
$$\Gamma_{\cH}(m) \leq (|\cY|^2\cdot m)^{\Ndim(\cH)}\,.$$
\end{lemma}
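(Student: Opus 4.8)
The plan is to follow the classical route for generalizing Sauer's Lemma to finite label sets: pass to a finite domain, prove a combinatorial bound on $|\cH(S)|$ in terms of the number of Natarajan-shattered ``pair-configurations,'' and then count those configurations, exploiting the generous $|\cY|^2$ slack in the target bound. First I would fix $m$ and an arbitrary $S=(\bbx_1,\dots,\bbx_m)\in\cX^m$ and work with $\cG:=\cH(S)\subseteq\cY^m$, viewed as a class of functions $[m]\to\cY$; since restricting to $S$ cannot increase the Natarajan dimension, it suffices to prove $|\cG|\le(|\cY|^2 m)^{\Ndim(\cG)}$ for an arbitrary $\cG\subseteq\cY^{[m]}$ and then take the maximum over $S$.

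The heart of the argument is the inequality
\[
|\cG|\ \le\ \#\bigl\{(T,\sigma)\ :\ T\subseteq[m],\ \sigma=(\sigma_0,\sigma_1):T\to\cY\times\cY\ \text{with}\ \sigma_0(t)\neq\sigma_1(t)\ \forall t\in T,\ \text{and}\ \cG\ \text{N-shatters}\ (T,\sigma)\bigr\},
\]
where $\cG$ \emph{N-shatters} $(T,\sigma)$ means that for every $U\subseteq T$ there is $g\in\cG$ with $g|_U=\sigma_0|_U$ and $g|_{T\setminus U}=\sigma_1|_{T\setminus U}$; this is exactly the ``witnessing pair'' formulation of Natarajan shattering from \Cref{def:natarajan-dimension}, so the support $T$ of any N-shattered configuration satisfies $|T|\le\Ndim(\cG)$. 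I would prove the inequality by induction on $m$ (trivial for $m=0$). For the step, let $\cG':=\{g|_{[m-1]}:g\in\cG\}$ and, for $g'\in\cG'$, let $V_{g'}:=\{g(m):g\in\cG,\ g|_{[m-1]}=g'\}$, so that $|\cG|=|\cG'|+\sum_{g'\in\cG'}(|V_{g'}|-1)$. By induction $|\cG'|$ is at most the number of configurations N-shattered by $\cG'$, and each such configuration (automatically with $m\notin T$) is also N-shattered by $\cG$ because $\cG$ surjects onto $\cG'$. For the second term, use $|V_{g'}|-1\le\binom{|V_{g'}|}{2}=\sum_{\{a,b\}\subseteq\cY,\,a<b}\mathbf 1[\{a,b\}\subseteq V_{g'}]$ to get $\sum_{g'}(|V_{g'}|-1)\le\sum_{a<b}|\cG_{\{a,b\}}|$, where $\cG_{\{a,b\}}:=\{g'\in\cG':\{a,b\}\subseteq V_{g'}\}\subseteq\cY^{[m-1]}$; apply induction to each $\cG_{\{a,b\}}$, and note that any configuration $(T,\sigma)$ N-shattered by $\cG_{\{a,b\}}$ lifts to $(T\cup\{m\},\sigma\cup\{(m,(a,b))\})$ N-shattered by $\cG$, since $a,b\in V_{g'}$ supplies the two witnesses differing only on coordinate $m$. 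These lifted configurations all contain $m$, are injective in $(T,\sigma)$ for a fixed pair $\{a,b\}$, and are distinct across different unordered pairs; hence the second term is bounded by the number of configurations N-shattered by $\cG$ with $m\in T$, and adding the two bounds closes the induction.

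It then remains to count: a configuration with $|T|=i\le d:=\Ndim(\cG)$ is determined by the increasing list $t_1<\dots<t_i$ in $[m]$ together with a choice, for each $t_j$, of one of the $|\cY|(|\cY|-1)$ ordered distinct label-pairs. Encoding it as the length-$d$ word $\bigl((t_1,\sigma(t_1)),\dots,(t_i,\sigma(t_i)),\star,\dots,\star\bigr)$ over the alphabet $\bigl([m]\times P\bigr)\cup\{\star\}$, with $|P|=|\cY|(|\cY|-1)$, gives an injection, so the number of configurations is at most $\bigl(m\,|\cY|(|\cY|-1)+1\bigr)^{d}\le(|\cY|^2 m)^{d}$, using $m\,|\cY|(|\cY|-1)+1\le m\,|\cY|^2$ for $m\ge1,\ |\cY|\ge1$ (and when $d=0$ the combinatorial inequality already forces $|\cG|\le1$, matching the bound). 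Combining with the displayed inequality and maximizing over $S$ yields $\Gamma_\cH(m)\le(|\cY|^2 m)^{\Ndim(\cH)}$.

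I expect the main obstacle to be the bookkeeping in the inductive step: verifying that the lifted configurations produced by the various pair-restricted sub-classes $\cG_{\{a,b\}}$ are genuinely distinct from one another and from the projection-term configurations, so that no double-counting occurs. This is precisely where the multiclass argument is more delicate than the binary Sauer--Shelah proof, in which only a single ``doubled'' sub-class appears.
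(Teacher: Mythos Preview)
Your proof is correct and follows the standard Haussler--Long style argument for the multiclass Sauer inequality. The paper does not actually prove this lemma; it is stated as a known result and attributed to \cite{natarajan1989learning} with a pointer to \cite[Lemma 29.4]{shalev2014understanding}, so there is nothing to compare against beyond noting that your argument is precisely the textbook route the paper is citing.
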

We will bound the VC dimension (or Natarajan dimension) of the end-to-end class in terms of the Littlestone dimension (or sequential fat shattering dimension) of the base class to achieve an improved dependence in terms of $T$ (\Cref{thm:ete-ldim}). We define these measures now.
\begin{definition}[$\cX$-labeled Tree]
A $\cX$-labeled tree $\bx$ of depth $d$ is a rooted complete binary tree with nodes labeled by elements of $\cX$. The tree $\bx$ is identified by the sequence $(\bx_1, \ldots, \bx_d)$ of labeling functions $\bx_i: \{0, 1\}^{i-1} \to \cX$ that provide the labels for each node. Here, $\bx_1 \in \cX$ is the label for the root of the tree, and $\bx_i$ for $i > 1$ is the label of the node obtained by following the path of length $i-1$ from the root, with $1$ indicating `right' and $0$ indicating `left'. 
\end{definition}
A path of length $d$ is denoted by the sequence $\epsilon = (\epsilon_1, \ldots, \epsilon_d) \in \{0,1\}^d$. For brevity, we write $\bx_t(\epsilon)$, but it is understood that $\bx_t$ only depends on the prefix $(\epsilon_1, \ldots, \epsilon_{t-1})$ of $\epsilon$.

\begin{definition}[Littlestone Dimension]
A $\cX$-labeled tree $\bx$ of depth $d$ is said to be shattered by a binary function class $\cH \subseteq \{0,1\}^\cX$ if for all $\epsilon \in \{0,1\}^d$, there exists $h \in \cH$ such that for all $i \in [d]$, $h(\bx_i(\epsilon)) = \epsilon_i$. The Littlestone dimension $\ldim(\cH)$ is the largest $d$ such that $\cH$ shatters an $\cX$-labeled tree of depth $d$.
\end{definition}

\begin{definition}[Sequential Fat-Shattering Dimension]\label{def:sfat}
    Given a function class $\cF\subseteq \R^{\cX}$, we say that $\cF$ sequentially shatters at scale $\alpha$ a binary tree $\bx$ of depth $m$ if there exists a real-valued complete binary tree $\bs$ of depth $m$ such that for all $\epsilon \in \left\{ \pm 1 \right\}^m$, there is some $f_\epsilon \in \cF$ such that $\epsilon_i (f_\epsilon(\bx_i(\epsilon)) - \bs_i(\epsilon)) \geq \alpha / 2$.  We let $\sfat_\alpha(\cF)$ be the maximal $m$ such that $\cF$ sequentially shatters at scale $\alpha$ a binary tree of depth $m$.
\end{definition}

\begin{lemma}[Theorem 7 from \cite{rakhlin2015sequential}]\label{thm:seq}
    Consider a binary-valued function class $\cH \subseteq \left\{ 0, 1 \right\}^\cX$ with $\ldim(\cH) < \infty$. Then for all $\cX$-labeled trees $\bx$ of depth $d \geq \ldim(\cH) \geq 1$, there exists a set of trees $V(\bx)$ of size
    \begin{align}
    |V(\bx)| \le \left( \frac{2 e d }{\ldim(\cH)} \right)^{\ldim(\cH)},
    \end{align}
    such that for all $\epsilon \in \left\{ 0,1 \right\}^d, h \in \cH$, there exists $v \in V$ such that $v(\epsilon) = h(\bx(\epsilon))$.
    
    More generally, for $\cH \subseteq \{0,\dots,K\}^{\cX}$ with finite $\sfat_1(\cH)$, it holds for $d \geq \sfat_1(\cH)$ that 
    \begin{align}
        \left|V(\bx)\right| \leq  \left( \frac{e K d }{\sfat_1(\cH)} \right)^{\sfat_1(\cH)}.
    \end{align}
\end{lemma}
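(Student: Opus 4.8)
The plan is to reduce the lemma to the combinatorial \emph{sequential Sauer--Shelah inequality}: for every $\cX$-labeled tree $\bx$ of depth $d$ and every $\cH\subseteq\{0,1\}^\cX$ with $\ell:=\ldim(\cH)<\infty$ one has $|\cH(\bx)|\le\sum_{i=0}^{\ell}\binom{d}{i}$, and analogously, for $\cH\subseteq\{0,\dots,K\}^\cX$ with $\ell:=\sfat_1(\cH)$, one has $|\cH(\bx)|\le\sum_{i=0}^{\ell}\binom{d}{i}K^i$. Granting this, the two displays in the statement follow from the elementary estimate $\sum_{i=0}^{\ell}\binom{d}{i}\le(ed/\ell)^{\ell}$, valid for $d\ge\ell\ge1$ (and its weighted version $\sum_{i=0}^{\ell}\binom{d}{i}K^i\le(eKd/\ell)^{\ell}$), the stray factor $2$ in the binary display being harmless slack. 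So essentially all the work is in the combinatorial inequality, and the plan is to reprove it following \citet{rakhlin2015sequential}.

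For the binary case I would argue through Littlestone's Standard Optimal Algorithm (SOA) for $\cH$, which on any realizable input sequence makes at most $\ell=\ldim(\cH)$ mistakes \citep{littlestone1988learning}. Fix a behavior $(h(\bx_1(\epsilon)),\dots,h(\bx_d(\epsilon)))$ realized by $h\in\cH$ along a path $\epsilon$, and feed SOA the nodes $\bx_1(\epsilon),\dots,\bx_d(\epsilon)$ in order with the labels $h(\bx_i(\epsilon))$; let $M\subseteq[d]$, $|M|\le\ell$, be the set of positions on which SOA errs. Because SOA is deterministic, once the traversed path and the set $M$ are fixed the whole label sequence is reconstructed step by step (copy SOA's prediction off $M$, flip it on $M$), so a behavior is determined by the pair (path, $M$); the tree structure is what collapses the a priori $2^d$ choices of path, leaving the count $\sum_{i\le\ell}\binom{d}{i}$. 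An alternative is a direct induction on the depth: split $\bx$ at its root $x_\circ$ into two depth-$(d-1)$ subtrees, consider the subclasses $\cH^v=\{h\in\cH:h(x_\circ)=v\}$, and use the dichotomy that $\ldim(\cH^0)$ and $\ldim(\cH^1)$ cannot both equal $\ell$ (else one could prepend $x_\circ$ to deep shattered subtrees and shatter a depth-$(\ell+1)$ tree), which yields the Pascal-type recursion $g(d,\ell)\le g(d-1,\ell)+g(d-1,\ell-1)$ familiar from the compression proof of the classical lemma. For the $\{0,\dots,K\}$-valued version I would either rerun the SOA argument with the scale-$1$ online learner whose mistake bound is $\sfat_1(\cH)$, or reduce to the binary case using the $K$ threshold classes $\cH_{\ge r}=\{x\mapsto\ind[h(x)\ge r]\}_{r=1}^{K}$, each of Littlestone dimension at most $\sfat_1(\cH)$: a behavior of $h$ on $\bx$ is recovered from its $K$ binary threshold behaviors, which accounts for the extra $K^i$ weight.

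The main obstacle is exactly the tree bookkeeping in the combinatorial inequality. In the i.i.d.\ Sauer--Shelah lemma, deleting one coordinate cleanly splits the class into a ``restriction'' and a ``doubly covered'' part; on a tree, ``deleting the root'' leaves \emph{two} subtrees and a behavior can descend into either, so one must be careful to avoid exponential-in-$d$ over-counting (equivalently, in the SOA route, to eliminate the $2^d$ factor coming from the free choice of path $\epsilon$ in the definition of $\cH(\bx)$). Making this step clean while keeping the constants as advertised---and checking that the scale-$1$/margin bookkeeping in the fat-shattering extension loses only the stated factor $K$ per level---is the delicate part, and is the technical content of \citet[Theorem 7]{rakhlin2015sequential}.
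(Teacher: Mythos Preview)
The paper does not prove this lemma; it is quoted as Theorem~7 of \citet{rakhlin2015sequential}. More to the point, with $\cH(\bx)$ defined as the set of tuples over \emph{all} paths $\epsilon$, the stated bound is false, so your plan cannot succeed as written. Take $\cH=\{h_0,h_1\}$ (so $\ldim(\cH)=1$), let $\cX$ be the vertex set of a depth-$d$ complete binary tree with $\bx$ the identity labeling, set $h_0\equiv0$, and let $h_1(v)=1$ iff $v$ is a right child. Then $h_1(\bx_i(\epsilon))=\epsilon_{i-1}$ for $i\ge2$, so the tuples coming from $h_1$ alone are all of $\{0\}\times\{0,1\}^{d-1}$, giving $|\cH(\bx)|\ge 2^{d-1}$, which exceeds $(2ed)^1$ already for $d\ge10$. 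Your sharper intermediate claim $|\cH(\bx)|\le\sum_{i\le\ell}\binom{d}{i}$ is thus also false; the factor $2$ is not ``harmless slack''---no constant repairs the bound when $\epsilon$ ranges freely.

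The failure is exactly where you flagged the obstacle. In your SOA sketch the behavior is determined by the pair $(\text{path},M)$; the step ``the tree structure is what collapses the $2^d$ choices of path'' is simply not true in general. It \emph{is} true when the path is determined by the labels, i.e.\ on the consistent path $\epsilon_i=h(\bx_i(\epsilon))$: then one reconstructs both path and labels from $M$ by running SOA down the tree, flipping on $M$, and letting each revealed label pick the next child, which yields the Sauer count $\sum_{i\le\ell}\binom{d}{i}$. This consistent-path/zero-cover statement is what Rakhlin--Sridharan--Tewari actually prove, and it is also all the paper needs: in the only application (the proof of Theorem~\ref{thm:VCete-Ldim(Base)}), each $f$ is sent down what amounts to its consistent path $\epsilon^f$, so the quantity being bounded is $|\{\epsilon^f:f\in\cF\}|$, not the all-paths $|\cF(\bx)|$. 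The downstream argument therefore survives the misstatement of the lemma, but your proposed proof of the lemma as written cannot.
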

Finally, we end by noting an algebraic inequality that we use to get a bound on our desired dimension from a bound on the growth function. For any $a, b>0$, we have 
\begin{equation}\label{eq:algebraic-identity}
    \ln a \leq ab - \ln b -1\,,
\end{equation}
with equality only if $ab=1$\,. See \cite[Inequality (1.2), Appendix 1]{anthony2009neural}. We have the following corollary, which we will use throughout.
\begin{lemma}\label{lem:technical-corollary}
For any $N \in \naturals_{+},M \in \R_{+}$ with $NM \geq 1$, there exists $m \in \naturals_{+}$ such that
$$N \leq m \leq 3 N \log_2 \left( \frac{2NM}{\ln 2} \right) \text{ and } m > N \log_2 (emM) \,.$$
\end{lemma}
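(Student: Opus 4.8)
The plan is to exhibit an explicit candidate, namely $m := \lfloor 3NL \rfloor$ where $L := \log_2\!\left(\tfrac{2NM}{\ln 2}\right)$, and then verify the three required inequalities $N \le m$, $m \le 3NL$, and $m > N\log_2(emM)$ one at a time. The only auxiliary input I expect to need is the given algebraic inequality \eqref{eq:algebraic-identity}, $\ln a \le ab - \ln b - 1$, together with the hypothesis $NM \ge 1$.

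First I would record the one structural fact about $L$ that everything hinges on: since $NM \ge 1$ and $\ln 2 < 1$, we have $\tfrac{2NM}{\ln 2} > 2$, hence $L > 1$, and therefore $NL > 1$ because $N \ge 1$. With this in hand, the size bounds on $m$ are immediate: by definition $m = \lfloor 3NL\rfloor \le 3NL = 3N\log_2(2NM/\ln 2)$, giving the stated upper bound; and $m > 3NL - 1 \ge 2NL$ (using $NL \ge 1$), while $2NL > 2N \ge N$ (using $L > 1$), so $m \ge N$. In passing this also establishes $m > 2NL$, which is the precise form of the lower size bound I will actually invoke in the final step. Note this step is exactly where the slack $L > 1$ is used: the interval $(2NL,\,3NL]$ has length $NL > 1$, so it must contain an integer.

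The substantive step is $m > N\log_2(emM)$, and the one genuine idea here is the choice of the free parameter $b$ in \eqref{eq:algebraic-identity}. I would apply it with $a = mM > 0$ and $b = \tfrac{\ln 2}{2NM} > 0$, tuned so that $ab = \tfrac{m \ln 2}{2N}$ and $-\ln b = \ln\!\left(\tfrac{2NM}{\ln 2}\right)$. This yields $\ln(mM) \le \tfrac{m\ln 2}{2N} + \ln\!\left(\tfrac{2NM}{\ln 2}\right) - 1$. Multiplying through by $N/\ln 2$ and folding in the extra factor $e$ (i.e. adding $N$ to the left and absorbing the $-1$), one obtains $N\log_2(emM) = \tfrac{N}{\ln 2}\big(\ln(mM)+1\big) \le \tfrac{m}{2} + NL$. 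Since $m > 2NL$ from the previous paragraph, the right-hand side is strictly less than $\tfrac{m}{2} + \tfrac{m}{2} = m$, which is the desired inequality.

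I do not anticipate a real obstacle; the only things to be careful about are bookkeeping and applicability. I would double-check that $a = mM$ and $b = \tfrac{\ln 2}{2NM}$ are both strictly positive so that \eqref{eq:algebraic-identity} legitimately applies (this holds since $N, M, m > 0$), and that the three constraints on $m$ are mutually consistent — which, as noted, is precisely what the cushion $L > 1$ guarantees. If one wanted to be maximally explicit one could instead reason that any integer in the nonempty interval $(2NL, 3NL]$ works, but taking $m = \lfloor 3NL \rfloor$ makes the verification fully concrete.
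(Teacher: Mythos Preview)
Your proposal is correct and follows essentially the same route as the paper: both apply the algebraic inequality \eqref{eq:algebraic-identity} with $b$ proportional to $\tfrac{\ln 2}{NM}$ to obtain $N\log_2(emM) \le \tfrac{m}{2} + N\log_2\!\big(\tfrac{2NM}{\ln 2}\big)$, and then pick an integer $m$ in the interval $(2NL,\,3NL]$, which is nonempty since $NL>1$. The only cosmetic differences are that the paper takes $a=emM$ while you take $a=mM$ and absorb the extra $+1$ by hand, and that you name the explicit witness $m=\lfloor 3NL\rfloor$ whereas the paper just asserts existence.
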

\begin{proof} 
Using Eq.~\eqref{eq:algebraic-identity}, with $a=emM$ and $b=\frac{\ln 2}{2eNM}$, we have for every $m\in \naturals_{+}$,
\begin{align*}
     \ln \left( emM\right) &\leq \left(emM\right) \left( \frac{\ln 2}{2eNM}\right)- \ln \left( \frac{\ln 2}{2eNM}\right) -1\,.\\
    A \log_2 \left( emM\right) &\leq  \frac{m}{2} + N \log_2 \left( \frac{2MN}{\ln 2}\right)\,.
\end{align*}
Therefore, in order to ensure $ m > N \log_2 (emM)$, it suffices to have $ \frac{m}{2} > N \log_2 \left( \frac{2NM}{\ln 2}\right) $, which is equivalent to $m>2 A \log_2 \left( \frac{2NM}{\ln 2}\right)$. Finally, noting that $N \log_2 \left( \frac{2NM}{\ln 2}\right) >1$, so there always exists $m \in \naturals_{+}$ such that $N < 2 N \log_2 \left( \frac{2NM}{\ln 2}\right) < m \leq 3 N \log_2 \left( \frac{2NM}{\ln 2}\right),$ concludes the proof.
\end{proof}

\subsection{Generalization Bounds}\label{sec:generalization-bound}
First of all, for finite hypothesis classes, we have the following classical guarantee.
\begin{proposition}[Corollary 2.3 from \citep{shalev2014understanding}]\label{prop:finite-classes-sample-complexity} Consider any domain $\cX$, a label space $\cY$, and a finite hypothesis class $\cH \subseteq \cY^{\cX}$. For any realizable distribution $\cD$ over $\cX \times \cY$ (i.e. $\cD_{\bby \mid \bbx}=h_*(\bbx)$ for some $h_* \in \cH$) over the draw of $S\sim \cD^m$ with $m=m(\eps,\delta)$, we have that with probability at least $1-\delta$, 
$$\cL_{\cD}(\Cons_{\cH}(S)) \leq \eps\,\quad \text{ where } \quad  m(\eps,\delta)\leq 2\left( \frac{\log|\cH|+\log (1/\delta)}{\eps}\right)\,. $$
\end{proposition}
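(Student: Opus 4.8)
The plan is to use the classical union-bound argument over ``bad'' hypotheses. Define the set of $\eps$-bad hypotheses $\cH_\eps := \{ h \in \cH : \cL_\cD(h) > \eps \}$. The key observation is that the rule $\Cons_\cH(S)$ is well-defined under realizability (the set of hypotheses consistent with $S$ is nonempty, as it contains $h_*$), and it returns \emph{some} hypothesis consistent with the whole sample $S$. Hence $\Cons_\cH(S)$ can fail to have population error $\le \eps$ only if some $h \in \cH_\eps$ is simultaneously consistent with $S$, i.e.\ $h(\bbx_i) = \bby_i$ for all $i \in [m]$. It therefore suffices to bound the probability of the event ``some $\eps$-bad hypothesis is consistent with $S$''.

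First I would fix a single $h \in \cH_\eps$ and bound the probability it is consistent with $S$. Since the samples are i.i.d.\ and $\P_{(\bbx,\bby)\sim\cD}\big(h(\bbx)=\bby\big) = 1 - \cL_\cD(h) < 1 - \eps$, independence gives
\[
\P\big(h(\bbx_i)=\bby_i \text{ for all } i\in[m]\big) < (1-\eps)^m \le e^{-\eps m},
\]
using the elementary inequality $1 - t \le e^{-t}$.

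Next, a union bound over all hypotheses in $\cH_\eps$, of which there are at most $\abs{\cH}$, yields
\[
\P\big(\exists\, h \in \cH_\eps \text{ consistent with } S\big) \;\le\; \abs{\cH_\eps}\, e^{-\eps m} \;\le\; \abs{\cH}\, e^{-\eps m}.
\]
Finally I would choose $m$ large enough that the right-hand side is at most $\delta$: solving $\abs{\cH} e^{-\eps m} \le \delta$ gives $m \ge \eps^{-1}\big(\log\abs{\cH} + \log(1/\delta)\big)$, and the stated bound $m(\eps,\delta) \le 2\eps^{-1}\big(\log\abs{\cH} + \log(1/\delta)\big)$ comfortably suffices --- the factor $2$ also absorbs any discrepancy between natural and base-$2$ logarithms, since $1/\ln 2 < 2$. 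On the complementary event, which has probability at least $1-\delta$, no $\eps$-bad hypothesis is consistent with $S$, so in particular $\Cons_\cH(S)$, being consistent with $S$, lies outside $\cH_\eps$ and thus satisfies $\cL_\cD(\Cons_\cH(S)) \le \eps$.

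I do not anticipate a genuine obstacle here; this is a textbook argument. The only point that warrants explicit mention is the reduction in the first paragraph --- that realizability makes $\Cons_\cH(S)$ well-defined and that its failure event is exactly captured by ``some bad hypothesis survives consistency'' --- after which everything is a one-line Chernoff-type bound plus a union bound.
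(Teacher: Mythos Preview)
Your proposal is correct and is exactly the standard union-bound argument underlying this result. The paper itself does not prove this proposition; it simply cites it as Corollary~2.3 of \citet{shalev2014understanding}, so your argument matches the textbook proof that the citation points to.
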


To go beyond cardinality based bounds, we need to consider dimension based guarantees. When $\cY=\{0,1\}$, $\VCdim$ completely characterizes the learnability via the fundamental theorem of statistical learning. For example, see \cite[Theorems 6.7, 6.8]{shalev2014understanding}.
\begin{proposition}[The Fundamental Theorem of Statistical Learning]\label{prop:fundamental-VC} There is a universal constant $c>0$ such that for any domain $\cX$, a hypothesis class $\cH \subseteq \cY^{\cX}$ with $\cY=\{0,1\}$ and $\VCdim(\cH)<\infty$, and any distribution $\cD$ over $\cX \times \cY$, the following holds. With probability at least $1-\delta$ over $S\sim \cD^m$ with $m=m(\eps,\delta)$,
$$ \cL_\cD(\ERM_{\cH}(S)) \leq \inf_{h \in \cH} \cL_{\cD}(h) + \eps\, \quad \text{ and } \quad m(\eps,\delta) \leq c \left( \frac{\VCdim(\cH)+\log(1/\delta)}{\eps^2} \right)\,.$$
Moreover, if $\cD$ is realizable by $\cH$, we have 
$$ \cL_\cD(\Cons_{\cH}(S)) \leq  \eps\, \quad \text{ with } \quad m(\eps,\delta) \leq c \left( \frac{\VCdim(\cH) \log(1/\eps)+\log(1/\delta)}{\eps} \right)\,.$$
Moreover, for any learning rule $A:(\cX \times \cY)^* \to \cY^{\cX}$, if $m < \frac{\VCdim(\cH)}{2}$, then there exists a realizable distribution $\cD$ such that over the draw of $S\sim \cD^m$, we have 
$$\P(\cL_{\cD}(A(S)) \geq\, 1/4 \,) \geq \, 0.8\,.$$
\end{proposition}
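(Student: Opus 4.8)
The plan is to prove the three assertions of the theorem separately, since each is classical and uses a different technique: (i) the agnostic generalization bound for $\ERM_\cH$, (ii) the sharper realizable bound for $\Cons_\cH$, and (iii) the information-theoretic lower bound showing $m < \VCdim(\cH)/2$ samples do not suffice.

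For (i), the agnostic upper bound, I would first prove the uniform-convergence estimate $\E_S\,\sup_{h\in\cH}\bigl|\cL_\cD(h)-\cL_S(h)\bigr| \le C\sqrt{\VCdim(\cH)/m}$. The route is standard: symmetrization against a ghost sample $S'$ and Rademacher signs, then bounding the conditional Rademacher average of a binary class on $m$ points by its growth function $\Gamma_\cH(m)$ via Massart's finite-class lemma, then Sauer's Lemma $\Gamma_\cH(m)\le(em/\VCdim(\cH))^{\VCdim(\cH)}$; the residual $\log m$ factor that a naive union bound would leave is removed by the chaining/Dudley argument (equivalently Haussler's packing bound), which is what yields the clean $\VCdim(\cH)/\eps^2$ rate rather than $\VCdim(\cH)\log(1/\eps)/\eps^2$. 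A bounded-differences (McDiarmid) inequality then upgrades this to a $1-\delta$ high-probability bound, supplying the $\log(1/\delta)$ term, and since $\cL_\cD(\ERM_\cH(S)) \le \inf_{h}\cL_\cD(h) + 2\sup_h|\cL_\cD(h)-\cL_S(h)|$, solving for $m$ gives $m(\eps,\delta)=O\bigl((\VCdim(\cH)+\log(1/\delta))/\eps^2\bigr)$.

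For (ii), the realizable rate $O\bigl((\VCdim(\cH)\log(1/\eps)+\log(1/\delta))/\eps\bigr)$, I would use the one-sided double-sampling argument, which avoids the $1/\eps^2$ dependence. Fix $\eps$ and let the bad event be that some $h\in\cH$ consistent with $S$ has $\cL_\cD(h)>\eps$; draw an independent ghost sample $S'$ of size $m$; for $m\gtrsim 1/\eps$, on the bad event such an $h$ makes at least $\eps m/2$ mistakes on $S'$ with probability $\ge 1/2$; a permutation-symmetrization over the $2m$ points then bounds the probability of this by $O\bigl(\Gamma_\cH(2m)\,2^{-\eps m/2}\bigr)\le O\bigl((2em/\VCdim(\cH))^{\VCdim(\cH)}2^{-\eps m/2}\bigr)$. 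Forcing this below $\delta$ and inverting, using Sauer's Lemma and the elementary inequality \eqref{eq:algebraic-identity} exactly as done elsewhere in the paper, gives the claimed sample complexity (and via \Cref{prop:finite-classes-sample-complexity}-style reasoning one sees the $\Cons$ rule is what is being analyzed).

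For (iii), let $d=\VCdim(\cH)$ and let $C=\{x_1,\dots,x_d\}$ be a set shattered by $\cH$, so every $y\in\{0,1\}^d$ is realized by some $h_y\in\cH$; let $\cD_y$ have uniform marginal on $C$ with deterministic labels $h_y$, which is realizable. I would apply the probabilistic method: draw $y$ uniformly from $\{0,1\}^d$; given a sample $S$ of size $m<d/2$, at least $d-m>d/2$ points of $C$ never appear in $S$, and on each unseen $x$ the bit $y_x$ is uniform and independent of $(S,A(S))$, so $A(S)$ errs on $x$ with probability exactly $1/2$; taking expectations, $\E_{y,S}\bigl[\cL_{\cD_y}(A(S))\bigr]\ge \tfrac12\cdot\tfrac12=\tfrac14$, hence some fixed $y^\star$ has $\E_S\bigl[\cL_{\cD_{y^\star}}(A(S))\bigr]\ge 1/4$. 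To reach the precise form $\P_S\bigl[\cL_{\cD_{y^\star}}(A(S))\ge 1/4\bigr]\ge 0.8$, I would combine a reverse-Markov inequality on $1-\cL_{\cD_{y^\star}}(A(S))\in[0,1]$ with a Binomial concentration bound for the fraction of misclassified unseen coordinates, choosing the hard instance so this fraction concentrates at or above $1/4$. The hard part will be exactly this last step: the structural no-free-lunch argument is immediate, but pinning down the explicit constants $1/4$ and $0.8$ (rather than the weaker textbook constants) requires a careful choice of the hard distribution together with the concentration estimate; on the upper-bound side the only genuinely non-elementary ingredient is the chaining step used to strip the $\log m$ factor off the agnostic rate.
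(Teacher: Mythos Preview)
The paper does not supply its own proof of this proposition: it is stated in the technical-preliminary appendix as a known result and simply cited, with the remark ``For example, see \cite[Theorems 6.7, 6.8]{shalev2014understanding}.'' So there is nothing to compare your proposal against in the paper itself; the authors treat the Fundamental Theorem as background and invoke it as a black box (e.g., in the proofs of \Cref{thm:ete-VC} and \Cref{thm:lb-Omega(T VC)}).

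That said, your outline is the standard textbook route and matches what one finds in the cited reference: symmetrization plus Sauer plus chaining for the agnostic $1/\eps^2$ rate, the one-sided double-sampling/ghost-sample argument for the realizable $\log(1/\eps)/\eps$ rate, and the no-free-lunch construction over a shattered set for the lower bound. Your own caveat about part~(iii) is well placed: the probabilistic-method step cleanly yields $\E_S[\cL_{\cD}(A(S))]\ge 1/4$, but a reverse-Markov inequality alone cannot convert $\E[\cL]\ge 1/4$ into $\P[\cL\ge 1/4]\ge 0.8$ (if $\E[\cL]=1/4$ exactly, reverse Markov at threshold $1/4$ gives nothing). Getting the stated constants $(1/4,\,0.8)$ simultaneously requires either a sharper lower bound on the expectation (e.g., showing $\E[\cL]$ is strictly above $1/4$ by a fixed margin when $m<d/2$) together with concentration of the error over the unseen coordinates, or a direct anti-concentration argument; the textbook versions typically carry different constants (e.g., $1/8$ and $1/7$), so this is indeed where the careful bookkeeping lives.
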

Therefore, the VC dimension completely characterizes the learnability when the labels are $\{0,1\}$. We will also be interested in the upper bounds in more generality, i.e. general finite alphabet set for the tokens. The performance of \eqref{eq:ERM} rule and its sample complexity in terms of the Natarajan dimension is given below. See \cite[Theorem 29.3]{shalev2014understanding}. 
\begin{proposition}[The Fundamental Theorem for Multiclass Labels]\label{prop:fundamental-multi-clas} There is a universal constant $c>0$ such that for any domain $\cX$, a finite $\cY$, a hypothesis class $\cH \subseteq \cY^{\cX}$ with $\Ndim(\cH)<\infty$, and any distribution $\cD$ over $\cX \times \cY$, the following holds. Over the draw of $S\sim \cD^m$ with $m=m(\eps,\delta)$ we have
$$ \cL_\cD(\ERM_{\cH}(S)) \leq \inf_{h \in \cH} \cL_{\cD}(h) + \eps\, \quad \text{ and } \quad m(\eps,\delta) \leq c \left( \frac{\Ndim(\cH) \log |\cY|+\log(1/\delta)}{\eps^2} \right)\,.$$
Moreover, if $\cD$ is realizable by $\cH$, i.e. $\cD_{\bby \mid \bbx}=h_*(\bbx)$ for some $h_* \in \cH$, we have 
$$ \cL_\cD(\Cons_{\cH}(S)) \leq  \eps\, \quad \text{ with } \quad m(\eps,\delta) \leq \frac{c}{\eps} \left( \Ndim(\cH) \log\left(\frac{|\cY| \cdot \Ndim(\cH)}{\eps} \right)+\log(1/\delta) \right)\,.$$
\end{proposition}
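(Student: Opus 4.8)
This proposition is the multiclass counterpart of the binary Fundamental Theorem (\Cref{prop:fundamental-VC}) and is exactly \cite[Theorem 29.3]{shalev2014understanding}; the plan is to reproduce that proof, with Natarajan's Lemma (\Cref{lem:Natarajan's lemma}) substituting for Sauer's Lemma. Note that only the stated upper-bound direction is claimed, so finiteness of $\Ndim(\cH)$ together with the growth bound of \Cref{lem:Natarajan's lemma} is all we need (no reverse characterization). The whole argument rests on controlling the growth function $\Gamma_\cH(m)$ and feeding it into the two standard growth-function generalization bounds: the two-sided (agnostic) uniform convergence bound and the sharper one-sided (realizable) bound. To apply these to the $0\text{-}1$ loss, I would work with the $\{0,1\}$-valued error-pattern class $\{(\bbx,\bby)\mapsto \ind\{h(\bbx)\neq\bby\}: h\in\cH\}$, whose growth function on any $m$ pairs is dominated by $\Gamma_\cH(m)$, since the error pattern of $h$ on $(\bbx_i,\bby_i)_{i\le m}$ is a function of $(h(\bbx_i))_{i\le m}$.

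\textbf{Agnostic bound.} Run the classical double-sample symmetrization: conditioning on a ghost sample $S'$ of size $m$, the behaviors of the error-pattern class on $S\cup S'$ number at most $\Gamma_\cH(2m)$, and for each fixed behavior a Hoeffding bound controls the discrepancy between the empirical averages on $S$ and $S'$; a union bound over these $\Gamma_\cH(2m)$ behaviors gives $\sup_{h\in\cH}|\cL_\cD(h)-\cL_S(h)|\le c_1\sqrt{(\log\Gamma_\cH(2m)+\log(1/\delta))/m}$ with probability at least $1-\delta$. Substituting $\Gamma_\cH(2m)\le(2|\cY|^2 m)^{\Ndim(\cH)}$ from \Cref{lem:Natarajan's lemma} and solving the resulting self-consistent inequality for $m$ (the $\log m$ term absorbing into the constant, or removed by the standard Rademacher refinement) yields $m(\eps,\delta)=O(\eps^{-2}(\Ndim(\cH)\log|\cY|+\log(1/\delta)))$; since $\cL_S(\ERM_\cH(S))\le\cL_S(h)$ for every $h\in\cH$, two-sided uniform convergence then gives $\cL_\cD(\ERM_\cH(S))\le\inf_{h\in\cH}\cL_\cD(h)+\eps$.

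\textbf{Realizable bound.} Here I would instead use the one-sided argument, which improves the rate from $\eps^{-2}$ to $\eps^{-1}$: any $h\in\cH$ with $\cL_\cD(h)>\eps$ that is nonetheless consistent with $S$ is an event of probability at most $(1-\eps)^m\le e^{-\eps m}$, and the double-sample trick again reduces the class to its $\Gamma_\cH(2m)$ behaviors on $2m$ points, so $\P[\exists h\in\cH:\ \cL_S(h)=0,\ \cL_\cD(h)>\eps]\le 2\,\Gamma_\cH(2m)\,2^{-\eps m/2}$. Setting the right side equal to $\delta$, taking logarithms, and inserting the Natarajan bound reduces the task to exhibiting an $m$ with $\eps m\gtrsim\Ndim(\cH)\log(|\cY|^2 m)+\log(1/\delta)$; this transcendental inequality is precisely what \Cref{lem:technical-corollary} (built on the algebraic inequality \eqref{eq:algebraic-identity}) resolves, with $N\asymp\Ndim(\cH)/\eps$ and $M$ chosen to carry the $|\cY|^2$ factor, producing the stated $m(\eps,\delta)=\frac{c}{\eps}(\Ndim(\cH)\log(|\cY|\cdot\Ndim(\cH)/\eps)+\log(1/\delta))$. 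In the realizable case $\ERM_\cH(S)$ returns a hypothesis with zero empirical error, hence is a valid output of $\Cons_\cH(S)$, so the bound applies to it verbatim.

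\textbf{Main obstacle.} There is no deep obstacle here: the one nontrivial ingredient is the symmetrization/ghost-sample step, which is classical and, via the error-pattern reduction above, is formally identical to the binary case; the rest is bookkeeping plus the algebraic inversion of the implicit sample-size inequalities handled by \Cref{lem:technical-corollary}. Since this is a textbook result, an entirely acceptable alternative is to simply cite \cite[Theorem 29.3]{shalev2014understanding} and omit the reproduction.
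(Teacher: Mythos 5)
Your proposal is correct and takes essentially the same route as the paper: the paper does not prove this proposition at all but simply cites \citet[Theorem 29.3]{shalev2014understanding}, and your reconstruction (error-pattern class, Natarajan growth bound via \Cref{lem:Natarajan's lemma}, two-sided symmetrization for the agnostic rate, one-sided realizable argument inverted with \Cref{lem:technical-corollary}) is exactly the standard argument behind that citation, with your closing remark that citing suffices matching what the paper actually does. The only loose phrase is that the $\log m$ term in the agnostic bound is not ``absorbed into the constant'' but must be removed by the chaining/Rademacher refinement you already mention.
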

For our sample complexity results on learning with CoT, we have a more general label space $\cY$, i.e. the entire CoT output. For this, we will rely on the same uniform convergence argument but for the loss class instead.  

\paragraph{Generalization Bound in terms of the Loss Class.} We let $\cZ=\cX \times \cY$ and for any function class $\cH \subseteq \cY^{\cX}$, consider the associated loss class $\cL^{\zo}(\cH) \subseteq \{0,1\}^\cZ$ defined by
$$\cL^{\zo}(\cH)=\{ \ell_h: (\bbx,\bby) \mapsto \ind\{h(\bbx) \neq \bby\} \mid h \in \cH \}\,.$$
One can now consider $\VCdim(\cL^{\zo}(\cH))$ (of the loss class) over the domain $\cZ$. Using the uniform convergence argument for \Cref{prop:fundamental-VC} but now for the loss class $\cL^\zo(\cH)$, we have the following guarantee in the realizable case.
\begin{proposition}[General Guarantee]\label{prop:loss-class-gen-bound} 
There is a universal constant $c>0$ such that for any domain $\cX$, a label space $\cY$, a hypothesis class $\cH \subseteq \cY^{\cX}$ with $\VCdim(\cL^\zo(\cH))<\infty$, and any distribution $\cD$ over $\cX \times \cY$ which is realizable by $\cH$. With probability at least $1-\delta$ over $S\sim \cD^m$ with $m=m(\eps,\delta)$,
$$ \cL_\cD(\Cons_{\cH}(S)) \leq  \eps\, \quad \text{ where } \quad m(\eps,\delta) \leq c \left( \frac{\VCdim(\cL^\zo(\cH)) \log(1/\eps)+\log(1/\delta)}{\eps} \right)\,.$$ 
\end{proposition}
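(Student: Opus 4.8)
The plan is to deduce this from the realizable VC bound already stated as \Cref{prop:fundamental-VC}, applied not to $\cH$ itself but to the binary-valued loss class $\cL^\zo(\cH)\subseteq\{0,1\}^{\cZ}$ over the product domain $\cZ=\cX\times\cY$. The point is that ``$\Cons_\cH$ together with realizability of $\cD$'' translates exactly into ``$\Cons_{\cL^\zo(\cH)}$ together with realizability of a suitable distribution on $\cZ\times\{0,1\}$'', after which the desired bound is precisely the realizable VC guarantee for a $\{0,1\}$-valued class whose VC dimension is $\VCdim(\cL^\zo(\cH))$.

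Concretely, given $\cD$ over $\cX\times\cY$, define a distribution $\cD'$ over $\cZ\times\{0,1\}$ by drawing $(\bbx,\bby)\sim\cD$, setting the instance to $\bbz=(\bbx,\bby)\in\cZ$ and its label to $0$. Since $\cD$ is realizable by $\cH$, there is $h_*\in\cH$ with $\bby=h_*(\bbx)$ for $\cD$-almost every $(\bbx,\bby)$, hence $\ell_{h_*}(\bbz)=0$ $\cD'$-almost surely; i.e., $\cD'$ is realizable by $\cL^\zo(\cH)$. A sample $S=((\bbx_1,\bby_1),\dots,(\bbx_m,\bby_m))\iid\cD^m$ is in bijection with $S'=((\bbz_1,0),\dots,(\bbz_m,0))\iid(\cD')^m$ where $\bbz_i=(\bbx_i,\bby_i)$, and any $\hat h=\Cons_\cH(S)$ satisfies $\hat h(\bbx_i)=\bby_i$, i.e. $\ell_{\hat h}(\bbz_i)=0$ for all $i$, so $\ell_{\hat h}$ is a legitimate output of $\Cons_{\cL^\zo(\cH)}(S')$ (and conversely); here we use that both consistency rules are ``return some consistent element'' specifications, not fixed maps.

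Now invoke the realizable part of \Cref{prop:fundamental-VC} with hypothesis class $\cL^\zo(\cH)\subseteq\{0,1\}^{\cZ}$ and distribution $\cD'$: with probability at least $1-\delta$ over $S'\sim(\cD')^m$, for $m=m(\eps,\delta)\le c\big(\VCdim(\cL^\zo(\cH))\log(1/\eps)+\log(1/\delta)\big)/\eps$ we have $\cL_{\cD'}(\Cons_{\cL^\zo(\cH)}(S'))\le\eps$. Unwinding the definitions, $\cL_{\cD'}(\ell_{\hat h})=\P_{\bbz\sim\cD}(\ell_{\hat h}(\bbz)\neq 0)=\P_{(\bbx,\bby)\sim\cD}(\hat h(\bbx)\neq\bby)=\cL_\cD(\hat h)=\cL_\cD(\Cons_\cH(S))$, which is exactly the claimed bound, with the same universal constant $c$.

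The only genuinely delicate points are bookkeeping: verifying that the correspondence between outputs of $\Cons_\cH$ and of $\Cons_{\cL^\zo(\cH)}$ is exact despite both being multi-valued, and that $\cD'$ is realizable by the loss class (which is the content of pairing $\VCdim(\cL^\zo(\cH))<\infty$ with realizability of $\cD$). If one prefers a self-contained argument that does not quote \Cref{prop:fundamental-VC}, the same conclusion follows from the standard symmetrization/double-sample argument: bound $\P\big(\exists g\in\cL^\zo(\cH):\ \tfrac1m\sum_i g(\bbz_i)=0,\ \E_{\bbz\sim\cD}g(\bbz)>\eps\big)$ by $2\,\Gamma_{\cL^\zo(\cH)}(2m)\,2^{-\eps m/2}$, apply Sauer's lemma (\Cref{lem:sauer'slemma}) to get $\Gamma_{\cL^\zo(\cH)}(2m)\le\big(2em/\VCdim(\cL^\zo(\cH))\big)^{\VCdim(\cL^\zo(\cH))}$, and solve for $m$ via the algebraic inequality \eqref{eq:algebraic-identity} (equivalently \Cref{lem:technical-corollary}); this is routine and introduces no new ideas.
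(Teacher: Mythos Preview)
Your proposal is correct and matches the paper's approach: the paper merely states that the proposition follows by ``using the uniform convergence argument for \Cref{prop:fundamental-VC} but now for the loss class $\cL^\zo(\cH)$,'' and your reduction (relabel each $(\bbx,\bby)$ as an instance in $\cZ$ with target $0$, then invoke the realizable VC bound for the binary class $\cL^\zo(\cH)$) is exactly that argument made explicit. The only cosmetic remark is that realizability of $\cD'$ by $\cL^\zo(\cH)$ uses only the realizability of $\cD$ by $\cH$, not the finiteness of $\VCdim(\cL^\zo(\cH))$; the latter is needed solely to invoke the VC bound.
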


 
\section{Proofs from Section \ref{sec:samples-computational-complexity-general-F}}\label{app:for Section3}
In this section, we will prove all our results from \Cref{sec:samples-computational-complexity-general-F} except \Cref{thm:lb-Omega(T VC)} (which will be shown in  \Cref{app:sample-complexity-side}). Along with that, in \Cref{app:sample-complexity-side}, we will also show other complementary lower bounds and results mentioned during the discussion in \Cref{sec:samples-computational-complexity-general-F}). 

\begin{proof}[Proof of \Cref{thm:cardinality-based}] This is a corollary of \Cref{prop:finite-classes-sample-complexity} after noting that $\abs{\ete{\cF}{T}} \leq |\cF|$ and $\mcot{T} \leq \mete{T}$. 
\end{proof}

The proof of sample complexity upper bounds, namely, \Cref{thm:ete-VC,thm:cot-VC,thm:ete-ldim} are in \Cref{app:B.1,app:B.2,app:B.3}. The computational complexity result from \Cref{subsec:computational-complexity} is proven in \Cref{app:B.4}.

\subsection{Proof of Theorem \ref{thm:ete-VC} and its extension to non-binary alphabets}\label{app:B.1}
In order to prove \Cref{thm:ete-VC} and its analog for general finite $\Sigma$, we will need to bound the VC dimension or Natarajan dimension of the end-to-end class. 
\begin{theorem}[VC of $\ete{\cF}{T}$ in terms VC of $\cF$]\label{thm:VCete-VCbase} Consider any finite $\Sigma$, a base function class $\cF \subseteq \Sigma^{\Sigma^*}$ and generation length $T \in \naturals_{+}$.
\begin{itemize}
    \item For binary $\Sigma=\{0,1\}$:
    $$\VCdim(\ete{\cF}{T}) \leq 6 \, T  \cdot \VCdim(\cF).$$
    \item For non-binary finite $\Sigma$:
    $$\Ndim(\ete{\cF}{T}) \leq 9\, T  \cdot \Ndim(\cF) \, \log_2 \left( \frac{2\, \Ndim(\cF) |\Sigma|}{e \ln 2} \right)\,.$$
\end{itemize}
\end{theorem}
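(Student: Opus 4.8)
The plan is to bound the growth function of $\ete{\cF}{T}$ by the growth function of $\cF$ on a controlled number of points, and then convert these growth-function bounds into dimension bounds via (the algebraic consequences of) Sauer's Lemma and Natarajan's Lemma. This follows exactly the "e2e Learnability in terms of VC Dimension" proof sketch given in \Cref{subsec:proof-sk}, so the conceptual content is already laid out; what remains is to make the counting and the dimension-extraction precise, and to handle the non-binary alphabet.

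\textbf{Binary case.} First I would fix points $\bx_1,\dots,\bx_m \in \{0,1\}^*$ and observe that the behavior of $\ete{f}{T}$ on these points is determined by the behavior of $f$ on the set of all strings reachable within $T$ generation steps from some $\bx_i$. Crucially, since we only need to know $f$'s values on the strings actually visited, and at each of the $T$ steps the generated bit lies in $\{0,1\}$, the number of distinct strings of the form $\of^{(t)}(\bx_i)[\,\cdot\,]$ for $t < T$ is at most $\sum_{t=0}^{T-1} 2^t < 2^T$ per input $\bx_i$ — more carefully, the set of prefixes on which $f$ is queried during the generation starting from $\bx_i$ is a subset of $\{\bx_i\} \cup \{\bx_i \cdot \bv : \bv \in \{0,1\}^{\leq T-1}\}$, of size at most $2^T$. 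Hence $\Gamma_{\ete{\cF}{T}}(m) \leq \Gamma_{\cF}(m \cdot 2^T)$, and by Sauer's Lemma (\Cref{lem:sauer'slemma}) this is at most $\left( e\, m\, 2^T / \VCdim(\cF)\right)^{\VCdim(\cF)}$ (assuming $m 2^T \geq \VCdim(\cF) \geq 1$; small cases are trivial). To extract the VC dimension bound, I would apply \Cref{lem:technical-corollary} with $N = \VCdim(\cF)$ and $M = 2^T$: it yields some $m$ with $\VCdim(\cF) \leq m \leq 3\VCdim(\cF)\log_2(2^{T+1}\VCdim(\cF)/\ln 2)$ and $m > \VCdim(\cF)\log_2(e m 2^T)$. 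If $\ete{\cF}{T}$ shattered a set of size $m$, then $\Gamma_{\ete{\cF}{T}}(m) = 2^m$, i.e.\ $m \leq \VCdim(\cF)\log_2(e m 2^T)$, contradicting the second inequality; so $\VCdim(\ete{\cF}{T}) < m \leq 3\VCdim(\cF)\log_2(2^{T+1}\VCdim(\cF)/\ln 2)$. A slightly coarse bounding of $\log_2(2^{T+1}\VCdim(\cF)/\ln 2) \leq 2T$ (valid once $T$ is at least a constant; small $T$ handled separately, and the $\VCdim(\cF) \le 2^T$ type regime absorbed) then gives $\VCdim(\ete{\cF}{T}) \leq 6T\cdot\VCdim(\cF)$. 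I would double-check the constant $6$ arises cleanly — this is the one place where the stated numeric constant needs the arithmetic to be carried through, and it is the main bookkeeping nuisance rather than a genuine obstacle.

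\textbf{Non-binary case.} The argument is structurally identical but with two changes. At each generation step the newly appended token lies in $\Sigma$, so the number of prefixes queried during generation from a single $\bx_i$ is at most $\sum_{t=0}^{T-1}|\Sigma|^t \leq |\Sigma|^T$; thus $\Gamma_{\ete{\cF}{T}}(m) \leq \Gamma_{\cF}(m|\Sigma|^T)$. Now I invoke Natarajan's Lemma (\Cref{lem:Natarajan's lemma}): $\Gamma_{\cF}(m|\Sigma|^T) \leq (|\Sigma|^2 m |\Sigma|^T)^{\Ndim(\cF)} = (m|\Sigma|^{T+2})^{\Ndim(\cF)}$. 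If $\ete{\cF}{T}$ Natarajan-shatters a set of size $m$, then by definition $|\ete{\cF}{T}(S)| \geq 2^m$ on that set (the $2^m$ subsets $U$ give distinct behaviors), so $2^m \leq (m|\Sigma|^{T+2})^{\Ndim(\cF)}$, i.e.\ $m \leq \Ndim(\cF)\log_2(m|\Sigma|^{T+2})$. To turn this into the stated closed form, apply \Cref{lem:technical-corollary} with $N = \Ndim(\cF)$ and $M = |\Sigma|^{T+2}$ (note $|\Sigma|^T \leq |\Sigma|^{T+2}$, and there is slack to absorb the missing $e$): this gives an $m^\star \leq 3\Ndim(\cF)\log_2\!\big(2\Ndim(\cF)|\Sigma|^{T+2}/\ln 2\big)$ with $m^\star > \Ndim(\cF)\log_2(e m^\star |\Sigma|^{T+2}) \geq \Ndim(\cF)\log_2(m^\star|\Sigma|^{T+2})$, hence $\Ndim(\ete{\cF}{T}) < m^\star$. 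Expanding $\log_2(2\Ndim(\cF)|\Sigma|^{T+2}/\ln 2) = (T+2)\log_2|\Sigma| + \log_2(2\Ndim(\cF)/\ln 2) \leq 3T\log_2\!\big(2\Ndim(\cF)|\Sigma|/(e\ln 2)\big)$ (using $T+2 \leq 3T$ for $T \geq 1$ and folding the additive term into the $\log$, with the $e$ coming from the slack noted above) and multiplying by $3$ yields $\Ndim(\ete{\cF}{T}) \leq 9T\cdot\Ndim(\cF)\log_2\!\big(2\Ndim(\cF)|\Sigma|/(e\ln 2)\big)$, matching the statement.

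\textbf{Expected main obstacle.} There is no deep obstacle here — the substantive idea (bounding queried prefixes by $|\Sigma|^T$ per input) is elementary. The only real care needed is (i) making sure the "reachable prefixes" counting is airtight: one must argue that, for a \emph{fixed} $f$, the set of strings on which $f$ is evaluated while computing $\ete{f}{T}(\bx_i)$ depends on $f$, but is always contained in the \emph{$f$-independent} set $\{\bx_i\cdot\bv : \bv\in\Sigma^{\leq T-1}\}$, so the union over $i$ gives a single set of $\leq m|\Sigma|^T$ points that serves all $f\in\cF$ simultaneously — this is what legitimizes $\Gamma_{\ete{\cF}{T}}(m)\leq\Gamma_{\cF}(m|\Sigma|^T)$; and (ii) chasing the constants through \Cref{lem:technical-corollary} so that they come out as the clean $6$ and $9\log_2(2\Ndim(\cF)|\Sigma|/(e\ln 2))$ claimed. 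Both are routine, so I would present the prefix-counting lemma carefully and then cite \Cref{lem:technical-corollary,lem:sauer'slemma,lem:Natarajan's lemma} to finish, relegating the constant-tracking to a short computation.
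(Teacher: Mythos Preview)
Your approach is exactly the paper's: bound $\Gamma_{\ete{\cF}{T}}(m)\le\Gamma_\cF(m|\Sigma|^T)$ via the prefix-counting argument, apply Sauer/Natarajan, then invoke \Cref{lem:technical-corollary} to extract the dimension. The prefix-counting justification you flag as the ``main obstacle'' is handled in the paper precisely as you describe.

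One small correction on the bookkeeping: in the binary case you apply \Cref{lem:technical-corollary} with $M=2^T$, but the Sauer bound has $em\cdot 2^T/\VCdim(\cF)$ inside the power, so the natural choice is $M=2^T/\VCdim(\cF)$ (and $NM=2^T\ge 1$ is automatic). With your $M=2^T$ the resulting upper bound carries an extra $\log_2\VCdim(\cF)$ inside the logarithm, and your claimed simplification $\log_2(2^{T+1}\VCdim(\cF)/\ln 2)\le 2T$ genuinely fails when $\VCdim(\cF)\gg 2^T$; this cannot be ``absorbed'' since the theorem is uniform in $\cF$ and $T$. With $M=2^T/\VCdim(\cF)$ the paper gets $m\le 3\VCdim(\cF)\log_2(2\cdot 2^T/\ln 2)\le 6T\VCdim(\cF)$ directly (the case $T=1$ being trivial). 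The analogous adjustment in the non-binary case is $M=|\Sigma|^{T+2}/e$, which cleans up your final inequality as well.
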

\begin{proof}[Proof of \Cref{thm:VCete-VCbase}]
    Consider any $\cF\subseteq \Sigma^{\Sigma^*}$. Our goal is to bound the growth function $\Gamma_{T}:=\Gamma_{\ete{\cF}{T}}$ of the class $\ete{\cF}{T}$ for $T \in \naturals_{+}$. To this end, consider any fixed set $S \in (\Sigma^*)^m$ of size $|S|=m$. Let's consider the set $S'$ of all possible ``partial'' extensions of the examples in $S$.
    $$ S'=\{(\bx,\bu): \bx \in S , \bu \in \Sigma^{t}, 0 \leq t \leq (T-1)\}\,.$$
    Clearly, $|S'|\leq m \cdot |\Sigma|^{T}$. For any $f_1,f_2 \in \cF$ such that $\ete{f_1}{T}(S) \neq \ete{f_2}{T}(S)$, we have that $f_1(S') \neq f_2(S')$. Therefore, $\Gamma_T(S) \leq \Gamma_\cF(S') \leq \Gamma_{\cF}(m\, |\Sigma|^{T})$.\footnote{Note that this step in the proof critically requires time-invariance.} The argument holds for any $S$ with $|S|=m$, and thus,
    $$\Gamma_{T}(m)=\max_{|S|=m} \Gamma_T(S) \leq \Gamma_{\cF}(m\, |\Sigma|^{T})\,.$$
    \begin{itemize}
    \item $\Sigma=\{0,1\}$ (binary alphabets): In this case, using Sauer's lemma (\Cref{lem:sauer'slemma}), for any $m \geq \VCdim(\cF)$,
    $$\Gamma_{T}(m) \leq \left( \frac{em 2^{T}}{\VCdim(\cF)} \right)^{\VCdim(\cF)} \,.$$ 
    Therefore, $\VCdim(\ete{\cF}{T})$ can be bounded by any $m\in \naturals_{+}$ satisfying $2^m >  \Gamma_{T}(m)$ and $ m \geq \VCdim(\cF).$ Using the derived upper bound on $\Gamma_T(m)$, it suffices to choose $m$ such that
    $$m > \VCdim(\cF) \, \log_2 \left( \frac{em \,2^{T}}{\VCdim(\cF)}\right) \quad \text{and}\quad m \geq \VCdim(\cF).$$
    Using \Cref{lem:technical-corollary} with $N= \VCdim(\cF)$ and $M=2^{T}/\VCdim(\cF)$, we directly argue the existence of $m \in \naturals_{+}$ such that
    $$ \VCdim(\ete{\cF}{T}) \leq  m \leq 3 \, \VCdim(\cF) \, \log_2 \left(\frac{2 \cdot 2^{T}}{\ln 2}\right) \leq 6\, T \, \VCdim(\cF)\,.$$
    \item For a finite $\Sigma:$ Using Natarajan's lemma (\Cref{lem:Natarajan's lemma}) for any $m \in \naturals_{+} $,
    $$\Gamma_{\ete{\cF}{T}}(m) \leq \Gamma_{\cF}(m\cdot |\Sigma|^{T}) \leq \left(|\Sigma|^2 \cdot m \cdot |\Sigma|^{T}\right)^{\Ndim(\cF)} \,.$$ 
    Again, $\Ndim(\ete{\cF}{T})$ can be bounded by any $m \in \naturals_{+}$ for which $m>  \Ndim(\cF) \log_2 \left(|\Sigma|^{T+2} \cdot m \right).$ 
    Applying \Cref{lem:technical-corollary} with $N=\Ndim(\cF)$ and $M= |\Sigma|^{T+2}/e$, we obtain that there exists such $m\in \naturals_{+}$ such that 
    \begin{align*}
       \Ndim(\ete{\cF}{T}) & \leq m \leq 3 \Ndim(\cF) \log_2 \left( \frac{2 \Ndim(\cF)|\Sigma|^{T+2}}{e \ln 2} \right)\\
       &\leq 9 \,  T\, \Ndim(\cF) \log_2 \left( \frac{2 \Ndim(\cF)|\Sigma|}{e \ln 2} \right) .
    \end{align*}
\end{itemize}
\end{proof}

Once we have the bound on the VC dimension of $\ete{\cF}{T}$ in terms of $\VCdim(\cF)$, using \Cref{prop:fundamental-VC} we have the following proof.
\begin{proof}[Proof of \Cref{thm:ete-VC}] The proof directly follows by combining \Cref{thm:cot-VC} with \Cref{prop:finite-classes-sample-complexity}, after recalling that $\sete$-learnability of $\ete{\cF}{T}$ (\Cref{def:learnable-wo-CoT}) is just standard supervised learning setup from \Cref{app:preliminary} for $\cH=\ete{\cF}{\lin}$. 
\end{proof}
We also have the following corollary for non-binary but finite $\Sigma$.
\begin{corollary}\label{cor:ete-VC-non-binary}
    There exists a universal constant $c>0$ such that for any base class $\cF$ over a finite $\Sigma$ and generation length $T\in \naturals_{+}$, the class $\ete{\cF}{T}$ is $\sete$-learnable using \ref{eq:ete-consistent} with
    $$\mete{T} \leq \frac{c}{\eps} \left(T \cdot \Ndim(\cF) \log\left(\Ndim(\cF)|\Sigma| \right) \log\left(\frac{1}{\eps}\right) + \log\left(\frac{1}{\delta} \right) \right)~.$$
\end{corollary}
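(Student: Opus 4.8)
The plan is to mirror the proof of \Cref{thm:ete-VC}, but to be careful about where the factor $\log\abs{\Sigma}$ enters. The tempting route---combine the Natarajan-dimension bound $\Ndim(\ete{\cF}{T}) = O(T\,\Ndim(\cF)\log(\Ndim(\cF)\abs{\Sigma}))$ from \Cref{thm:VCete-VCbase} with the multiclass realizable rate of \Cref{prop:fundamental-multi-clas}---does \emph{not} give the stated bound: that rate carries a factor $\log(\abs{\cY}\cdot\Ndim(\cH)/\eps)$, and substituting $\Ndim(\cH) = \Ndim(\ete{\cF}{T})$ produces spurious $\log T$ and $\log(\Ndim(\cF)\abs{\Sigma})$ factors multiplying the main term. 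Instead I would bound the VC dimension of the \emph{binary} $0$-$1$ loss class $\cL^\zo(\ete{\cF}{T}) \subseteq \{0,1\}^{\cX\times\Sigma}$ directly and invoke \Cref{prop:loss-class-gen-bound}, which has no such extra $\log$ factor.

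\textbf{Step 1 (growth function of the loss class).} I would first note that the behavior of $\cL^\zo(\ete{\cF}{T})$ on any $m$-tuple of points $((\bx_i,y_i))_{i\le m}\in(\cX\times\Sigma)^m$ depends on $h\in\ete{\cF}{T}$ only through $(h(\bx_1),\dots,h(\bx_m))$, so $\Gamma_{\cL^\zo(\ete{\cF}{T})}(m) \leq \Gamma_{\ete{\cF}{T}}(m)$. Then, exactly as in the proof of \Cref{thm:VCete-VCbase}, time-invariance gives $\Gamma_{\ete{\cF}{T}}(m) \leq \Gamma_{\cF}(m\,\abs{\Sigma}^T)$, since the behavior of $\ete{\cF}{T}$ on $m$ inputs is governed by the behavior of $\cF$ on the at most $m\,\abs{\Sigma}^T$ partial extensions of those inputs by $\le T-1$ tokens. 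Applying Natarajan's lemma (\Cref{lem:Natarajan's lemma}) to the right-hand side yields $\Gamma_{\cL^\zo(\ete{\cF}{T})}(m) \leq (\abs{\Sigma}^{T+2}\, m)^{\Ndim(\cF)}$.

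\textbf{Step 2 (solve for the VC dimension).} Next I would extract $\VCdim(\cL^\zo(\ete{\cF}{T}))$ as the largest $m$ with $2^m \le (\abs{\Sigma}^{T+2}m)^{\Ndim(\cF)}$, i.e.\ $m \le \Ndim(\cF)\log_2(\abs{\Sigma}^{T+2}m)$. Feeding $N=\Ndim(\cF)$ and $M=\abs{\Sigma}^{T+2}/e$ into \Cref{lem:technical-corollary} gives $\VCdim(\cL^\zo(\ete{\cF}{T})) \leq 3\,\Ndim(\cF)\log_2(2\,\Ndim(\cF)\abs{\Sigma}^{T+2}/(e\ln 2))$, which, using $T\ge 1$ and $\abs{\Sigma}\ge 2$ to absorb the constant, the ``$+2$'', and the $\log\Ndim(\cF)$ terms, is $O(T\,\Ndim(\cF)\log(\Ndim(\cF)\abs{\Sigma}))$.

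\textbf{Step 3 (generalization bound and conclusion).} Finally I would apply \Cref{prop:loss-class-gen-bound} with $\cH=\ete{\cF}{T}$ and $\cY=\Sigma$: under the realizability assumption (which holds since $\ete{f_*}{T}\in\ete{\cF}{T}$), the rule $\Cons_{\ete{\cF}{T}}$---which is exactly \ref{eq:ete-consistent}---achieves $L^\zo_{\cD,f_*}\le\eps$ with probability $1-\delta$ once $m \ge c\,\eps^{-1}\big(\VCdim(\cL^\zo(\ete{\cF}{T}))\log(1/\eps)+\log(1/\delta)\big)$; substituting the Step 2 bound gives the claimed $\mete{T}$. The only real obstacle is the conceptual point already mentioned---routing through the binary loss class rather than $\Ndim(\ete{\cF}{T})$---plus the routine bookkeeping in Step 2 to fold the $T+2$ and $\ln 2$ constants into the clean form $T\,\Ndim(\cF)\log(\Ndim(\cF)\abs{\Sigma})$.
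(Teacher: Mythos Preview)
Your proof is correct, and it takes a slightly different---and in fact tighter---route than the paper's one-line argument. The paper simply says the result ``directly follows from substituting the bound on $\Ndim(\ete{\cF}{T})$ from \Cref{thm:VCete-VCbase} in \Cref{prop:fundamental-multi-clas}.'' As you point out, that literal substitution carries the factor $\log(\abs{\Sigma}\cdot\Ndim(\ete{\cF}{T})/\eps)$ from the multiclass realizable rate, and since $\Ndim(\ete{\cF}{T})$ itself already involves $T$ and $\log(\Ndim(\cF)\abs{\Sigma})$, this would produce additional $\log T$ and $\log\log(\Ndim(\cF)\abs{\Sigma})$ factors on top of the stated bound. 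Your route through the VC dimension of the binary loss class $\cL^\zo(\ete{\cF}{T})$ and \Cref{prop:loss-class-gen-bound} avoids this and recovers precisely the stated expression.

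The underlying combinatorial step is the same in both approaches: both rely on the growth-function inequality $\Gamma_{\ete{\cF}{T}}(m)\le\Gamma_{\cF}(m\abs{\Sigma}^T)$ together with Natarajan's lemma, which is exactly what the proof of \Cref{thm:VCete-VCbase} uses. The difference is only in which sample-complexity result is invoked at the end---Natarajan dimension plus the multiclass bound (paper) versus VC dimension of the loss class plus the binary bound (you). Your choice is the one that matches the corollary as stated; the paper's shortcut is morally the same but is slightly loose relative to its own stated bound.
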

\begin{proof}[Proof of \Cref{cor:ete-VC-non-binary}] This directly follows from substituting the bound on $\Ndim(\ete{\cF}{T})$ from \Cref{thm:VCete-VCbase} in \Cref{prop:fundamental-multi-clas}.
\end{proof}
\subsection{Proof of Theorem \ref{thm:ete-ldim} and its extension for non-binary alphabets}\label{app:B.2} We now prove the guarantee based on the Littlestone dimension to bound $\VCdim(\ete{\cF}{T})$.

\begin{theorem}[VCdim of $\ete{\cF}{T}$ in terms of Ldim of $\cF$]\label{thm:VCete-Ldim(Base)}
    Consider any finite $\Sigma$, a base hypothesis class $\cF \subseteq  \Sigma^{\Sigma^*}$ and generation length $T \in \naturals_{+}$. When $\Sigma=\{0,1\}$, we have
\[
\VCdim(\ete{\cF}{T})  \leq 10 \ldim(\cF) \cdot \log(T).
\]

More generally, for any finite $\Sigma$, it holds that
\[
    \Ndim(\ete{\cF}{T})  \leq 20 \sfat_1(\cF) \cdot \log\left(T |\Sigma|  \right) .
\]
\end{theorem}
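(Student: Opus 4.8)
I would follow the same two-step template as the proof of \Cref{thm:VCete-VCbase} — bound the growth function $\Gamma_{\ete{\cF}{T}}(m)$, then extract a dimension bound via \Cref{lem:technical-corollary} — but replace the crude accounting there (each of $m$ inputs has $\le 2^T$ partial extensions, so behaviors of $\ete{\cF}{T}$ on $m$ points reduce to behaviors of $\cF$ on $\le m2^T$ points) by a more economical one that organizes all possible computations into a single binary tree and invokes the sequential analogue of Sauer's lemma, \Cref{thm:seq}. Morally, this trades the factor $2^T$ — the source of the linear-in-$T$ bound in \Cref{thm:ete-VC} — for a tree of depth linear in $T$, hence only a $\log T$ in the final dimension.

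\textbf{The computation tree (binary case $\Sigma=\{0,1\}$).} Fix inputs $\bx_1,\dots,\bx_m$. Build a complete binary $\cX$-labeled tree $\bv$ of depth $d=mT$, with depths indexed by pairs $(i,t)$, $i\in[m]$, $t\in[T]$, in lexicographic order. Given a path-prefix reaching depth $(i,t)$, let $\epsilon^{(i)}\in\{0,1\}^{t-1}$ be its bits at depths $(i,1),\dots,(i,t-1)$, and label that node by $\bx_i$ followed by $\epsilon^{(i)}$ — the prefix on which a generator that produced $\epsilon^{(i)}$ in its first $t-1$ steps on input $\bx_i$ would be queried at step $t$. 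The key point: for $f\in\cF$, if $\epsilon^\ast(f)$ denotes the root-to-leaf path traced by $f$ (at the node labeled $\bu$, descend along edge $f(\bu)$), then a short induction on $(i,t)$ shows that the bit of $\epsilon^\ast(f)$ at depth $(i,t)$ equals the $t$-th token generated by $f$ on $\bx_i$, i.e.\ $\ete{f}{t}(\bx_i)$; in particular the bit at depth $(i,T)$ is $\ete{f}{T}(\bx_i)$. Thus $f\mapsto(\ete{f}{T}(\bx_1),\dots,\ete{f}{T}(\bx_m))$ factors through $f\mapsto\epsilon^\ast(f)$, and since $\epsilon^\ast(f)\in\cF(\bv)$ (plug $\epsilon=\epsilon^\ast(f)$ and the same $f$ into the definition of $\cF(\bv)$), we get $\Gamma_{\ete{\cF}{T}}(m)\le|\cF(\bv)|$.

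\textbf{Finishing the binary case.} Put $\ell=\ldim(\cF)$ and assume $\ell\ge1$ (if $\ell=0$ then $|\cF|\le1$ and the bound is trivial). For any $m$ with $mT\ge\ell$, \Cref{thm:seq} gives $|\cF(\bv)|\le(2e\,mT/\ell)^{\ell}=(e\,m\,M)^{\ell}$ with $M:=2T/\ell$, hence $\Gamma_{\ete{\cF}{T}}(m)\le(e\,m\,M)^{\ell}$. Now apply \Cref{lem:technical-corollary} with $N=\ell$ and this $M$ (note $NM=2T\ge1$): it yields $m\in\naturals_{+}$ with $\ell\le m\le 3\ell\log_2(4T/\ln 2)$ and $m>\ell\log_2(e\,m\,M)$, so $2^m>(e\,m\,M)^\ell\ge\Gamma_{\ete{\cF}{T}}(m)$; also $mT\ge m\ge\ell$, so \Cref{thm:seq} is legitimately applied at this $m$. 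Hence $\ete{\cF}{T}$ does not shatter $m$ points, giving $\VCdim(\ete{\cF}{T})<m\le 3\,\ldim(\cF)\log_2(4T/\ln 2)=O(\ldim(\cF)\log T)$, which recovers the stated bound (tracking constants, $3\log_2(4T/\ln 2)\le10\log_2(2T)$ for all $T\ge1$; and $T=1$ is immediate since $\ete{\cF}{1}=\cF$).

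\textbf{General finite $\Sigma$ and the main obstacle.} The identical plan handles $|\Sigma|>2$ once the binary computation tree is replaced by a $|\Sigma|$-ary one — equivalently, by a binary tree over the enlarged domain $\cX\times\{1,\dots,\lceil\log_2|\Sigma|\rceil\}$ whose path spells out, one bit per level, a fixed prefix-free binary encoding of each generated token, so that a single (bit-extraction) function class is applied uniformly at every node. One then invokes the multiclass form of \Cref{thm:seq}, with $\sfat_1(\cF)$ in the role of $\ldim(\cF)$ and $|\Sigma|-1$ that of $K$, and runs the same computation through \Cref{lem:technical-corollary}; the extra $\log|\Sigma|$ emerges from the alphabet size and the encoding length, producing $\Ndim(\ete{\cF}{T})\le 20\,\sfat_1(\cF)\log(T|\Sigma|)$. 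I expect the real work to be the tree construction and the verification that $f\mapsto(\ete{f}{T}(\bx_i))_i$ factors through the path $f$ traces in it — the step that converts the $2^T$ blow-up of \Cref{thm:VCete-VCbase} into tree-depth (hence a $\log T$ via \Cref{thm:seq}). For general $\Sigma$ the extra delicacy is arranging that one uniform class is evaluated at each node (the enlarged-domain trick) and matching the resulting complexity to $\sfat_1(\cF)$; the remaining constant-chasing via \Cref{lem:technical-corollary} is routine.
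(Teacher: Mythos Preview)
Your proposal is correct and follows essentially the same approach as the paper: build a computation tree of depth $\Theta(mT)$ whose root-to-leaf paths encode the autoregressive generations on the $m$ inputs, bound the number of behaviors via the sequential Sauer lemma (\Cref{thm:seq}), and extract the dimension bound. The differences are cosmetic --- the paper uses depth $m(T+1)$ and plugs in $m=10\,\ldim(\cF)\log T$ directly rather than going through \Cref{lem:technical-corollary}, and for general $\Sigma$ it keeps the tree labeled in $\Sigma^*$ and applies the multiclass part of \Cref{thm:seq} to $\cF$ itself (your enlarged-domain/bit-extraction description and the subsequent appeal to the multiclass bound with $\sfat_1(\cF)$ are slightly at cross purposes, but either route works once written out carefully).
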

\begin{proof}[Proof of \Cref{thm:VCete-Ldim(Base)}]
    \underline{\textit{The case when $\Sigma=\{0,1\}$}}: Let $\bx_1, \dots, \bx_m \in \Sigma^*$ be $m$ points.  Consider the tree $\bx$ of depth $M = m(T + 1)$ where for $t \in [M]$,
    \begin{align*}
        \bx_t(\epsilon) = \begin{cases}
            \bx_i & \text{if }t = (i - 1) (T + 1)  \\
            [\bx_i, \epsilon_{t - (i - 1) (T + 1)}, \dots, \epsilon_t] & \text{otherwise.}
        \end{cases}
    \end{align*}
    For every $f\in\cF$ define the path $\epsilon^{f}\in\{0,1\}^{M}$ such that for all $0\le i<m$ and $1\le s\le T$,
    \[
    f(\bx_{(T+1)i+s}(\epsilon^{f}))= \ete{f}{s}(\bx_i).
    \]
    Let $P_{\cF,\bx}=\left\{\epsilon^{f}:f\in\cF\right\}$ be the set of root-to-leaf paths realized by $\cF$ on $\bx$.
    Now by definition,
    \[
    \Gamma_T(m) = \Gamma_{\ete{\cF}{T}}(m) \le \max_{\bx}|(f(\bx_1),\dots,f(\bx_m)) : f\in\cF| \le \max_{\bx}|P_{\cF,\bx}|.
    \]  
    By Lemma \ref{thm:seq}, we know that there exists a set of trees $V(\bx)$ such that
    \[
    |V(\bx)|\le\left(\tfrac{2eM}{\ldim(\cF)}\right)^{\ldim(\cF)},
    \text{ and }
    \forall f\in\cF,\epsilon\in\{0,1\}^{M};~
    \exists v\in V(\bx), v(\epsilon)=f(\bx(\epsilon)).
    \]
    We will show that $|P_{\cF,\bx}| \le |V(\bx)|$ by proving that no two distinct paths in $P_{\cF,\bx}$ can belong to the same tree in $V(\bx)$. Let us prove by contradiction. Consider two distinct paths $\epsilon^{f},\epsilon^{g}\in P_{\cF,\bx}$ and let $t$ be the first index where they differ and consider the tree $v$ that covers both. Up to node $t-1$, both share the same path in $v$ however at node $t$ they assign opposite labels. Since the tree can only assign one value to the node $t$, it cannot cover both paths. Hence, $|P_{\cF,\bx}| \le |V(\bx)| \le \left(\tfrac{2eM}{\ldim(\cF)}\right)^{\ldim(\cF)}$.
    
    To conclude, note that $\Gamma_T(m) < 2^m$ if and only if $m \geq \VCdim(\ete{\cF}{T})$.  Thus, it holds that if $m > \ldim(\cF)$ and 
    \begin{align}
        (2e m T / \ldim(\cF))^{\ldim(\cF)} < 2^m,
    \end{align}
    then $m$ upper bounds $\VCdim(\ete{\cF}{T})$.  Setting $m = 10 \ldim(\cF) \cdot \log(T)$ and observing that such an $m$ satisfies the desired bound concludes the proof.
    
    \textit{\underline{General finite $\Sigma$}}:  WLOG we assume that $0, 1 \not \in \Sigma$. Let $k = \lceil \log_2(|\Sigma|) \rceil$ and let $\bx_1, \dots, \bx_m \in \Sigma^*$.  Consider the binary tree $\bx$ of depth $M = m k (T+ 1)$ where for $t \in [M]$,
    \begin{align}
        \bx_t(\epsilon) = \begin{cases}
            \bx_i & \text{if } t = (i - 1) k (T + 1) \\
            [\bx_i, \epsilon_{t - (i - 1) k (T + 1)}, \dots, \epsilon_t] & \text{otherwise.}
        \end{cases}
    \end{align}
    Choose some injection $\rho: \Sigma \to \left\{ 0, 1 \right\}^k$ and for any $f \in \cF$, let $\epsilon^f$ be such that for all $0 \leq i < m$ and all $1 \leq s \leq T$,
    \begin{align}
        \rho \circ f(\bz_{(T+1)i k + sk}(\epsilon^f)) = \epsilon_{(T+1)i k + sk+1:(T+1)ik + (s+1)k}.
    \end{align}
    Let $P_{\cF,\bx}=\left\{\epsilon^{f}:f\in\cF\right\}$ be the set of root-to-leaf paths realized by $\cF$ on $\bx$. By virtue of this mapping, we see that $|\left\{ (f(\bx_1), \dots, f(\bx_m) ) | f \in \cF \right\}| \leq |P_{\cF,\bx}|$. Since the nodes in the constructed $\bx$ are not in $\Sigma^*$ necessarily, to apply Lemma \ref{thm:seq}, we first extend $\cF$ to $\cF'$ to handle these inputs. For each $f \in \cF$, we construct $f':\Sigma'^* \rightarrow \Sigma'$ where $\Sigma' = \Sigma \cup \{0, 1\}$  such that $f'$ matches $f$ on all inputs of the form $\Sigma^*$ and is 0 everywhere else. Since this is a trivial extension of $\cF$, it does not change the complexity of this class. Now applying \ref{thm:seq} on the tree and $\cF'$, we know that there exists a set of trees $V(\bx)$ such that
    \[
    |V(\bx)|\le\left( \frac{e (|\Sigma| + 2) M }{\sfat_1(\cH)} \right)^{\sfat_1(\cH)},
    \text{ and }
    \forall f'\in\cF',\epsilon\in\{0,1\}^{M};
    \exists v\in V(\bz), v(\epsilon)=f'(\bz(\epsilon)).
    \]
    We can use the same argument as before about two paths not sharing the same tree to show that $|P_{\cF,\bx}| \le |V(\bx)|$. Then we have that for any $m > \sfat_1(\cF)$, it holds that
    \begin{align}
        |\ete{\cF}{T}(\bz)| \leq \left( e (|\Sigma|+2) (T + 1)k m / \sfat_1(\cF) \right)^{\sfat_1(\cF)} \leq \left( 4e |\Sigma| Tk m / \sfat_1(\cF) \right)^{\sfat_1(\cF)}. 
    \end{align}
    As above, if $2^m > \Gamma_T(m)$ then it must hold that $m > \Ndim(\ete{\cF}{T})$.  Thus if $m > \sfat_1(\cF)$ and
    \begin{align}
        \sfat_1(\cF) \log \left( \frac{4 e |\Sigma|\log(|\Sigma|) T m}{\sfat_1(\cF)} \right) < m,
    \end{align}
    then $\Ndim(\ete{\cF}{T}) \leq m$.  The result follows.
\end{proof}
Our sample complexity bound in \Cref{thm:ete-ldim} follows directly from this bound.
\begin{proof}[Proof of \Cref{thm:ete-ldim}] Substitute the bound of $\VCdim(\ete{\cF}{T})$ from \Cref{thm:VCete-Ldim(Base)} in \Cref{prop:fundamental-VC}.
\end{proof}
We also have the following corollary of \Cref{thm:VCete-Ldim(Base)} for non-binary alphabets.
\begin{corollary}\label{cor:ete-ldim-non-binary}
    There exists a universal constant $c>0$ such that for any base class $\cF$ over a finite $\Sigma$ and generation length $T\in \naturals_{+}$, the class $\ete{\cF}{T}$ is $\sete$-learnable using \ref{eq:ete-consistent} with
    $$\mete{T} \leq \frac{c}{\eps} \left( \sfat_1(\cF) \log\left(T \, |\Sigma| \right) \log\left(\frac{1}{\eps}\right) + \log\left(\frac{1}{\delta} \right) \right)~.$$
\end{corollary}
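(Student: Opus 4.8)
The plan is to obtain Corollary \ref{cor:ete-ldim-non-binary} as an immediate consequence of the Natarajan-dimension bound on the end-to-end class established in Theorem \ref{thm:VCete-Ldim(Base)}, combined with the standard realizable multiclass sample-complexity guarantee for a consistent learner, Proposition \ref{prop:fundamental-multi-clas}. Concretely, by Definition \ref{def:learnable-wo-CoT}, $\sete$-learning of $\ete{\cF}{T}$ over $\cX$ is exactly realizable supervised learning of the hypothesis class $\cH := \ete{\cF}{T}\subseteq\Sigma^{\cX}$ with label set $\cY=\Sigma$, and the rule \ref{eq:ete-consistent} is precisely $\Cons_{\cH}$ for this class. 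Theorem \ref{thm:VCete-Ldim(Base)} gives $\Ndim(\ete{\cF}{T})\le 20\,\sfat_1(\cF)\log(T|\Sigma|)$, so applying the realizable part of Proposition \ref{prop:fundamental-multi-clas} with $\cH=\ete{\cF}{T}$ yields
$$ \mete{T}\;\le\;\frac{c}{\eps}\left(\Ndim(\ete{\cF}{T})\,\log\!\left(\frac{|\Sigma|\cdot\Ndim(\ete{\cF}{T})}{\eps}\right)+\log(1/\delta)\right). $$

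The only remaining work is to substitute the dimension bound and simplify the logarithmic factors. Writing $N:=\Ndim(\ete{\cF}{T})\le 20\,\sfat_1(\cF)\log(T|\Sigma|)$, we split $N\log(|\Sigma|N/\eps)\le N\log(|\Sigma|N)+N\log(1/\eps)$ and absorb the lower-order term $N\log(|\Sigma|N)$ into the constant $c$ (it is dominated, up to an absolute constant, by $\sfat_1(\cF)\log(T|\Sigma|)\log(1/\eps)$ in the relevant regime), which produces the stated bound $\mete{T}\le\frac{c}{\eps}\big(\sfat_1(\cF)\log(T|\Sigma|)\log(1/\eps)+\log(1/\delta)\big)$. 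This is the exact analogue, for the sequential-shattering complexity, of how Corollary \ref{cor:ete-VC-non-binary} is derived from Theorem \ref{thm:VCete-VCbase}.

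I expect no real obstacle here: all the substance lives upstream, in Theorem \ref{thm:VCete-Ldim(Base)} (whose proof in turn rests on embedding the ``computation-path'' of each $f\in\cF$ on the $m$ inputs into a binary tree of depth $mk(T+1)$ with $k=\lceil\log_2|\Sigma|\rceil$, and invoking the sequential Sauer--Shelah lemma of \cite{rakhlin2015sequential}, i.e.\ Lemma \ref{thm:seq}), and in the multiclass uniform-convergence bound Proposition \ref{prop:fundamental-multi-clas}. The only point requiring a little care is the bookkeeping of the $\log$ factors when passing from $\Ndim(\ete{\cF}{T})$ to $\sfat_1(\cF)$, which is routine and handled exactly as in the non-binary VC case.
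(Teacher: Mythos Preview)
Your proposal is correct and follows exactly the paper's approach: the paper's proof is a single sentence stating that the result follows by substituting the bound on $\Ndim(\ete{\cF}{T})$ from Theorem~\ref{thm:VCete-Ldim(Base)} into Proposition~\ref{prop:fundamental-multi-clas}. Your additional remarks on the bookkeeping of logarithmic factors are at the same (informal) level of rigor as the paper's treatment of the analogous Corollary~\ref{cor:ete-VC-non-binary}.
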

\begin{proof}[Proof of \Cref{cor:ete-ldim-non-binary}] This follows from substituting the bound on $\Ndim(\ete{\cF}{T})$ from \Cref{thm:VCete-Ldim(Base)} in \Cref{prop:fundamental-multi-clas}.
\end{proof}
\subsection{Proof of Theorem \ref{thm:cot-VC} and its extension for non-binary alphabets}\label{app:B.3}
We now prove the sample complexity of $\sCoT$-learnability given in \Cref{thm:cot-VC}. For every $\bz \in \Sigma^*$, which corresponds to a chain-of-thought of length $T$ on some input $\bx$, we can decompose $\bz=(\bx,\bu)$ where $\bu=\bz[-T:]$ are the last $T$ tokens. Then $\sCoT$-learnability of $\ete{\cF}{T}$ is equivalent to a supervised learning problem (\Cref{app:preliminary}) with the domain $\cX=\Sigma^*$, the label space $\cY=\Sigma^T$ and $\cH=\CoT{\cF}{T}$. The learning rule $\Cons_{\cH}$ in \eqref{eq:general-consistent} is equivalent to the rule \ref{eq:cot-consistent} in this setup. 

Our goal is to apply \Cref{prop:loss-class-gen-bound} on the corresponding loss class. The loss class $\cL^\zo(\cH)$ discussed in \Cref{sec:generalization-bound} exactly corresponds to the loss class from \eqref{eq:loss-class-main} in \Cref{subsec:proof-sk}, also written below.
\begin{equation}\label{eq:loss-class-app}
    \cL(\CoT{\cF}{T}):=\{ \ell_f:\bz \mapsto \ind\{\bz \neq \CoT{f}{T}(\bx)\} \mid f \in \cF\}, \text{ where } \bx=\bz[:-(T+1)] \,.
\end{equation}
Therefore, our goal is to bound the VC dimension of the loss class. 
\begin{theorem}\label{thm:VC(CoT-loss-class)} Consider any finite $\Sigma$, a base class $\cF \subseteq \Sigma^{\Sigma^*}$, and the associated $\cL(\CoT{\cF}{T})$.
\begin{itemize}
    \item For $\Sigma=\{0,1\}$:
    $$ \VCdim(\cL(\CoT{\cF}{T})) \leq 3 \, \VCdim(\cF) \log_2 \left( \frac{2\,T}{\ln 2} \right) \,.$$
    \item For any non-binary finite $\Sigma$:
    $$ \VCdim(\cL(\CoT{\cF}{T})) \leq  3\, \Ndim(\cF) \log_2 \left( \frac{2\, \Ndim(\cF) |\Sigma|^2 T}{e \ln 2} \right) \,.$$
\end{itemize}
\end{theorem}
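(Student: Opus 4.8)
The plan is to bound the growth function $\Gamma_{\cL(\CoT{\cF}{T})}$ of the loss class in \eqref{eq:loss-class-app} by the growth function of the base class on a small prefix set, and then invert Sauer's/Natarajan's Lemma via \Cref{lem:technical-corollary}. The single structural observation doing all the work — and which crucially relies on time-invariance — is that $\ell_f$ evaluated on a chain-of-thought sample is completely determined by the behavior of the \emph{one} base function $f$ on the associated set of prefixes \eqref{eq:prefixes}. We may assume throughout that $\VCdim(\cF)\ge 1$ (resp. $\Ndim(\cF)\ge 1$), since otherwise the class is behaviorally a single function and both sides of the claimed inequality vanish.

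First I would carry out the prefix reduction. Fix any $S_{\sCoT}=(\bz_1,\dots,\bz_m)$ with each $\bz_i\in\Sigma^*$ of length $>T$, and write $\bx_i=\bz_i[:-(T+1)]$. By the definition of $\CoT{f}{T}$, one has $\bz_i=\CoT{f}{T}(\bx_i)$ if and only if $f(\bz_i[:-(t+1)])=\bz_i[-t]$ for every $t=1,\dots,T$, so the bit $\ell_f(\bz_i)=\ind\{\bz_i\neq\CoT{f}{T}(\bx_i)\}$ is a fixed function of the $T$-tuple $(f(\bz_i[:-(t+1)]))_{t=1}^{T}$ compared against the known tokens $(\bz_i[-t])_{t=1}^{T}$. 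Since every string $\bz_i[:-(t+1)]$ lies in $\prefix(S_{\sCoT})$, the map $f\mapsto\ell_f|_{S_{\sCoT}}$ factors through $f\mapsto f|_{\prefix(S_{\sCoT})}$; as $|\prefix(S_{\sCoT})|\le mT$, this gives $|\cL(\CoT{\cF}{T})(S_{\sCoT})|\le|\cF(\prefix(S_{\sCoT}))|\le\Gamma_{\cF}(mT)$, and maximizing over $S_{\sCoT}$ yields $\Gamma_{\cL(\CoT{\cF}{T})}(m)\le\Gamma_{\cF}(mT)$.

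Next I would invert this. In the binary case, $\cL(\CoT{\cF}{T})$ is $\{0,1\}$-valued, so $\VCdim(\cL(\CoT{\cF}{T}))$ is at most any $m$ with $2^m>\Gamma_{\cL(\CoT{\cF}{T})}(m)$; by the reduction and Sauer's Lemma (\Cref{lem:sauer'slemma}) it suffices that $m>\VCdim(\cF)\log_2(emT/\VCdim(\cF))$, and \Cref{lem:technical-corollary} with $N=\VCdim(\cF)$, $M=T/\VCdim(\cF)$ (so $NM=T\ge1$) produces such an $m$ with $m\le 3\VCdim(\cF)\log_2(2T/\ln 2)$. For general finite $\Sigma$ the argument is identical, except that $\cF$ is $\Sigma$-valued so one invokes Natarajan's Lemma (\Cref{lem:Natarajan's lemma}) to get $\Gamma_{\cL(\CoT{\cF}{T})}(m)\le(|\Sigma|^2mT)^{\Ndim(\cF)}$; then it suffices that $m>\Ndim(\cF)\log_2(|\Sigma|^2mT)$, and \Cref{lem:technical-corollary} with $N=\Ndim(\cF)$, $M=|\Sigma|^2T/e$ gives $m\le 3\Ndim(\cF)\log_2(2\Ndim(\cF)|\Sigma|^2T/(e\ln 2))$. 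There is essentially no obstacle here: the one point requiring care is to not conflate the binary-valuedness of the loss class (which is what $\VCdim(\cL(\cdot))$ measures, hence the $\log_2$ inversion against $2^m$) with the $\Sigma$-valuedness of $\cF$ (which forces the Natarajan growth bound in the non-binary case), together with the routine bookkeeping inside \Cref{lem:technical-corollary}.
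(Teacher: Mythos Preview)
Your proposal is correct and follows essentially the same route as the paper's proof: the prefix reduction $\Gamma_{\cL(\CoT{\cF}{T})}(m)\le\Gamma_{\cF}(mT)$ followed by Sauer's/Natarajan's Lemma and the inversion via \Cref{lem:technical-corollary}, with the identical choices of $N$ and $M$ in both cases. The paper phrases the prefix step contrapositively (distinct loss vectors force distinct base-class behavior on some prefix) whereas you state the equivalence directly, but this is purely cosmetic.
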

\begin{proof}[Proof of \Cref{thm:VC(CoT-loss-class)}] The main idea is to bound the growth function of $\Gamma_{\cL(\CoT{\cF}{T})}(m)$ in terms of the growth function of $\Gamma_\cF(m)$, and then use Sauer's Lemma (or Natarajan Lemma for non-binary $\Sigma$). Towards this, consider any $S=\left((\bx_1,\bu_1),\dots,(\bx_m,\bu_m) \right) \in (\Sigma^*\times \Sigma^T)^m$. From this, we can consider the set $\prefix(S)=\left(\bp_{(i,t)}: 1 \leq i \leq m, 0 \leq t \leq T-1\right)$, of all the prefixes where 
\begin{equation}\label{eq:prefixs-labeled-out}
    \bp_{(i,t)}= [\bx_i, \bu_1[\,1\,],\dots,\bu_1[\,t\,]] \in \Sigma^* \times \Sigma^t\,.
\end{equation}
We then have
\begin{align*}
    \Gamma_{\cL(\CoT{\cF}{T})}(S) &= |\cL(\CoT{\cF}{T})(S) |=\{ (\ell_f(\bx_1,\bu_1),\dots,\ell_f(\bx_m,\bu_m)): f \in \cF \} | \tag{By definition}\\
    & \leq \, |\{ \left( f(\bp_{(1,0)}),\dots,f(\bp_{(m,T-1)})  \right) : f \in \cF \}| \nonumber\\
    & = |\cF(\prefix(S))| \leq \, \Gamma_{\cF}(m T) \,.\tag{As $|\prefix(S)|=mT$}
\end{align*}
Here the only inequality followed from this critical observation: for any two $f,g \in \cF$ such that
$$(\ell_{f}(\bx_1,\bu_1),\dots,\ell_{f}(\bx_m,\bu_m)) \neq (\ell_g(\bx_1,\bu_1),\dots,\ell_g(\bx_m,\bu_m)),$$
there exists $ 1 \leq i^* \leq m,\,  0 \leq t^* \leq T-1$ such that $f(\bp_{(i^*,t^*)}) \neq g(\bp_{(i^*,t^*)})$. Therefore, the number of behaviors of the loss class on $S$, can be bounded by the number of behaviors of the function class $\cF$ on the set of all prefixes, i.e. $|\cL(\CoT{\cF}{T})| \leq |\cF(\prefix(S))|$. As the argument holds for any $|S|=m$, we have
$$\Gamma_{\cL(\CoT{\cF}{T})}(m) \leq \Gamma_{\cF}(m \cdot T)\,.$$ 
We now break into each case and bound $\VCdim(\cL(\CoT{\cF}{T}))$.
\begin{itemize}
    \item Binary $\Sigma=\{0,1\}$: In this case, using Sauer's lemma (Lemma \ref{lem:sauer'slemma}), for any $m \geq \VCdim(\cF)$,
    $$\Gamma_{\cL(\CoT{\cF}{T})}(m) \leq \Gamma_{\cF}(m \cdot T) \leq \left( \frac{emT}{\VCdim(\cF)} \right)^{\VCdim(\cF)} \,.$$ 
    Therefore, $\VCdim(\cL(\CoT{\cF}{T}))$ can be bounded by $m$ such that 
    $$m > \VCdim(\cF) \log_2 \left( \frac{emT}{\VCdim(\cF)}\right) \quad \text{and}\quad m \geq \VCdim(\cF).$$ Using Lemma \ref{lem:technical-corollary} with $N=\VCdim(\cF)$ and $M=T/\VCdim(\cF)$, we directly obtain that there exists such $m \in \naturals_{+}$ such that
    $$ \VCdim(\cL(\CoT{\cF}{T})) \leq m \leq 3 \cdot \VCdim(\cF) \log_2 \left(\frac{2T}{\ln 2}\right)\,.$$
    \item For any non-binary finite $\Sigma:$ Using Natarajan's lemma (\Cref{lem:Natarajan's lemma}) for any $m \in \naturals_{+} $,
    $$\Gamma_{\cL(\CoT{\cF}{T})}(m) \leq \Gamma_{\cF}(m \cdot T) \leq \left(|\Sigma|^2 \cdot m\cdot T \right)^{\Ndim(\cF)} \,.$$ 
    Again, $\VCdim(\cL(\CoT{\cF}{T})) \leq m$ for any $m$ s.t. $m>  \Ndim(\cF) \log_2 \left(|\Sigma|^2 \cdot m \cdot T \right).$
    Applying Lemma \ref{lem:technical-corollary} with $N=\Ndim(\cF)$ and $M= \frac{|\Sigma|^2 T}{e}$, we obtain that there exists such $m\in \naturals_{+}$ such that 
    $$ \VCdim(\cL(\CoT{\cF}{T}))\leq m \leq 3 \Ndim(\cF) \log_2 \left( \frac{2\, \Ndim(\cF) |\Sigma|^2 T}{e \ln 2} \right). $$
\end{itemize}
\end{proof}
This bound on $\VCdim(\cL(\CoT{\cF}{T}))$ implies the desired sample complexity result. 
\begin{proof}[Proof of \Cref{thm:cot-VC}] Recall the discussion of the equivalence $\sCoT$-learnability problem with a supervised learning problem (\Cref{sec:generalization-bound}) with $\cH=\CoT{\cF}{T}$. By plugging the bound from \Cref{thm:VC(CoT-loss-class)} on $\VCdim(\cL(\ete{\cF}{T}))$  in \Cref{prop:loss-class-gen-bound} as $\cL(\CoT{\cF}{T})$ corresponds to $\cL^{\zo}(\cH)$, the theorem follows.
\end{proof}
We also have the following non-binary but finite $\Sigma$ corollary for $\sCoT$-learnability. The proof again follows from combining \Cref{thm:VC(CoT-loss-class)} and \Cref{prop:loss-class-gen-bound}.
\begin{corollary}\label{cor:cot-VC-non-binary} There exists a universal constant $c>0$ such that for any base class $\cF \subseteq \Sigma^{\Sigma^*}$ over a finite $\Sigma$ and generation length $T\in \naturals_{+}$, the class $\ete{\cF}{T}$ is $\sCoT$-learnable using \ref{eq:cot-consistent} with 
    $$\mcot{T} \leq \frac{c}{\eps} \left(\Ndim(\cF) \log\left(T\,|\Sigma|\, \Ndim(\cF)\right) \log\left(\frac{1}{\eps}\right) + \log\left(\frac{1}{\delta} \right) \right)\,.$$
\end{corollary}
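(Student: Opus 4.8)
The plan is to reduce $\sCoT$-learning of $\ete{\cF}{T}$ to a standard realizable supervised learning problem, exactly as in the binary case of \Cref{thm:cot-VC}, and then apply the loss-class generalization bound \Cref{prop:loss-class-gen-bound}. First I would observe that any $\bz\in\Sigma^*$ arising as a length-$T$ chain of thought on an input $\bx$ splits as $\bz=(\bx,\bu)$ with $\bu=\bz[-T:]\in\Sigma^T$, so that $\sCoT$-learnability of $\ete{\cF}{T}$ is precisely the supervised learning problem over domain $\cX=\Sigma^*$, label set $\cY=\Sigma^T$, and hypothesis class $\cH=\CoT{\cF}{T}$, with the rule \ref{eq:cot-consistent} being $\Cons_{\cH}$. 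Since any $\hat f$ with $\CoT{\hat f}{T}(\bx)=\CoT{f_*}{T}(\bx)$ also agrees on the last token, we have $L_{\cD,f_*}^{\zo}(\ete{\hat f}{T})\le\E_{\bz}[\ell_{\hat f}(\bz)]$, so it suffices to control the population $0$–$1$ loss of $\ell_{\hat f}$ over the loss class $\cL(\CoT{\cF}{T})$ of \eqref{eq:loss-class-app}, which is exactly the binary-valued loss class $\cL^{\zo}(\cH)$ to which \Cref{prop:loss-class-gen-bound} applies.

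Next I would invoke \Cref{thm:VC(CoT-loss-class)}, which for a general finite alphabet gives $\VCdim(\cL(\CoT{\cF}{T}))\le 3\,\Ndim(\cF)\log_2\!\big(2\Ndim(\cF)|\Sigma|^2T/(e\ln 2)\big)=O\!\big(\Ndim(\cF)\log(T|\Sigma|\Ndim(\cF))\big)$. Substituting this into \Cref{prop:loss-class-gen-bound} shows that, with probability at least $1-\delta$, the output of \ref{eq:cot-consistent} has population error at most $\eps$ as soon as $m\ge \frac{c}{\eps}\big(\VCdim(\cL^{\zo}(\cH))\log(1/\eps)+\log(1/\delta)\big)$, i.e. $\mcot{T}\le \frac{c}{\eps}\big(\Ndim(\cF)\log(T|\Sigma|\Ndim(\cF))\log(1/\eps)+\log(1/\delta)\big)$, which is the claimed bound.

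The only substantive ingredient is \Cref{thm:VC(CoT-loss-class)} itself, which I would prove by the prefix-counting argument behind the binary sketch. For any $S=((\bx_1,\bu_1),\dots,(\bx_m,\bu_m))\in(\Sigma^*\times\Sigma^T)^m$, form the $mT$ prefixes $\bp_{(i,t)}=[\bx_i,\bu_i[\,1\,],\dots,\bu_i[\,t\,]]$ for $1\le i\le m$, $0\le t\le T-1$; the key point — and this is where time-invariance enters — is that if $f,g\in\cF$ induce different behaviors of $\ell_f,\ell_g$ on $S$ then $f$ and $g$ already disagree on some $\bp_{(i^*,t^*)}$, so $\Gamma_{\cL(\CoT{\cF}{T})}(m)\le\Gamma_{\cF}(mT)$. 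Applying Natarajan's Lemma (\Cref{lem:Natarajan's lemma}) gives $\Gamma_{\cF}(mT)\le(|\Sigma|^2 mT)^{\Ndim(\cF)}$, and \Cref{lem:technical-corollary} with $N=\Ndim(\cF)$ and $M=|\Sigma|^2T/e$ then extracts the smallest $m$ with $2^m>\Gamma_{\cL(\CoT{\cF}{T})}(m)$, yielding the stated VC bound — the extra $\log|\Sigma|$ and $\log\Ndim(\cF)$ factors relative to the binary case come entirely from Natarajan's Lemma. I do not expect a real obstacle here; the only care needed is to keep the loss class treated as a \emph{binary}-valued class over $\cZ=\cX\times\cY$, so that ordinary VC theory (not Natarajan theory) governs it, and to push the bookkeeping constants cleanly through \Cref{lem:technical-corollary}.
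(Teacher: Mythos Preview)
Your proposal is correct and follows essentially the same route as the paper: the paper's proof is literally ``combine \Cref{thm:VC(CoT-loss-class)} and \Cref{prop:loss-class-gen-bound},'' and you do exactly that, additionally spelling out the prefix-counting and Natarajan-lemma argument behind \Cref{thm:VC(CoT-loss-class)} as the paper does in its separate proof. Your remark that the loss class is binary-valued (so ordinary VC theory applies to it even though $\Sigma$ is non-binary) is the right point of care and matches the paper's treatment.
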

\subsection{Computational Complexity}\label{app:B.4}
We finally show that a tractable $\Cons_{\cF}$ oracle implies a computationally tractable $\sCoT$ learnability of $\ete{\cF}{T}$ via a chain-of-thought generated by $\cF$. We first start with the proof of \Cref{thm:ConsF-gives-CoT-Cons-implementation} which reduces $\sCoT$-learnability to a consistency problem on the base class $\cF$.
\begin{proof}[Proof of \Cref{thm:ConsF-gives-CoT-Cons-implementation}] For any CoT dataset $S_{\sCoT}=(\bz_1,\dots,\bz_m)\in (\Sigma^*\times \Sigma^T)^m$, one can first create the dataset of a prefix and the associated next-token pairs for all the examples. Formally, we decompose $\bz_i=(\bx_i,\bu_i)$ again and consider a dataset $\Tilde{S}=((\bp_{(i,t)},y_{i,t}): i \in [m], 0 \leq t \leq (T-1))$ of (prefix,next-token) pairs where
\begin{equation*}\label{eq:TildeS-from-S}
    \bp_{(i,t)}= [\bx_i, \bu_i[\,1\,],\dots,\bu_i[\,t\,]] \in \Sigma^* \times \Sigma^t\,, \quad y_{(i,t)}=\bu_{i}[t+1]\in \Sigma\,, \quad \text{ for } i \in [m], 0 \leq i \leq (T-1)\,.
\end{equation*}
It is easy to observe that if we find a consistent $\hat{f}\in \cF$ on $\Tilde{S}$ (in the sense of \ref{eq:cons-f}), then we also have that this predictor is consistent with the entire chain-of-thought, i.e. $\CoT{\hat{f}}{T}(\bx_i)=\bz_i$ for all $i \in [m]$. If the original inputs $\bx_i$'s are of length at most $n$, then the prefixes $p_{(i,t)}$ are of length at most $n+T$. Moreover, we have $|\Tilde{S}|=\Tilde{m} \leq m \cdot T$ and thus, one such call of \ref{eq:cons-f} on $\Tilde{S}$ suffices. The initial input processing of creating $\Tilde{S}$ from $S$, and also finally returning $\ete{\hat{f}}{T}$ can be done in additional time $O(\Tilde{m})$, concluding the implementation of \ref{eq:cot-consistent}.
\end{proof}

\begin{proof}[Proof of \Cref{cor:tract-Cons-oracle-tract-CoT-lrn}] This is a direct corollary of \Cref{thm:ConsF-gives-CoT-Cons-implementation,thm:cot-VC}. If $\VCdim(\cF_d)=\poly(d)$, then implementing \ref{eq:cot-consistent} on a CoT training set $S_{\sCoT}$ of size $$|S_{\sCoT}|=O(\eps^{\scriptscriptstyle{-1}} (\VCdim(\cF_d)\log T \log \eps^{\scriptscriptstyle{-1}}+\log \delta^{\scriptscriptstyle{-1}}))=\poly(d,T,\eps^{\scriptscriptstyle{-1}},\delta^{\scriptscriptstyle{-1}})$$ 
suffices for $\sCoT$-learnability of $\ete{\cF_d}{T}$ by \Cref{thm:cot-VC}. Moreover, by \Cref{thm:ConsF-gives-CoT-Cons-implementation}, we can implement this by calling $\Cons_{\cF_d}$ on a sample set $\Tilde{S}$ of size at most $|\Tilde{S}|=O(T \cdot |S_{\sCoT}|)=\poly(d,T,\eps^{\scriptscriptstyle{-1}},\log \delta^{\scriptscriptstyle{-1}})$. Finally, for input distributions that are supported on sequences of length at most $n$, the description length of $S_{\sCoT}$ is $\poly(n+T,d,|S_{\sCoT}|)$. Here we are subsuming the $\log |\Sigma|$ dependence to be already captured in the size parameter as it is inherent to the base class $\cF_d$. Thus even the description length of the set $\Tilde{S}$ is $\poly(n+T,d,|\Tilde{S}|)=\poly(n,d,T,\eps^{\scriptscriptstyle{-1}}, \log \delta^{\scriptscriptstyle{-1}})$. As such, as long as $\Cons_{\cF_d}$ is implementable in time polynomial in its input and the size parameter $d$, we have that $\ete{\cF}{T}$ is $\sCoT$-learnable in time $\poly(n,d,T,\eps^{\scriptscriptstyle{-1}}, \log \delta^{\scriptscriptstyle{-1}})$.
\end{proof}

\section{Proofs from Section \ref{sec:linear}}\label{app:proofs-for-linear}
We first show the sample complexity results for $\sete$ and $\sCoT$ learnability.

\begin{proof}[Proof of \Cref{lem:cardinality-of-halspaces}] As noted already, this follows by Sauer's lemma. We know that $\VCdim(\cFlin)$ is at most $(d+1)$. Therefore, the cardinality of $\cFlin$ on the hypercube of $\{0,1\}^d$ of size $2^d$ is:
    $$|\cFlin| \leq \Gamma_{\cFlin}(2^d) \leq (e 2^d)^{\VCdim(\cFlin)} \leq 2^{O(d^2)} \,.$$ 
\end{proof}
\begin{proof}[Proof of \Cref{cor:linear-th-VCbound}]
    The bound simply follows from \Cref{thm:cardinality-based,thm:ete-VC,thm:cot-VC} after substituting $\VCdim(\cFlin)\leq (d+1)$ and $\log |\cFlin|=O(d^2)$ by \Cref{lem:cardinality-of-halspaces}.  
\end{proof}
We now show the computational tractability of the $\sCoT$-learnability of $\ete{\cFlin}{T}$. 
\begin{proof}[Proof of \Cref{thm:easiness-with-CoT}] This directly follows from the facts (i) $\Cons_{\cFlin}(S)$ is implementable in time polynomial in $d$ and the length of the input $S$ (ii) noting that $\VCdim(\cFlin) \leq (d+1)$. This is a corollary of \Cref{cor:tract-Cons-oracle-tract-CoT-lrn}.

More specifically, we are implementing \ref{eq:cot-consistent} by forming the following LP feasibility problem. For a given $S_{\sCoT}$, consider the set of all prefixes and next token pairs mentioned in Eq.~\eqref{eq:TildeS-from-S}:
\begin{equation*}
    \bp_{(i,t)}= [\bx_i, \bu_i[\,1\,],\dots,\bu_i[\,t\,]] \in \Sigma^* \times \Sigma^t\,, \quad y_{(i,t)}=\bu_{i}[t+1]\in \Sigma\,, \quad \text{ for } i \in [m], 0 \leq i \leq (T-1)\,.
\end{equation*}
Then we have to solve the following linear program in a variable parameter $\bw \in \R^d$ with the following constraints.
\noindent For $1 \leq i \leq m, 0 \leq t \leq (T-1)$
\begin{align}
    \<\bw,\bp_{(i,t)}\> \geq \, & 0, \quad \text{if $\bby_{(i,t)}=1$;} \nonumber\\ 
    \<\bw,\bp_{(i,t)}\> < \, & 0, \quad \text{if $\bby_{(i,t)}=0$.} \tag{LP-Feasibility}\label{eq:linear-p}
\end{align}\label{eq:linear-program}
  Clearly there are $|S_{\sCoT}| \cdot T$ many constraints, and we are implementing \ref{eq:cot-consistent}, and therefore choosing $|S_{\sCoT}|=\poly(n,d,T,1/\eps, \log(1/\delta))$ suffices by \Cref{cor:linear-th-VCbound}. The runtime complexity of solving \eqref{eq:linear-p} with $d$ variables and $|S_{\sCoT}| \times T$ constraints is bounded by $\poly(n,d,T,1/\eps, \log(1/\delta))$; the theorem follows.
\end{proof}
\subsection{End-to-End Learning is Hard for Iterated Linear Thresholds}\label{app:ete-of-LT-hard}
We now return to the computational intractability of $\sete$-learning of $\ete{\cFlin}{T}$; the main technical result of this section. We begin with the description of the class of threshold circuits that we will be using in our reduction.
\paragraph{Bounded Depth and Size Linear Threshold Circuit.} Consider the input $\bx\in \{0,1\}^n$. A linear threshold circuit $C:\{0,1\}^n\to \{0,1\}$ is a computational model represented by a connected directed acyclic graph (DAG), $G(V,E)$ with 
\begin{itemize}
    \item $n$ input nodes $V_{\text{in}}=\{v_{\text{in},1},\dots, v_{\text{in},n}\}$ (the only nodes in $V$ with no incoming edges) that correspond to the input coordinates $(x_1,\dots,x_n)$,
    \item One output node $v_{\text{out}}$ (the only node with no outgoing edges),
    \item A weight function $w:E \to \R$.
\end{itemize}
Each node has a value $\val: V \to \{0,1\}$ associated with it computed recursively as follows:
\begin{equation}\label{eq:value-of-bounded-depth-circuit}
    \val(v_{\text{in},k})=x_k, \text{ for  } k \in [n], \text{ and for any } v \in V \setminus V_{\text{in}}, \quad \val(v)=\th\left(\sum_{(u,v) \in E} w{(u,v)} \val(u) \right)
\end{equation}
The final output of the circuit is $C(\bx)=\val(v_{\text{out}})$. The \textit{depth} of the circuit is the length of the longest path from an input node in $V_{\text{in}}$ to the output node $v_{\text{out}}$. The \textit{size} of the circuit is $|V \setminus V_{\text{in}}|$.

For $n,L,s\in \naturals_{+}$, consider the hypothesis class threshold circuits over $n$ input variables, size $s$, and depth at most $L$. This class is referred in \Cref{hassum:c-depth-circuit}.
We first prove the final hardness results in light of \Cref{lem:LT-expressivity}.
\begin{proof}[Proof of \Cref{thm:hardness-of-ete-linear}]
 Consider the class threshold circuits of depth $L$ and size $p(n)$ from Hardness \Cref{hassum:c-depth-circuit}. By our expressivity result \Cref{lem:LT-expressivity}, every such circuit $C$ can be expressed as $\ete{f_{\bw}}{T}$ with some $\bw \in \R^d$ with $d \leq 2(p(n)+2)^L(n+1)$ and $T \leq (p(n)+2)^L(n+1)$, up to some fixed input transformation that runs in time $O(n')=O\left(2(p(n)+2)^L(n+1) \right)$. All $d,n',T$ are bounded by fixed polynomials in $n$. Therefore, if $\cF_{\scriptscriptstyle{d,\lin}}$ is learnable in time $\poly(n,T,d)$ up to even constants $\eps,\delta>0$, we can also learn the class from \Cref{hassum:c-depth-circuit} in time $\poly(n)$ through this reduction, contradicting \Cref{hassum:c-depth-circuit}.
\end{proof}
We now prove our main expressivity result of this section.
\begin{proof}[Proof of \Cref{lem:LT-expressivity}]
   We begin our proof with some notation. In the graph representation on any circuit $C$ (defined in Section \ref{app:ete-of-LT-hard}), we can partition the internal nodes in $V \setminus V_{\text{in}}$ into different layers $V_{1} \cup V_2 \cup \dots \cup V_L$. The layer number of any internal node $v$ is defined as the length of the longest path from some input node in $V_{\text{in}}$ to $v$. We now number the nodes arbitrarily per layer: the nodes in $V_l$ can be numbered $v^{(l)}_1, \dots, v^{(l)}_s$, for $1 \leq l \leq L$. Without loss of generality, we may assume that $|V_l| =s$ for $1 \leq l \leq L$ as we can always add nodes to the layer with incoming and outgoing edges having weights zero. And, the final output node $v_{\text{out}}=v_s^{(L)} \in V_L$. 
    Also, w.l.o.g., we consider any node $v_i^{(l)}$ in $V_l$ is connected with all the nodes $V_{\text{in}} \cup V_1 \cup \dots \cup V_{l-1}$, as we can add the edges with weight 0, without affecting the output of the circuit. As a consequence, we can denote all the weights associated with incoming edges as a vector $\bw_{li} \in \R^{p_{l}}$ where $p_{l}=n+(l-1)s$ and $(l,i) \in [L]\times [s]$. The weight coming from $j^\mathrm{th}$ node from the $\ell^{\mathrm{th}}$ layer in $\bw_{li}$ is denoted by $\bw_{li}(\ell,j)\in \R$. Throughout the proof, rather than explicitly defining the coordinate of vectors, we will defined it by listing out its coordinates from left to right. 
    
    Without loss of generality, we will assume that the incoming weight associated to the last node of the previous layer is always 0. Formally, we have $\bw_{1i}(0,n)=0$ and for $l\geq 2$, we have $\bw_{li}(l-1,s)=0$. This can be easily ensured by adding one dummy node per layer in the circuit, and thus, by replacing $n$ with $n+1$ and $s$ with $(s+1)$ in the final bounds on $T,n',d$, which we will do towards the end of the proof. For now, we will proceed assuming this as it simplifies the presentation of our construction. 
    
     We now specify our feature map $\phi:\{0,1\}^n \to \{0,1\}^{n'}$. For any $\bx\in \{0,1\}^n$, the feature $\phi(\bx) \in \{0,1\}^{n'}$ written as a string takes the following form 
    \begin{equation}\label{eq:feature-map}
       \phi(\bx)= (1\underbrace{00\dots \dots00}_{(T-1) \text{ bits}}\underbrace{\bx}_{n \text{ bits}})
    \end{equation}
    where $T$ is the number of end-to-end steps to be taken which will be specified later in terms $(n,s,L)$. Roughly speaking, we want to construct a linear predictor whose chain-of-thoughts on $\phi(\bx)$ when applied iteratively, produces the output of each internal node sequentially (separated by some number of zeros), i.e. 
    $$\phi(\bx),0, \dots, 0,y_{11}, 0\dots 0,y_{12},0,\dots\dots, 0, y_{L1}\,.$$
    The idea is to 
    combine
    all the weights from the circuit into one linear threshold. However, to ensure that the weights associated with the previous output do not contribute when outputting the other node, we need that the 0s are outputted between any two relevant bits in CoT, so that the weights of the previous output align with 0s exactly. This means that the weight vector of the next layer must be now expanded into a higher dimension to adjust for the padded 0s.

    Formally, we will embed each $\bw_{li}\in \R^{p_l}$ into a vector $\Tilde{\bw}_{li} \in \R^{\Tilde{p}_l}$ by padding extra zeros between the coordinates of $\bw_{li}$ (to be specified how). The new dimension $\Tilde{p}_l \geq p_l$ defined recursively:
    \begin{equation}\label{eq:recursive-seq}
        \Tilde{p}_1=p_1=n , \quad \text{ and for $l>1$, define } \Tilde{p}_l=(s+1)\Tilde{p}_{l-1}\,.
    \end{equation}
    We will create a vector $\bv_0 = \bzero^{n-1}$ and for $l \geq 1$, we will form a vector $\bv_l$, which is a concatenation of the vectors $\Tilde{\bw}_{l1}, \dots, \Tilde{\bw}_{ls} $ in the reverse order:
    \begin{equation}\label{eq:bv-l-description}
        \bv_{l}=[\Tilde{\bw}_{ls},\dots, \Tilde{\bw}_{l1}], \quad \text{ and therefore } |\bv_l|=s \, \Tilde{p}_l\,.
    \end{equation}
Here is a simple claim that specifies the size of each of these linear predictors $\bv_l$.  
    \begin{claim}\label{clm:recursive-sequence}
        For any $l\geq 1$, we have $\Tilde{p}_l = (s+1)^{l-1} n$ and $\sum_{\ell=0}^{l-1} |\bv_\ell|=(s+1)^{l-1} \cdot n-1= \Tilde{p}_{l}-1\,.$ 
    \end{claim}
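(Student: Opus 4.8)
The plan is a two-line induction followed by a geometric-series evaluation; there is no real obstacle here, only bookkeeping. First I would establish the closed form $\Tilde{p}_l = (s+1)^{l-1} n$ by induction on $l$. The base case $l=1$ is exactly $\Tilde{p}_1 = p_1 = n$ from \eqref{eq:recursive-seq}, and for $l>1$ the recursion $\Tilde{p}_l=(s+1)\Tilde{p}_{l-1}$ together with the inductive hypothesis gives $\Tilde{p}_l = (s+1)\cdot(s+1)^{l-2} n = (s+1)^{l-1} n$.

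Next I would split the sum $\sum_{\ell=0}^{l-1}|\bv_\ell|$ into the $\ell=0$ term and the terms with $\ell\geq 1$. By the definition $\bv_0=\bzero^{n-1}$ we have $|\bv_0|=n-1$, while \eqref{eq:bv-l-description} gives $|\bv_\ell|=s\,\Tilde{p}_\ell$ for $\ell\geq 1$. Substituting the closed form just derived,
\[
\sum_{\ell=0}^{l-1}|\bv_\ell| = (n-1) + s\,n\sum_{\ell=1}^{l-1}(s+1)^{\ell-1} = (n-1) + s\,n\cdot\frac{(s+1)^{l-1}-1}{s} = (n-1) + n\bigl((s+1)^{l-1}-1\bigr),
\]
which simplifies to $n(s+1)^{l-1}-1$, i.e. exactly $(s+1)^{l-1}\cdot n-1 = \Tilde{p}_l-1$ by the closed form. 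For $l=1$ the sum degenerates to $|\bv_0|=n-1=\Tilde{p}_1-1$, which matches.

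The only point requiring a little care is the off-by-one in $|\bv_0|=n-1$ (rather than $n$): this is precisely the dummy-node convention introduced earlier (the assumption $\bw_{1i}(0,n)=0$ and $\bw_{li}(l-1,s)=0$), and it is exactly what makes the telescoping land on $\Tilde{p}_l-1$ instead of $\Tilde{p}_l$. This in turn is what makes the padding lengths in the feature map \eqref{eq:feature-map} and in \eqref{eq:bv-l-description} align so that the leading $1$ of $\phi(\bx)$ is never paired with a nonzero weight coordinate when generating the node values. I would flag this alignment explicitly so the reader sees why the claim is stated with the $-1$.
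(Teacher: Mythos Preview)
Your proof is correct and follows essentially the same approach as the paper. The paper handles the sum $\sum_{\ell=0}^{l-1}|\bv_\ell|$ by a one-line induction (peeling off $|\bv_{l-1}|=s\Tilde p_{l-1}$ and using the inductive hypothesis $\sum_{\ell=0}^{l-2}|\bv_\ell|=\Tilde p_{l-1}-1$), whereas you compute the same sum directly via the geometric-series formula; these are equivalent elementary computations. One small remark: your closing paragraph attributes the $|\bv_0|=n-1$ to the dummy-node convention $\bw_{li}(l-1,s)=0$, but in the paper $\bv_0=\bzero^{n-1}$ is simply a separate definition chosen so that the alignment in \eqref{eq:intial-alignment} works; the dummy-node convention is used later (in Case~2 of Claim~\ref{clm:Tilde{bw}-size-and-the-final-prediction}) and is unrelated to the length of $\bv_0$.
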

So roughly speaking, the size of each predictor $\Tilde{\bw}_{li} \in \R^{\Tilde{p}_l}$ in layer $l$ that we embedded our $\bw_{li}\in \R^{p_l}$ into, is one more than the length of the entire concatenation of vectors $\bv_{l-1},\dots, \bv_0$. And the vector $\bv_l$ is a concatenation of $s$ such vectors, giving us $\Tilde{p}_l = O(s^l)$. See \Cref{fig:hardness}.


\begin{figure}[t]
    \centering
    \fbox{\includegraphics[scale=0.8]{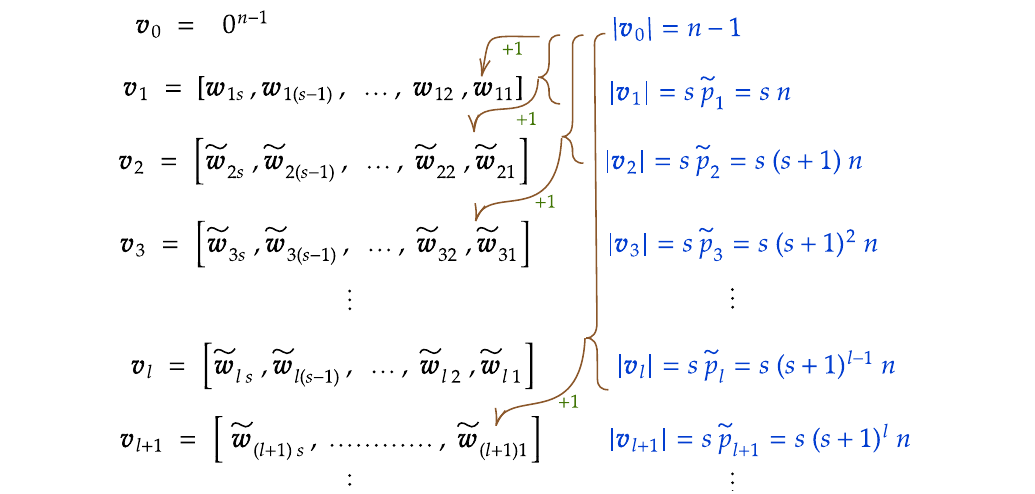}}
   \caption{The figure provides an illustrative summary of the construction so far and the sizes of the predictors, formalized in \Cref{clm:recursive-sequence}.}
     \label{fig:hardness}
\end{figure}

It is now time to specify $T$ and the final linear predictor $\bw$. The number of end-to-end steps $T$ is given by
\begin{equation}\label{eq:bound-on-T}
    T :=\sum_{\ell=1}^{L} |\bv_\ell| = \Tilde{p}_{L+1}-1-|\bv_0|= (s+1)^L \cdot n-n \,,
\end{equation}
where we used Claim \ref{clm:recursive-sequence} in the second equality. Eq.~\eqref{eq:feature-map} gives us
\begin{equation}\label{eq:bound-on-m}
    n'=|\phi(\bx)|=T+n = n+\sum_{\ell=1}^{L} |\bv_\ell|=(s+1)^L \cdot n\,.
\end{equation}
 Having constructed $\bv_0,\bv_1, \dots, \bv_L$, the final predictor is then the concatenation of all these predictors along with another $\bv \in \R^T$ (to be specified later) as follows
 \begin{equation}\label{eq:bw-the-final-predictor}
     \bw=[\bv,\bv_L,\bv_{L-1},\dots,\bv_2,\bv_1, \bv_0].
 \end{equation}
Hence, at the start the linear predictor $\bw$ and $\phi(\bx)$ aligns as follows.
\begin{align}
\bw &= [\text{------} \, \bv \, \text{------},  \underbrace{\bv_L, \bv_{L-1}, \dots, \bv_2, \bv_1}_{T \text{ coordinates}}\,\,,0^{n-1} ] \nonumber\\
\phi(\bx) &= \quad \quad \quad \quad \hspace{2pt} [1 ,  \underbrace{0,0,0,\dots \dots,0,0,0}_{(T-1) \text{ times}}, \underbrace{\text{--}\,\bx \text{---}\,}_{n \text{ bits}}]& \label{eq:intial-alignment}
\end{align}
As $\bv \in \R^T$, we have 
\begin{equation}\label{eq:bound-on-d}
    d:=|\bw|=T+\sum_{\ell=0}^{L} |\bv_\ell|\leq 2 \cdot (s+1)^L \cdot n\,,
\end{equation}
where we used Claim \ref{clm:recursive-sequence} and Eq.~\eqref{eq:bound-on-T}. Overall, our constructions so far satisfies the bounds on $T,n',d$ according to Lemma \ref{lem:LT-expressivity}. Therefore, it remains to show the final construction of $\Tilde{\bw}_{li} \in \R^{\Tilde{p}_l}$ and $\bv \in \R^T$, and that this construction achieves the desired end-to-end computation of the circuit. To this end, we first specify the set of time-steps at which we output a gate in the circuit. Let $t_{li}$ for $l \in [L]$ and $i\in [s]$ be the time when we are going to be writing $y_{li}$ (i.e. output of the $i^\mathrm{th}$ gate from the layer ${l}$). We define these time-steps recursively as 
\begin{align}
    t_{1i}=n \cdot i, \text{ for $i\in [s]$ } \quad \text{and}\quad t_{li}=t_{(l-1)s} + i \cdot \Tilde{p}_l \;\;\text{ for $l>1, i\in [s]$ }\,. \label{eq:t_lis}
\end{align}
Let $\cI:=\{t_{li}: l \in [L], i\in [s]\}$ be the collection of these indices. It is straight-forward to verify that $t_{Ls}=\sum_{\ell=1}^{L} s \cdot  \Tilde{p}_\ell=\sum_{\ell=1}^{L} |\bv_\ell|=T\,$ by Eq. \eqref{eq:bound-on-T}. Therefore, the final output of the circuit $y_{Ls}$ will be computed at the $T^\mathrm{th}$ step (to be shown).

We would want that the iterated predictor on the time steps $t \in [T]\setminus \cI$ when applied iteratively outputs zero. By the construction of $\cI$, this corresponds to outputting exactly $\Tilde{p}_l-1$ zeros before outputting $y_{li}$ for any $l \in [L], i \in [s]$. So the desired chain-of-thought is of the following form
\begin{equation}\label{eq:desired-CoT}
    \phi(\bx), \underbrace{0 \dots 0}_{(n-1) \text{ bits }},y_{11}, \dots \dots, y_{l(i-1)},\underbrace{0 \dots 0}_{\Tilde{p}_l-1 \text{ bits }},y_{li}, \dots \dots \dots \dots,y_{L(s-1)}, \underbrace{0 \dots 0}_{\Tilde{p}_L-1 \text{ bits }} y_{Ls}\,.
\end{equation}
This is exactly where the $1$ at the beginning of the feature map $\phi$ in Eq.~\eqref{eq:feature-map} plays the role of ``positional encoding'' together with the vector $\bv\in \R^T$. The main idea is to create a vector $\bv \in \{-B,0\}^T$ where $-B$ is a ``very large'' negative value to be specified such that for every $t \in [T]\setminus \cI$, the $1$ in $\phi(\bx)$ aligns with $-B$ in $\bv$, which on thresholding gives us the output 0. Formally,
\begin{equation}\label{eq:v[-t]}
   \bv[-t]:= \begin{cases}
        -B \, &, \text{if } t\in [T]\setminus\cI\,;\\
        \,\,0\, &, \text{if } t \in \cI \,.\\
    \end{cases}
\end{equation}
Here $B=1+\sum_{l=1}^{L}\sum_{i=1}^{s} \norm{\bw_{li}}_1$ is a real number greater than the total $\ell_1$ norm of the weights in the circuit. We have the following claim due to this construction. 

 \begin{claim}\label{clm:0-output-in-CoT}
    Consider any $\bw$ of the form Eq.~\eqref{eq:bw-the-final-predictor}, where $\Tilde{\bw}_{li}$ are just constructed by padding zeros in $\bw_{li}$ for $l\in [L], i \in [s]$, i.e., without increasing the total $\ell_1$ norm of the weights. Then 
    $$ \text{ for every } t \in [T]\setminus \cI\,, \text{ we have }\; \ete{f_{\bw}}{t}(\phi(\bx))=0\,.$$
 \end{claim}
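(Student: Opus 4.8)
The plan is to prove the claim \emph{directly}, with no induction: I will show that for each ``padding step'' $t\in[T]\setminus\cI$ the threshold $f_{\bw}$ outputs $0$ on the current chain-of-thought prefix, purely because the weight block $\bv$ forces a large negative pre-activation. Fix $\bx\in\{0,1\}^n$ and $t\in[T]\setminus\cI$. Since $T=t_{Ls}\in\cI$, we automatically have $t\le T-1$. Write $\bz:=\CoT{f_\bw}{t-1}(\phi(\bx))$, so that $\ete{f_\bw}{t}(\phi(\bx))=f_{\bw}(\bz)$ and $|\bz|=n'+t-1$, where $n':=|\phi(\bx)|=T+n$. Recalling from \eqref{eq:bound-on-d} and Claim~\ref{clm:recursive-sequence} that $d:=|\bw|=T+(n'-1)$, the inequality $|\bz|=n'+t-1<n'+T-1=d$ — which holds exactly because $t\le T-1$ — means $f_{\bw}$ reads the entire string $\bz$, i.e.
\[
f_{\bw}(\bz)=\ind\!\Big[\textstyle\sum_{i=1}^{n'+t-1}\bw[-i]\,\bz[-i]\ \ge\ 0\Big],
\]
so it suffices to show this sum is strictly negative.

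I would then split $\bw=[\bv,\bv_L,\dots,\bv_1,\bv_0]$ into its ``positional'' part $\bv$ (coordinates $1,\dots,T$) and its ``circuit'' part $[\bv_L,\dots,\bv_0]$ (the last $\sum_{\ell=0}^{L}|\bv_\ell|=n'-1$ coordinates), and bound the two blocks' contributions separately. \emph{Circuit part:} every coordinate of $\bz$ lies in $\{0,1\}$, and padding zeros preserves $\ell_1$-norms, so $\big|\sum_{i=1}^{n'-1}\bw[-i]\bz[-i]\big|\le\sum_{\ell=0}^{L}\|\bv_\ell\|_1=\sum_{l=1}^{L}\sum_{i=1}^{s}\|\bw_{li}\|_1=B-1$ by the choice of $B$. \emph{Positional part:} the coordinates $\bw[-i]$ for $n'\le i\le n'+t-1$ are exactly $\bv[-1],\dots,\bv[-t]$, and in the sum they are paired, in reverse order, with $\bz[1],\dots,\bz[t]$, which are genuine coordinates $\phi(\bx)[1],\dots,\phi(\bx)[t]$ of the prompt (not generated tokens, since $t\le T-1<n'$). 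By \eqref{eq:feature-map}, $\phi(\bx)[1]=1$ while $\phi(\bx)[2]=\dots=\phi(\bx)[T]=0$, so $\bv[-1],\dots,\bv[-(t-1)]$ are multiplied by $0$ and only $\bv[-t]\cdot\phi(\bx)[1]=\bv[-t]$ survives; and because $t\in[T]\setminus\cI$, \eqref{eq:v[-t]} gives $\bv[-t]=-B$.

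Combining the blocks, the full sum equals $-B+(\text{a term of absolute value}\le B-1)\le-1<0$, hence $f_{\bw}(\bz)=0$, i.e. $\ete{f_\bw}{t}(\phi(\bx))=0$; as $\bx$ and $t\in[T]\setminus\cI$ were arbitrary, this is the claim. Note the argument never refers to \emph{how} the meaningful tokens $y_{li}$ (the steps $t\in\cI$) are produced, nor even to the fact that earlier padding tokens equalled $0$ — matching the statement's generality (any $\bw$ of the form \eqref{eq:bw-the-final-predictor} with $\tilde\bw_{li}$ a zero-padding of $\bw_{li}$). The one delicate point, and the step I would recheck most carefully, is the index bookkeeping in the positional block: that at step $t$ the leading $1$ of $\phi(\bx)$ aligns with $\bv[-t]$ (not $\bv[-(t\pm1)]$) and the remaining active entries $\bv[-1],\dots,\bv[-(t-1)]$ land exactly on positions $2,\dots,t\le T$ of $\phi(\bx)$, i.e. on the $T-1$ structural zeros. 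This alignment is precisely the ``positional-encoding'' role the $1$ and the zero block were introduced to play, so it should go through, but an off-by-one here would be fatal.
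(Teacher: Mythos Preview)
Your proposal is correct and follows essentially the same approach as the paper's proof: both split the pre-activation into the contribution from the positional block $\bv$ (which at step $t\notin\cI$ aligns the leading $1$ of $\phi(\bx)$ with $\bv[-t]=-B$) and the contribution from the circuit block $[\bv_L,\dots,\bv_0]$ (bounded in absolute value by the total $\ell_1$ norm $B-1$), yielding a pre-activation at most $-1$. Your write-up is, if anything, more explicit about the index bookkeeping than the paper's version.
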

Finally, it is time to specify the constructions of $\Tilde{\bw}_{li} \in \R^{\Tilde{p}_{l}}$. 
Note that the goal of $\Tilde{\bw}_{li}$ is to compute $y_{li}$ in our CoT. For this, it needs to compute the threshold of the inner product $\<\bw_{li}, [\bx, y_{11},y_{12},\dots,y_{(l-1)(s-1)} y_{(l-1)s}]\>$, where $\bw_{li} \in \R^{p_l}$ is the weight vector from the circuit
 $$\bw_{li}=[\bw_{li}(0,1),\bw_{li}(0,2),\dots,\bw_{li}(l-1,s-1),\bw_{li}(l-1,s)],$$
where we denote the scalar weight coming from the $j^{\mathrm{th}}$ node in the layer $\ell \leq l-1$ by $\bw_{li}(\ell,j)$. However, note that by Claim \ref{clm:0-output-in-CoT} and Eq.~\eqref{eq:desired-CoT}, we have to now pad zeros so that the $\bw_{li}$ aligns with the desired output gates. Formally, construct $\Tilde{\bw}_{li}$ by padding $\Tilde{p}_{\ell}-1$ zeros before $\bw_{li}(\ell,j)$ for any $i \in [s]$. 
\begin{equation}\label{eq:tilde-bw-view}
   \Tilde{\bw}_{li}=[ \bw_{li}[(0,1)\rightarrow(0,n)], \underbrace{0 \dots 0}_{(n-1) \text{ bits }},\bw_{li}(1,1), \dots ,\bw_{li}(\ell,i-1),\underbrace{0 \dots 0}_{\Tilde{p}_{\ell}-1 \text{ bits }},\bw_{li}(\ell,i),\dots \dots, \bw_{li}(l-1,s) ]
 \end{equation}
Clearly, we are only padding zeros ensuring that the total $\ell_1$ norm $\sum_{l=0}^{L}\norm{\bv_l}_1 = \sum_{l=1}^{L} \sum_{i=1}^{s} \norm{\bw_{l i}}_1$ is the total $\ell_1$ norm of the circuit as needed. Formally, the following claim finalizes that this construction outputs $y_{li}$ after $t_{li}$ end-to-end steps for all $t_{li} \in \cI$.
\begin{claim}\label{clm:Tilde{bw}-size-and-the-final-prediction}
   For every $l \in [L], i \in [s]$, we indeed have $|\Tilde{\bw}_{li}| = \Tilde{p}_l$, and $\ete{f_\bw}{t_{li}}(\phi(\bx))=y_{li}$ for $t_{li} \in \cI$\,.
\end{claim}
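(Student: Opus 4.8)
The plan is to treat the two assertions of the claim separately. The length identity $|\Tilde{\bw}_{li}|=\Tilde{p}_l$ falls straight out of the construction \eqref{eq:tilde-bw-view}: $\Tilde{\bw}_{li}$ consists of the $n$ contiguous layer-$0$ weights $\bw_{li}(0,1),\dots,\bw_{li}(0,n)$, followed, for each $\ell\in\{1,\dots,l-1\}$ and $j\in[s]$, by $\Tilde{p}_\ell-1$ zeros and then the weight $\bw_{li}(\ell,j)$; hence $|\Tilde{\bw}_{li}|=n+\sum_{\ell=1}^{l-1}s\Tilde{p}_\ell$. Since $|\bv_0|=n-1$ and $|\bv_\ell|=s\Tilde{p}_\ell$ for $\ell\ge1$, \Cref{clm:recursive-sequence} gives $\sum_{\ell=1}^{l-1}s\Tilde{p}_\ell=\big(\sum_{\ell=0}^{l-1}|\bv_\ell|\big)-(n-1)=\Tilde{p}_l-n$, so $|\Tilde{\bw}_{li}|=\Tilde{p}_l$.

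For the output claim, I would prove by induction on $t\in[T]$ that $\CoT{f_\bw}{t}(\phi(\bx))$ agrees with the target string in \eqref{eq:desired-CoT} on its first $n'+t$ symbols; evaluated at $t=t_{li}\in\cI$ this is exactly $\ete{f_\bw}{t_{li}}(\phi(\bx))=y_{li}$. The steps $t\in[T]\setminus\cI$ are already covered by \Cref{clm:0-output-in-CoT}. At a step $t=t_{li}\in\cI$, the inductive hypothesis says the context $\bc$ presented to $f_\bw$ is the length-$N$ prefix of \eqref{eq:desired-CoT} with $N=n'+t_{li}-1$. Unrolling \eqref{eq:t_lis} together with \Cref{clm:recursive-sequence} yields the closed form $t_{li}=(i+1)\Tilde{p}_l-n$, so $N=T+(i+1)\Tilde{p}_l-1\le d$ (with equality iff $(l,i)=(L,s)$, using $\Tilde{p}_l=(s+1)^{l-1}n$ and $i+1\le s+1$), and therefore $f_\bw(\bc)=\th\!\big(\sum_{p=1}^{N}\bw[-p]\,\bc[-p]\big)$. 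The remaining task is to evaluate this inner product by matching the layout of $\bw=[\bv,\bv_L,\dots,\bv_1,\bv_0]$ against that of $\bc$.

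This matching is the heart of the argument. Counting positions from the right, $\bv_\ell$ occupies positions $\Tilde{p}_\ell,\dots,\Tilde{p}_{\ell+1}-1$ of $\bw$, and the copy $\Tilde{\bw}_{li}$ inside $\bv_l$ occupies positions $i\Tilde{p}_l,\dots,(i+1)\Tilde{p}_l-1$. I would then verify: (i) the leading $1$ of $\phi(\bx)$ sits at position $N$ of $\bc$ and meets $\bw[-N]=\bv[-t_{li}]$, which equals $0$ by \eqref{eq:v[-t]} because $t_{li}\in\cI$ --- this is exactly what allows a nonzero output at this step rather than the $-B$-forced $0$ of \Cref{clm:0-output-in-CoT}; (ii) the $n$ bits of $\bx$ sit at positions $t_{li},\dots,t_{li}+n-1$ of $\bc$, coinciding with the layer-$0$ sub-block of $\Tilde{\bw}_{li}$ and contributing $\sum_{k=1}^n\bw_{li}(0,k)x_k$; (iii) for $\ell\le l-1$, writing $t_{\ell j}=(j+1)\Tilde{p}_\ell-n$, the token $y_{\ell j}$ sits at position $t_{li}-t_{\ell j}=(i+1)\Tilde{p}_l-(j+1)\Tilde{p}_\ell$ of $\bc$, which is exactly where the padding of \eqref{eq:tilde-bw-view} placed the weight $\bw_{li}(\ell,j)$ inside $\Tilde{\bw}_{li}$ --- here the recursion $\Tilde{p}_\ell=(s+1)\Tilde{p}_{\ell-1}$ is precisely what makes the weight-padding gaps equal the chain-of-thought step gaps; (iv) the current-layer tokens $y_{l1},\dots,y_{l(i-1)}$ fall strictly to the left of $\Tilde{\bw}_{li}$, against the last coordinate of an earlier copy $\Tilde{\bw}_{lm}$ ($m<i$) in $\bv_l$, which is $0$ because, by the normalization fixed at the outset, the incoming weight from the last node of the previous layer always vanishes; (v) every remaining index pairs a $0$ of $\bc$ (a padding zero of the chain of thought, a bit of the $0^{T-1}$ block of $\phi(\bx)$, or a bit against the zero block $\bv_0$) with a weight, or a zero coordinate of $\bv$ with a token. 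Summing (ii) and (iii) gives $\sum_{p=1}^{N}\bw[-p]\bc[-p]=\langle\bw_{li},(\bx,y_{11},\dots,y_{(l-1)s})\rangle$, whose threshold is $\val(v_i^{(l)})=y_{li}$ by \eqref{eq:value-of-bounded-depth-circuit}, closing the induction.

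The main obstacle is items (iii)--(iv): one must check simultaneously, across all layers, that the two recursions --- the zero-padding inside $\Tilde{\bw}_{li}$ and the step positions $t_{\ell j}$ of \eqref{eq:t_lis} --- produce identical gap sequences, so that every weight meets its intended gate value, padding meets padding, and the spurious same-layer tokens are annihilated by the dummy zeros. This is index bookkeeping driven entirely by the identities $t_{\ell j}=(j+1)\Tilde{p}_\ell-n$, $\Tilde{p}_\ell=(s+1)^{\ell-1}n$, and $\sum_{\ell'=1}^{\ell-1}s\Tilde{p}_{\ell'}=\Tilde{p}_\ell-n$ coming from \Cref{clm:recursive-sequence}; once these position identities are established, the evaluation of the sum is mechanical.
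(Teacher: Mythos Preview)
Your proposal is correct and follows essentially the same approach as the paper: both argue by induction that at step $t_{li}$ the last $d$ bits of the context align with $\bw$ so that the inner product equals $\langle\bw_{li},(\bx,y_{11},\dots,y_{(l-1)s})\rangle$. The only stylistic difference is that you derive the closed form $t_{li}=(i+1)\Tilde p_l-n$ and match positions arithmetically in one pass, whereas the paper reasons block-by-block with a case split on $i=1$ versus $i>1$; your formulation is slightly more uniform but the content is the same.
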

Invoking this claim at $t_{Ls}=T$ directly gives us that $\ete{f_\bw}{T}(\phi(\bx))=y_{Ls} = C(\bx)$ is the output of the circuit. Finally, noting that bounds on $T,d,n'$ by Eqs.~\eqref{eq:bound-on-T},\eqref{eq:bound-on-d}, and \eqref{eq:bound-on-m} even after replacing $(n,s)$ with $(n+1,s+1)$ is according to the lemma statement. This removes the posed restriction on the circuit that the weight coming from the last gate of the previous layer is zero, concluding the proof.
\end{proof}

We now return to the deferred proofs of the claims in order.
    \begin{proof}[Proof of \Cref{clm:recursive-sequence}]
    Solving the recurrence in Eq.\eqref{eq:recursive-seq} immediately gives us $\Tilde{p}_l=(s+1)^{l-1} \cdot n$. The second part will be shown inductively. Observe that $\sum_{\ell=0}^{0} |\bv_\ell|=|\bv_0|=n-1=\Tilde{p}_1-1$. For any $l>1$, inductively
    \begin{equation*}
        \sum_{\ell=0}^{l-1}|\bv_\ell|=|\bv_{l-1}|+\sum_{\ell=0}^{l-2} |\bv_\ell|=s\, \Tilde{p}_{l-1} + \Tilde{p}_{l-1}-1= (s+1)\Tilde{p}_{l-1}-1=\Tilde{p}_l-1\,.
    \end{equation*}
    \end{proof}

\begin{proof}[Proof of \Cref{clm:0-output-in-CoT}] We first note that during the output of the first $T$ predictions, our linear predictor has only shifted by at most $T-1$ positions to the right. At the beginning, the right-most coordinate $\bv[-1]$ is aligned with $1$ to the left-end of $\phi(\bx)$ according to the construction Eq.~\eqref{eq:intial-alignment}. As there are $T-1$ additional $0$'s padded before $\bx$, the weight $\bv[-1]$ never gets aligned with any coordinates in $\bx$ (or the new ones to be added to its right) in the first $T$ predictions. 

In summary, letting $\bz^{t}=\CoT{f_{\bw}}{t}(\phi(\bx))$, for any $1 \leq t \leq T-1$, we can say  
\begin{equation}\label{eq:inner-product-except-1}
    \<\bw[-n-t \, :], \bz^t[-n-t\,:]\> \leq \sum_{l=1}^{L}\sum_{i=1}^s \norm{\bw_{ls}}_1\,
\end{equation}
where we used the fact that only weights from $[\bv_1,\dots,\bv_L]$ align with $[\bx,\bz^t[-t :]]$ during the first $T$ iterative steps. And these vectors are only constructed from $\{\bw_{li}: l \in [L], i \in [s]\}$ by padding zeros, and $\bz^t$ has binary coordinates. Hence, the inner-product is bounded by the total $\ell_1$ norm of the circuit.

Finally, during the $t^\mathrm{th}$ iterative prediction for any $ t \in [T]\setminus\cI$, we have that the coordinate of $\bv$ that aligns with $1$ to the left in $\phi(\bx)$ is $\bv[-t]$; this is because $\bv[-1]$ is aligned at the start and the predictor moved $t-1$ steps to the right. Therefore, using Eq.~\eqref{eq:inner-product-except-1}, for any $t \in [T] \setminus\cI$, 
$$\<\bw,\bz^{t-1}[-d:]\> \leq \bv[-t]+ \sum_{l=1}^L \sum_{i=1}^s \norm{\bw_{li}}_1 \leq - 1 \,,$$
where, in the last inequality, we used $\bv[-t]=-B$ from Eq.~\eqref{eq:v[-t]} and the value of $B$. Thresholding immediately gives us $\ete{f_{\bw}}{t}(\phi(\bx))=\th(\<\bw,\bz^{t-1}[-d:]\> )=0$, as desired.
\end{proof}
\begin{proof}[Proof of \Cref{clm:Tilde{bw}-size-and-the-final-prediction}] We first start by inductively (on $l$) verifying that $\Tilde{\bw}_{li}=\Tilde{p}_l$ for $l \in [L], i \in [s]$. Clearly, by our description before \eqref{eq:tilde-bw-view}, we don't insert any zeros in $\bw_{1i}$ for $1 \leq i \leq s$, and therefore $|\Tilde{\bw}_{1i}|=p_1=n=\Tilde{p}_1$. Now for any $l>1$, on the top of weights from the first $(l-2)$ layers, there are also $s$ number of weights associated with $(l-1)^\mathrm{th}$ layer in $\bw_{li}$. Before all of these weights we have padded $\Tilde{p}_{l-1}-1$ zeros. Therefore, using this and the induction hypothesis
$$|\Tilde{\bw}_{li}|=\Tilde{p}_{l-1}+s (\Tilde{p}_{l-1}-1)+s=(s+1) \cdot \Tilde{p}_{l-1}=\Tilde{p}_{l}\,.$$
Finally, we return to the most important claim that for any $t_{li}\in \cI$, we have $\ete{f_{\bw}}{t_{li}}(\phi(\bx))=y_{li}$. First of all, by our construction of the vector $\bv$ in Eq.~\eqref{eq:v[-t]}, for any of these time values, the coordinate of $\bv$ that aligns with the left-most $1$ in $\phi(\bx)$ has the value simply zero. This allows us to focus on the coordinates starting from $\bx$ and to its right from newly added CoT. We will show this by induction on the indices from $\cI$ in their increasing order.

The base case is $t_{11}=n$. That means $n-1$ steps have already elapsed. By Claim \ref{clm:0-output-in-CoT}, we only output $0$ in them. The linear predictor also moved $n-1$ positions to the right, and $\bv_0=0^{n-1}$ aligns with the last $n-1$ tokens of CoT. The tokens before that are $\bx$ which align with $\bv_1[-n:]=\Tilde{\bw}_{11}=\bw_{11}$. Therefore, the output
$$\ete{f_\bw}{t_{11}}(\phi(\bx))=\th(\<\bw_{11},\bx\>)=y_{11}\,.$$
We now consider any time step $t \in \cI$ for $t=t_{li}$. Let $\bz:=\CoT{f_{\bw}}{(t-1)}(\phi(\bx))$ be the current CoT. We break into two cases.
\begin{enumerate}
    \item \emph{Case 1: $t=t_{li}$ with $i=1$ for some $l>1$.} 
    In this case, note that $t_{li}=t_{(l-1)s}+\Tilde{p}_l$. And thus, the CoT $\bz$ has $\Tilde{p}_{l}-1$ zeros at the end. Also, by Claim \ref{clm:recursive-sequence}, these zeros exactly align with $\bw[-(\Tilde{p}_l-1):]=[\bv_{l-1},\dots, \bv_0]$, i.e. the weights associated with the previous layers. And also note that $(n+t_{(l-1)s})=\Tilde{p}_l$, and thus these many coordinates prior to that are $\bv_l[-\Tilde{p}_{l} :] = \Tilde{\bw}_{l1}$
    using Eqs.~\eqref{eq:bw-the-final-predictor} and \eqref{eq:bv-l-description}. Finally, using the induction hypothesis this $\Tilde{\bw}_{li}$ exactly aligns with the coordinates of $\bz$ that has the following form
    $$\bx, \underbrace{0 \dots 0}_{(n-1) \text{ bits }},y_{11}, \underbrace{0 \dots 0}_{(n-1) \text{ bits }},y_{12},\dots \dots, y_{\ell(i-1)},\underbrace{0 \dots 0}_{\Tilde{p}_\ell-1 \text{ bits}},y_{\ell i}, \dots \dots \dots \dots,y_{(l-1)(s-1)}, \underbrace{0 \dots 0}_{\Tilde{p}_{l-1}-1 \text{ bits }} y_{(l-1)s}\,.$$
Based on the way, we constructed $\Tilde{\bw}_{li}$, we directly conclude that
$$\ete{f_{\bw}}{t}=\th(\<\bw_{li},[\bx,y_{11},y_{12},\dots,y_{(l-1)(s-1)},y_{(l-1)s}]\>)=y_{li}\,.$$
\item \emph{Case 2: $t=t_{li}$ with $i>1$ for some $l \in [L]$.} The argument is very similar to the previous case. If we only isolate the CoT after the time $t_{(l-1)s}$, then by the induction hypothesis and Claim \ref{clm:0-output-in-CoT}, it has the following form:
$$ \underbrace{0 \dots 0}_{\Tilde{p}_l-1 \text{ bits }},y_{l1}, \underbrace{0 \dots 0}_{\Tilde{p}_{l}-1 \text{ bits }},y_{l2},\dots \dots, y_{l(i-1)},\underbrace{0 \dots 0}_{\Tilde{p}_\ell-1 \text{ bits }}\,.$$
Again $[\bv_{l-1},\dots, \bv_0]$ align with the last $\Tilde{p}_\ell-1$ zeros and does not contribute. The $(i-1)\Tilde{p}_l$ many weights before that in $\bw$ are $\bv_{l}[-(i-1)\Tilde{p}_l :]=[\Tilde{\bw}_{l(i-1)},\dots, \Tilde{\bw}_{l1}]$ by Eq~\eqref{eq:bv-l-description}. Note that the only variables with potentially non-zero values are at the indices which are multiples of $\Tilde{p}_l$. These correspond to the last coordinates of each $\Tilde{\bw}_{l(i-1)},\dots, \Tilde{\bw}_{l1}$ respectively; the value of all these coordinates is zero by our assumption that the last weight coming from the previous layer is always 0. The CoT before this (from $\bx$ and new $t_{(l-1)s}$ CoT tokens) is exactly of the form we described in Case 1. Moreover, this now aligns with the $\Tilde{p}_l$ many coordinates prior to $\Tilde{\bw}_{l(i-1)}$, which by Eq.~\eqref{eq:bv-l-description} is exactly $\Tilde{\bw}_{li}$. Again noticing the form $\Tilde{\bw}_{li}$, we conclude
$$\ete{f_{\bw}}{t}=\th(\<\bw_{li},[\bx,y_{11},y_{12},\dots,y_{(l-1)(s-1)},y_{(l-1)s}]\>)=y_{li}\,.$$
\end{enumerate}

\end{proof}

\section{Proof of Theorem \ref{thm:universal-class} \& Miscellaneous Discussions}\label{app:TM-attention}
\subsection{Context and Iterated Sparse Linear Thresholds}\label{subsec:sparse-linear-predictors}
Here, we discuss how the class of sparse linear thresholds (at least with bounded norm) already satisfies two of the three desiderata in \Cref{sec:universal} simultaneously. However, it remains open whether it satisfies the first desideratum of the expressive power. Let
\begin{equation}\label{eq:sparse-lin-predictor}
    \cFlinsp=\{f_{\bw,b} \in \cFlin : \norm{\bw}_0 \leq k\}\,.
\end{equation}
As mentioned, it remains open whether $\TM(\S,\Time) \subseteq \ete{\cFlin}{T}$ with $d,T=\poly(\Time)$ and $k \lll d$\,. However, the computational tractability is easy to see. The important thing is that it has the property of ``high-context'' and ``low-sample-complexity''. 
Indeed, the $\sete$ sample complexity is $O(k^2+k\log d)$.

We now prove that indeed our universal base class satisfies all three desiderata.
\subsection{Proof of Theorem \ref{thm:universal-class}}
We break the entire proof into three parts. 

\paragraph{Expressivity.} We first have that $\TM(\S,\Time) \subseteq \post \circ \ete{\TMAR}{\Time} \circ \pre$. (i.e. $\ete{\TMAR}{\Time}$ expresses $\TM(\S,\Time)$ up to pre and post-processing steps). This is because our autoregressive function essentially is designed to hard code each transition $(s_t,a_t,b_t)$, i.e. for any possible transition table $\tau$ and input $\bomega \in \{0,1\}^*$, we have 
$$\ete{f_{\tau}}{t}(\preprocess(\bomega))=(s_t,a_t,b_t)\,.$$
This can be shown by induction on $t$. For the base case, consider $t=1$. The head is initially at the $p_0=|\bomega|+1$, and $\text{Tape}[p_0]=\nil$, and the internal state is initialized to $s_0=1$. Therefore, we have $(s_1,a_1,b_1)=\tau(1,\nil)$. We must show that $\ete{f_{\tau}}{(t=1)}(\preprocess(\bomega))=f_{\tau}(\preprocess(\bomega))=(s_1,a_1,b_1)$ to verify the base case. Let $\bx=\preprocess(\bomega)$. Indeed, from the pre-processing step we have $\bx[\,i\,].\move=+1$ for all $i \in |\bx|$, and thus $\pos[\,i\,]=i-1$ for any $i\in |\bx|$, and there will be no position $i$ for which $\pos[\,i\,]=|\bx|$. Also, the last index in $\bx$ has $\bx[-1].\state=1$ by construction in the pre-processing step. Therefore, $(1,\nil)=\readtape(\bx)$. Clearly, by \Cref{alg:f_tau}, we output $(s_1,a_1,b_1)=\tau(1,\nil)$.

For the inductive case, let's assume that the claim holds for all steps $i$ in $1 \leq i \leq t$ for some $t$. Then we must show that 
$$\ete{f_{\tau}}{(t+1)}(\bx)=f_{\tau} \circ \CoT{f_{\tau}}{t}(\bx)=(s_{t+1},a_{t+1},b_{t+1})\,.$$
Let $\bz=\CoT{f_{\tau}}{t}(\bx)$. By induction, we know that for any $1\leq i  \leq t$, we have $p_{i-1}=\sum_{j<i}\bz[\,j\,].\move=\pos[\,i\,]$ is the head position of the machine just before the time-step $i$ and this is where the symbol $\bz[\,i\,].\symbol$ got written. By the way a Turing machine operates, the current head position after the end of $t$ steps is $p_t=p_{t-1}+b_t=\pos[-1]+\bz[-1].\move=\npos[|\bz|]$ (by the induction hypothesis). So our goal is to retrieve the symbol $r_{t+1}=\text{Tape}[\,p_t\,]$ currently present at this location, i.e. the symbol that was written most recently at this position on the tape. This exactly corresponds to finding the largest $j^*$ such that $\pos[j^*]=\npos[|\bz|]$ and retrieve $\bz[j^*].\symbol$, and if $j^*$ does not exist then at no point in the past, the head of the machine has been at the position so the tape must contain $\nil$. Our $\readtape(\bz)$ operation is exactly implementing this step (\Cref{step:line5}). So we indeed correctly find the symbol $r_{t+1}$ that was read by the Turing machine in order to make the next $(t+1)^\mathrm{th}$ move using our read-tape operation. Also, the read-tape operation outputs the state at the final location $\bz[-1].\state$ which is $s_t$ by the induction hypothesis. Therefore, $\readtape(\bz)=(s_{t},r_{t+1})$ is satisfied. Then we indeed have
$$\ete{f_{\tau}}{(t+1)}(\bx)=f_\tau(\bz)=\tau(\readtape(\bz))=\tau(s_t,r_{t+1})=(s_{t+1},a_{t+1},b_{t+1})\,,$$
where the second inequality follows from the definition of $f_{\tau}$ (\Cref{alg:f_tau}), the next one follows from the discussion above, and the last one follows directly from the way a Turing machine works.
Therefore, at the end-to-end step with $t=\Time$, we have $\ete{f_{\tau}}{\Time}(\preprocess(\bomega))=(s_\Time,b_\Time,a_\Time)\,.$
Applying post-processing step returns $a_{\Time}=g_{\<\S,\Time,\tau\>}(\bomega)$, which is the desired answer returned to the Turing machine with the transition rule $\tau$. This establishes $\TM(\S,\Time) \subseteq \postprocess \circ \ete{\TMAR}{\Time}\circ \preprocess$.

\paragraph{Sample Complexity.} By \Cref{thm:cardinality-based}, we have that $\ete{\TMAR}{\Time}$ is $\sCoT$-learnable by the rule \ref{eq:cot-consistent} on a sample set $S_{\sCoT}$ of size 
$$\mcot{\Time}=O(\eps^{\scriptscriptstyle{-1}} \left( \S \log \S +\log \delta^{\scriptscriptstyle{-1}}\right) )\,,$$
where we used the facts that $\log \abs{\TMAR} \leq \log (6\S)^{3\S}=O(\S \log \S)$ and \ref{eq:cot-consistent} is also \ref{eq:ete-consistent}.
\paragraph{Tractable $\sCoT$-learnability.} Finally, we need to show that \ref{eq:cot-consistent} for the class $\TMAR$ on $S_{\sCoT}$ of size $|S_{\sCoT}|=O(\eps^{\scriptscriptstyle{-1}} \left( \S \log \S +\log \delta^{\scriptscriptstyle{-1}}\right) )\,$  containing inputs of length at most $n$, is implementable in time $\poly(n,\Time, \eps^{\scriptscriptstyle{-1}},\log \delta^{\scriptscriptstyle{-1}})$. Due to \Cref{thm:ConsF-gives-CoT-Cons-implementation}, it suffices to show that the base class consistency \ref{eq:universal-cot-consistent} is solvable in time polynomial in its input. This is clearly true for the implementation of \ref{eq:universal-cot-consistent} we provide, concluding the proof of this theorem.

\section{Complementary Results from Section \ref{sec:samples-computational-complexity-general-F}}\label{app:sample-complexity-side}
In this section, we provide all our complementary results including lower bounds during the discussion in \Cref{sec:samples-computational-complexity-general-F}. We start with the proof of \Cref{thm:lb-Omega(T VC)} that establishes that a linear dependence on the generation length $T$ in the sample complexity is necessary for a guarantee based on $\VCdim$ of the base class. 
\subsection{Proof of \texorpdfstring{\Cref{thm:lb-Omega(T VC)}}{}}
The proof will follow by using \Cref{prop:fundamental-VC} once we have a construction of a family of base classes with $\VCdim(\ete{\cF}{T})=\Omega(T \cdot \VCdim(\cF))$. Formally, we show the following.
\begin{theorem}\label{thm:lb-VCete-VCbase} For $\Sigma=\{0,1\}$, for any $D,T \in \naturals_{+}$, there exists $\cF \subseteq \Sigma^{\Sigma^*}$ such that 
$$\VCdim(\cF)=D \quad \text{ but } \quad \VCdim(\ete{\cF}{T}) = T \cdot D, \, \text{ over domain $\cX=\Sigma^n$ with } n=\lceil \log_2 (DT) \rceil+1.$$
\end{theorem}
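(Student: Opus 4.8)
The plan is to exhibit an explicit $\cF$, built from $D$ disjoint copies of a single ``$D=1$'' gadget.

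\emph{Reduction to $D=1$ and the easy bound.} Suppose we have a class $\cG\subseteq\{0,1\}^{\{0,1\}^*}$ with $\VCdim(\cG)=1$, $\abs{\cG}=2^{T}$, such that $\ete{\cG}{T}$ shatters a fixed set of $T$ prompts living in $\{0,1\}^{n}$, where $\cG$ is supported on the union of the $T$ depth-$(T-1)$ binary subtrees rooted at those prompts. Since $2^{n}\ge TD$, pick $TD$ distinct prompts $\bx_{c,k}\in\{0,1\}^{n}$ ($c\in[T],k\in[D]$); let $X_k$ be the union of the $T$ subtrees rooted at $\bx_{1,k},\dots,\bx_{T,k}$ (these $X_k$ are pairwise disjoint, because distinct length-$n$ prefixes are never prefixes of one another), put a disjoint copy $\cG_k$ of $\cG$ on $X_k$, and let $\cF=\{\,f:f|_{X_k}\in\cG_k\ \forall k,\ f\equiv 0\text{ off }\bigcup_k X_k\,\}$ be their free product. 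Standard facts about free products over disjoint domains give $\VCdim(\cF)=\sum_k\VCdim(\cG_k)=D$ and $\abs{\cF}=\abs{\cG}^{D}=2^{TD}$. Since appending tokens never changes the first $n$ bits of a string, a chain-of-thought started at $\bx_{c,k}$ stays inside $X_k$, where $f$ agrees with a member of $\cG_k$; hence $\ete{\cF}{T}$ shatters $\{\bx_{c,k}\}_{c,k}$, so $\VCdim(\ete{\cF}{T})\ge TD$. Conversely $\VCdim(\ete{\cF}{T})\le\log_2\abs{\ete{\cF}{T}}\le\log_2\abs{\cF}=TD$, using $\abs{\ete{\cF}{T}}\le\abs{\cF}$. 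Thus $\VCdim(\ete{\cF}{T})=TD$ exactly, and $\VCdim(\cF)=D$ (over $\{0,1\}^*$, and also over $\{0,1\}^{n}$, since each root $\bx_{c,k}$ is itself sometimes in and sometimes out of a $\cG_k$-member).

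\emph{The $D=1$ gadget.} Fix distinct $\bx_1,\dots,\bx_T\in\{0,1\}^{n}$ and let $X=\bigcup_{c=1}^{T}\{\bx_c h: h\in\{0,1\}^{\le T-1}\}$. I would take $\cG$ to be a \emph{chain} $\varnothing=g^{(0)}\subsetneq g^{(1)}\subsetneq\cdots\subsetneq g^{(2^{T}-1)}$ of subsets of $X$, each identified with its indicator on $\{0,1\}^*$. Any chain has $\VCdim\le 1$ (for any two strings the two membership indicators are monotone in $s$, so at most $3$ of the $4$ patterns occur), and it is nonconstant, so $\VCdim(\cG)=1$; also $\abs{\cG}=2^{T}$. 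It remains to order the members so that $(\ete{g^{(s)}}{T}(\bx_1),\dots,\ete{g^{(s)}}{T}(\bx_T))$ runs through all of $\{0,1\}^{T}$ as $s$ ranges over $\{0,\dots,2^{T}-1\}$; since there are exactly $2^{T}=\abs{\cG}$ members, $\ete{\cG}{T}$ then shatters $\{\bx_1,\dots,\bx_T\}$.

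\emph{Building the chain.} For $g$ supported in $\bx_c$'s subtree, $\ete{g}{T}(\bx_c)$ is exactly the indicator that the leaf reached by the inclusion-guided root-to-leaf walk (at each node go right iff the node lies in $g$) lies in $g$. The combinatorial heart is the claim: \emph{in a depth-$h$ complete binary tree, starting from $\varnothing$, there is a sequence of $2^{h+1}-1$ distinct single-node insertions after whose $j$-th step the ``reached leaf $\in g$?'' bit equals $j\bmod 2$.} This is proved by induction on $h$: run the left subtree's sequence; then insert the root, which reroutes the walk into the still-empty right subtree and resets the bit to $0$ (wasting no step, since $2^{h}-1$ is odd); then run the right subtree's sequence. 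Call this sequence $A_c$ for the subtree at $\bx_c$. Now walk a $T$-bit reflected Gray code $\gamma(0)=0^{T},\gamma(1),\dots,\gamma(2^{T}-1)$ (a Hamiltonian path on the cube from $0^{T}$ with single-coordinate steps), keeping counters $m_1,\dots,m_T$ initialized to $0$; at the step that flips coordinate $c$, increment $m_c$ and add the $m_c$-th element of $A_c$. By disjointness of the subtrees, $g^{(s)}$ restricted to $\bx_c$'s subtree is exactly the first $m_c$ entries of $A_c$, so $\ete{g^{(s)}}{T}(\bx_c)=m_c\bmod 2$; and since coordinate $c$ starts at $0$ and toggles on each of its flips, $m_c\bmod 2$ is the $c$-th coordinate of $\gamma(s)$. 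Hence the pattern at step $s$ is $\gamma(s)$, all $2^{T}$ patterns occur, and all inserted nodes are distinct, so $g^{(0)}\subsetneq\cdots\subsetneq g^{(2^{T}-1)}$ really is a chain of length $2^{T}$.

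\emph{Main obstacle.} The delicate step is the toggle lemma together with the parity bookkeeping around it: one must verify that the reroute in the inductive step genuinely lands the walk in a fresh (empty) subtree so the output bit behaves predictably, track parities so no insertion is wasted, and then check that interleaving the $T$ sequences $A_1,\dots,A_T$ along the Gray-code flip schedule causes no interference (guaranteed by subtree disjointness) and reconstructs each Gray codeword exactly. Everything else---the $D$-fold packing, the chain $\Rightarrow\VCdim\le1$ argument, and the two-sided bound $TD\le\VCdim(\ete{\cF}{T})\le\log_2\abs{\cF}=TD$---is then routine, and Theorem~\ref{thm:lb-Omega(T VC)} follows by feeding $\VCdim(\ete{\cF}{T})=TD$ into the lower-bound half of Proposition~\ref{prop:fundamental-VC}.
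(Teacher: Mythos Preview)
Your argument is correct and takes a genuinely different route from the paper's. The paper indexes $\cF$ directly by the target labeling $\vbb\in\{0,1\}^{TD}$: on input $\bx_i$ with residue $k=((i-1)\bmod D)+1$, the generator $f_{\vbb}$ emits $b_k,b_{D+k},\ldots,b_{(T-2)D+k}$ over the first $T-1$ steps and then $b_i$ at step $T$. This makes the end-to-end shattering of $TD$ prompts immediate, but the bound $\VCdim(\cF)\le D$ is obtained by a pigeonhole reduction to two inputs sharing the same residue class, followed by a three-way case split on the lengths of their suffixes. Your construction instead earns $\VCdim(\cG)=1$ structurally (any totally ordered chain of indicators has VC dimension at most $1$), concentrates the combinatorics into the toggle lemma and the Gray-code scheduling so that a chain of exactly $2^{T}$ sets already forces $\ete{\cG}{T}$ to shatter $T$ prompts, and then passes to general $D$ via a disjoint free product, where both $\VCdim(\cF)=\sum_k\VCdim(\cG_k)$ and $\abs{\cF}=\prod_k\abs{\cG_k}=2^{TD}$ are automatic. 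The trade-off: the paper's construction is more elementary once $f_{\vbb}$ is written down, but pays with the ad hoc case analysis; your approach is more modular (chains and free products are reusable primitives) at the cost of the inductive toggle lemma and the parity bookkeeping. One small imprecision: not \emph{every} root $\bx_{c,k}$ ends up inserted by the Gray-code walk (e.g.\ the coordinate that flips only once uses only the leftmost leaf of its subtree), but the coordinate that flips $2^{T-1}$ times does have its root at position $2^{T-1}$ in its toggle sequence and hence gets inserted, which is all you need for $\VCdim(\cF)\ge D$ over $\{0,1\}^n$.
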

We will return to the proof of this theorem---for now it is immediate to see the following proof. 
\begin{proof}[Proof of \Cref{thm:lb-Omega(T VC)}] Use the negative direction of the Fundamental Theorem of Statistical Learning (\Cref{prop:fundamental-VC}), and recall that \Cref{def:learnable-wo-CoT} is simply a supervised learning for $\ete{\cF}{T}$ class.     
\end{proof}
Let us now show the construction of such base classes.
\begin{proof}[Proof of \Cref{thm:lb-VCete-VCbase}] The claim trivially holds for $T=1$. Thus, below we consider $T \geq 2$. \\
\textbf{Description of the Class.}
For any target $D,T \in \naturals_{+}$, we aim to construct a hypothesis class $\cF$ from $\{0,1\}^*$ to $\{0,1\}$. Consider the following set of $M:=D \cdot T$ strings of length $n= \lceil \log_2  M \rceil+1$ points. 
$$S=(\bx_1,\dots, \bx_M),\;\quad\text{where }  \bx_i= 1 \underbrace{(\text{bit representation of $i$})}_{\lceil \log_2  M \rceil \, \text{ bits}}.$$
Our base class $\cF$ has functions $f_{\vbb}$ that will correspond to sign-patterns on $M$ points:
$$ \cF=\{f_{\vbb} : \vbb \in \{0,1\}^{M}\}.$$
We will define each hypothesis in such a way that $\VCdim(\cF)=D$, but when we apply each $f_{\vbb}$ auto-regressively $T$ times on the points in $S$, we realize the sign-pattern $\vbb$ at the end, i.e. $\ete{f}{T}_{\vbb}(S)=\vbb$.
\begin{equation}\label{eq:description-of-f_vbb}
    f_{\vbb}(\bx)=\begin{cases}
    b_{jD+k},& \text{if } \bx=0^*\underbrace{\bx_i}_{n \,\text{bits}}\underbrace{b_{k}b_{D+k} \dots b_{(j-1)D+k}}_{j \, \text{bits}}, \text{ for } ((i-1)\text{ mod }D \equiv k-1), \text{ and } 0 \leq j \leq (T-2),\\
    b_i, & \text{if } \bx= 0^*\underbrace{\bx_i}_{n \, \text{bits}} \underbrace{b_k \, b_{D+k}\dots b_{(T-2)D+k}}_{(T-1) \text{ bits}}, \text{ for some } ((i-1) \text{ mod } D \equiv k-1),\\
    0, & \text{otherwise}.
\end{cases}
\end{equation}
In simple words, the hypothesis $f_{\vbb}: \Sigma^* \to  \Sigma$ is defined in such a way that on the point $\bx_i$ such that $(i-1) \text{ mod } D = (k-1)$ for $k \in [D]$ outputs $b_k,b_{D+k},b_{2D+k},\dots, b_{(T-2)D+k}$ in the next $(T-1)$ steps of auto-regression before outputting $b_i$ at the $T^{\mathrm{th}}$ step.

It is first important to note that each $f_{\vbb}:\Sigma^* \to \Sigma$ is well-defined. Because, for any input $\bx \in \Sigma^*$, we can first drop the leading 0s since $\bx_i$ always starts with $1$. Therefore, we can uniquely decode if the input point's prefix is of the form $0^* \bx_i$. Once that is determined, we can find the value of $k=((i-1)\text{ mod }D)+1$ and check whether the remaining tokens to be followed (if there are any) are of the form $b_k, b_{D+k}, \dots$, to decide the output on $\bx$. If $\bx$ does not match the description, then it simply outputs 0.\\
\textbf{Showing end-to-end $\VCdim$ is large.} It is straightforward to note that for the set of points $S=(\bx_1,\dots,\bx_M)$, at the end of $T$ steps, by definition, we will realize all sign-patterns of these $M$ points. In particular, for any $\bx_i \in [M]$, we have $f_{\vbb}^{\sete(T)}(\bx_i)=b_i$. We achieve
$$  \left|\{ (\ete{f}{T}_{\vbb}(\bx_1), \dots, \ete{f}{T}_{\vbb}(\bx_M)): f_{\vbb} \in \cF \} \right|= |\{\vbb: \vbb\in \{0,1\}^M\}|=2^M,$$
shattering all points in $S$, giving us $\VCdim(\ete{\cF}{T}) \geq M$.
Also, we know that $\VCdim(\ete{\cF}{T}) \leq \log_2 |\cF| \leq M.$ Combining, we obtain
$$\VCdim(\ete{\cF}{T})=M = T \cdot D.$$
\textbf{Showing that Base Class has small $\VCdim$.} We first start by showing that $\VCdim(\cF) \geq D$. Fix the set $S'=(\bx_1,\dots, \bx_D) \subset S$, of the first $D$ points in $S$. It is immediate to see that $k=i$ for any $i \in [D]$, and therefore, applying the first case in \eqref{eq:description-of-f_vbb}
$$ \text{for any } i \in [D], \text{ we have } f_{\vbb}(\bx_i)=b_k=b_i.$$
The main challenge is to show that our base class $\cF$ has $\VCdim(\cF) \leq D$. To this end, suppose for the purpose of contradiction, that $\VCdim(\cF) \geq D+1$. Then there exists a set $\Tilde{S} \in (\Sigma^*)^{D+1}$ such that all $2^{D+1}$ possible labelings can be realized by some $f_{\vbb} \in \cF$. Then it must be that any point $\Tilde{\bx} \in \Tilde{S}$ is of the form
$$ \bx\, = \, 0^* \bx_i \Tilde{b}_1 \dots \Tilde{b}_{\ell}, \text{ for some } 1 \leq \ell \leq (T-1), \text{ and } \Tilde{b}_1, \dots, \Tilde{b}_{\ell} \in \{0,1\}.$$
Because otherwise the functions in $\cF$ just outputs $0$ on that point. Then by pigeon-hole argument, there exists $i_1, i_2 \in M$ such that 
$$(i_1-1)\,\text{mod}\, D = (i_2-1) \,\text{mod} \,D =(k-1), \text{ for some } k \in [D],$$
and there are two points $\Tilde{\bx}_1, \Tilde{\bx}_2 \in \Tilde{S}$ such that
\begin{align*}
    \Tilde{\bx}_1 \,&= \, 0^* \bx_{i_1} \Tilde{b}_{k} \Tilde{b}_{D+k}\dots \Tilde{b}_{(\ell_1-1)D+k} \\
    \Tilde{\bx}_2 \,&= \, 0^* \bx_{i_2} \Tilde{b}_{k} \Tilde{b}_{D+k}\dots \Tilde{b}_{(\ell_1-1)D+k} \dots \Tilde{b}_{(\ell_2-1)D+k}, \text{ for some } \Tilde{\bb} \in \{0,1\}^M.
\end{align*}
where w.l.o.g., we let $\ell_2 \geq \ell_1$. Moreover, we can continue to shatter $\{\Tilde{\bx}_1, \Tilde{\bx}_2\}$ using $\cF$. We break into the cases to show that there is a contradiction. 
\begin{enumerate}
    \item $\ell_1<\ell_2\leq (T-1)$: The main intuition is that, any predictor, that outputs 1 on the point $\Tilde{\bx}_2$, is also required to output $\Tilde{b}_{\ell_1 D+k}$ on $\Tilde{\bx}_1$, due to the first case \eqref{eq:description-of-f_vbb}. Formally, consider the target labels $(\Tilde{y}_1, \Tilde{y}_2)=(\neg \Tilde{b}_{\ell_1\cdot D+k}, 1)$. Then we claim that, for any predictor $f_{\vbb} \in \cF$
    $$\Tilde{y}_1 \neq f_{\vbb}(\Tilde{\bx}_1) \;\; \text{ or }\;\; \Tilde{y}_2 \neq f_{\vbb}(\Tilde{\bx}_2).$$
    This is because in order for any $f_{\vbb}(\Tilde\bx_2)=1$, we must have that $\vbb$ agrees with $(\Tilde{b}_{k}, \Tilde{b}_{D+k},\dots,\Tilde{b}_{(\ell_2-1)D+k})$, on the respective indices; otherwise $f_{\vbb}(\Tilde\bx_2)=0$, by the third case of \eqref{eq:description-of-f_vbb}. So 
    $$ (b_{k},b_{D+k},\dots,b_{(\ell_1-1)D+k}, b_{\ell_1 D+k}, \dots, b_{(\ell_2-1)D+k} )=(\Tilde{b}_{k}, \Tilde{b}_{D+k},\dots,\Tilde{b}_{(\ell_1-1)D+k}, \Tilde b_{\ell_1 D+k}, \dots, \Tilde b_{(\ell_2-1)D+k}).$$
    But this also subsumes that $(b_{k},b_{D+k},\dots,b_{(\ell_1-1)D+k})=(\Tilde{b}_{k}, \Tilde{b}_{D+k},\dots,\Tilde{b}_{(\ell_1-1)D+k})$. According to first case of \eqref{eq:description-of-f_vbb}, we must have $f_{\vbb}(\Tilde{\bx}_1)=b_{\ell_1 \cdot D + k}=\Tilde{b}_{\ell_1 \cdot D+k}$, giving us that we cannot realize the behavior $(\Tilde{y}_1, \Tilde{y}_2)=(\neg \Tilde{b}_{\ell_1\cdot D+k}, 1)$, contradicting that we can shatter $\{\Tilde{\bx}_1, \Tilde{\bx}_2\}$.
    \item $\ell_1=\ell_2 < (T-1)$: In this case, the main intuition is that, any predictor $f_{\vbb} \in \cF$, either agrees with the bit pattern of $\Tilde{\bb}$ on the respective bits and outputs the same bit $b_{\ell_1 D+k}=b_{\ell_2D+k}$ or otherwise, it outputs $0$s on both points. So, effectively, we cannot have different outputs on the two points. Formalizing this, consider the target labels $(\Tilde{y}_1, \Tilde{y}_2)=(0, 1)$. Then for any predictor $f_{\vbb} \in \cF$
    $$\Tilde{y}_1 \neq f_{\vbb}(\Tilde{\bx}_1) \;\; \text{ or }\;\; \Tilde{y}_2 \neq f_{\vbb}(\Tilde{\bx}_2).$$
    This is because in order for any $f_{\vbb}(\Tilde\bx_2)=1$, we must have that $\vbb$ is such that
    $$ (b_{k},b_{D+k},\dots,b_{(\ell_2-1)D+k})=(\Tilde{b}_{k}, \Tilde{b}_{D+k},\dots,\Tilde{b}_{(\ell_2-1)D+k})\; \text{ and } \; 1=b_{\ell_2 D+k}=b_{\ell_1 D+k}.$$
    But this immediately also implies that, according to first case of \eqref{eq:description-of-f_vbb}, we must have $f_{\vbb}(\Tilde{\bx}_1)=b_{\ell_1 \cdot D + k}=1$, contradicting that we can shatter $\{\Tilde{\bx}_1, \Tilde{\bx}_2\}$.
    \item $\ell_2=\ell_1=(T-1)$: In this case, in order to shatter $\{\Tilde{\bx}_1, \Tilde{\bx}_2\}$, we must have $i_1 \neq i_2$ because otherwise any $f_{\vbb} \in \cF$ will have the identical outputs on both the points. W.l.o.g. let $i_1 < i_2$. Then we have $i_1=(j_1-1) \cdot D+k$ for some $1 \leq j_1 \leq T-1$.
    Then we shall show that for the target labels $(\Tilde{y}_1, \Tilde{y}_2)=(\neg \Tilde{b}_{(j_1-1) \cdot D+k}, 1)$, for any predictor $f_{\vbb} \in \cF$
    $$\Tilde{y}_1 \neq f_{\vbb}(\Tilde{\bx}_1) \;\; \text{ or }\;\; \Tilde{y}_2 \neq f_{\vbb}(\Tilde{\bx}_2).$$
    This holds true because in order for any $f_{\vbb}(\Tilde\bx_1)=\neg \Tilde{b}_{(j_1-1) \cdot D+k}$, we must have $\vbb$ such that 
    $$(b_{k}, b_{D+k},\dots,b_{(T-2)D+k}) \neq (\Tilde{b}_{k}, \Tilde{b}_{D+k},\dots,\Tilde{b}_{(T-2)D+k});$$
    otherwise, by the second case of \eqref{eq:description-of-f_vbb}, the output $f_{\vbb}(\Tilde\bx_1)=b_{i_1}=b_{(j_1-1) \cdot D+k}=\Tilde{b}_{(j_1-1) \cdot D+k}$. But if this is the case, by the third case of  \eqref{eq:description-of-f_vbb}, we must also have $f_{\vbb}(\Tilde{\bx}_2)=0$. Therefore, we cannot realize $(\Tilde{y}_1, \Tilde{y}_2)=(\neg \Tilde{b}_{(j_1-1) \cdot D+k}, 1)$, achieving a contradiction. 
\end{enumerate}
\end{proof}
\subsection{Sample Complexity of Learning Time-Dependent End-to-End Class.}\label{app:NTI-VCdim}
We now discuss that even for the end-to-end classes that have time-dependent compositions have roughly the same complexity, i.e. up to $\log T$ factor, as the worst-case lower bound from \Cref{thm:lb-Omega(T VC)}. We show that $\VCdim(\NTIete{\cF}{T})=O(T \cdot \VCdim(\cF) \log T)$, essentially the same as $\VCdim(\ete{\cF}{T})=O(T \cdot \VCdim(\cF)$) up to a $O(\log T)$ factor.
\begin{theorem}[Time Dependent Class]\label{thm:NIT-VC-bound} Consider any finite $\Sigma$, a base function class $\cF \subseteq \Sigma^{\Sigma^*}$ and generation length $T \in \naturals_{+}$.
\begin{itemize}
    \item For binary $\Sigma=\{0,1\}$:
    $$\VCdim(\NTIete{\cF}{T}) \leq 3 \, T  \cdot \VCdim(\cF) \log\left( \frac{2\,T}{\ln 2}\right).$$
    \item For non-binary finite $\Sigma$:
    $$\Ndim(\NTIete{\cF}{T}) \leq 3\, T  \cdot \Ndim(\cF) \, \log_2 \left( \frac{2\, \Ndim(\cF) |\Sigma|^2}{e \ln 2} \right)\,.$$
\end{itemize}
\end{theorem}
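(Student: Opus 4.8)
The idea is to mimic the growth-function argument already used for the time-invariant class in \Cref{thm:VCete-VCbase}, but being careful about the fact that now a *different* base function is used at each generation step. The key structural observation is that the behavior of $\NTIete{\cF}{T}$ on a set $S=(\bx_1,\dots,\bx_m)\in(\Sigma^*)^m$ is determined not by the behavior of a single $f\in\cF$, but by an ordered tuple $(f_1,\dots,f_T)$, and crucially the behavior of $f_t$ only matters on the set of length-$t$ prefixes that appear during generation. So I would first set up the same ``partial extensions'' set
\[
S' = \{(\bx,\bu): \bx\in S,\ \bu\in\Sigma^{t},\ 0\le t\le T-1\},\qquad |S'|\le m\,|\Sigma|^{T-1},
\]
and observe that the value $\NTIete{g}{T}(\bx_i)$ for $g=\of_T\circ\cdots\circ\of_1$ is a function of the tuple $(f_1(S'),\dots,f_T(S'))$ of behaviors of the individual layers on $S'$. (Actually layer $t$ is only ever applied to strings of length $|\bx_i|+t-1$, so one can restrict $S'$ per layer, but the crude bound $|S'|\le m|\Sigma|^{T-1}$ suffices.) This gives
\[
\Gamma_{\NTIete{\cF}{T}}(m) \;\le\; \prod_{t=1}^{T}\Gamma_{\cF}(|S'|) \;\le\; \Gamma_{\cF}\big(m\,|\Sigma|^{T-1}\big)^{T}.
\]

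Next I would plug in Sauer's Lemma (\Cref{lem:sauer'slemma}) in the binary case, giving for $m\ge\VCdim(\cF)$
\[
\Gamma_{\NTIete{\cF}{T}}(m)\le \left(\frac{e\,m\,2^{T-1}}{\VCdim(\cF)}\right)^{T\cdot\VCdim(\cF)},
\]
so that $\VCdim(\NTIete{\cF}{T})$ is bounded by any $m$ with $m> T\cdot\VCdim(\cF)\log_2\!\big(e m 2^{T-1}/\VCdim(\cF)\big)$ and $m\ge\VCdim(\cF)$. Applying \Cref{lem:technical-corollary} with $N=T\cdot\VCdim(\cF)$ and $M=2^{T-1}/\VCdim(\cF)$ then yields the claimed $O\!\big(T\cdot\VCdim(\cF)\log(T/\ln 2)\big)$ bound, with the stated constant $3$. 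For the non-binary case I would instead use Natarajan's Lemma (\Cref{lem:Natarajan's lemma}), obtaining $\Gamma_{\NTIete{\cF}{T}}(m)\le(|\Sigma|^2 m|\Sigma|^{T-1})^{T\cdot\Ndim(\cF)}$, and then apply \Cref{lem:technical-corollary} with $N=T\cdot\Ndim(\cF)$ and $M=|\Sigma|^{T+1}/e$; here one notices that the $\log$ of the runtime-dependent factor $|\Sigma|^{T+1}$ contributes $O(T\log|\Sigma|)$, which gets absorbed into the $T$ out front, leaving the stated $O\!\big(T\cdot\Ndim(\cF)\log(\Ndim(\cF)|\Sigma|^2/(e\ln 2))\big)$ bound.

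**Main obstacle.** The delicate point is the multiplicativity step $\Gamma_{\NTIete{\cF}{T}}(m)\le\prod_t \Gamma_{\cF}(|S'|)$: one must be sure that fixing the full behavior tuple $(f_1(S'),\dots,f_T(S'))$ really does pin down $\NTIete{g}{T}(\bx_i)$, i.e.\ that every string fed to layer $t$ during the computation of $\of_T\circ\cdots\circ\of_1(\bx_i)$ lies in $S'$. This holds because after $t-1$ applications the string is $\bx_i$ extended by exactly $t-1$ generated tokens from $\Sigma$, hence of the form $(\bx_i,\bu)$ with $\bu\in\Sigma^{t-1}\subseteq\Sigma^{\le T-1}$; so the inductive claim ``the length-$(t-1)$ suffix generated so far is determined by $f_1(S'),\dots,f_{t-1}(S')$'' propagates cleanly. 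Once this is nailed down, everything else is a routine repetition of the calculations in \Cref{app:B.1}, and indeed the proof is essentially \emph{easier} than the time-invariant case conceptually (no subtlety about a single $f$ being reused), at the cost of the extra product over $t$ that produces the $\log T$ factor.
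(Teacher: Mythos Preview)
Your growth-function bound is valid but too crude, and there is an arithmetic slip in the last step that hides this. With $N=T\cdot\VCdim(\cF)$ and $M=2^{T-1}/\VCdim(\cF)$, \Cref{lem:technical-corollary} gives
\[
3N\log_2\!\left(\tfrac{2NM}{\ln 2}\right)=3\,T\,\VCdim(\cF)\,\log_2\!\left(\tfrac{T\cdot 2^{T}}{\ln 2}\right)\approx 3\,T\,\VCdim(\cF)\,(T+\log_2 T),
\]
which is $\Theta(T^2\,\VCdim(\cF))$, not $\Theta(T\,\VCdim(\cF)\log T)$. The same issue appears in your non-binary case: the contribution $\log_2(|\Sigma|^{T+1})=(T+1)\log_2|\Sigma|$ does not ``get absorbed into the $T$ out front''; it gets \emph{multiplied} by it, yielding $\Theta(T^2\,\Ndim(\cF)\log|\Sigma|)$. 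So the inequality $\Gamma_{\NTIete{\cF}{T}}(m)\le\Gamma_{\cF}(m|\Sigma|^{T-1})^T$ is not strong enough to prove the theorem as stated.

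What you are missing is a conditional (layer-by-layer) refinement: you should not enumerate all $m|\Sigma|^{t-1}$ possible prefixes at layer $t$, but rather observe that \emph{once the behaviors of $f_1,\dots,f_t$ on $S$ are fixed} (equivalently, once the CoT up to time $t$ is fixed on every $\bx_i\in S$), the inputs to $f_{t+1}$ form a \emph{fixed} set of exactly $m$ strings. Hence each equivalence class of $(f_1,\dots,f_t)$ splits into at most $\Gamma_{\cF}(m)$ classes when we append $f_{t+1}$, and by induction $\Gamma_{\NTIete{\cF}{T}}(m)\le\Gamma_{\cF}(m)^T$ with no $|\Sigma|^{T}$ factor inside. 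This is precisely the bound the paper uses; applying Sauer with $M=1/\VCdim(\cF)$ (resp.\ Natarajan with $M=|\Sigma|^2/e$) in \Cref{lem:technical-corollary} then gives the stated constants. Your ``Main obstacle'' paragraph actually contains the seed of this argument (``the length-$(t-1)$ suffix generated so far is determined by $f_1(S'),\dots,f_{t-1}(S')$''), but you then throw the refinement away by reverting to the crude $|S'|$ count.
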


\begin{proof}[Proof of \Cref{thm:VCete-VCbase}] Let $\Gamma_{T}:=\Gamma_{\NTIete{\cF}{T}}$ be the growth function of the class $\NTIete{\cF}{T}$ for $T \in \naturals_{+}$. Then by definition $\Gamma_1=\Gamma_{\cF}$. We will try to bound $\Gamma_T$ recursively. Let us first define
   $$\cF^T:=\underbrace{\cF \times \dots \times \cF}_{T \text{\, times}}.$$  
   Consider any fixed set $S \in (\Sigma^*)^m$ of size $|S|=m$. For any $T \in \naturals_{+}$, divide $\cF^T$ into equivalence classes according to  the equivalence relation $\equiv_T$ such that 
   $$ f:=(f_1,\dots,f_T)\equiv_T g:=(g_1,\dots,g_T), \text{ iff } \CoT{f}{T}(\bx)=\CoT{g}{T}(\bx), \text{ for all } \bx \in S\,.$$
  In words, a pair of time-dependent functions $f,g \in \cF^T$ are equivalent if for the next $T$ steps, for any point in $S$, they both have identical chain-of-thoughts on that point. Let $\kappa_T(S)$ be the number of equivalence classes. By definition of growth function,
\begin{equation}\label{eq:growth-function<=equiv-classes}
    \Gamma_T(m) \leq \max_{S \in (\Sigma^*)^m} \kappa_T(S)\,.
\end{equation}
Therefore, it suffices to bound the latter. The main observation is that each equivalence class with relation $\equiv_{T}$ may split into a partition of at most $\Gamma_{\cF}(m)$ many equivalence classes with respect to the relation $\equiv_{T+1}$, because we have at most $\Gamma_{\cF}(m)$ many behaviors on any fixed set of $m$ points our hypothesis class. We obtain
 \begin{equation}\label{eq:recursion-before-sauer}
    \kappa_{T+1}(S) \leq \Gamma_{\cF}(m) \cdot \kappa_{T}(S) \quad \text{ and } \quad \kappa_1(S) \leq \Gamma_{\cF}(m).
 \end{equation}
 Solving this recursion yields 
 \begin{equation}\label{eq:solved-recursion}
  \kappa_T(S) \leq \left(\Gamma_{\cF}(m) \right)^T  
 \end{equation}
This argument holds for any $S \in   (\Sigma^*)^m$. Combining the inequalities \eqref{eq:growth-function<=equiv-classes} and \eqref{eq:solved-recursion}, we have
$$ \Gamma_T(m) \leq \left(\Gamma_{\cF}(m) \right)^T \,.$$
We now break into cases:
\begin{itemize}
    \item $\Sigma=\{0,1\}$ (binary alphabets): In this case, using Sauer's lemma (Lemma \ref{lem:sauer'slemma}), for any $m \geq \VCdim(\cF)$,
    $$\Gamma_{T}(m) \leq \Gamma_{\cF}(m)^T \leq \left( \frac{em}{\VCdim(\cF)} \right)^{T \cdot \VCdim(\cF)} \,.$$ 
    Therefore, $\VCdim(\NTIete{\cF}{T})$ can be bounded by any $m\in \naturals_{+}$ satisfying $2^m >  \Gamma_{T}(m)$ and $ m \geq \VCdim(\cF).$ Using the derived upper bound on $\Gamma_T(m)$, it suffices to choose $m$ such that
    $$m > T \cdot \VCdim(\cF) \, \log_2 \left( \frac{em}{\VCdim(\cF)}\right) \quad \text{and}\quad m \geq \VCdim(\cF).$$
    Using Lemma \ref{lem:technical-corollary} with $N=T  \cdot \VCdim(\cF)$ and $M=1/\VCdim(\cF)$, we directly argue the existence of $m \in \naturals_{+}$ such that
    $$ \VCdim(\NTIete{\cF}{T}) \leq  m \leq 3 \, T \cdot \VCdim(\cF) \, \log_2 \left(\frac{2T}{\ln 2}\right)\,.$$
    \item For a finite $\Sigma:$ Using Natarajan's lemma (Lemma \ref{lem:Natarajan's lemma}) for any $m \in \naturals_{+} $,
    $$\Gamma_{\NTIete{\cF}{T}}(m) \leq \Gamma_{\cF}(m)^T \leq \left(|\Sigma|^2 \cdot m\right)^{T \cdot \Ndim(\cF)} \,.$$ 
    Again, $\Ndim(\NTIete{\cF}{T})$ can be bounded by $m \in \naturals_{+}$ for which $m>  T \cdot \Ndim(\cF) \log_2 \left(|\Sigma|^2 \cdot m \right).$ 
    Applying Lemma \ref{lem:technical-corollary} with $N=T \cdot \Ndim(\cF)$ and $M= |\Sigma|^2/e$, we obtain that there exists such $m\in \naturals_{+}$ such that 
    $$ \Ndim(\NTIete{\cF}{T}) \leq m \leq 3 \, T  \cdot \Ndim(\cF) \log_2 \left( \frac{2\, \Ndim(\cF) |\Sigma|^2}{e \ln 2} \right). $$
\end{itemize}
\end{proof}

\subsection{Lower Bound for Theorem \ref{thm:ete-ldim} in terms of Littlestone Dimension}
We now show that $\Omega(\ldim(\cF))$ samples are necessary in the worst case. Our lower bound is non-trivial in the sense that we show this for a family of classes that always have $\VCdim(\cF)=1$. First, note that by the negative direction of the Fundamental Theorem of Statistical Learning (\Cref{prop:fundamental-VC}), it suffices to show that the VC dimension of the end-to-end class, $\VCdim(\ete{\cF}{T})$, is lower bounded by $\ldim(\cF)$, to establish the lower bound in the sample complexity of end-to-end learning of $\ete{\cF}{T}$. The following theorem establishes this for a non-trivial family of classes.
\begin{theorem}[Lower Bound for Littlestone Dimension]\label{thm:lb-ldim}
    For $\Sigma=\{0,1\}$, for any $D \in \naturals_{+}$, there exists $\cF \subseteq \Sigma^{\Sigma^*}$ such that 
    $$\ldim(\cF)=D, \VCdim(\cF)=1\, \text{over domain }\Sigma^* \,,$$
    and 
    $$\VCdim(\ete{\cF}{T}) = D \text{ for any } T > D \text{ even over domain $\Sigma^n$ for } n=\lceil\log D\rceil+1~.$$
\end{theorem}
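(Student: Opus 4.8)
The plan is to construct a base class $\cF$ whose functions encode ``threshold'' behavior on binary strings, so that $\ldim(\cF)=D$ while $\VCdim(\cF)=1$, and then argue that after $T>D$ autoregressive steps the end-to-end class becomes rich enough on a small explicit point set to have $\VCdim(\ete{\cF}{T})\geq D$. A natural candidate is to mimic the classical fact that thresholds on a linearly ordered domain of size $D$ have Littlestone dimension $\Theta(\log D)$---wait, we actually want $\ldim=D$ with $\VCdim=1$, so the right model is the class of ``initial-segment indicator'' functions on a binary \emph{tree} of depth $D$: functions $f_v$ indexed by nodes $v$ of the complete binary tree, where $f_v$ answers $1$ on prefixes that lie on the path to $v$ and $0$ elsewhere. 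Such a class shatters the depth-$D$ tree (giving $\ldim(\cF)\geq D$) yet has $\VCdim=1$ because any two points are never both ``on a path'' in a way that allows the $(1,0),(0,1),(1,1),(0,0)$ pattern simultaneously---this is the standard example separating Littlestone and VC dimension. I would set up $\cF$ over $\Sigma=\{0,1\}$ so that $f(\bx)$ depends only on a bounded-length decoding of $\bx$, ignoring leading zeros exactly as in the proof of \Cref{thm:lb-VCete-VCbase}, and verify directly from the definitions of $\VCdim$ and $\ldim$ that $\VCdim(\cF)=1$ and $\ldim(\cF)=D$.

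Next I would show $\VCdim(\ete{\cF}{T})\geq D$ for $T>D$. The idea parallels \Cref{thm:lb-VCete-VCbase}: pick an explicit set $S=(\bx_1,\dots,\bx_D)$ of $n=\lceil\log D\rceil+1$-bit strings (the $1$-prefixed binary encodings of $1,\dots,D$), and design the autoregressive dynamics of each $f\in\cF$ so that, running $T$ steps on $\bx_i$, the walk traverses a chosen root-to-leaf branch in the depth-$D$ tree and the final token records one bit of an arbitrary target labeling of $S$. Concretely, each $f$ in the Littlestone-shattering witness corresponds to a choice of $\{0,1\}$-label at each of the $D$ tree levels; I would arrange that when $f$ is applied autoregressively, it generates along the chain-of-thought a sequence of ``query bits'' that force the path down the tree, so that after $\geq D$ steps (padding out to $T$ with a fixed symbol, using $T>D$) the last token equals the requested label on $\bx_i$. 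Since the Littlestone tree is shattered, ranging over all $f$ realizes all $2^D$ labelings of $S$, so $S$ is shattered by $\ete{\cF}{T}$, giving $\VCdim(\ete{\cF}{T})\geq D$; the matching upper bound $\leq D$ would come either from $\VCdim(\ete{\cF}{T})\leq\log_2|\cF|$ combined with $|\cF|\leq 2^{D+o(D)}$ (so the bound is $D$ up to lower-order terms, or exactly $D$ with a careful count), or directly from $\VCdim(\ete{\cF}{T})=O(\ldim(\cF)\log T)$ of \Cref{thm:ete-ldim}---though the latter only gives $O(D\log T)$, so for an exact equality I would rely on the cardinality bound with a tight count of $|\cF|$.

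The main obstacle I anticipate is the simultaneous bookkeeping: I need the \emph{same} family $\cF$ to (i) have $\ldim=D$, which requires the functions to behave like tree-path indicators on appropriately ordered prefixes; (ii) have $\VCdim=1$, which strongly constrains how two points can be separated and forces the ``nested/monotone'' structure; and (iii) have autoregressive dynamics that, starting from the short strings $\bx_i$, walk the Littlestone tree and dump a target bit at step $T$. Points (i)--(ii) pull in one direction (the function's output on a prefix should be an order-theoretic predicate) while (iii) requires the function to also act as a transition rule generating new tokens that encode the path---so the delicate part is choosing the encoding of chain-of-thought tokens (e.g. having generated tokens spell out which branch was taken, with leading-zero padding to disambiguate, as in \eqref{eq:description-of-f_vbb}) so that a single $f$ serves both roles without inflating $\VCdim$ above $1$. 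I would handle this by making $f$'s output a function only of (a) whether the decoded prefix is a valid ``$\bx_i$ followed by a consistent branch word'' and (b) the next branch-label dictated by $f$'s index, outputting $0$ on all malformed inputs---this keeps the base class essentially a class of indicators of nested sets (hence $\VCdim=1$) while its tree-structured nesting gives $\ldim=D$, and the structured CoT format makes the end-to-end computation go through. The remaining work is routine verification of the three claims against the combinatorial definitions, closely following the template of the proof of \Cref{thm:lb-VCete-VCbase}.
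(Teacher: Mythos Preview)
Your overall construction coincides with the paper's: index the base class by $\vbb\in\{0,1\}^D$, and define $f_{\vbb}$ so that on a valid prefix $\bx_i b_1\cdots b_j$ (with $j<D$) it outputs $b_{j+1}$, and after $D$ steps it outputs $b_i$ (padding thereafter). The Littlestone lower bound via the depth-$D$ tree rooted at some $\bx_1$, the cardinality upper bounds $\ldim(\cF)\leq\log_2|\cF|=D$ and $\VCdim(\ete{\cF}{T})\leq\log_2|\cF|=D$, and the shattering of $\{\bx_1,\ldots,\bx_D\}$ by $\ete{\cF}{T}$ all go through exactly as you outline.

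The gap is your argument for $\VCdim(\cF)=1$. You oscillate between two descriptions: (i) literal ``tree-path indicators'' $f_v(\cdot)=\ind[\cdot\text{ is on the path to }v]$, and (ii) ``output the next branch-label.'' Description (i) does give nested $1$-sets and hence $\VCdim=1$, but its autoregressive dynamics are useless: starting from $\bx_i$ the chain-of-thought is $1^k0^{T-k}$ where $k$ is the longest initial run of $1$'s in $v$, so you cannot recover an arbitrary labeling of the $\bx_i$'s. Description (ii) is the one that makes the autoregressive part work, but then the $1$-sets are \emph{not} nested. Concretely, for $D=2$: $f_{01}(\bx_1)=0$, $f_{10}(\bx_1)=1$, while $f_{01}(\bx_1 0)=1$, $f_{10}(\bx_1 0)=0$, so $f_{01}^{-1}(1)$ and $f_{10}^{-1}(1)$ are incomparable. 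Thus ``indicators of nested sets, hence $\VCdim=1$'' is not a valid justification here. (The pair $\{\bx_1,\bx_1 0\}$ is still not shattered---no $f_{\vbb}$ achieves $(1,1)$---but the reason is not nesting.)

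The paper proves $\VCdim(\cF)\leq 1$ by a direct case analysis on two candidate points $\by_1=\bx_ip$ and $\by_2=\bx_jq$, splitting on whether $p=q$, whether one is a strict prefix of the other, and whether they first differ before or after position $D$. The recurring mechanism is that forcing $f_{\vbb}(\by_\ell)=1$ pins down a prefix of $\vbb$, which then either determines the label on the other point or forces it to $0$, ruling out at least one of the four sign patterns in every case. You should plan on carrying out this case analysis explicitly rather than appealing to nesting.
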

\begin{proof}[Theorem \ref{thm:lb-ldim}]
    \noindent\textbf{Description of the Class.} For any target $D\in \naturals_{+}$, consider a set of $D$ points of length $n= \lceil \log_2 D \rceil + 1$ defined as:
   $$S=\{\bx_i: i \in [D]\},\;\quad\text{where }  \bx_i= 1\underbrace{(\text{bit representation of $i$})}_{\lceil \log_2  D \rceil \, \text{ bits}}.$$
    We define the function class $\cF$ consisting of functions $f_{\vbb}$ for each $\vbb \in \{0,1\}^D$, defined as:
    $$f_{\vbb}(\bx) = \begin{cases}
        b_{j+1}, & \text{if } \bx = 0^* \underbrace{\bx_i}_{n\text{ bits}}\underbrace{b_1 b_{2} ... b_{j}}_{j\text{ bits}} \text{ for some } i \in [D], 0 \le j < D\\
        b_i, & \text{if } \bx = 0^* \underbrace{\bx_i}_{n\text{ bits}} b_1 b_{2} ... b_{D} b_i^* \text{ for some } i \in [D]\\
        0, & \text{otherwise}
    \end{cases}$$
    In simple words, the function $f_{\vbb}$ is defined in such a way that on the point $\bx_i$, it outputs $b_1, \ldots, b_D$ in the next $D$ steps of auto-regression and then outputs $b_i$ for the rest of the steps.\\
    \textbf{Showing Base Class has Littlestone Dimension $D$.}
    We first show that $\ldim(\cF) \geq D$. Let $\bz$ be a complete binary tree of depth $D$. For any binary string $\epsilon \in \{0,1\}^j$ of length $j < D$, let $\bz(\epsilon)$ denote the example at the node reached by following path $\epsilon$ from the root. We label the tree as follows:
    $$\bz(\epsilon) = \bx_1 \epsilon[1] \epsilon[2] \ldots \epsilon[j]$$
    where $\epsilon[i]$ denotes the $i$th bit of $\epsilon$. Note that we use $\bx_1$ for simplicity; the same argument works with any $\bx_i$.
    
    We now show this tree is shattered by $\cF$. For any path $\epsilon \in \{0,1\}^D$, consider $f_{\vbb}$ with $\vbb = \epsilon$. By construction of $f_{\vbb}$:
    \begin{itemize}
        \item At root: $f_{\vbb}(\bx_1) = b_1 = \epsilon[1]$
        \item For any prefix $\epsilon$ of length $0 \le j < D$:
            $$f_{\vbb}(\bz(\epsilon)) = f_{\vbb}(\bx_1 \epsilon[1] ... \epsilon[j]) = b_{j+1} = \epsilon[j+1]$$ where the second equality follows from the first case of the function definition
    \end{itemize}
    Thus $f_{\vbb}$ realizes the labeling $\epsilon$ on this tree. Since this holds for any $\epsilon \in \{0,1\}^D$, we have $\ldim(\cF) \geq D$.
    
    For the upper bound, note that $\ldim(\cF) \leq \log_2|\cF|$ for any function class. Since our construction has $|\cF| = 2^D$ functions (one for each $\vbb \in \{0,1\}^D$), we immediately get $\ldim(\cF) \leq D$. Combined with the lower bound, this establishes $\ldim(\cF) = D$.\\
    \textbf{Showing Base Class has VC Dimension 1.}
    First, we show $\VCdim(\cF) \geq 1$ by exhibiting a point that can be shattered. Consider the point $\bx_1$. We can achieve both labelings:
    \begin{itemize}
        \item For $0$: Choose $f_{\vbb}$ with $b_1=0$. Then $f_{\vbb}(\bx_1)=b_1=0$
        \item For $1$: Choose $f_{\vbb}$ with $b_1=1$. Then $f_{\vbb}(\bx_1)=b_1=1$
    \end{itemize}
    
    To show $\VCdim(\cF) \le 1$, consider any two distinct points $\by_1 = \bx_i p$ and $\by_2 = \bx_j q$ for $i,j \in [D]$ and $p,q \in \{0,1\}^*$. Assume without loss of generality that $|p| \le |q|$. Suppose for contradiction that we can realize both labelings (1,1) and (1,0)/(0,1). Now consider the following cases:

    \begin{itemize}
        \item $p = q$. If $i=j$ then both points are the same, so they can not be shattered. So we can assume $i\ne j$.
        \begin{itemize}
            \item If $|p| < D$ then for any $\vbb$ that has $p$ as the prefix, both $\by_1$ and $\by_2$ evaluate to $\vbb[|p|+1]$, and for any $\vbb$ that does not have $p$ as a prefix, they evaluate to 0. So they cannot achieve (0,1) or (1,0)  labeling and thus cannot be shattered.
            \item If $|p| \ge D$, in order to get output (1,1), we would need some $\vbb$ to match $p$ on its first $D$ bits, have $\vbb[i] = \vbb[j] = 1$ and the rest of the bits of both $p$ and $q$ be 1. Now for any other $\vbb$, on this input, $\vbb$ would not match $p$ on its first $D$ bits, and so $f_{\vbb}(\by_1)=0$ and $f_{\vbb}(\by_2)=0$. So we cannot achieve $(1,0)$ or $(0,1)$ labeling and thus cannot be shattered.
        \end{itemize}
        \item $p$ is a strict prefix of $q$. 
        \begin{itemize}
            \item If $r \le D$, then to get labeling $\by_1$ as 1 forces $\vbb$ to match $p$ for its first $r$ bits, and have $\vbb[r] = 1$. Now if $q[r]$ is 0, then $f_{\vbb}(\by_2)=0$, so we cannot achieve (1,1). So $q[r]=1$ must be 1. Now if we want $\by_1$ to labeled to 0, then we would need $\vbb$ to not match $p$. But $p$ is a prefix of $q$ so this qould imply that $f_\vbb(\by_2)=0$ implying (0,1) is not achievable. 
            \item If $r > D$, then $p$ and $q$ share the first $D$ bits, So for any $\vbb$, either both $\by_1$ and $\by_2$ are labeled 0, or $\by_1$ is labeled $p[i]$ and $\by_2$ is labeled $p[j]$. So they cannot be shattered.
        \end{itemize}
        \item $p \ne q$ and they differ fon some position $r \le |p|$:
        \begin{itemize}
            \item If $r \le D$ and $p[r]\neq q[r]$ then to label $\by_1$ with 1, the function $f_{\vbb}$ must match $p$ on the first $r$ bits, forcing $\vbb[r] = p[r]$. Because $q[r]\neq p[r]$, we get $f_{\vbb}(\by_2)=0$. This implies that we cannot assign both $\by_1,\by_2$ the label 1, and these points cannot be shattered.
            \item If $r > D$, then $p$ and $q$ share all the first $D$ bits but differ after bit $D$. By definition of $f_{\vbb}$, to get output 1 on $\by_1$, we would need $p = \vbb 1^*$ and $p[i] = 1$. But since $p[r]\neq q[r]$ beyond bit $D$, for such $\vbb$, $f_{\vbb}(\by_2)=0$. Hence we cannot assign both $\by_1,\by_2$ the label 1, and these points cannot be shattered.
        \end{itemize}
    \end{itemize}
    \textbf{Showing End-to-End Class has VC Dimension $D$.}
    We first show that $\VCdim(\ete{\cF}{T}) \geq D$. Consider the set $S = \{\bx_1,\dots,\bx_D\}$. For any desired labeling $\epsilon \in \{0,1\}^D$, we construct $f_{\vbb}$ with $\vbb = \epsilon$. After $T > D$ steps, the end-to-end function on input $\bx_i$ outputs $b_i$ for all $i \in [D]$. Thus we can realize any labeling $\epsilon$ on $S$.
    
    To show $\VCdim(\ete{\cF}{T}) \leq D$, suppose for contradiction that we could shatter $D+1$ points $\{\by_1,\dots,\by_{D+1}\}$. Consider the behavior of any $f_{\vbb} \in \cF$ on these points: for each point $\by_j$, after $T> D$ steps: if $\by_j$ starts with some $\bx_i$, follows the path $b_1,\dots,b_D, b_i^*$ and the output is $b_i$, otherwise (if it deviates or doesn't start with any $\bx_i$), the output is $0$.

    In order to achieve the labeling with all $D+1$ points being 1, we would need all $D+1$ points to start with some $\bx_i$ and follow the path $b_1,\dots,b_D, b_i^*$ for some $\vbb$. Since we only have $D$ possible choices for $\bx_i$, by pigeonhole, two of the points will need to share $\bx_i$ for some $i$ and have the form $\bx_1b_1\dots b_Db_i^*$. However, for these two points, all other functions in the class would output 0 since they would not match the prefix $b_1\dots b_D$, so we cannot achieve the labeling (1, 0) or (0, 1) on these two points. Therefore we cannot shatter $D+1$ points, so $\VCdim(\ete{\cF}{T}) \leq D$.
\end{proof}

\subsection{End-to-End Sample Complexity may Collapse}
We now show that learning the end-to-end class sometimes might be simpler than learning the base class (or even trivial). We show a family of base classes with a high VC dimension, but the end-to-end learning complexity collapses after one step of iterative composition.
\begin{theorem}\label{thm:collapse}For any natural number $D\in \naturals_{+}$, there exists a hypothesis class $\cF$ such that $\VCdim(\cF)=D$ over domain $\cX=\{0,1\}^n$ for $n=\log \lceil D\rceil+1$ but $\VCdim(\ete{\cF}{T})=0,$ even with $T=2$.
As a corollary, $\ete{\cF}{T}$ is learnable without any samples. However, for every learning rule $A$, there exists a distribution $\cD$ over $\{0,1\}^n$ realizable by $\cF$ such that for any set $S\sim \cD^m$ with $m<D/2$, we have
$$\P\left( \cL_\cD(A(S)) \geq 1/4 \right) \geq 1/7\,.$$
\end{theorem}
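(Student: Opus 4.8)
The plan is to build $\cF$ so that its members react only to inputs of one fixed length and generate the symbol $0$ on every longer string, so that a single application of $\of$ pushes every input out of the ``active'' region and freezes the end-to-end map to a constant. Concretely, set $n=\lceil\log_2 D\rceil+1$ and choose $D$ distinct strings $\bx_1,\dots,\bx_D\in\{0,1\}^n$ each ending with the symbol $1$; this is possible because there are $2^{n-1}=2^{\lceil\log_2 D\rceil}\ge D$ such strings. For $\bb=(b_1,\dots,b_D)\in\{0,1\}^D$ define $f_{\bb}:\{0,1\}^*\to\{0,1\}$ by $f_{\bb}(\bx)=b_i$ when $\bx=\bx_i$ for some $i\in[D]$ and $f_{\bb}(\bx)=0$ otherwise, and set $\cF=\{f_{\bb}:\bb\in\{0,1\}^D\}$.

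First I would verify the two dimension claims. Since $(f_{\bb}(\bx_1),\dots,f_{\bb}(\bx_D))=\bb$ ranges over all of $\{0,1\}^D$, the set $\{\bx_1,\dots,\bx_D\}\subseteq\{0,1\}^n$ is shattered, so $\VCdim(\cF)\ge D$; and $\bb\mapsto f_{\bb}$ is injective, so $|\cF|=2^D$ and $\VCdim(\cF)\le\log_2|\cF|=D$, giving $\VCdim(\cF)=D$ over $\{0,1\}^n$ (and, by the same cardinality bound, over $\{0,1\}^*$). For the collapse, observe that $f_{\bb}$ can be nonzero only at the $D$ points $\bx_i$, which all have length $n$ and all end in $1$. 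Fix $\bx\in\{0,1\}^*$ and let $\bz_t=\CoT{f_{\bb}}{t}(\bx)$ be the string obtained after $t$ generation steps, so $\bz_1=\of(\bx)$. After the first step either $\bx=\bx_i$ and $\bz_1=\bx_i\cdot b_i$ has length $n+1$, or $\bx\notin\{\bx_j\}$ and $\bz_1=\bx\cdot 0$ ends in $0$; in both cases $\bz_1\notin\{\bx_1,\dots,\bx_D\}$, so $f_{\bb}(\bz_1)=0$. A one-line induction then shows $\bz_t\notin\{\bx_j\}$ and hence $f_{\bb}(\bz_t)=0$ for all $t\ge1$, since each $\bz_{t+1}=\bz_t\cdot 0$ either has length $>n$ or ends in $0$ while every $\bx_j$ has length $n$ and ends in $1$. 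Therefore $\ete{f_{\bb}}{T}(\bx)=\CoT{f_{\bb}}{T}(\bx)[-1]=f_{\bb}(\bz_{T-1})=0$ for every $\bx$, every $\bb$, and every $T\ge2$, so $\ete{\cF}{T}$ consists of the single constant function $\bx\mapsto 0$ and $\VCdim(\ete{\cF}{T})=0$ (whereas $\ete{\cF}{1}=\cF$ still has VC dimension $D$, which is the ``collapse after one step''). In particular the learning rule that ignores its sample and returns $\bx\mapsto 0$ satisfies $L_{\cD,f_*}^{\zo}(\cdot)=0$ for every $f_*\in\cF$ and every $\cD$, giving the stated corollary.

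For the final ``however'' inequality I would simply invoke the negative half of the Fundamental Theorem of Statistical Learning (Proposition~\ref{prop:fundamental-VC}) with hypothesis class $\cH=\cF$ over the domain $\{0,1\}^n$, for which $\VCdim(\cH)=D$: for any learning rule $A$, as soon as $m<D/2=\VCdim(\cH)/2$ there is a distribution $\cD$ over $\{0,1\}^n\times\{0,1\}$ realizable by $\cF$ with $\P_{S\sim\cD^m}\!\left(\cL_{\cD}(A(S))\ge 1/4\right)\ge 0.8$, and $0.8\ge 1/7$ yields exactly the claimed bound. There is no serious obstacle here; the one point that needs a moment's care is arranging the active points to share both a common length and a common final symbol, which is precisely what guarantees that autoregressive generation escapes $\{\bx_1,\dots,\bx_D\}$ within a single step, and therefore forces the end-to-end class to degenerate to a constant while the base class keeps its full VC dimension $D$.
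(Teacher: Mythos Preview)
Your proof is correct and follows essentially the same approach as the paper: build $\cF$ so that the ``active'' points all share a fixed length and end in the symbol $1$, making one autoregressive step escape the active set irrevocably. The only cosmetic difference is that the paper allows $0^*$-prefixed matches ($f_{\bb}(\bx)=b_i$ whenever $\bx=0^*\bx_i$) whereas you use exact matches; your version is slightly simpler and works just as well, and you additionally spell out the lower-bound corollary via Proposition~\ref{prop:fundamental-VC}, which the paper leaves implicit.
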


\begin{proof}[Proof of \Cref{thm:collapse}]
    Consider the set of $D$ points as follows:
    $$S=( \bx_1,\dots, \bx_D), \text{ where } \bx_i=1 \underbrace{(\text{bit representation of }i)}_{\lceil \log_2 D \rceil \text{ bits }}1\, .$$
\textbf{Description of the class.} For each bit pattern $\vbb \in \{0,1\}^D$, we have one hypothesis $f_{\vbb} \in \cF$ which is defined as follows
\begin{equation}\label{eq:description-of-f_vbb-VC-collapse}
    f_{\vbb}(\bx)=\begin{cases}
    b_i,& \text{if } \bx=0^*\underbrace{\bx_i},\\
    0, & \text{otherwise.} 
\end{cases}
\end{equation}
Again, note that this class is well-defined: for any input $\bx \in \Sigma^*$, we drop the leading 0s since $\bx_i$ always starts with $1$ and then uniquely decodes whether $\bx$ is of the form $0^*\bx_i$. \\
\textbf{VCdim after at step $1$ is large.} It is immediate to see that $|\cF|=2^D$ and $\VCdim(\cF) \leq \log |\cF| \leq D.$ Also, for the set of points $S$, by definition for any $\vbb \in \{0,1\}^D$, we have that 
$$|\{ (f_{\vbb}(\bx_i), \dots, f_{\vbb}(\bx_D)) : f_{\vbb} \in \cF \}|= |\{(b_1,\dots, b_D): \vbb \in \{0,1\}^D\}|=2^D.$$
Therefore, $\VCdim(\cF) \geq D$. Combining, we obtain $\VCdim(\cF)=D$.\\
\textbf{VCdim collapses at step 2.} We now want to show that $\VCdim(\ete{\cF}{T})=0$ for $T=2$ (fix for the rest of the proof). Suppose for the purpose of contradiction, assume that there is a point $\bx \in \Sigma^*$ such that $\{ \ete{f_{\vbb}}{T}: \vbb \in \cF\}=\{0,1\}$, i.e. we can get both behaviors at two steps.
\begin{enumerate}
    \item If $\bx=0^*\bx_i$ for some $i \in [D]$: In this case, it is easy to see that for any $\vbb \in \cF$
    $$\ete{f}{T}(\bx)=\ete{f}{T}(0^*\bx_i)=f_{\vbb}(0^*\bx_i b_i)=0,$$
    contradicting that we can get both possible outputs.
    \item If $\bx \neq 0^*\bx_i$ for any $i \in [D]$: In this case, we first note that $[\bx,0]$ is also not of the form $0^*\bx_i$, as $\bx_i$ always has the right most symbol $1$. Using this
    $$\ete{f_{\vbb}}{T}(\bx)=f_{\vbb}(\bx 0)=0,$$
    again reaching a contradiction. 
\end{enumerate}
\end{proof}

\subsection{Real Valued Alphabets}\label{app:real-alphabets}
In this section, we formalize the claim mentioned in \Cref{rem:|Sigma|>2}. 
\begin{definition}[Pseudo Dimension \citep{pollard1989asymptotics}] For any domain $\cX$ and a real valued function class $\cH \subseteq \R^{\cX}$. Then pseudo-dimension of $\cH$, denoted by $\Pdim(\cH)$ is the largest $D\in \naturals$ such that there exists a set of points $\bbx_1,\dots, \bbx_D \in \cX$ and real target thresholds $\theta_1,\dots, \theta_D \in \R$ where
$$ \left|\{(\sign( h(\bbx_1)-\theta_1), \dots,  \sign( h(\bbx_D)-\theta_D)): h \in \cH\} \right| = 2^D.$$
Moreover, if there is no such $D$, then we say that $\Pdim(\cF)=\infty$. 
\end{definition}
Under some assumptions on the loss function, pseudo-dimension 
guarantees
learnability. Below is the construction of $\cF \subseteq {\R}^{\R^*}$, which has bounded pseudo-dimension. However, even one more iterative composition leads to an infinite pseudo-dimension of the end-to-end class.
\begin{theorem}\label{thm:real}
    Let $\Sigma=\R$. There exists a class $\cF \subseteq \R^{\R^*}$ such that $\Pdim(\cF)=1$, but $\Pdim(\ete{\cF}{T})=\infty$, even for $T=2$.
\end{theorem}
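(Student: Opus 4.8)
The plan is to build a base class $\cF\subseteq\R^{\R^*}$ whose single-step behavior is governed by one real threshold, so $\Pdim(\cF)=1$, but whose two-step composition can realize, on a fixed infinite family of inputs, all sign patterns relative to a fixed set of thresholds, forcing $\Pdim(\ete{\cF}{2})=\infty$. The key idea is the same one used in the finite-alphabet lower bounds (\Cref{thm:lb-Omega(T VC)}, \Cref{thm:collapse}): use the first generation step to write down an ``address'' token on the tape and use the second step to look up a value indexed by that address. Since the alphabet is all of $\R$, a single generated token can carry an arbitrary real number, so we can encode an entire infinite real sequence in the parameter of $f$ and retrieve its $i$-th entry in two steps.

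Concretely, I would fix an enumeration of inputs $\bx_i\in\R^*$, say the one-token strings $\bx_i=(i)$ for $i\in\naturals_+$, and parametrize $\cF=\{f_\alpha:\alpha\in[0,1]\}$ where $\alpha=0.a_1a_2a_3\cdots$ in binary encodes a sequence of bits $a_i\in\{0,1\}$. Define $f_\alpha$ on a one-token input $(i)$ to append a fixed ``marker'' token, e.g.\ $f_\alpha((i))= i$ (just copy $i$, so after one step the string is $(i,i)$); and define $f_\alpha$ on a two-token string $(i,i)$ to append $a_i$ (the $i$-th bit of $\alpha$'s expansion), while on everything else $f_\alpha$ outputs $0$. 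Then $\ete{f_\alpha}{2}((i)) = a_i$, so the family $\{\ete{f_\alpha}{2}\}$ realizes every binary pattern on $\{\bx_1,\dots,\bx_D\}$ with thresholds $\theta_i=1/2$, for every $D$; hence $\Pdim(\ete{\cF}{2})=\infty$. The subtle point is to choose the retrieval rule so that the map $(i,i)\mapsto a_i$ is a legitimate function of the string $(i,i)$ — it is, since $i$ determines which bit of $\alpha$ to read and $\alpha$ is a fixed parameter of $f_\alpha$.

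The remaining — and genuinely delicate — step is to verify $\Pdim(\cF)=1$. Here one must analyze, over \emph{all} of $\R^*$, how rich the single-step family $\{x\mapsto f_\alpha(x)\}$ is. On inputs not of the form $(i)$ or $(i,i)$ the output is the constant $0$, contributing nothing. On inputs $(i)$ the output is $i$, independent of $\alpha$, again constant in $\alpha$. The only $\alpha$-dependence occurs on two-token inputs $(i,i)$, where $f_\alpha((i,i))=a_i\in\{0,1\}$; but on such an input, as $\alpha$ ranges over $[0,1]$, the value ranges over $\{0,1\}$ — a binary-valued family. So effectively $\cF$ restricted to its only non-trivial inputs is a family of $\{0,1\}$-valued functions, and I claim it has $\Pdim\le 1$: to pseudo-shatter two points $(i,i)$ and $(j,j)$ with $i\neq j$ one would need all four patterns of $(a_i,a_j)$, which is fine, so this naive choice actually gives $\Pdim(\cF)=2$. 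To drive it down to $1$ I would instead correlate the bits: let $\cF=\{f_\alpha:\alpha\in\{0,1\}\}$ have only \emph{two} functions, $f_0$ writing all-zeros sequence and $f_1$ writing all-ones sequence? No — that collapses the end-to-end class too. The fix is to encode the sequence more cleverly so that the single-step values at distinct addresses are \emph{all determined by one threshold comparison}: parametrize by $\alpha\in\R$ and set $f_\alpha((i,i)) = \indct[\alpha \geq i]$. Then a single real parameter, compared against the address, produces the bit; the sequence $(a_i)_i$ is monotone but still, restricted to any finite address set $\{1,\dots,D\}$ with $T>D$, realizes... only monotone patterns — not all $2^D$. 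So the end-to-end class would have finite $\Pdim$, defeating the purpose.

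The real obstacle, then, is reconciling ``single-step is governed by one real parameter ($\Pdim=1$)'' with ``two-step realizes all $2^D$ patterns simultaneously ($\Pdim=\infty$)'' — these pull in opposite directions. The resolution I would pursue: make the \emph{address} token itself carry the real parameter, not an integer. Let $\bx_i=(i)$, define $f_\alpha((i)) = \alpha/i$ (or $\alpha \cdot c_i$ for a fixed fast-growing sequence $c_i$), so after one step the string is $(i,\alpha/i)$; then define the second step $f_\alpha((i, y)) = \indct[\,\lfloor y\cdot i\rfloor \text{ is odd}\,]$ — wait, this uses $\alpha$ again. Better: make $f$ on the two-token string depend only on the tokens, $f((i,y)) = $ the $i$-th bit of the binary expansion of $y$. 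Then $\ete{f_\alpha}{2}((i)) = i$-th bit of $\alpha/i$'s expansion; by a standard density/independence argument (choosing $c_i$ so the relevant bits are ``independent'') one shows every pattern is achievable, giving $\Pdim(\ete{\cF}{2})=\infty$. Meanwhile the single step on $(i,y)$ is a \emph{fixed} function independent of $\alpha$ (it reads a bit of $y$), and on $(i)$ it is $i\mapsto \alpha/i$, a family parametrized by the single real $\alpha$: one checks directly that $\{\,i\mapsto \alpha/i : \alpha\in\R\,\}$ has pseudodimension $1$ (it pseudo-shatters one point since $\alpha/i$ is unbounded, but cannot pseudo-shatter two points $i_1<i_2$ because the sign pattern of $(\alpha/i_1-\theta_1,\alpha/i_2-\theta_2)$ as $\alpha$ varies traces a monotone curve, missing one of the four patterns). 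I would carry out exactly this verification. The main work is the ``independence'' lemma ensuring all $2^D$ two-step patterns are attainable — that is where I expect to spend the effort, and I would prove it by a measure-theoretic argument: the set of $\alpha$ inducing a given pattern on $\{\bx_1,\dots,\bx_D\}$ has positive measure, because the map $\alpha\mapsto(\text{relevant bits})$ is, after the rescalings $c_i$, essentially a continuous surjection onto $\{0,1\}^D$.
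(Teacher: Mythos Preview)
Your high-level strategy --- have step one write a real number encoding the parameter and step two extract an address-dependent bit --- is exactly right, and is what the paper does. But your concrete construction does not work as written, and the ``independence lemma'' you defer is where it breaks. With $f_\alpha((i))=\alpha/i$ and second step $(i,y)\mapsto(\text{$i$-th bit of }y)$, the two-step output at input $(i)$ is the $i$-th bit of $\alpha/i$. For $\alpha=0.a_1a_2\cdots\in[0,1)$ in binary, $\alpha/2=0.0a_1a_2\cdots$, so the $2$nd bit of $\alpha/2$ equals $a_1$, which is also the $1$st bit of $\alpha/1$. Hence $\ete{f_\alpha}{2}((1))=\ete{f_\alpha}{2}((2))$ for every $\alpha$, and you cannot pseudo-shatter even $\{(1),(2)\}$. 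The vague fallback to ``fast-growing $c_i$'' and a measure-theoretic surjectivity argument is not a proof; some choices of $c_i$ work (e.g.\ $c_i\equiv 1$, which makes step one output just $\alpha$ and step two read the $i$-th bit of $\alpha$ --- bits that are genuinely independent), but you never land on one and verify it.

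The paper sidesteps the independence issue entirely. It parametrizes by the full bit string $\bb\in\{0,1\}^\infty$ and, on the address $\bx_n$, has step one output the single real $n+0.\bb$; step two is a \emph{point function} in the parameter: on the two-token string $[\bx_n,\,z]$ it returns $b_n$ only if $z=n+0.\bb$ (for that same $\bb$), and $0$ otherwise. Thus $\ete{f_{\bb}}{2}(\bx_n)=b_n$ by construction and $\Pdim(\ete{\cF}{2})=\infty$ is immediate --- no lemma needed. For $\Pdim(\cF)=1$: on type-one inputs the outputs $n+0.\bb$ differ across $n$ only by an additive constant, so two such points are governed by the single monotone parameter $0.\bb$ and cannot be pseudo-shattered; on any type-two input, exactly one function in the family is nonzero, which again blocks shattering. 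The trick you are missing is letting step two depend on the parameter via an exact-match test (hence behaving as a point function off-path) rather than making it parameter-free; this is what makes both directions clean simultaneously.
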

\begin{proof}[Proof of \Cref{thm:real}]
    We will consider a hypothesis class $\cF$ defined as follows. Consider the following set of points $$S=\{\bx_n : n \in \naturals_{+}\} \text{ where } \bx_n= \underbrace{1 \dots 1}_{n \text{ times }} 0=1^n 0.$$
  Therefore, the cardinality of $S$ is countably infinite. Now the hypotheses class is defined such that there exists $f_{\bb} \in \cF$ for each sequence $\bb \in \{0,1\}^{\infty}$, whose output is defined as follows.
  \begin{equation}\label{eq:fbb-description-real-tokens}
      f_{\bb}(\bx)=\begin{cases}
         n+0.\bb:=n+0.b_1b_2 \dots , \quad  & \text{if } \bx= 0^*
         \bx_n\\
          b_n \quad &  \text{if } \bx=[0^*\bx_n (n+0.\bb)]\\
          0,\quad & \text{otherwise}.
      \end{cases}
  \end{equation}
Throughout the proof, $0.\bb \in \R$ is a real number representation for any sequence $\bb \in \{0,1\}^\infty$.
\paragraph{End-to-End pseudo-dimension is infinite.} Fix the set $S=\{\bx_n: n \in \naturals_{+}\}$ as defined previously. We will fix the thresholds to $z_1=z_2=\dots=1/2$. We know that for any bit pattern $\bb \in \{0,1\}^{\naturals_+}$, we have
$$\sign(f_\bb^{\sete{(2)}}(\bx_n)-z_n)=\sign(b_n-1/2)=\begin{cases}
    +1, \text{if } b_n=1;\\
    -1, \text{if } b_n=0.
\end{cases}.$$
This means we can realize all sign patterns on the points in $S$, given by $\{-1,+1\}^{\naturals_{+}}$, using hypothesis from $\ete{\cF}{2}$ and the thresholds $z_1=z_2=\dots=1/2$. This establishes that $\Pdim(\ete{\cF}{2})=\infty$.
\paragraph{Pseudo-dimension after one step is small.} It is easy to see that $\Pdim(\cF)\geq 1$. Simply choose $\bx_1$ and $z_1=1.05$. Then for $\bb^{(1)}=0^\infty=(000\dots)$ and $\bb^{(2)}=10^\infty=(1000\dots)$,
$$ \sign(f_{\bb^{(1)}}(\bx_1)-z_1)=\sign(1+0.0-1.05)=-1 \; \text{ but } \sign(f_{\bb^{(2)}}(\bx_1)-z_1)=\sign(1+0.1-1.05)=+1.$$
The main challenge is to show that $\Pdim(\cF) \leq 1$. Suppose for the purpose of contradiction, $\Pdim(\cF) \geq 2$ and there exists points $\{\Tilde{\bx}_1, \Tilde{\bx}_2\}$ and thresholds $z_1, z_2\in \R$ such that 
$$\left| \left\{ (\sign(f_{\bb}(\Tilde{\bx}_1)-z_1),\sign(f_{\bb}(\Tilde{\bx}_2)-z_2)) : f_{\bb} \in \cF \right\} \right| =4.$$
We first note that this enforces that the points $\{\Tilde{\bx}_1, \Tilde{\bx}_2\}$ must be such that they match the description of either of the first two cases in \eqref{eq:fbb-description-real-tokens}. Otherwise on the point that does not match the description, each predictor $f_{\bb} \in \cF$ outputs 0, and irrespective of what $(z_1,z_2)$ is chosen, we can never realize all four sign patterns. We now consider the following two cases and show a contradiction in each.
\begin{enumerate}
    \item At least one point from $\{\Tilde{\bx_1}, \Tilde{\bx}_2\}$ matches description of the second case: W.l.o.g. assume that $\Tilde{\bx}_1$ is that point, i.e. $\Tilde{\bx}_1= [0^* \Tilde{\bx}_{n_1}(n_1+0.\Tilde{\bb})]$ for some $n_1 \in \naturals_{+}$ and $\Tilde{\bb} \in \{0,1\}^\infty$. This implies that 
    $$ f_{\bb}(\Tilde{\bx}_1) = \begin{cases}
        \Tilde{b}_{n_1}, & \text{ if } \bb=\Tilde{\bb},\\
        0, & \text{ if } \bb \neq \Tilde{\bb}.
    \end{cases}$$
So there are only two possible real outputs on $\Tilde{\bx}_1$, therefore, in order to shatter the real threshold $0<z_1 \leq n_1+0.\Tilde{\bb}\,$. However, there is only one predictor $f_{\Tilde{\bb}} \in \cF$ whose output $f_{\Tilde{\bb}}(\Tilde{\bx}_1) \geq z_1$. But then this completely also determines the output on $\Tilde{\bx}_2$, and irrespective of what $z_2$ is chosen, we cannot realize at least one of the sign patterns from $\{(+1,-1),(+1,+1)\}$, arriving at a contradiction.
\item Both $\{\Tilde{\bx_1}, \Tilde{\bx}_2\}$ matches the description of the first case: In this case, it is important to note that we must have
 $$ \Tilde{\bx}_1=0^*\bx_{n_1} \quad \Tilde{\bx}_2=0^*\bx_{n_2}, \text{for }\, n_1 \neq n_2 \in \naturals_{+}.$$
Otherwise both points are identical for the purpose of prediction for any $f_{\bb} \in \cF$, and we cannot realize all four sign patterns anyway. Finally, in order to shatter these two points, there exist thresholds $z_1,z_2 \in \R$ such that 
  $$\{(\sign(f_{\bb}(x_{n_1})-z_1),\sign(f_\bb(x_{n_2})-z_2)) : f_\bb \in \cF \}=\{(s_1,s_2): s_1 ,s_2 \in \{-1,+1\}\}.$$
Using the output of $f_{\bb}$ from \eqref{eq:fbb-description-real-tokens}, this is equivalent to 
$$\{(\sign(n_1-z_1+0.\bb),\sign(n_2-z_2+0.\bb)) : \bb \in \{0,1\}^\infty \}=\{(s_1,s_2): s_1 ,s_2 \in \{-1,+1\}\}.$$
But we know that $(n_1-z_1), (n_2-z_2) \in \R$ and w.l.o.g. assume $(n_1-z_1) \geq (n_2-z_2)$. This immediately implies there is no real number $0.\bb \in \R$ such that 
$$\sign(n_1-z_1+0.\bb)=-1 \text{ but } \sign(n_2-z_2+0.\bb)=+1,$$
concluding the proof. 
\end{enumerate}
\end{proof}

\end{document}